\documentclass{article}

\usepackage[margin=0.7in]{geometry}

\usepackage{amsmath,amssymb,amsfonts,mathtools}
\usepackage{amsthm}
\usepackage{upgreek}
\usepackage{bm}
\usepackage[dvipsnames]{xcolor}
\usepackage{natbib} 
\usepackage{hyperref}
\usepackage[capitalize,noabbrev]{cleveref}
\usepackage{subfig}
\usepackage{graphicx}
\usepackage{wrapfig}
\usepackage{booktabs}
\usepackage{enumitem}
\usepackage[most]{tcolorbox}

\newtcolorbox{mybox}{
  arc=2mm,
  boxrule=0.6pt,
  left=6pt,right=6pt,top=6pt,bottom=6pt
}

\usepackage{microtype}

\usepackage{amsmath,amsfonts,bm} 
\usepackage{mathrsfs}
\usepackage{enumitem}
\usepackage{stmaryrd}

\def\eqref#1{equation~\ref{#1}}

\def\1{\bm{1}}

\def\btheta{{\bm{\theta}}}

\def\beps{{\bm{\epsilon}}}

\def\rvf{{\mathbf{f}}}
\def\rvg{{\mathbf{g}}}

\def\rvu{{\mathbf{i}}}

\def\rvs{{\mathbf{s}}}

\def\rvu{{\mathbf{u}}}
\def\rvv{{\mathbf{v}}}

\def\rvx{{\mathbf{x}}}
\def\rvy{{\mathbf{y}}}
\def\rvz{{\mathbf{z}}}

\def\rmA{{\mathbf{A}}}

\def\rmI{{\mathbf{I}}}

\def\rmT{{\mathbf{T}}}

\DeclareMathAlphabet{\mathsfit}{\encodingdefault}{\sfdefault}{m}{sl}
\SetMathAlphabet{\mathsfit}{bold}{\encodingdefault}{\sfdefault}{bx}{n}

\newcommand{\E}{\mathbb{E}}

\newcommand*\diff{\mathop{}\!\mathrm{d}}

\usepackage{tikz}

\def\btheta{{\bm{\theta}}}

\def\beps{{\bm{\epsilon}}}

\usepackage{xcolor}

\definecolor{brightmaroon}{rgb}{0.76, 0.13, 0.28}
\definecolor{brown(web)}{rgb}{0.65, 0.16, 0.16}
\definecolor{customblue}{HTML}{5D94D1}

\definecolor{Pink}{rgb}{1.0, 0.44, 0.37}

\definecolor{PersB}{HTML}{FF7F0E} 
\definecolor{PersA}{HTML}{1F77B4} 
\definecolor{PersC}{HTML}{2CA02C} 

\theoremstyle{plain}
\newtheorem{theorem}{Theorem}
\newtheorem{lemma}{Lemma}
\newtheorem{proposition}{Proposition}
\newtheorem{assumption}{Assumption}
\crefname{assumption}{assumption}{assumptions}
\Crefname{assumption}{Assumption}{Assumptions}
\theoremstyle{definition}

\newtheorem{corollary}{Corollary}[theorem]

\theoremstyle{remark}

\title{\textbf{A Unified View of Score-Based and Drifting Models}

}
\author{
Chieh-Hsin Lai$^{1}$\thanks{Corresponding author: \href{mailto:chieh-hsin.lai@sony.com}{chieh-hsin.lai@sony.com}} \,\,\,\,
Bac Nguyen$^{1}$ \,\,\,\,\\
Naoki Murata$^{1}$ \,\,\,\,
Yuhta Takida$^{1}$ \,\,\,\,
Toshimitsu Uesaka$^{1}$\,\,\,\,\\
Yuki Mitsufuji$^{1,2}$ \,\,\,\,
Stefano Ermon$^{3}$\thanks{Equal supervision.} \,\,\,\,
Molei Tao$^{4}$\footnotemark[\value{footnote}]
}

\date{
$^{1}$Sony AI, $^{2}$Sony Group Corporation, $^{3}$Stanford University, $^{4}$Georgia Tech
}
\begin{document}
\maketitle

\begin{abstract}
Drifting models train one-step generators by optimizing a kernel-induced mean-shift discrepancy between the data and model distributions, with Laplace kernels used by default in practice. At each point, this discrepancy compares the kernel-weighted displacement toward nearby data samples with the corresponding displacement toward nearby model samples, thereby defining a transport direction for generated samples. In this paper, we show that drifting is more closely connected to score-based generative modeling than it may first appear, establishing a precise link to the score-matching principle underlying diffusion models. For Gaussian kernels, the population mean-shift field exactly equals the difference between the scores (i.e., the gradient-log-densities) of the Gaussian-smoothed data and model distributions. This identity follows from Tweedie’s formula, which links the score of a Gaussian-smoothed density to its conditional mean, and implies that Gaussian-kernel drifting is exactly a score-matching objective on smoothed distributions. More generally, we derive an exact decomposition for radial kernels in which mean shift equals a score-based field plus a residual term. For the practical Laplace kernel, we further show theoretically and empirically that this residual is negligible in high dimension, implying that the transport field used in practice is nearly score-based. Our results reveal a structural connection to diffusion models: both methods use score-mismatch transport directions, but drifting realizes the score nonparametrically through kernel-based estimates, whereas diffusion models learn it parametrically with neural networks.
\end{abstract}


\section{Introduction}
Diffusion and score-based generative models~\cite{sohl2015deep,ho2020denoising,song2019generative,song2020score,lai2025principles} generate data by transporting a simple noise distribution to the data distribution through many small steps, typically formulated as a time-indexed stochastic process or its ODE counterpart. This framework yields high sample quality, but often makes inference expensive because generation requires many neural network evaluations. Motivated by the need for faster sampling, recent work has therefore explored one-step or few-step generators~\cite{song2023consistency,kim2023consistency,geng2025mean,boffi2024flow,hu2025cmt} that directly push forward noise to data.

Drifting models~\cite{deng2026drifting} offer a different perspective on one-step generation. Instead of introducing a time-indexed noising process and learning its reverse dynamics, drifting fixes a kernel, Laplace by default, and constructs a transport rule directly from data samples. The central object is a local displacement field: at each location, the method aggregates nearby samples with kernel weights and computes their weighted average offset. This yields a mean-shift-type update~\cite{comaniciu2002mean} that moves points toward regions of higher sample density. Evaluating this local displacement across the ambient space defines a vector field, and generation proceeds by transporting samples along this field at one or a few kernel scales; see \Cref{fig:2d-vectors-gauss}-(a) for intuition.

This displacement field is closely connected to the \emph{score function} that underlies modern diffusion models.
The score is a log-density gradient and points toward higher-density regions.
In particular, the \emph{score-mismatch} between data and model defines a transport direction that steers model samples toward the data; \Cref{fig:2d-vectors-gauss}-(b) visualizes this behavior. Score-based diffusion models~\cite{song2019generative,song2020score} learn scores via score matching~\cite{hyvarinen2005estimation,vincent2011connection,lyu2012interpretation}, most often in a \emph{forward Fisher} form that averages the score-mismatch under the data distribution (which encourages mode coverage).
They then generate samples by applying denoising updates across noise levels.

In this article, we do not propose a new algorithm; instead, we show that drifting is fundamentally linked to score-based modeling on kernel-smoothed distributions. In the Gaussian case, the population mean-shift field coincides exactly with a variance-scaled \emph{score-mismatch} field between the Gaussian-smoothed data and model distributions. This identity follows from Tweedie's formula~\cite{efron2011tweedie}, which relates the conditional mean under additive Gaussian noise to the score of the corresponding smoothed marginal. The same principle underlies the denoising view of diffusion models, where the optimal denoiser is determined by the score of a Gaussian-smoothed distribution. We validate and visualize this exact correspondence in \Cref{fig:2d-vectors-gauss}-(c,d): up to the variance scaling, the mean-shift and score-mismatch fields coincide, making the bridge to score-based modeling exact at the level of the transport field.
\begin{figure}[!t]
    \vskip -0.3cm
    \centering
    \includegraphics[width=\linewidth]{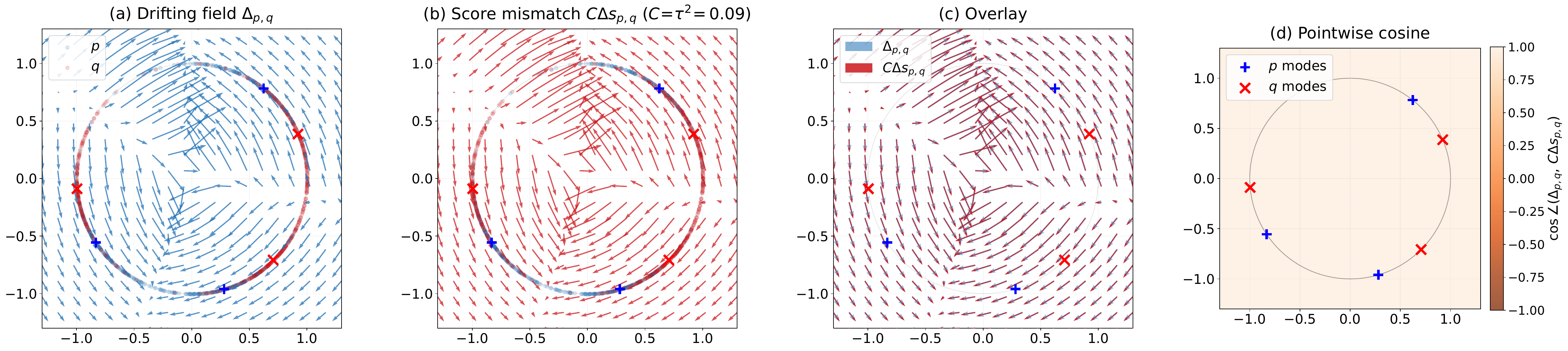}
\caption{\footnotesize{\textbf{2D visualization: Gaussian drifting exactly matches the score-mismatch direction.}
Under Gaussian smoothing, the mean-shift drifting field $\Delta_{p,q}$ in (a) is exactly parallel to the score-mismatch field $\Delta \rvs_{p,q}$ in (b), as shown in \Cref{thm:gaussian_meanshift_score_matching}; see \Cref{subsec:mean-shift-kernel-score} for the definitions. Panels (c,d) visualize this exact alignment. Both fields are estimated from finite samples using the same kernel-based Monte Carlo procedure. Here $p$ is a three-mode distribution on the unit circle, and $q$ is a fixed model distribution obtained by rotating the mode centers of $p$ by a fixed angle.}}
    \label{fig:2d-vectors-gauss}
    \vskip -0.5cm
\end{figure}
The corresponding drifting objective is a score-matching-style objective~\cite{weber2023score}, but in \emph{reverse Fisher} form: the pointwise score-mismatch is averaged under the \emph{model} distribution rather than the data distribution. This weighting encourages correcting the score field where the current model places mass (for example, suppressing spurious mass), and is complementary to forward Fisher, which emphasizes matching scores on data regions and thus promotes coverage under the data distribution.

This viewpoint also brings drifting closer to diffusion models at the objective level. Both methods are driven by score-mismatch transport directions, but they differ in how the score signal is obtained: drifting constructs it nonparametrically from local kernel neighborhoods, whereas diffusion models learn it parametrically with a neural network.

Beyond the Gaussian setting, we show that general radial kernels, including Laplace, admit an exact decomposition into a pre-conditioned smoothed-score term plus a covariance residual term that captures local neighborhood geometry. This result places drifting within a principled pre-conditioned score-matching framework and isolates precisely the correction introduced by non-Gaussian kernels.

Specializing this decomposition to the Laplace kernel used in standard drifting implementations, we show that drifting is approximately score matching in the practically relevant high-dimensional regime of modern representation spaces, including both pre-trained feature embeddings and latent-then-feature representations. As the embedding dimension $D$ grows, the mean-shift field approximates a scaled score-mismatch field and its direction becomes nearly parallel to the score-mismatch direction, with polynomially decaying error in both cases. Our experiments support this theory by verifying the predicted decay in $D$, showing that the covariance residual is typically small, and demonstrating that Gaussian and Laplace kernels yield broadly comparable generation quality.

For clarity, we summarize the main messages below:
\begin{mybox}
\begin{itemize}
  \item \textbf{Gaussian Kernel:} 
  \begin{itemize}
      \item[$\diamond$] Drifting's mean-shift direction 
$=$ score-mismatch direction (via Tweedie)
\item[$\diamond$] Drifting objective $=$ score-matching-style objective.
  \end{itemize}
  \item \textbf{General Radial Kernel:} mean-shift = rescaled score + residual
  \item \textbf{Laplace Kernel:} When dimension $D$ is large,
\[
\begin{aligned}
\text{residual} \approx 0 &\Rightarrow \text{drifting mean-shift direction } \propto \text{ score-mismatch direction, up to } \mathcal O(D^{-1}) \text{ error} \\
&\Rightarrow \text{objective, semi-gradient, and optima almost align with score-based counterparts.}
\end{aligned}
\]
\end{itemize}
\end{mybox}
Additional regimes where drifting becomes score-like, including low-temperature kernels and
implementation-aligned finite-sample settings, are discussed in \Cref{app:more-regimes}.

\section{Preliminaries}\label{sec:prelim}
\subsection{Score-Based Generative Models and Diffusion Models}
\paragraph{Score Functions and Fisher Divergences.}
Let $p$ be a distribution on $\mathbb R^D$ with density, still denoted by $p$. Its \emph{score function} is $\mathbf s_p(\rvx):=\nabla_{\rvx}\log p(\rvx)$. Given two distributions $p$ and $q$, we define the \emph{forward} and \emph{reverse} Fisher divergences by
\begin{equation*}
\mathcal D_{\mathrm{fF}}(p\|q)
:=
\mathbb E_{\rvx\sim p}\big[\|\mathbf s_p(\rvx)-\mathbf s_q(\rvx)\|_2^2\big],
\qquad
\mathcal D_{\mathrm{rF}}(p\|q)
:=
\mathbb E_{\rvx\sim q}\big[\|\mathbf s_p(\rvx)-\mathbf s_q(\rvx)\|_2^2\big].
\end{equation*}
 The two Fisher divergences share the same pointwise score-mismatch, $\|\mathbf s_p-\mathbf s_q\|_2^2$, but differ in the weighting measure: forward Fisher averages under $p$, whereas reverse Fisher averages under $q$. Consequently, forward Fisher emphasizes score accuracy where data concentrate and thus promotes mode coverage, while reverse Fisher emphasizes score accuracy where the model places mass and can suppress spurious modes. Standard score matching~\cite{hyvarinen2005estimation,song2020score} fits $q$ by minimizing the forward Fisher divergence, avoiding explicit normalizing constants while directly matching the score field.

\paragraph{Diffusion Models.}
A diffusion forward process takes the Gaussian form
$\rvx_t=\alpha_t\rvx_0+\sigma_t\beps$,
where $\rvx_0\sim p_{\mathrm{data}}$, $\beps\sim\mathcal N(\mathbf 0,\mathbf I)$, and $(\alpha_t,\sigma_t)$ is a fixed noise schedule satisfying $\alpha_t\ge 0$ and $\sigma_t>0$. Let $p_t$ denote the marginal density of $\rvx_t$. Then $p_t$ is the Gaussian smoothing of $p_{\mathrm{data}}$: $p_t(\rvx)
=
\int \mathcal N(\rvx;\alpha_t\rvx_0,\sigma_t^2\mathbf I)\,p_{\mathrm{data}}(\rvx_0)\,\mathrm d\rvx_0$. Its time-dependent score function is $\mathbf s_p(\rvx,t):=\nabla_{\rvx}\log p_t(\rvx)$. Diffusion models approximate the oracle score $\mathbf s_p(\rvx_t,t)$ by a neural network $\rvs_\btheta(\rvx_t,t)$ trained with the time-dependent score-matching objective~\cite{vincent2011connection,song2020score} $\mathbb E_t\,\mathbb E_{\rvx_t\sim p_t}\Big[\|\rvs_\btheta(\rvx_t,t)-\mathbf s_p(\rvx_t,t)\|_2^2\Big]$, which is precisely a forward Fisher objective along the diffusion marginals.

\subsection{Drifting Model's Fixed-Point Regression}
\label{sec:template}

In this section, we introduce a fixed-point regression framework with a pre-designed drift field, motivated by drifting models. We then present two realizations of this field: kernel-induced mean-shift and kernel-induced score-mismatch.

\paragraph{Training Objective of Drifting Model.}
Let $p:=p_{\mathrm{data}}$ denote the data distribution\footnote{
Although we present the construction in raw data space, it extends verbatim to any fixed pre-trained feature space. Given a frozen embedding $\Psi:\mathcal X\to\mathbb R^D$, let $p^\Psi:=\Psi_\# p$ and $q^\Psi:=\Psi_\# q$. Applying the same kernel, mean-shift field, and discrepancy construction to $p^\Psi$ and $q^\Psi$ gives the corresponding feature-space quantities, so all results below apply equally after replacing $\rvx$ by $\Psi(\rvx)$ and $(p,q)$ by $(p^\Psi,q^\Psi)$.
}.
We consider a pushforward generator
$\rvx=\rvf_\btheta(\beps)$ with $\beps\sim p_{\mathrm{prior}}$, where $p_{\mathrm{prior}}:=\mathcal N(\mathbf 0,\rmI)$, inducing the model distribution: $q_\btheta := (\rvf_\btheta)_\# p_{\mathrm{prior}}$. When the dependence on $\btheta$ is clear, we simply write $q$. The goal of one-step generation is to choose $\rvf_\btheta$ so that $q_\btheta \approx p_{\mathrm{data}}$ under a chosen statistical divergence, such as KL or Fisher divergence. Throughout, $p$ (or $p_{\mathrm{data}}$) denotes the data distribution, and $q$ (or $q_\btheta$) denotes the model distribution.

Drifting model starts from a one-step transport operator $\mathcal U_{p,q}(\mathbf x)=\mathbf x+\Delta_{p,q}(\mathbf x)$, where $\Delta_{p,q}(\mathbf x)$ is a designed drift field on $\mathbb R^D$ that measures discrepancy between $p$ and $q$, and satisfies the equilibrium condition $\Delta_{p,p}(\mathbf x)\equiv \mathbf 0$. Given $\mathcal U_{p,q}$, the drifting-model template realizes this transport through a fixed-point regression objective with a stop-gradient target:
\begin{equation}
\label{eq:fp_loss}
\min_\btheta~\mathcal L_{\mathrm{drift}}(\btheta)
:=
\mathbb E_{\beps\sim p_{\mathrm{prior}}}\Big[
\big\|\rvf_\btheta(\beps)-\mathrm{sg}\big(\mathcal U_{p,q_\btheta}(\rvf_\btheta(\beps))\big)\big\|_2^2
\Big].
\end{equation}
Intuitively, at each iteration the method first computes a frozen transported sample
$\tilde{\mathbf x}=\mathcal U_{p,q_\btheta}(\mathbf x)$
using the current model, and then fits the next generator to regress onto this transported cloud.

\paragraph{Objective-Level Equivalence.}
For any fixed $\btheta$, the stop-gradient freezes the target during backpropagation, but the value of the loss simplifies to
\begin{align}\label{eq:value_equiv}
\begin{aligned}
\mathcal L_{\mathrm{drift}}(\btheta)
&=
\mathbb E_{\beps}\Big[
\big\|\rvf_\btheta(\beps)-\mathrm{sg}\big(\rvf_\btheta(\beps)+\Delta_{p,q_\btheta}(\rvf_\btheta(\beps))\big)\big\|_2^2
\Big]
\\
&=
\mathbb E_{\beps}\big[\|\Delta_{p,q_\btheta}(\rvf_\btheta(\beps))\|_2^2\big]
=
\mathbb E_{\mathbf x\sim q_\btheta}\big[\|\Delta_{p,q_\btheta}(\mathbf x)\|_2^2\big].
\end{aligned}
\end{align}
Thus, at the objective level, \Cref{eq:fp_loss} is exactly the squared-norm functional of the field $\Delta_{p,q_\btheta}$ under $q_\btheta$. Since these expressions agree for every $\btheta$, they define the same objective function and therefore share the same global minimizers and optimal value.

\paragraph{Gradient-Level Remark.}
Although \Cref{eq:value_equiv} identifies the objective as $\mathbb E_{q_\btheta}\|\Delta_{p,q_\btheta}\|^2$, the stop-gradient solver does not backpropagate through the dependence of $\Delta_{p,q_\btheta}$ on $q_\btheta$. Differentiating \Cref{eq:fp_loss} treats the target as frozen and yields the semi-gradient
\begin{equation*}
\nabla_\btheta \mathcal L_{\mathrm{drift}}(\btheta)
=
-2\,\mathbb E_{\beps\sim p_{\mathrm{prior}}}\Big[
\mathbf J_{\rvf_\btheta}(\beps)^\top \Delta_{p,q_\btheta}(\mathbf x)
\Big],
\end{equation*}
where $\mathbf x=\rvf_\btheta(\beps)$ and $\mathbf J_{\rvf_\btheta}(\beps)$ denotes the Jacobian of $\rvf_\btheta$ with respect to $\btheta$.

Equivalently, the stop-gradient objective implements a transport--then--projection iteration~\cite{weber2023score}. At iteration $i$, the
current generator $\rvf_{\btheta_i}$ induces $q_{\btheta_i}$ and produces samples
$\rvx_i=\rvf_{\btheta_i}(\beps)$. We first transport these samples using the current drift field:
\[
\widetilde{\rvx}_i
=
\rvx_i+\Delta_{p,q_{\btheta_i}}(\rvx_i).
\]
We then fit the next generator to this transported sample cloud by regression:
\[
\btheta_{i+1}
\approx
\arg\min_{\btheta}
\mathbb E_{\beps\sim p_{\mathrm{prior}}}
\Big[
\big\|
\rvf_{\btheta}(\beps)-\mathrm{sg}(\widetilde{\rvx}_i)
\big\|_2^2
\Big].
\]
The stop-gradient makes the transported samples fixed targets during the regression step. Thus, drifting
alternates between moving the current model samples by a distribution-comparison field and projecting
the moved cloud back onto the generator family. The semi-gradient above is precisely the infinitesimal
form of this transport--then--projection update.

\subsection{Drifting Model's Mean-Shift versus Kernel-Induced Score Function}
\label{subsec:mean-shift-kernel-score}

\paragraph{Mean-Shift Operator.}
Drifting models realize the update field $\Delta_{p,q}(\mathbf x)$ through a kernel-induced mean-shift~\citep{comaniciu2002mean}. Let $k(\mathbf x,\mathbf y)\ge 0$ be a similarity kernel, and let $\uppi$ be any distribution on $\mathbb R^D$. Define the kernel-weighted barycenter
$\boldsymbol\mu_{\uppi,k}(\mathbf x)
:=
\frac{\mathbb E_{\mathbf y\sim \uppi}[k(\mathbf x,\mathbf y)\,\mathbf y]}
{\mathbb E_{\mathbf y\sim \uppi}[k(\mathbf x,\mathbf y)]}$,
and the associated mean-shift direction
\begin{equation*}
\mathbf V_{\uppi,k}(\mathbf x)
:=
\boldsymbol\mu_{\uppi,k}(\mathbf x)-\mathbf x
=
\frac{\mathbb E_{\mathbf y\sim \uppi}[k(\mathbf x,\mathbf y)\,(\mathbf y-\mathbf x)]}
{\mathbb E_{\mathbf y\sim \uppi}[k(\mathbf x,\mathbf y)]}.
\end{equation*}
Intuitively, $\boldsymbol\mu_{\uppi,k}(\rvx)$ is a kernel-weighted local average, so $\mathbf V_{\uppi,k}(\rvx)$ moves $\rvx$ toward regions better supported by $\uppi$ under the kernel.

For $\uppi\in\{p,q\}$, we write $\mathbf V_{p,k}(\mathbf x)$ and $\mathbf V_{q,k}(\mathbf x)$ for the corresponding data and model mean-shift fields. The resulting discrepancy field is
\begin{equation*}
\Delta_{p,q}(\mathbf x)
:=
\eta\big(\mathbf V_{p,k}(\mathbf x)-\mathbf V_{q,k}(\mathbf x)\big),
\end{equation*}
where $\eta>0$ is a step size. Intuitively, the kernel induces a local notion of support around each point $\mathbf x$. The field $\mathbf V_{p,k}(\mathbf x)$ pulls $\mathbf x$ toward nearby regions favored by the data, while $\mathbf V_{q,k}(\mathbf x)$ captures the analogous pull from the current model. Their difference therefore defines a local correction direction: it attracts samples toward regions that are under-supported by the model relative to the data, while counteracting motion toward regions where the model already places excessive mass.

\paragraph{Kernel-Induced Score Function.}
The same kernel smoothing also induces a complementary score field. Consider the kernel-smoothed mass profile
\begin{equation*}
\uppi_k(\mathbf x)
:=
\mathbb E_{\mathbf y\sim \uppi}[k(\mathbf x,\mathbf y)]
=
\int k(\mathbf x,\mathbf y)\,\uppi(\mathbf y)\,\mathrm d\mathbf y.
\end{equation*}
In the translation-invariant case $k(\mathbf x,\mathbf y)=\kappa(\mathbf x-\mathbf y)$, this reduces to convolution, $\uppi_k=\uppi * \kappa$; hereafter, we do not distinguish between $k$ and $\kappa$. The associated kernel-induced score is
$\mathbf s_{\uppi,k}(\mathbf x):=\nabla_{\mathbf x}\log \uppi_k(\mathbf x)$.
We allow $k$ to be unnormalized, but this causes no ambiguity, since both the score and the mean-shift field are invariant under positive rescaling of the kernel.


\section{Bridging Drifting Models and Score-Based Models}
\label{sec:instantiations}
This section develops the central theoretical message that drifting is more closely connected to score-based modeling than it may initially appear. We show that the mean-shift  $\mathbf V_{p,k}(\mathbf x)-\mathbf V_{q,k}(\mathbf x)$ is intrinsically linked to the score-mismatch $\mathbf s_{p,k}(\mathbf x)-\mathbf s_{q,k}(\mathbf x)$: for Gaussian kernels, mean-shift is exactly a score-mismatch up to a multiplicative constant (\Cref{sec:drifting}); for general radial kernels, this link extends via an exact score-mismatch-plus-residual decomposition (\Cref{sec:radial}); and for the Laplace kernel used in practice, the residual is small in high dimension (\Cref{sec:laplace}).

\subsection{Gaussian Kernels: Mean-Shift is Exactly a Score-Mismatch}
\label{sec:drifting}

We first specialize $k$ to the Gaussian kernel with temperature $\tau>0$:
\begin{equation*}
k_\tau(\rvx,\rvy)=\exp\Big(-\frac{\|\rvx-\rvy\|_2^2}{2\tau^2}\Big).
\end{equation*}
For any distribution $\uppi$, the associated smoothing field is
$\uppi_{k_\tau}(\rvx)
=
\mathbb E_{\rvy\sim \uppi}[k_\tau(\rvx,\rvy)]
=
(\uppi * k_\tau)(\rvx)$,
which we abbreviate as $\uppi_\tau(\rvx)$. The corresponding kernel-induced score is therefore:
$\mathbf s_{\uppi,k_\tau}(\rvx)
=
\nabla_{\rvx}\log \uppi_{k_\tau}(\rvx)
=
\nabla_{\rvx}\log \uppi_\tau(\rvx)$,
which we abbreviate as $\mathbf s_{\uppi,\tau}(\rvx)$.

For Gaussian kernels, the mean-shift field is exactly the smoothed score up to the factor $\tau^2$, so the drifting discrepancy field is exactly a score-mismatch field between the Gaussian-smoothed data and model distributions. Consequently, at the objective level, the drifting loss in \Cref{eq:value_equiv} is exactly a reverse-Fisher score-matching objective on the smoothed densities, up to the positive constant $\eta^2\tau^4$. Thus Gaussian-kernel drifting is precisely a positive rescaling of the clean-model-weighted smoothed score-mismatch functional and therefore has the same global minimizers. See \Cref{app:proof-precond-score} for the proof.

\begin{theorem}[Gaussian-Kernel Drifting Equals Smoothed-Score Matching]
\label{thm:gaussian_meanshift_score_matching}
Suppose that $k_\tau$ is a Gaussian kernel with a fix $\tau>0$. Let $p$ and $q$ be distributions on $\mathbb R^D$. Then, for all $\rvx$,
\begin{equation*}
\mathbf V_{\uppi,k_\tau}(\rvx)=\tau^2 \mathbf s_{\uppi,\tau}(\rvx),
\qquad
\text{for }\uppi\in\{p,q\}.
\end{equation*}
Consequently, the drifting discrepancy field satisfies
\begin{mybox}
\begin{equation*}
\Delta_{p,q}(\rvx)
=
\eta\tau^2\big(\mathbf s_{p,\tau}(\rvx)-\mathbf s_{q,\tau}(\rvx)\big).
\end{equation*}
\end{mybox}
Now consider a pushforward model $q_\btheta=(\rvf_\btheta)_\# p_{\mathrm{prior}}$ and the fixed-point regression loss in \Cref{eq:value_equiv}. At the objective level, Gaussian drifting is exactly reverse Fisher on smoothed distributions:
\begin{equation}
\label{eq:drift_value_equals_score}
\mathcal L_{\mathrm{drift}}(\btheta)
=
\eta^2\tau^4\,
\mathbb E_{\rvx\sim q_\btheta}\Big[\big\|\mathbf s_{p,\tau}(\rvx)-\mathbf s_{q_\btheta,\tau}(\rvx)\big\|_2^2\Big].
\end{equation}
\end{theorem}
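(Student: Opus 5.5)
The plan is to verify the pointwise identity $\mathbf V_{\uppi,k_\tau}=\tau^2\mathbf s_{\uppi,\tau}$ by differentiating under the integral sign, and then to substitute it into the objective-level identity \Cref{eq:value_equiv}.

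\textbf{Step 1: gradient of the Gaussian kernel.} We have
\[
\nabla_{\rvx} k_\tau(\rvx,\rvy)=\frac{\rvy-\rvx}{\tau^2}\,k_\tau(\rvx,\rvy).
\]

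\textbf{Step 2: differentiate the smoothed profile.} Fix $\uppi\in\{p,q\}$ and write $\uppi_\tau(\rvx)=\mathbb E_{\rvy\sim\uppi}[k_\tau(\rvx,\rvy)]$. The interchange of $\nabla_{\rvx}$ and $\mathbb E_{\rvy\sim\uppi}$ is the only technical point. It is justified by dominated convergence, locally uniformly in $\rvx$. For $\rvx$ in a ball $B(\rvx_0,1)$ we have the bound
\[
\|\rvy-\rvx\|\,e^{-\|\rvx-\rvy\|^2/(2\tau^2)}\le \sup_{r\ge0} r e^{-r^2/(2\tau^2)}=\tau e^{-1/2},
\]
which is a constant and hence $\uppi$-integrable for any probability measure $\uppi$. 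No moment assumptions on $p$ or $q$ are therefore needed. The interchange gives
\[
\nabla\uppi_\tau(\rvx)=\tau^{-2}\,\mathbb E_{\uppi}\big[k_\tau(\rvx,\rvy)(\rvy-\rvx)\big].
\]

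\textbf{Step 3: positivity and the score.} The denominator $\uppi_\tau(\rvx)$ is strictly positive everywhere, since $k_\tau>0$ and $\uppi$ is a probability measure. Hence $\log\uppi_\tau$ is differentiable, and
\[
\mathbf s_{\uppi,\tau}(\rvx)=\frac{\nabla\uppi_\tau(\rvx)}{\uppi_\tau(\rvx)}=\tau^{-2}\,\frac{\mathbb E_{\uppi}\big[k_\tau(\rvx,\rvy)(\rvy-\rvx)\big]}{\mathbb E_{\uppi}\big[k_\tau(\rvx,\rvy)\big]}=\tau^{-2}\,\mathbf V_{\uppi,k_\tau}(\rvx).
\]
This is exactly Tweedie's formula for the Gaussian-smoothed density, up to the normalization of the kernel, which cancels in the ratio.

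\textbf{Step 4: the discrepancy field.} Subtracting the identities for $\uppi=p$ and $\uppi=q$ and multiplying by $\eta$ gives the boxed formula
\[
\Delta_{p,q}(\rvx)=\eta\tau^2\big(\mathbf s_{p,\tau}(\rvx)-\mathbf s_{q,\tau}(\rvx)\big).
\]

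\textbf{Step 5: the objective.} Set $q=q_\btheta$ and insert the boxed formula into $\mathcal L_{\mathrm{drift}}(\btheta)=\mathbb E_{\rvx\sim q_\btheta}\|\Delta_{p,q_\btheta}(\rvx)\|_2^2$ from \Cref{eq:value_equiv}. Pulling out the constant $\eta^2\tau^4$ yields \Cref{eq:drift_value_equals_score}.

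\textbf{Main obstacle.} The main care point is the justification in Step 2, together with the positivity of $\uppi_\tau$ in Step 3. The Gaussian tail handles both without any further assumptions, so I expect no real difficulty.
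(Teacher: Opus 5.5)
Your proposal is correct and follows the paper's proof essentially verbatim: differentiate $\uppi_\tau$ under the expectation using $\nabla_{\rvx}k_\tau(\rvx,\rvy)=\tau^{-2}(\rvy-\rvx)k_\tau(\rvx,\rvy)$, divide by $\uppi_\tau(\rvx)$, subtract the identities for $p$ and $q$, and substitute into \Cref{eq:value_equiv}. The paper leaves two points implicit that you make explicit, and both are correct: the uniform bound $r e^{-r^2/(2\tau^2)}\le \tau e^{-1/2}$, which justifies the interchange for an arbitrary probability measure, and the strict positivity of $\uppi_\tau$.
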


\Cref{thm:gaussian_meanshift_score_matching} is a direct consequence of Tweedie's formula~\cite{efron2011tweedie}, which is also a central identity behind diffusion denoising. Under Gaussian smoothing, the kernel barycenter is exactly the Bayes-optimal denoiser:
\begin{equation*}
\boldsymbol{\mu}_{\uppi,k_\tau}(\rvx)
=
\mathbb E[\mathbf X | \tilde{\mathbf X}=\rvx],
\qquad
\tilde{\mathbf X}=\mathbf X+\tau \mathbf Z,
\ \ 
\mathbf Z\sim\mathcal N(\mathbf 0,\mathbf I).
\end{equation*}
Hence the mean-shift direction is precisely the denoising residual, and Tweedie's formula gives
\begin{equation*}
\mathbf V_{\uppi,k_\tau}(\rvx)
=
\boldsymbol{\mu}_{\uppi,k_\tau}(\rvx)-\rvx
=
\tau^2\nabla_{\rvx}\log \uppi_\tau(\rvx).
\end{equation*}
Thus, for Gaussian kernels, mean shift is not merely score-like: it equals $\tau^2$ times the score of the Gaussian-smoothed distribution. Viewed as a transport field, the resulting Gaussian-smoothed score-mismatch is closely related to the score-difference flow of \cite{weber2023score}, which derives the same normalized-kernel-barycenter field as a KL-decreasing direction.

Equivalently, the discrepancy $\mathbf V_{p,k_\tau}-\mathbf V_{q,k_\tau}$ is exactly the score-mismatch between the smoothed densities $p_\tau$ and $q_\tau$, up to the same factor. This places Gaussian-kernel drifting squarely within a score-matching framework. The only distinction is the weighting measure: the drifting objective in \Cref{eq:drift_value_equals_score} is in reverse-Fisher form and averages under $q_\btheta$, rather than under $p$. As a result, learning is concentrated where the current model already places mass. In particular, if $q_\btheta$ assigns negligible mass to a mode of $p$, then that region contributes little to the gradient, which can make drifting more vulnerable to mode dropping~\cite{lu2025adversarial,minka2005divergence}, especially early in training.


\subsection{General Radial Kernels: Mean-Shift is a Pre-Conditioned Score-Mismatch Plus Residual}
\label{sec:radial}
In practice, however, drifting models use non-Gaussian kernels, with Laplace as the default choice. We now isolate what changes beyond the Gaussian case and show that mean-shift still follows the score up to an explicit pre-conditioning factor and a residual term.

We consider the family of translation-invariant radial kernels
\begin{equation}
\label{eq:radial_kernel}
k_\tau(\rvx,\rvy)
=
\exp\Big(-\rho\Big(\frac{\|\rvx-\rvy\|_2}{\tau}\Big)\Big),
\qquad \tau>0,
\end{equation}
where $\rho:[0,\infty)\to\mathbb R$ is differentiable. This family includes the Gaussian kernel, with $\rho(u)=\tfrac12 u^2$, and the Laplace kernel, with $\rho(u)=u$. As in \Cref{subsec:mean-shift-kernel-score}, we write $\mathbf V_{\uppi,k_\tau}$ for the mean-shift field and $\mathbf s_{\uppi,k_\tau}=\nabla\log \uppi_{k_\tau}$ for the kernel-induced score, where $\uppi_{k_\tau}(\rvx)=\mathbb E_{\rvy\sim\uppi}[k_\tau(\rvx,\rvy)]$.

For radial kernels, the score still takes the form of a local average of displacements, but with an additional radius-dependent reweighting. Writing $r:=\|\rvx-\rvy\|_2$, differentiation of $\log k_\tau(\rvx,\rvy)=-\rho(r/\tau)$ yields
$\nabla_{\rvx}\log k_\tau(\rvx,\rvy)
=
\frac{1}{\tau^2}\,b_\tau(r)\,(\rvy-\rvx)$,
where $b_\tau(r):=\frac{\rho'(r/\tau)}{r/\tau}$.
Substituting this identity gives
\begin{equation}
\label{eq:radial_weighted_score_ratio}
\tau^2\,\mathbf s_{\uppi,k_\tau}(\rvx)
=
\frac{\mathbb E_{\rvy\sim\uppi}\big[k_\tau(\rvx,\rvy)\,b_\tau(\|\rvx-\rvy\|_2)\,(\rvy-\rvx)\big]}
{\mathbb E_{\rvy\sim\uppi}\big[k_\tau(\rvx,\rvy)\big]}.
\end{equation}
The Gaussian kernel is therefore the unique case in which the extra weight disappears: for $\rho(u)=\tfrac12u^2$, one has $b_\tau(r)\equiv 1$, and mean-shift becomes exactly proportional to the smoothed score.

To compare mean-shift and score under the same local law, define the kernel-reweighted distribution
\begin{equation*}
\uppi_\tau(\rvy| \rvx)
:=
\frac{k_\tau(\rvx,\rvy)\,\uppi(\rvy)}
{\mathbb E_{\rvy\sim\uppi}[k_\tau(\rvx,\rvy)]}.
\end{equation*}
Then \Cref{eq:radial_weighted_score_ratio} can be written as
\begin{equation*}
\tau^2\,\mathbf s_{\uppi,k_\tau}(\rvx)
=
\mathbb E_{\rvy\sim\uppi_\tau(\cdot| \rvx)}
\Big[b_\tau\big(\|\rvx-\rvy\|_2\big)(\rvy-\rvx)\Big].
\end{equation*}
The next theorem, proved in \Cref{app:proof-precond-score}, gives an exact decomposition of mean-shift as score-mismatch.
\begin{theorem}[Pre-conditioned-Score Decomposition of Mean-Shift for General Radial Kernels]
\label{prop:radial_precond_decomp}
Fix $\tau>0$ and let $k_\tau$ be defined by \Cref{eq:radial_kernel}. Then for any $\rvx\in\mathbb R^D$,
\begin{equation*}
\mathbf V_{\uppi,k_\tau}(\rvx)
=
\tau^2\,\alpha_{\uppi}(\rvx)\,\mathbf s_{\uppi,k_\tau}(\rvx)
+
\boldsymbol{\delta}_{\uppi}(\rvx),
\end{equation*}
where $\alpha_{\uppi}(\rvx):=
\mathbb E_{\rvy\sim\uppi_\tau(\cdot| \rvx)}
\Big[b_\tau^{-1}(\|\rvx-\rvy\|_2)\Big]$, and $\boldsymbol{\delta}_{\uppi}(\rvx)
:=
\operatorname{Cov}_{\rvy\sim\uppi_\tau(\cdot| \rvx)}
\Big(b_\tau^{-1}(\|\rvx-\rvy\|_2),\,b_\tau(\|\rvx-\rvy\|_2)(\rvy-\rvx)\Big)\in\mathbb R^D$. Consequently, for $\Delta_{p,q}(\rvx)=\eta(\mathbf V_{p,k_\tau}(\rvx)-\mathbf V_{q,k_\tau}(\rvx))$,
\begin{mybox}
\[
\Delta_{p,q}(\rvx)
=
\eta\tau^2\Big(\alpha_{p}(\rvx)\mathbf s_{p,k_\tau}(\rvx)-\alpha_{q}(\rvx)\mathbf s_{q,k_\tau}(\rvx)\Big)
+
\eta\big(\boldsymbol{\delta}_{p}(\rvx)-\boldsymbol{\delta}_{q}(\rvx)\big).
\]
\end{mybox}
\end{theorem}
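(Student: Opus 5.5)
The plan is to rewrite both the mean-shift field and the scaled score as expectations under the same kernel-reweighted law $\uppi_\tau(\cdot|\rvx)$, and then apply the elementary identity $\mathbb E[AB]=\mathbb E[A]\,\mathbb E[B]+\operatorname{Cov}(A,B)$, with $A$ scalar and $B$ vector-valued.

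\emph{Step 1: mean shift as an expectation.} By the definitions of $\mathbf V_{\uppi,k_\tau}$ and $\uppi_\tau(\cdot|\rvx)$, the mean-shift field is $\mathbf V_{\uppi,k_\tau}(\rvx)=\mathbb E_{\rvy\sim\uppi_\tau(\cdot|\rvx)}[\rvy-\rvx]$.

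\emph{Step 2: insert the radial weight.} Write $r=\|\rvx-\rvy\|_2$ and $A:=b_\tau^{-1}(r)$, $B:=b_\tau(r)(\rvy-\rvx)$. Since $AB=\rvy-\rvx$ wherever $b_\tau(r)\neq 0$, we get
\[
\mathbf V_{\uppi,k_\tau}(\rvx)=\mathbb E[AB]=\mathbb E[A]\,\mathbb E[B]+\operatorname{Cov}(A,B).
\]
This covariance identity holds componentwise in the $D$ coordinates of $B$.

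\emph{Step 3: identify the terms.} The reweighted form of \Cref{eq:radial_weighted_score_ratio} stated just before the theorem says that $\mathbb E[B]=\tau^2\mathbf s_{\uppi,k_\tau}(\rvx)$. By definition, $\mathbb E[A]=\alpha_\uppi(\rvx)$ and $\operatorname{Cov}(A,B)=\boldsymbol\delta_\uppi(\rvx)$, which gives the first display of the theorem.

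Equation \Cref{eq:radial_weighted_score_ratio} itself comes from $\nabla_\rvx\log k_\tau=\tau^{-2}b_\tau(r)(\rvy-\rvx)$. To use it, I will differentiate $\uppi_{k_\tau}$ under the integral sign, which requires integrability of $k_\tau\,b_\tau\,|\rvy-\rvx|$ locally uniformly in $\rvx$. I will state this as a standing regularity assumption, in the same spirit as the paper.

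\emph{Step 4: the discrepancy field.} Apply the decomposition for $\uppi=p$ and for $\uppi=q$, subtract, and multiply by $\eta$. This gives the boxed formula for $\Delta_{p,q}$.

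\emph{Main obstacle.} The algebra is immediate, so the real issue is well-definedness.
\begin{itemize}
\item $b_\tau^{-1}$ must be finite and integrable, i.e.\ $\rho'(u)\neq0$ for $u>0$. For the Laplace kernel, $b_\tau(r)=\tau/r$, so $b_\tau^{-1}(r)=r/\tau$ is well defined.
\item At $r=0$, the weight $b_\tau$ is defined by continuity or as a limit. For the Laplace kernel $b_\tau$ blows up there, but $B=b_\tau(r)(\rvy-\rvx)$ has norm $\tau$ and stays bounded. The set $\{\rvy=\rvx\}$ is null when $\uppi$ has a density.
\item The product $AB$ equals $\rvy-\rvx$ except possibly on that null set, so the expectations agree.
\item I will assume the finite second moments needed for $\operatorname{Cov}(A,B)$ to exist under $\uppi_\tau(\cdot|\rvx)$. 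The exponential kernel tails make this mild, given finite moments of $\uppi$.
\end{itemize}
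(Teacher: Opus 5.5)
Your proposal is correct and matches the paper's proof essentially step for step. The paper also sets $A=b_\tau^{-1}(r)$ and $B=b_\tau(r)(\rvy-\rvx)$ under $\uppi_\tau(\cdot|\rvx)$, applies $\mathbb E[AB]=\mathbb E[A]\,\mathbb E[B]+\operatorname{Cov}(A,B)$, and reads off $\mathbb E[B]=\tau^2\mathbf s_{\uppi,k_\tau}(\rvx)$ from \Cref{eq:radial_weighted_score_ratio}. Your well-definedness remarks are conditions the paper leaves implicit, so they make the argument more careful without changing it: nonvanishing $\rho'$ so that $b_\tau^{-1}$ is finite, the diagonal $r=0$, finite second moments, and differentiation under the integral.
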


\Cref{prop:radial_precond_decomp} isolates exactly what changes beyond the Gaussian case. The mean-shift field still follows the kernel-induced score, but now through a scalar pre-conditioner $\alpha_{\uppi}(\rvx)$ and an additional residual $\boldsymbol{\delta}_{\uppi}(\rvx)$ caused by radius-dependent reweighting. In the Gaussian case, $b_\tau\equiv 1$, so $\alpha_{\uppi}\equiv 1$ and $\boldsymbol{\delta}_{\uppi}\equiv \mathbf 0$, exactly recovering \Cref{thm:gaussian_meanshift_score_matching}.
\begin{wrapfigure}{r}{0.3\textwidth}
    \centering
    \vspace{-0.6em}
    \includegraphics[width=0.87\linewidth]{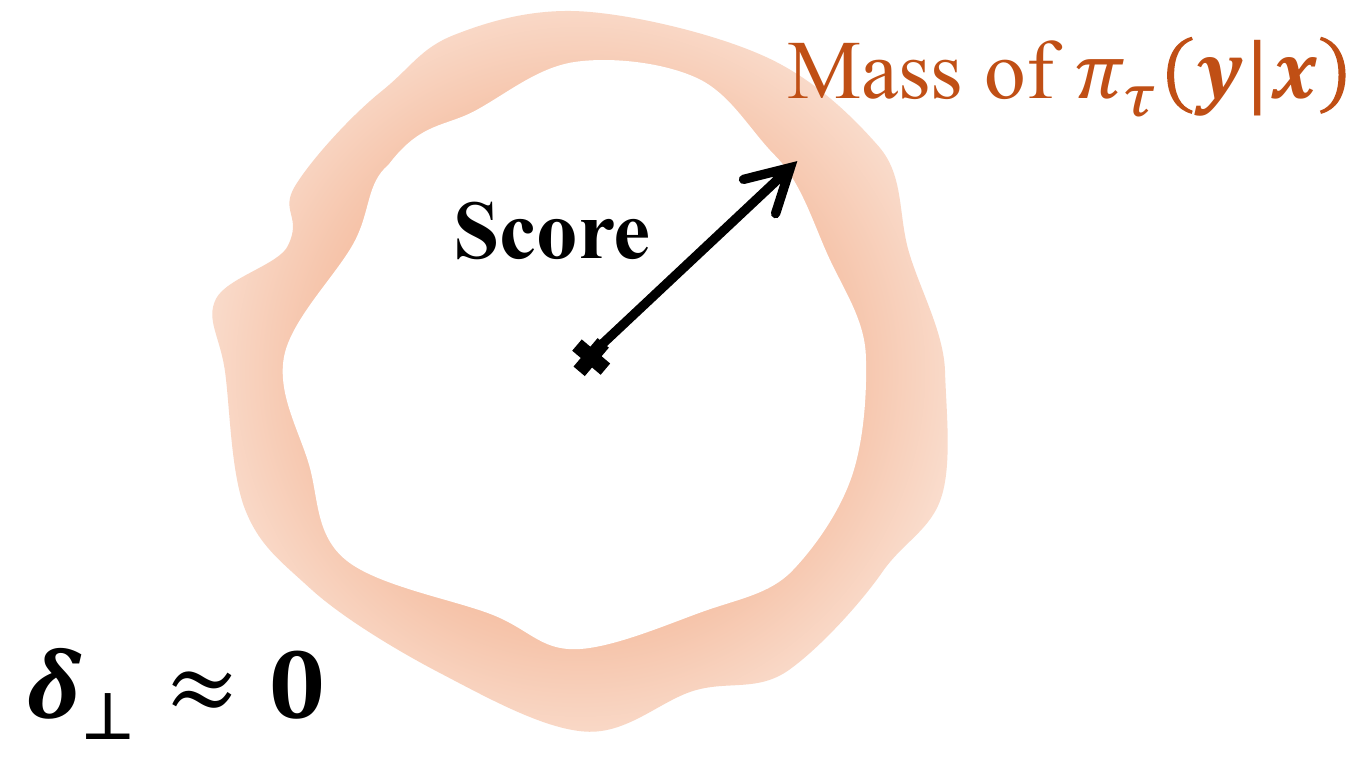}
    \caption{\footnotesize{\textbf{Illustration of $\boldsymbol\delta_\perp(\rvx)$ for the Laplace case.} Here $\boldsymbol\delta_\perp(\rvx)\approx \bm{0}$ because tangential contributions from different directions nearly cancel.}}
\label{fig:illustration-of-delta-laplace}
\end{wrapfigure}
For the Laplace kernel, the residual has a simple geometric interpretation; we defer the derivation to \Cref{app:radial_geometry} but highlight the main idea here. Write the displacement from $\rvx$ to a local neighbor $\rvy$ as $\rvy-\rvx=r\,\rvu$, where $r:=\|\rvy-\rvx\|_2$ is the distance to the neighbor and $\rvu$ is the corresponding unit direction.
Let $\rvu_\perp$ denote the component of $\rvu$ orthogonal to the score $\mathbf s_{\uppi,k_\tau}(\rvx)$. Then the score-orthogonal component of the residual satisfies
\begin{equation}
\label{eq:laplace_delta_perp_main}
\boldsymbol{\delta}_\perp(\rvx)
=
\mathbb E_{\rvy\sim\uppi_\tau(\cdot|\rvx)}\big[r\,\rvu_\perp\big].
\end{equation}

In other words, for the Laplace kernel, mean-shift departs from the score direction only through a radius-weighted average tangential bias. This makes the high-dimensional mechanism transparent. If the kernel-reweighted radii concentrate so that $r$ is nearly constant (see \Cref{fig:illustration-of-delta-laplace}), then the weighting in \Cref{eq:laplace_delta_perp_main} is nearly uniform, again making $\boldsymbol{\delta}_\perp(\rvx)$ small. In high dimension, this concentration is typical, so the residual is suppressed and mean-shift remains nearly aligned with the score. This is exactly the mechanism behind the alignment results in \Cref{sec:laplace}.

\Cref{prop:radial_precond_decomp} also explains why, beyond the Gaussian, drifting may admit multiple optima: the additional terms can make the drift vanish even when the model and data distributions differ; see \Cref{app:identifiability}.

\subsection{Laplace Kernels: Mean-Shift is Approximately a Score-Mismatch in High Dimension}\label{sec:laplace}

Let $\rvf$ be a generator, and let $q_{\rvf} := \rvf_\# \mathcal{N}(\mathbf{0}, \rmI)$ denote its induced model distribution on the same space $\mathbb{R}^D$ as the data distribution $p$. Consider the Laplace kernel
$k_\tau(\rvx,\rvy)=\exp\left(-\frac{\|\rvx-\rvy\|_2}{\tau}\right)$.
Following \cite{deng2026drifting}, we use the dimension-aware temperature
$\tau:=\bar\tau D^a$, where $\bar\tau>0$ and $a\ge 0$ are hyper-parameters, so that the Laplace kernel stays on a meaningful pairwise-distance scale as $D$ grows.
Define the mean-shift discrepancy by
$\Delta_{p,q_\rvf}(\rvx):=\mathbf V_{p,k_\tau}(\rvx)-\mathbf V_{q_\rvf,k_\tau}(\rvx)$,
and the score-mismatch by
$\Delta \rvs_{p,q_\rvf}(\rvx):=\rvs_{p,\tau}(\rvx)-\rvs_{q_\rvf,\tau}(\rvx)$.

We study the regime of fixed $\bar\tau>0$ and large dimension $D$. This regime is relevant both in raw data space and, more practically, in pre-trained feature spaces, whose dimension is often on the order of $10^3$. In this setting, the connection between drifting and score matching becomes explicit.

Under standard high-dimensional regularity conditions, including shell concentration and controlled inner products (\Cref{ass:shell_R0}--\Cref{ass:bounded_norm}), we show that the drifting discrepancy field is well approximated by a scaled score-mismatch field, with mean-square error of order $\mathcal O(D^{-1})$.

\begin{theorem}[(Informal) Large-$D$ field alignment at $1/D$ rate]
\label{thm:field_alignment_radial_full_xy}
Suppose that $k_\tau$ is a Laplace kernel. Assume the distributions under consideration concentrate on a common-radius shell, and that their inner products and moments are suitably controlled. Let $C=C(D^a)>0$ denote the known scaling factor. Then for all $\rvf$ and all sufficiently large $D$,
\begin{mybox}
\[
\mathbb E_{\rvx\sim q_{\rvf}}
\Big\|
\Delta_{p,q_\rvf}(\rvx)-C\,\Delta \rvs_{p,q_\rvf}(\rvx)
\Big\|_2^2
=
\mathcal O(D^{-1}).
\]
\end{mybox}
\end{theorem}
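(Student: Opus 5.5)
We start from the radial decomposition in \Cref{prop:radial_precond_decomp}, specialized to the Laplace kernel $\rho(u)=u$. Here $b_\tau(r)=\tau/r$ and $b_\tau^{-1}(r)=r/\tau$. Hence, for $\uppi\in\{p,q_\rvf\}$,
\[
\mathbf V_{\uppi,k_\tau}(\rvx)=\tau\,\bar r_\uppi(\rvx)\,\mathbf s_{\uppi,k_\tau}(\rvx)+\boldsymbol\delta_\uppi(\rvx),
\qquad
\bar r_\uppi(\rvx):=\mathbb E_{\uppi_\tau(\cdot|\rvx)}[r].
\]
The residual is $\boldsymbol\delta_\uppi(\rvx)=\operatorname{Cov}_{\uppi_\tau(\cdot|\rvx)}\big(r/\tau,\ \tau\rvu\big)=\operatorname{Cov}(r,\rvu)$, where $\rvu=(\rvy-\rvx)/r$. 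The plan is to show two things under the shell and inner-product assumptions. First, $\bar r_\uppi(\rvx)$ is, up to $\mathcal O(D^{-1/2})$ relative error, a deterministic radius $R_\star(D)$ that is common to $p$ and $q_\rvf$; this fixes $C:=\tau R_\star$, which depends on $D$ only through $D^a$ and the shell radius. Second, the covariance residual is small in mean square. We then write
\[
\Delta_{p,q_\rvf}-C\,\Delta\rvs_{p,q_\rvf}=\tau(\bar r_p-R_\star)\mathbf s_{p,\tau}-\tau(\bar r_{q}-R_\star)\mathbf s_{q,\tau}+\boldsymbol\delta_p-\boldsymbol\delta_q,
\]
and bound the mean square of each of the four terms under $\rvx\sim q_\rvf$ by $\mathcal O(D^{-1})$, using $\|a+b\|^2\le 2\|a\|^2+2\|b\|^2$.

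For the radius concentration, $\rvx$ and $\rvy$ both lie near a common shell of radius $R_0\sqrt D$, and their normalized inner product is controlled. Then $r^2=\|\rvx\|^2+\|\rvy\|^2-2\langle\rvx,\rvy\rangle$ concentrates at $2R_0^2D$ with fluctuations of order $\sqrt D$, so $r=R_\star+\mathcal O_P(1)$ with $R_\star=R_0\sqrt{2D}$. The kernel reweighting factor is $e^{-r/\tau}$ with $\tau=\bar\tau D^a$, and $r$ varies only by $\mathcal O(1)$ across the support. The density ratio of $\uppi_\tau(\cdot|\rvx)$ to $\uppi$ is therefore bounded above and below uniformly. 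This transfers the moment bounds $\operatorname{Var}(r)=\mathcal O(1)$ from $\uppi$ to the reweighted law, which gives $|\bar r_\uppi-R_\star|=\mathcal O(1)$ and, in mean square, $\mathcal O(1)$ variance.

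For the residual, Cauchy--Schwarz gives
\[
\|\operatorname{Cov}(r,\rvu)\|\le \sqrt{\operatorname{Var}(r)}\,\sup_{\|\mathbf e\|=1}\sqrt{\operatorname{Var}(\langle\mathbf e,\rvu\rangle)}.
\]
The directional variance of the unit vector $\rvu$ is $\mathcal O(D^{-1})$ when the inner products are controlled (bounded-norm and bounded-covariance assumption). This yields $\|\boldsymbol\delta_\uppi\|^2=\mathcal O(D^{-1})$. For the pre-conditioner terms, $\tau\,\mathbf s_{\uppi,\tau}=\mathbb E_{\uppi_\tau(\cdot|\rvx)}[\rvu]$ has norm at most $1$. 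So $\|\tau(\bar r-R_\star)\mathbf s\|\le |\bar r-R_\star|\,\|\mathbb E\rvu\|$. The norm of the mean direction $\|\mathbb E\rvu\|$ is itself $\mathcal O(D^{-1/2})$ under the inner-product control, because $\rvu$ is nearly orthogonal to any fixed direction. Combined with $|\bar r-R_\star|=\mathcal O(1)$, this gives $\mathcal O(D^{-1})$ in mean square. If this cancellation fails, the fallback is to obtain $|\bar r-R_\star|=\mathcal O(D^{-1/2})$ by centering $R_\star$ at $\rvx$-dependent shell radii common to both measures.

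The main obstacle is making the pre-conditioners for $p$ and $q_\rvf$ agree to the required order with a single $\rvx$-independent constant $C$ that holds uniformly over all generators $\rvf$. This requires the shell radius to be genuinely common, and the reweighting by $e^{-r/\tau}$ must not break concentration even when $a=0$. I would handle this first by proving a uniform two-sided bound on $\mathrm d\uppi_\tau(\cdot|\rvx)/\mathrm d\uppi$ on the high-probability shell event. Contributions off the event would be controlled with the bounded-moment assumption and a tail estimate that is integrated against $q_\rvf$.
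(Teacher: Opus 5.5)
Your four-term split matches the paper's (with $R_\star$ in place of $\rho=\sqrt2R_0$). The gap is the normalization you adopt: a shell of radius $R_0\sqrt D$, so that $r-R_\star=\mathcal O_P(1)$. In that normalization the target is false, and the cancellations you invoke to rescue it do not hold.

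The key failure is the claim $\|\mathbb E_{\uppi_\tau(\cdot|\rvx)}[\rvu]\|_2=\mathcal O(D^{-1/2})$. The unit vector $\rvu=(\rvy-\rvx)/r$ always has the component $\langle\rvu,\rvx\rangle/\|\rvx\|_2=(\langle\rvx,\rvy\rangle-\|\rvx\|_2^2)/(r\|\rvx\|_2)\approx-1/\sqrt2$ along the query point itself. Hence $\|\tau\mathbf s_{\uppi,\tau}(\rvx)\|_2$ is at least about $1/\sqrt2$, and the pre-conditioner terms are as large as $|\bar r_\uppi-R_\star|=\Theta(1)$.

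No recentering fixes this, because $\bar r_p-\bar r_{q_\rvf}$ is itself $\Theta(1)$ in general. Take concentric uniform spheres of radii $R_0\sqrt D$ and $R_0\sqrt D+c$; they satisfy every shell and inner-product hypothesis. Their radial parts of $\tau\mathbf s_{p,\tau}$ and $\tau\mathbf s_{q,\tau}$ differ by $\Theta(D^{-1/2})$, and multiplying by $R_\star\asymp\sqrt D$ leaves a mean-square error of order $c^2$ (e.g.\ for $a\ge1/2$). An $\rvx$-dependent $R_\star$ would not be a single constant $C$ anyway. Your $C=\tau R_0\sqrt{2D}\propto D^{a+1/2}$ also already departs from the statement's $C(D^a)$.

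Two auxiliary steps also fail:
\begin{itemize}
\item Inner-product control bounds only the Frobenius norm of the second-moment matrix. It therefore yields $\sup_{\|\mathbf e\|_2=1}\operatorname{Var}\langle\mathbf e,\rvu\rangle=\mathcal O(D^{-1/2})$, not $\mathcal O(D^{-1})$, unless you add an isotropy assumption.
\item $r$ is within $\mathcal O(1)$ of $R_\star$ only with high probability, not on the whole support. For $a<1/2$ the weight $e^{-r/\tau}$ has dynamic range $e^{\Theta(\sqrt D)/\tau}$, so an exponentially small mass near $\rvx$ (e.g.\ a tight cluster containing $\rvx$) can dominate $\uppi_\tau(\cdot|\rvx)$. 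No polynomial-moment tail estimate excludes this.
\end{itemize}

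The missing idea is the normalization: take the shell radius $R_0$ and an almost-sure norm bound $B$ independent of $D$. This is what makes $C=\tau\rho=\bar\tau\sqrt2R_0D^a$. The shell and inner-product second moments then give $\mathbb E(\|\rvx-\rvy\|_2-\rho)^2\le C_{\mathrm{dist}}/D$ for independent pairs. The paper handles the Laplace reweighting one-sidedly, with no density-ratio bound or high-probability event. The numerator weight is at most $1$, and by Jensen the denominator is at least $e^{-2B/\tau}\ge e^{-2B/\bar\tau}$. So the reweighted radius satisfies $\mathbb E(r-\rho)^2\le e^{2B/\bar\tau}C_{\mathrm{dist}}/D$.

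After that only crude bounds are needed:
\begin{itemize}
\item $\|\tau^2(\alpha_\uppi-\rho/\tau)\mathbf s_{\uppi,\tau}\|_2\le|\mathbb E[r|\rvx]-\rho|$, from $\|\tau\mathbf s_{\uppi,\tau}\|_2\le1$;
\item $\|\boldsymbol\delta_\uppi\|_2\le2\sqrt{\operatorname{Var}(r|\rvx)}$, from $\|\rvu\|_2\le1$.
\end{itemize}
Each is $\mathcal O(D^{-1})$ in mean square. In that normalization your decomposition goes through verbatim. The $D^{-1}$ rate comes entirely from radial concentration, and no angular cancellation is needed.
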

Thus, the drifting objective is approximated by a reverse Fisher divergence between kernel-smoothed scores, up to vanishing $\mathcal O(D^{-1})$ error; in particular, $C$ is constant if $a=0$.

The same estimate has a simple geometric consequence: the drifting discrepancy field and the score-mismatch field become asymptotically parallel. That is, in high dimension, the two fields are not only close in mean square, but also point in essentially the same direction.

\begin{corollary}[(Informal) Large-$D$ cosine similarity of discrepancy fields]
\label{cor:field_cosine_alignment_largeD}
Assume the conditions of \Cref{thm:field_alignment_radial_full_xy}. Suppose moreover that, in averaged norm under $q_{\rvf}$, at least one of the two discrepancy fields remains non-vanishing. Then for all sufficiently large $D$,
\begin{mybox}
\[
\mathbb E_{\rvx\sim q_\rvf}
\Big[
\big\langle
\overline{\Delta}_{p,q_\rvf}(\rvx),
\overline{\Delta \rvs}_{p,q_\rvf}(\rvx)
\big\rangle
\Big]
=
1-\mathcal O(D^{-1}).
\]
\end{mybox}
Here, $\overline{\Delta}_{p,q_\rvf}$ and $\overline{\Delta \rvs}_{p,q_\rvf}$ are the normalized discrepancy fields
\[
\overline{\Delta}_{p,q_\rvf}(\rvx)
:=
\Delta_{p,q_\rvf}(\rvx)\Big/
\Big(\mathbb E_{\rvx\sim q_\rvf}\|\Delta_{p,q_\rvf}(\rvx)\|_2^2\Big)^{\scriptscriptstyle \frac{1}{2}},
\,\,\,
\overline{\Delta \rvs}_{p,q_\rvf}(\rvx)
:=
\Delta \rvs_{p,q_\rvf}(\rvx)\Big/
\Big(\mathbb E_{\rvx\sim q_\rvf}\|\Delta \rvs_{p,q_\rvf}(\rvx)\|_2^2\Big)^{\scriptscriptstyle \frac{1}{2}}.
\]
Equivalently, the cosine similarity between the drifting mean-shift field and the score-mismatch field converges to $1$ at rate $\mathcal O(D^{-1})$.
\end{corollary}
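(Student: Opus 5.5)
The corollary follows from \Cref{thm:field_alignment_radial_full_xy} by a Hilbert-space argument in $L^2(q_\rvf;\mathbb R^D)$, with inner product $\langle \mathbf a,\mathbf b\rangle_{q}:=\mathbb E_{\rvx\sim q_\rvf}\langle \mathbf a(\rvx),\mathbf b(\rvx)\rangle$ and norm $\|\cdot\|_q$. Write $A:=\Delta_{p,q_\rvf}$, $B:=C\,\Delta\rvs_{p,q_\rvf}$, and $E:=A-B$. The theorem gives $\|E\|_q^2=\mathcal O(D^{-1})$. Since $C>0$, the normalized field $\overline{\Delta\rvs}_{p,q_\rvf}$ equals $B/\|B\|_q$; rescaling by $C$ does not change it. The quantity to bound is therefore $\langle A,B\rangle_q/(\|A\|_q\|B\|_q)$, the cosine in $L^2(q_\rvf)$.

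The key step is the polarization identity:
\[
1-\frac{\langle A,B\rangle_q}{\|A\|_q\|B\|_q}
=\frac{\|A-B\|_q^2-(\|A\|_q-\|B\|_q)^2}{2\|A\|_q\|B\|_q}
\le \frac{\|E\|_q^2}{2\|A\|_q\|B\|_q}.
\]
The cosine is also at most $1$ by Cauchy--Schwarz. So the deficit lies in $[0,\|E\|_q^2/(2\|A\|_q\|B\|_q)]$. It remains to show that the denominator is bounded below by a positive constant independent of $D$.

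For this I use the non-vanishing hypothesis. Suppose $\|A\|_q\ge c>0$ for all large $D$; the other case is symmetric. The triangle inequality gives $\|B\|_q\ge \|A\|_q-\|E\|_q\ge c-\mathcal O(D^{-1/2})\ge c/2$ for large $D$. Hence $\|A\|_q\|B\|_q\ge c^2/2$, and the deficit is at most $\|E\|_q^2/c^2=\mathcal O(D^{-1})$. This also shows both normalizations are well defined, since both norms are nonzero.

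The main obstacle is interpreting ``remains non-vanishing in averaged norm'' correctly. The argument needs a $D$-uniform lower bound $c$ on $\|A\|_q$ or $\|B\|_q$, and $B$ includes the factor $C=C(D^a)$. If $a>0$, $C$ may grow or decay with $D$. I will state the hypothesis for the field whose norm actually enters, namely $A$ or $C\,\Delta\rvs$, so that the bound applies directly. If it is instead stated for the unscaled $\Delta\rvs$, the lower bound must be multiplied by $C$, and the rate becomes $\mathcal O(D^{-1}/(c^2\min(1,C)^2))$. This is still $\mathcal O(D^{-1})$ when $a=0$ or $C$ is bounded below.
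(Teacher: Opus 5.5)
Your proposal is correct and is the same argument the paper uses for its cosine bounds; the paper gives no separate proof of this corollary, but it writes out the identical reasoning for the implementation-aligned analogue, \Cref{cor:impl_cosine}: expand $\|A-B\|_q^2$ to bound the deficit by $\|E\|_q^2/(2\|A\|_q\|B\|_q)$, then lower-bound the norms via the triangle inequality and the non-vanishing hypothesis. Your worry about $C$ decaying is moot, since $a\ge 0$ gives $C=\bar\tau\rho D^{a}\ge\bar\tau\rho>0$, so the $\mathcal O(D^{-1})$ rate holds whichever of the two fields is assumed non-vanishing.
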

Even at low dimension, the Laplace mean-shift direction can already be fairly well aligned with the score-mismatch direction; see \Cref{subsec:emp-toy}. \Cref{cor:field_cosine_alignment_largeD} shows that this alignment becomes asymptotically exact in high dimension.

We defer two further consequences of this high-dimensional picture to \Cref{app:high-D}. First, we prove \emph{algorithmic gradient alignment}: the implemented stop-gradient update of drifting becomes asymptotically identical to the corresponding score-transport update. Second, we prove \emph{minimizer alignment}: population minimizers of the drifting objective achieve vanishing score-mismatch, so the two principles agree not only at the field level but also at the level of optima. Because kernel-based estimation is delicate in high dimension, practical implementations of drifting models are necessarily more carefully engineered; implementation-aligned counterparts of these high-dimensional theorems are given in \Cref{app:impl_highD}. Beyond the high-dimensional regime, we also analyze other settings in which drifting becomes secretly score-based, including the low-temperature regime in \Cref{app:low-tem}. Taken together, these results show that drifting behaves as a score-based model in disguise.


\section{Empirical Study}\label{sec:emp}

\subsection{Examinations with Oracles}\label{subsec:emp-toy}

We empirically test the high-dimensional alignment prediction in our theory: as the ambient dimension grows, the Laplace drifting discrepancy should become increasingly well approximated by a scaled score discrepancy. Concretely, we examine whether
\[
\Delta_{p,q}(\mathbf{x}) \approx C\,\Delta \mathbf{s}_{p,q}(\mathbf{x}),
\quad
\text{and}
\quad
\cos\angle\big(\Delta_{p,q}(\mathbf{x}),\,\Delta \mathbf{s}_{p,q}(\mathbf{x})\big)\to 1
\]
as $D\to\infty$. To do so, we treat $p$ and $q$ as oracle samplers and consider two synthetic families. \emph{Ring MoG} is a controlled setting in which $p$ and $q$ share the same shell structure and differ primarily by an angular offset, whereas \emph{Raw MoG} is a broader stress test with mismatched mode layouts and more heterogeneous norms. At each query point $\mathbf{x}\sim q$, we estimate both the Laplace mean-shift field and the corresponding kernel-score field nonparametrically from finite reference samples using the same Laplace kernel, with no training involved. Exact dataset constructions, temperature choice, and estimator formulas are deferred to \Cref{app:toy-setup}.

\begin{figure}
    \centering
    \includegraphics[width=0.8\linewidth]{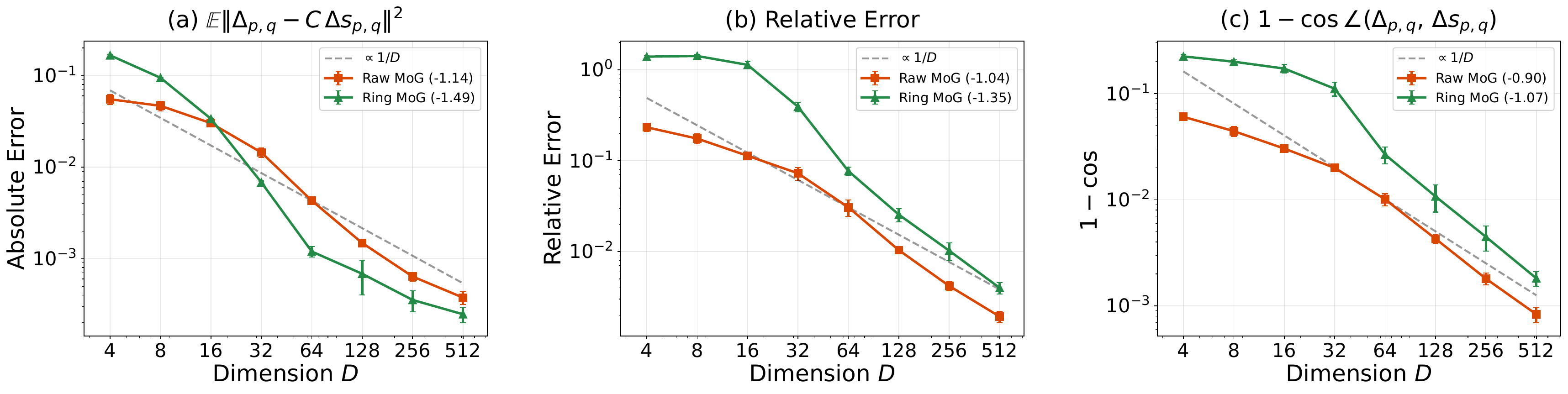}
\caption{\footnotesize{\textbf{Empirical validity of drifting--score alignment as dimension grows.}
Field alignment between the drifting discrepancy $\Delta_{p,q}(\mathbf{x})$ and the score discrepancy $\Delta \mathbf{s}_{p,q}(\mathbf{x})$ as the dimension $D$ increases, evaluated on Ring MoG and Raw MoG.
(a) Absolute alignment error $\mathbb{E}_q\|\Delta_{p,q}(\mathbf{x})-C^*\Delta \mathbf{s}_{p,q}(\mathbf{x})\|_2^2$, where $C^*$ is the best global least-squares scaling defined in \Cref{app:toy-diagnostics}.
(b) Relative error normalized by the field energy $\mathbb{E}_q\|\Delta_{p,q}(\mathbf{x})\|^2$.
(c) Directional misalignment $1-\cos\angle(\Delta_{p,q},\Delta \mathbf{s}_{p,q})$ on a log scale.
For both datasets, alignment improves monotonically with dimension: the errors decay close to the predicted $1/D$ rate, while the cosine similarity approaches $1$, supporting the theory in \Cref{sec:laplace}.}}
\vspace{-1.5em}
    \label{fig:emp-D-decay}
\end{figure}

\begin{wrapfigure}{r}{0.23\textwidth}
\vspace{-3em}
    \centering
    \includegraphics[width=0.85\linewidth]{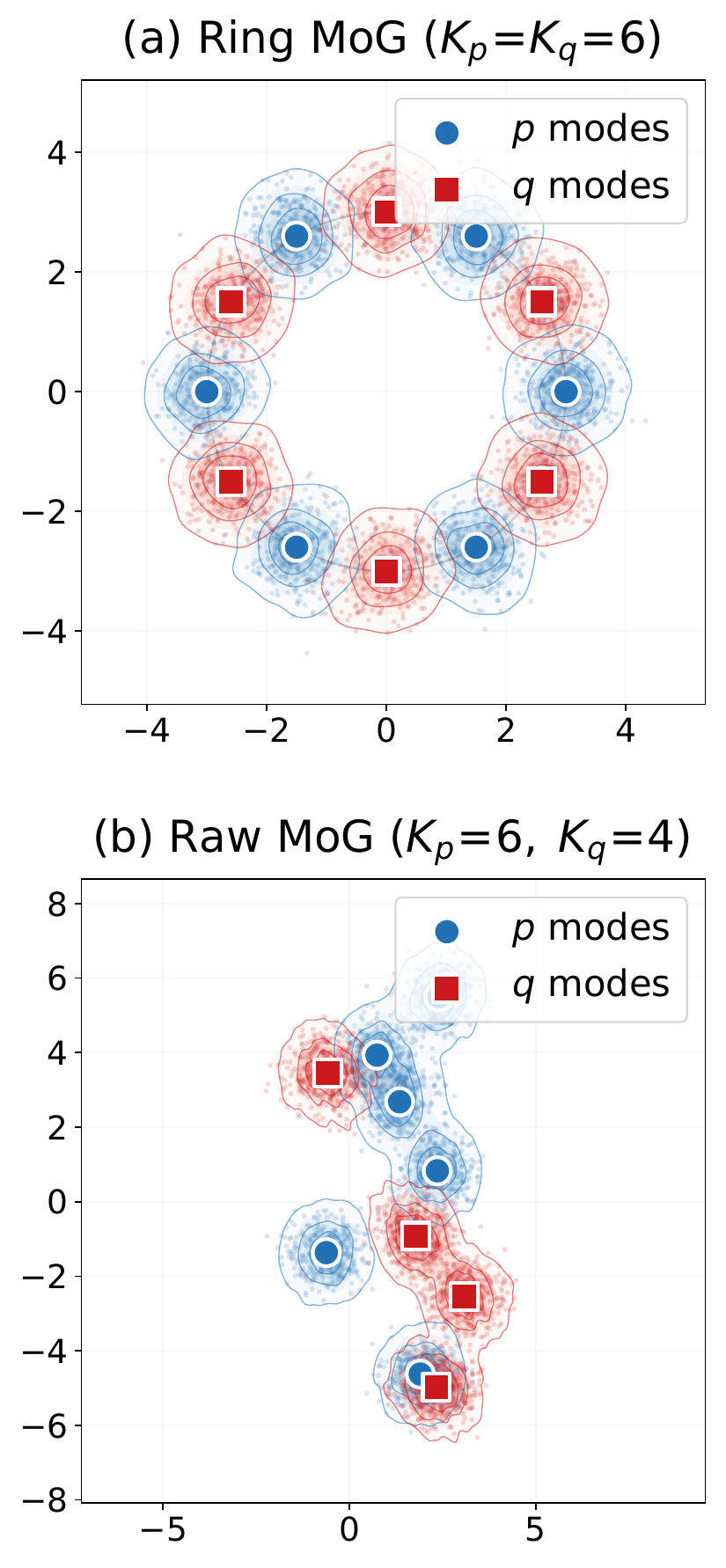}
    \caption{\footnotesize{\textbf{Illustration of the 2D datasets.} Top row shows Ring MoG; bottom row shows Raw MoG.}}
    \vspace{-2em}
\end{wrapfigure}

\paragraph{Empirical Results of Alignment as $D$ Increases.} We evaluate four complementary alignment metrics as $D$ increases; the results are shown in \Cref{fig:emp-D-decay}. Panels (a) and (b) report the absolute and relative approximation errors between $\Delta_{p,q}$ and the theory-predicted rescaling of $\Delta \mathbf{s}_{p,q}$. For both Ring MoG and Raw MoG, these errors decrease steadily with dimension. To make the rate explicit, we annotate the curves in (a) and (b) with their log--log regression slopes; the measured slopes are close to $-1$, in agreement with the $\mathcal O(D^{-1})$ prediction of \Cref{thm:field_alignment_radial_full_xy}. Panels (c) and (d) isolate directional alignment through the cosine similarity and its complement $1-\cos\angle$. In both datasets, the cosine similarity increases toward $1$ as $D$ grows, while the directional misalignment decreases correspondingly. Thus, the empirical picture matches the theory from two sides: the fields become close in magnitude after the predicted rescaling, and they become increasingly parallel in direction.

The geometric intuition follows from \Cref{prop:radial_precond_decomp} and is supported by the diagnostics in \Cref{app:toy-diagnostics}. As $D$ grows, the averaged pre-conditioners $\bar\alpha_p:=\E_{q}[\alpha_p(\rvx)]$ and $\bar\alpha_q:=\E_{q}[\alpha_q(\rvx)]$ converge to the same scale, while the covariance residuals $\boldsymbol{\delta}_{p}$ and $\boldsymbol{\delta}_{q}$ vanish. Thus, in high dimension, the drifting discrepancy for Laplace kernels converges to a scaled score discrepancy.

\subsection{Examinations with Trained Models}\label{subsec:emp-train}

\paragraph{Laplace (Drifting) vs. Gaussian (Score-Based) in Practice.}
A practical question is whether the Laplace-specific pre-conditioning and residual terms materially affect generation quality. To examine this, we compare one-step generators on CIFAR-10 within an otherwise identical drifting pipeline, varying only the kernel: Gaussian, for which mean shift is exactly aligned with the smoothed score-mismatch, and Laplace, the default choice in drifting implementations. Under matched training budgets, Gaussian achieves FID 8.38 versus 8.66 for Laplace in the ConvNeXt feature space~\cite{liu2022convnet,woo2023convnextv2}, and 4.72 versus 4.79 in the MAE feature space~\cite{he2022masked}. Since the training pipeline, architecture, and feature map are otherwise fixed, this comparison isolates the effect of the non-Gaussian pre-conditioning and residual terms. The near-parity across both feature spaces suggests that, in these settings, the Laplace-specific corrections have limited impact on end-to-end sample quality. This is also consistent with the concurrent findings of \cite{li2026long}, who report similar FIDs for Laplace and Gaussian kernels on CelebA-HQ. More experimental details are provided in \Cref{app:real-setup}.

\paragraph{Post-hoc Analysis on CIFAR-10.}
To understand why the two kernels behave so similarly in practice, we test whether the same mechanism predicted by our theory also appears in realistically trained drifting models on CIFAR-10. We record checkpoints from Laplace-drifting models and perform the same post-hoc analysis in the training feature spaces, namely ConvNeXt ($D=1024$) and MAE ($D=5120$), using the estimators from \Cref{app:toy-setup,app:toy-diagnostics}. \Cref{fig:cifar10-posthoc} reports four diagnostics over training. Panels (a) and (b) test the field-level prediction that the drifting discrepancy is well approximated by a scaled score discrepancy: in both feature spaces, the relative error decreases and stabilizes, while the cosine similarity remains high, showing increasing alignment between the learned drifting field and its score-based counterpart. This effect appears for both encoders and is even stronger in the higher-dimensional MAE space. Panels (c) and (d) probe the mechanism in \Cref{prop:radial_precond_decomp}: the empirical pre-conditioners move closer over training, while the averaged residual gap remains uniformly small, indicating that the covariance correction is minor in practice. Taken together, these results explain the near-parity in FID and support the same high-dimensional picture in realistic models: in the pre-trained feature spaces where drifting is optimized, Laplace drifting is nearly parallel to a score-based field, while the pre-conditioning increasingly matches and the residual terms remain small throughout training.

\paragraph{Post-hoc Analysis on ImageNet $256\times256$.}
Since training drifting models from scratch at high resolution is costly, we validate our theory using the officially released checkpoints. Unlike CIFAR-10, we consider a stronger latent$+$feature regime, using their best released ImageNet checkpoint: a Laplace model with SD-VAE latent generation and a pre-trained MAE feature-space loss, achieving FID $\approx 1.54$~\cite{deng2026drifting}. We then evaluate four theorem-facing quantities: the relative field error, the cosine similarity, the averaged pre-conditioner ratio $\bar\alpha_p/\bar\alpha_q$, and the averaged residual gap $\mathbb E_q\|\boldsymbol{\delta}_p-\boldsymbol{\delta}_q\|_2^2$. At the released checkpoint, these are $0.184$, $0.983$, $1.002$, and $1.6\times10^{-6}$, respectively, indicating strong field alignment, matched pre-conditioning, and a negligible residual correction.

Taken together, our theory and experiments show that the connection between drifting and score-based modeling is structural rather than via incidental engineering choices. It persists across feature spaces and across both pixel- and latent-space formulations, indicating that it is rooted in the underlying transport mechanism. In this sense, drifting is a score-based model in disguise.
\begin{figure}
\vspace{-1.em}
    \centering
    \includegraphics[width=\linewidth]{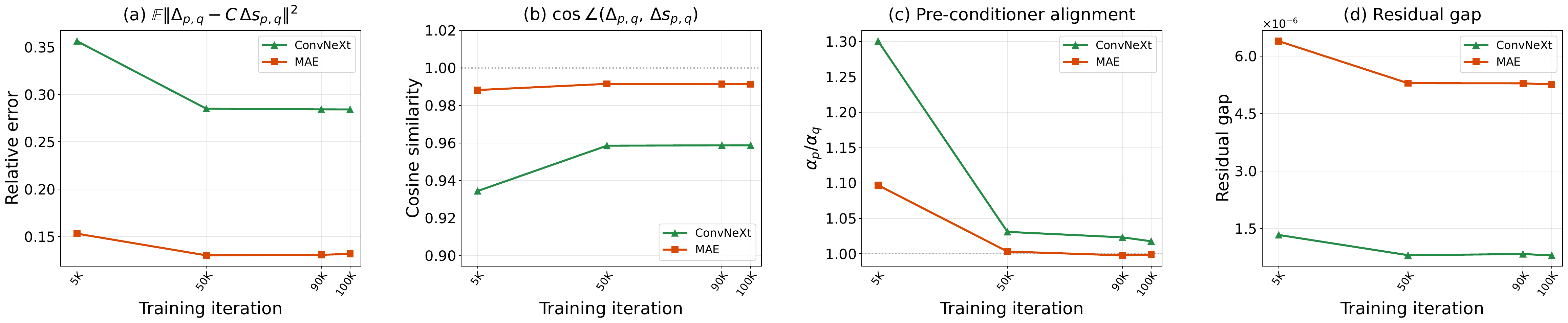}
\caption{\footnotesize{\textbf{Empirical validity of drifting--score alignment on CIFAR-10 under different pre-trained feature embeddings.}
Starting from Laplace-drifting checkpoints, we evaluate the decomposition-related quantities in the same frozen feature spaces used by training: the  ConvNeXt and the  MAE embeddings. (a) Relative error $\mathbb E_q\|\Delta_{p,q}-C^*\Delta \rvs_{p,q}\|_2^2 / \mathbb E_q\|\Delta_{p,q}\|_2^2$.
(b) Cosine similarity $\cos\angle(\Delta_{p,q},\Delta \rvs_{p,q})$.
(c) pre-conditioner matching $\bar\alpha_p/\bar\alpha_q$.
(d) Averaged residual gap $\mathbb E_q\|\bm{\delta}_p-\bm{\delta}_q\|_2^2$.
Across training, the approximation error decreases, the drifting and score discrepancies become more aligned, the pre-conditioners become better matched, and the residual gap remains small, again consistent with \Cref{prop:radial_precond_decomp,sec:laplace}.}}
\vspace{-1.5em}
    \label{fig:cifar10-posthoc}
\end{figure}


\section{Bridging Drifting Models and Other Existing Approaches}\label{app:drift-and-other}

In the previous sections, we showed that drifting is, at its core, score-based transport.
We now situate it among nearby one-step and two-sample generative methods. The closest prior framework is score-difference flow (SD-Flow)~\cite{weber2023score}, which already contains
several ingredients that reappear in Gaussian-kernel drifting: a Gaussian-smoothed score-difference
transport field, a normalized kernel-barycenter estimator, and a transport--then--regression generator
update mathematically equivalent to drifting's stop-gradient update at the semi-gradient level. We then discuss connections
to DMD~\cite{yin2023one}, GANs~\cite{goodfellow2014generative}, and
IGN~\cite{shocher2024idempotent}.

\subsection{Relationship to Score-Difference Flow and DMD}
\label{sec:sdf}
\label{sec:dmd}

Drifting model with Gaussian kernel is most directly connected to score-difference flow
(SD-Flow)~\cite{weber2023score}. SD-Flow predates drifting models and already develops the key ingredients that reappear in drifting models: an attractive--repulsive transport
field via scores, a normalized kernel-barycenter estimator, and a transport--then--regression generator update
closely aligned with drifting model's stop-gradient fixed-point update. DMD follows the same score-transport
logic, but realizes the score-mismatch parametrically through pre-trained diffusion teacher,
rather than nonparametrically through kernel estimates.

\paragraph{Score-Difference Flow from KL Descent.}
SD-Flow starts from a variational question: given a current model distribution $q$ and a target
distribution $p$, what infinitesimal transport direction moves $q$ most directly toward $p$?
Let particles $\rvx\sim q$ be transported by
\[
\rmT_\varepsilon(\rvx)=\rvx+\varepsilon \mathbf f(\rvx),
\]
and let $q_\varepsilon := (\rmT_\varepsilon)_\# q$ denote the pushed-forward distribution. The
first-order variation of the reverse KL satisfies
\[
\left.\frac{\diff}{\diff \varepsilon}
\mathcal D_{\mathrm{KL}}(q_\varepsilon\|p)
\right|_{\varepsilon=0}
=
-\E_{\rvx\sim q}
\left[
\big\langle
\mathbf s_p(\rvx)-\mathbf s_q(\rvx),\mathbf f(\rvx)
\big\rangle
\right],
\]
where
\[
\mathbf s_p(\rvx):=\nabla_{\rvx}\log p(\rvx),
\qquad
\mathbf s_q(\rvx):=\nabla_{\rvx}\log q(\rvx).
\]
Thus the steepest KL-decreasing direction, under the model-weighted $L^2(q)$ geometry, is the
score-difference field, or score-mismatch in our terminology,
\[
\mathbf f^*(\rvx)
=
\mathbf s_p(\rvx)-\mathbf s_q(\rvx).
\]
Moving particles along this field decreases the reverse KL at rate proportional to the reverse Fisher
divergence
\[
\E_{\rvx\sim q}
\big[
\|\mathbf s_p(\rvx)-\mathbf s_q(\rvx)\|_2^2
\big].
\]
This is the core principle of SD-Flow: the score-mismatch itself is the transport direction.

\paragraph{Gaussian Proxy Distributions and Normalized Kernel Barycenters.}
Since raw scores are typically unavailable or ill-defined, SD-Flow applies the same KL-descent
principle to Gaussian-smoothed proxy distributions. For the Gaussian kernel
\[
k_\tau(\rvx,\rvy)
=
\exp\left(-\frac{\|\rvx-\rvy\|_2^2}{2\tau^2}\right),
\]
define
\[
p_\tau(\rvx):=\E_{\rvy\sim p}[k_\tau(\rvx,\rvy)],
\qquad
q_\tau(\rvx):=\E_{\rvy\sim q}[k_\tau(\rvx,\rvy)].
\]
By Tweedie's formula, the score of the smoothed distribution can be written as
\[
\mathbf s_{\uppi,\tau}(\rvx)
=
\nabla_{\rvx}\log \uppi_\tau(\rvx)
=
\frac{1}{\tau^2}
\left(
\frac{\E_{\rvy\sim\uppi}[k_\tau(\rvx,\rvy)\rvy]}
     {\E_{\rvy\sim\uppi}[k_\tau(\rvx,\rvy)]}
-\rvx
\right),
\qquad
\uppi\in\{p,q\}.
\]
Consequently, the Gaussian-smoothed score difference is
\[
\mathbf s_{p,\tau}(\rvx)-\mathbf s_{q,\tau}(\rvx)
=
\frac{1}{\tau^2}
\left[
\frac{\E_{\rvy\sim p}[k_\tau(\rvx,\rvy)\rvy]}
     {\E_{\rvy\sim p}[k_\tau(\rvx,\rvy)]}
-
\frac{\E_{\rvy\sim q}[k_\tau(\rvx,\rvy)\rvy]}
     {\E_{\rvy\sim q}[k_\tau(\rvx,\rvy)]}
\right].
\]
Thus SD-Flow converts the KL-steepest-descent direction into a difference of two locally normalized
kernel barycenters. This local normalization is crucial: it is what turns a kernel force into a score-like
log-density-gradient signal.

\paragraph{Drifting Model with Gaussian Kernel is SD-Flow.}
The drifting model with Gaussian kernel uses the same normalized barycenter structure. By
\Cref{thm:gaussian_meanshift_score_matching},
\[
\Delta_{p,q}(\rvx)
=
\eta\tau^2
\big(\mathbf s_{p,\tau}(\rvx)-\mathbf s_{q,\tau}(\rvx)\big).
\]
Thus, at a fixed query point, the drifting model with Gaussian kernel and the kernel realization of
SD-Flow use the same population transport field, up to the scalar factor $\eta\tau^2$. SD-Flow
derives this field as the KL-decreasing direction between Gaussian-smoothed proxy distributions,
whereas drifting embeds it as the mean-shift discrepancy inside a fixed-point generator objective.

The tight connection also appears at the level of the generator update. In the model-optimization
view of SD-Flow, one first generates a particle
\[
\rvx=\rvf_{\btheta}(\rvz),
\qquad
\rvz\sim p_{\mathrm{prior}},
\]
moves it along the score-difference direction with stepsize $h>0$,
\[
\rvx^+
=
\rvx
+
h\big(\mathbf s_{p,\tau}(\rvx)-\mathbf s_{q_{\btheta},\tau}(\rvx)\big),
\]
and then updates the generator by regression toward the moved target. Written as a frozen-target
surrogate, this gives
\[
\mathcal L_{\mathrm{fp\mbox{-}score}}(\btheta)
:=
\E_{\rvz\sim p_{\mathrm{prior}}}
\left[
\left\|
\rvf_{\btheta}(\rvz)
-
\mathrm{sg}\left(
\rvf_{\btheta}(\rvz)
+
h\big(\mathbf s_{p,\tau}(\rvx)-\mathbf s_{q_{\btheta},\tau}(\rvx)\big)
\right)
\right\|_2^2
\right],
\qquad
\rvx=\rvf_{\btheta}(\rvz).
\]
Differentiating through the generator while freezing the transported target gives the semi-gradient
\[
\nabla_{\btheta}\mathcal L_{\mathrm{fp\mbox{-}score}}(\btheta)
=
-2h\,
\E_{\rvz\sim p_{\mathrm{prior}}}
\left[
\big(\partial_{\btheta}\rvf_{\btheta}(\rvz)\big)^\top
\big(\mathbf s_{p,\tau}(\rvx)-\mathbf s_{q_{\btheta},\tau}(\rvx)\big)
\right].
\]
For Gaussian kernels, choosing $h=\eta\tau^2$ yields the same first-order frozen-target update
direction as the drifting model with Gaussian kernel:
\[
\nabla_{\btheta}\mathcal L_{\mathrm{drift}}(\btheta)
=
-2\eta\tau^2\,
\E_{\rvz\sim p_{\mathrm{prior}}}
\left[
\big(\partial_{\btheta}\rvf_{\btheta}(\rvz)\big)^\top
\big(\mathbf s_{p,\tau}(\rvx)-\mathbf s_{q_{\btheta},\tau}(\rvx)\big)
\right].
\]
Thus the drifting model with Gaussian kernel and SD-Flow agree not only in the normalized-kernel
transport field, but also in the induced transport--then--regression semi-gradient.

\paragraph{Technical Comparison.}
The comparison above shows that, in the Gaussian population setting, the drifting model with Gaussian
kernel recovers several core ingredients already developed in SD-Flow: the attractive--repulsive
score-difference field, the normalized two-sample kernel-barycenter estimator, and the frozen-target
transport--then--regression semi-gradient.

The formulations differ in how this field is derived and used. SD-Flow is derived as a KL-gradient-flow
principle on Gaussian-smoothed proxy distributions; in its kernel form, the field is naturally evaluated
at noised proxy points, and the clean update is interpreted as aligning the smoothed proxies. Drifting,
instead, is formulated as a stop-gradient fixed-point regression objective and typically evaluates the
kernel field directly at generated samples or at their frozen feature embeddings. Thus, the field identity
is exact at a fixed query point, but the query law and training implementation differ.

\paragraph{DMD as Parametric Score Transport.}
DMD realizes a closely related score-transport principle, but obtains the score signal parametrically
rather than through kernel mean shift. Let $\rvf_\btheta(\rvz)$ be a deterministic one-step generator
with Gaussian prior $\rvz\sim p_{\mathrm{prior}}$, inducing $q_\btheta=(\rvf_\btheta)_\#p_{\mathrm{prior}}$.
Applying the same forward process to generated samples gives
\[
\hat{\rvx}_t
:=
\alpha_t\rvf_\btheta(\rvz)+\sigma_t\beps,
\qquad
\rvz\sim p_{\mathrm{prior}},\quad
\beps\sim\mathcal N(\mathbf 0,\mathbf I),
\]
with marginal density $q_{\btheta,t}$. DMD matches $q_{\btheta,t}$ to the noised data marginal $p_t$
through a reverse-KL objective. Its gradient takes the score-mismatch form
\[
\nabla_\btheta \mathcal L_{\mathrm{DMD}}(\btheta)
=
\E_{t,\rvz,\beps}
\Big[
\omega(t)\alpha_t
\big(\partial_\btheta \rvf_\btheta(\rvz)\big)^\top
\big(
\mathbf s_{q_\btheta}(\hat{\rvx}_t,t)
-
\mathbf s_p(\hat{\rvx}_t,t)
\big)
\Big].
\]
Equivalently, gradient descent transports generated samples along
\[
\mathbf s_p(\hat{\rvx}_t,t)-\mathbf s_{q_\btheta}(\hat{\rvx}_t,t).
\]
In practice, $\mathbf s_p(\cdot,t)$ is provided by a pre-trained diffusion teacher, while
$\mathbf s_{q_\btheta}(\cdot,t)$ is estimated by an auxiliary fake-score model trained on current
generator samples.

DMD can therefore be written in the same transport--then--projection language. Define
\[
\Delta\rvs_{p_t,q_{\btheta,t}}(\mathbf x)
:=
\frac{\omega(t)\alpha_t}{2}
\Big(
\mathbf s_p(\mathbf x,t)-\mathbf s_{q_\btheta}(\mathbf x,t)
\Big),
\]
and consider
\[
\mathcal L_{\mathrm{fp\mbox{-}score}}(\btheta)
:=
\E_{t,\rvz,\beps}
\Big[
\big\|
\rvf_\btheta(\rvz)
-
\mathrm{sg}\big(
\rvf_\btheta(\rvz)
+
\Delta\rvs_{p_t,q_{\btheta,t}}(\hat{\rvx}_t)
\big)
\big\|_2^2
\Big].
\]
Differentiating with stop-gradient gives the same direction as the DMD gradient above. Thus DMD and
drifting share the same model-weighted score-transport logic; they differ mainly in the source of the
score-mismatch signal.

Therefore, we summarize the relationship between drifting models, SD-Flow, and DMD as follows:
\begin{quote}
{\color{BrickRed}
\emph{
In the Gaussian case, the drifting model recovers the core SD-Flow ingredients inside a stop-gradient fixed-point generator objective: the same attractive--repulsive score-difference field, the same normalized kernel-barycenter estimator, the same reverse-Fisher objective, and the same transport--then--regression semi-gradient. DMD follows the same score-transport principle, but realizes the score mismatch parametrically through diffusion teacher/fake-score estimates, whereas drifting realizes it nonparametrically through kernel estimates.
}}
\end{quote}

\subsection{Relationship to GANs}\label{sec:gan}


 The drifting-model discrepancy field $\Delta_{p,q}$ has an \emph{attractive--repulsive} structure: it pulls model samples toward regions supported by the data distribution $p$ while pushing them away from regions heavily supported by the model distribution $q$. This is conceptually close to two-sample, kernel-based views of GANs~\cite{goodfellow2014generative}. The goal of this subsection is to highlight a simple but important distinction. Coulomb GANs~\cite{unterthiner2018coulomb} (and, more broadly, MMD-style kernel discrepancies~\cite{li2017mmd,li2015generative,unterthiner2018coulomb} discussed in \cite{deng2026drifting}) derive update directions by differentiating a \emph{global} objective that compares $p$ and $q$ through kernel interactions. Drifting models use the same two-sample ingredients, but introduce a \emph{local normalization} via mean shift that turns the kernel signal into a score-like (log-gradient) update, thereby connecting more naturally to (kernel-smoothed) score matching.

\paragraph{Coulomb GANs: A Potential Field Induced by A Global Discrepancy.}
Coulomb GANs begin with the signed density difference $p-q$ and define a kernel potential
\[
\Phi_{p,q}(\rvx)
:= \E_{\rvy\sim p}\big[k(\rvx,\rvy)\big]
   - \E_{\rvy\sim q}\big[k(\rvx,\rvy)\big],
\]
which appears in empirical (mini-batch) form as Equation~(8) in~\cite{unterthiner2018coulomb}.
Generator samples are transported along the induced \emph{force field}
(the direction followed by generator particles under the Coulomb-GAN charge convention),
given by the gradient of the potential:
\[
\mathbf{F}_{p,q}(\rvx)
:= \nabla_{\rvx}\Phi_{p,q}(\rvx)
= \E_{\rvy\sim p}\big[\nabla_{\rvx}k(\rvx,\rvy)\big]
  -\E_{\rvy\sim q}\big[\nabla_{\rvx}k(\rvx,\rvy)\big].
\]
This matches the electric-field expression in Equation~(34) of~\cite{unterthiner2018coulomb} up to a sign convention
(generator particles are treated as negative charges).
For radial kernels, $\nabla_{\rvx}k(\rvx,\rvy)$ is colinear with $(\rvy-\rvx)$, so $\mathbf{F}_{p,q}(\rvx)$ takes the familiar
``attract data, repel model'' form of a weighted displacement difference.

The key point is that this force field is not arbitrary: it is induced by an \emph{interaction energy}.
Coulomb GANs define
\[
\mathcal{E}(p,q)
:= \frac12 \int \big(p(\rvx)-q(\rvx)\big)\,\Phi_{p,q}(\rvx)\,\diff\rvx,
\]
and update particles by following the potential gradient $\mathbf{F}_{p,q}=\nabla_{\rvx}\Phi_{p,q}$.
Heuristically, transporting model mass in the direction of $\mathbf{F}_{p,q}$ decreases $\mathcal{E}$.
The energy is minimized when $p\equiv q$.
In this sense, the Coulomb-GAN update is driven by a \emph{global} discrepancy: the learning signal is determined by
how $p$ and $q$ interact through the kernel across the whole space.

To relate this to the drifting model, it is helpful to isolate the kernel-smoothed profile
\[
\uppi_k(\rvx) := \E_{\rvy\sim \uppi}\big[k(\rvx,\rvy)\big],
\qquad \uppi\in\{p,q\}.
\]
Intuitively, $\uppi_k(\rvx)$ summarizes how much probability mass of $\uppi$ lies near $\rvx$ in the sense of the kernel.
Note that $\uppi_k$ is generally not a normalized density; it is a kernel-smoothed mass profile (or ``density proxy'')
whose overall scale depends on the normalization of $k$.
With this notation,
\[
\Phi_{p,q}(\rvx)=p_k(\rvx)-q_k(\rvx),
\qquad
\mathbf{F}_{p,q}(\rvx)=\nabla_{\rvx} p_k(\rvx)-\nabla_{\rvx} q_k(\rvx),
\]
so Coulomb GAN transport is driven by gradients of these kernel-smoothed mass profiles.

\paragraph{Drifting Models: Same Ingredients but Locally Normalized.}
Drifting uses the same attraction--repulsion ingredients from $p$ and $q$, but replaces an unnormalized force by a \emph{locally normalized} mean-shift direction:
\[
\Delta_{p,q}(\rvx)
=\eta\big(\mathbf{V}_{p,k}(\rvx)-\mathbf{V}_{q,k}(\rvx)\big),
\qquad
\mathbf{V}_{\uppi,k}(\rvx)
=
\frac{\E_{\rvy\sim \uppi}\big[k(\rvx,\rvy)(\rvy-\rvx)\big]}
     {\E_{\rvy\sim \uppi}\big[k(\rvx,\rvy)\big]}
\quad (\uppi\in\{p,q\}).
\]
The numerator is a kernel-weighted displacement, while the denominator $\uppi_k(\rvx):=\E_{\rvy\sim\uppi}[k(\rvx,\rvy)]$ is the local kernel mass.
This normalization automatically rescales the step by local neighborhood density, while keeping the same qualitative behavior of ``attract $p$, repel $q$''.
In dense regions $\uppi_k(\rvx)$ is large, so the update is tempered; in sparse regions it is amplified, yielding a meaningful pull toward nearby neighbors even when few samples lie in the kernel neighborhood.
This is a score-like property: the update follows a direction determined by local geometry rather than a force whose magnitude is proportional to local mass.

Score matching is formulated in terms of log-gradients. The kernel-induced score is
\[
\mathbf{s}_{\uppi,k}(\rvx)
:= \nabla_{\rvx}\log \uppi_k(\rvx)
= \frac{\nabla_{\rvx}\uppi_k(\rvx)}{\uppi_k(\rvx)}.
\]
This makes the contrast explicit.
Kernel-based potential methods (e.g., Coulomb-style updates) act on the \emph{unnormalized} gradient $\nabla_{\rvx}\uppi_k(\rvx)$, which scales with local mass.
Drifting, through the mean-shift normalization, naturally aligns with the \emph{normalized} log-gradient $\nabla_{\rvx}\log \uppi_k(\rvx)$, namely the score of the kernel-smoothed profile.
As shown in \Cref{sec:instantiations}, this connection is exact for Gaussian kernels, where the drifting discrepancy reduces to a kernel-smoothed score-mismatch.

We summarize these points in the following takeaway:
\begin{quote}
{\color{BrickRed}
\emph{Mean-shift drifting is more score-like than kernel-discrepancy methods (e.g., Coulomb GANs/MMD), because its normalization by $\uppi_k(\rvx)$ turns mass gradients into log-gradients (scores).}
}
\end{quote}

\subsection{Relationship to IGNs}\label{sec:ign}

Idempotent Generative Networks (IGN)~\cite{shocher2024idempotent} provide a different route to one-step generation, but they can still be related to drifting through the same equilibrium-seeking viewpoint.

IGN starts from a self-map $\mathbf{f}_{\btheta}:\mathbb R^D\to\mathbb R^D$ and defines the residual field
\[
\Delta^{\mathrm{IGN}}_{\btheta}(\mathbf{u})
:=
\mathbf{f}_{\btheta}(\mathbf{u})-\mathbf{u},
\qquad
\mathcal U_{\btheta}(\mathbf{u})
=
\mathbf{u}+\Delta^{\mathrm{IGN}}_{\btheta}(\mathbf{u})
=
\mathbf{f}_{\btheta}(\mathbf{u}).
\]
Its equilibrium condition is simply the fixed-point relation
\[
\Delta^{\mathrm{IGN}}_{\btheta}(\mathbf{u})=\mathbf 0
\qquad\Longleftrightarrow\qquad
\mathbf{f}_{\btheta}(\mathbf{u})=\mathbf{u},
\]
with fixed-point set
\[
\mathcal M_{\btheta}
:=
\{\mathbf{u}\in\mathbb R^D:\mathbf{f}_{\btheta}(\mathbf{u})=\mathbf{u}\}.
\]

This gives IGN a natural geometric interpretation in the same equilibrium language.
The reconstruction loss
\[
\mathcal L_{\mathrm{rec}}
:=
\E_{\mathbf{x}\sim p}\big[\|\mathbf{f}_{\btheta}(\mathbf{x})-\mathbf{x}\|_2^2\big]
=
\E_{\mathbf{x}\sim p}\big[\|\Delta^{\mathrm{IGN}}_{\btheta}(\mathbf{x})\|_2^2\big]
\]
makes real data $\mathbf{x}\sim p$ lie near $\mathcal M_{\btheta}$ by enforcing $\mathbf{f}_{\btheta}(\mathbf{x})\approx \mathbf{x}$.
Likewise, if $\beps\sim p_{\mathrm{prior}}$ and $\hat{\mathbf{x}}:=\mathbf{f}_{\btheta}(\beps)$ denotes a generated output, the idempotence loss
\[
\mathcal L_{\mathrm{idem}}
:=
\E_{\beps\sim p_{\mathrm{prior}}}
\big[
\|\mathbf{f}_{\btheta}(\hat{\mathbf{x}})-\hat{\mathbf{x}}\|_2^2
\big]
=
\E_{\beps\sim p_{\mathrm{prior}}}
\big[
\|\Delta^{\mathrm{IGN}}_{\btheta}(\hat{\mathbf{x}})\|_2^2
\big]
\]
makes generated outputs also lie near the same set by enforcing
$\mathbf{f}_{\btheta}(\hat{\mathbf{x}})\approx \hat{\mathbf{x}}$.

In this sense, IGN may be viewed as using the residual field
\[
\Delta^{\mathrm{IGN}}_{\btheta}(\mathbf{u})=\mathbf{f}_{\btheta}(\mathbf{u})-\mathbf{u}
\]
as a restoring force that pulls both real data and generated samples toward a common data-anchored fixed-point set, while correcting points that remain off equilibrium.
This is the precise sense in which IGN is related to drifting.
Both are equilibrium-seeking, but the mechanism is different:
drifting, score-mismatch, and Coulomb/MMD-style updates are \emph{distribution-comparison fields}, which compare local statistics of $p$ and $q$ at a query point and produce an explicit attractive--repulsive transport rule.
IGN, by contrast, does not compare $p$ and $q$ through a subtractive two-sample field; instead, it uses a \emph{self-map residual field} that pulls inputs toward the fixed-point anchor manifold $\mathcal M_{\btheta}$.

\begin{table}[th!]
\centering
\small
\setlength{\tabcolsep}{5pt}
\renewcommand{\arraystretch}{1.15}
\caption{
Comparison of four update-field realizations for one-step generative modeling.
Drifting, score-mismatch, and Coulomb/MMD-style methods are all attractive--repulsive fields, while IGN is a self-map residual that enforces equilibrium through fixed points.
}
\resizebox{\textwidth}{!}{%
\begin{tabular}{p{2.6cm}p{2.4cm}p{4.8cm}p{2.4cm}p{3.4cm}}
\toprule
\textbf{Method}
& \textbf{Field Type}
& \textbf{Update Field}
& \textbf{Normalization?}
& \textbf{Main Interpretation}
\\
\midrule

\textbf{Drifting Model}
&
Mean-shift
&
$\displaystyle
\Delta_{p,q}(\rvx)
=
\eta\big(\mathbf V_{p,k}(\rvx)-\mathbf V_{q,k}(\rvx)\big)
$
&
$\checkmark$
&
Attractive--repulsive
\\

\textbf{Score-based Model}
&
Score-mismatch
&
$\displaystyle
\Delta \rvs_{p,q}(\rvx)
=
\mathbf s_{p,k}(\rvx)-\mathbf s_{q,k}(\rvx)
$
&
$\checkmark$ (automatic via score)
&
Attractive--repulsive
\\

\textbf{GAN} 
&
kernel-potential/ MMD-type kernel
&
$\displaystyle
\mathbf F_{p,q}(\rvx)
=
\nabla_{\rvx}p_k(\rvx)-\nabla_{\rvx}q_k(\rvx)
$
&
$\times$
&
Attractive--repulsive
\\

\textbf{IGN}
&
Self-map residual
&
$\displaystyle
\Delta^{\mathrm{IGN}}_{\btheta}(\rvu)
=
\rvf_{\btheta}(\rvu)-\rvu
$
&
$\times$
&
Data-anchored manifold-restoring
\\

\bottomrule
\end{tabular}
}
\label{tab:comparison_bridges}
\end{table}

\subsection{Summary and Comparison}\label{sec:summary-comparison}

Drifting is  directly connected to score-based modeling, since mean-shift drifting can be viewed as a score-mismatch discrepancy field, with exact equivalence in the Gaussian case. By comparison, its connections to Coulomb GAN and IGN are less direct: Coulomb GAN is related through kernel-induced force fields, while IGN is related through the shared equilibrium-seeking viewpoint, in which a data-anchored fixed-point restoring field pulls samples toward the anchored manifold. We summarize these relationships in \Cref{tab:comparison_bridges}.


\section{Conclusion}\label{sec:conclude}
We showed that drifting model is more inherently connected to score-based modeling than it may initially appear. For Gaussian kernels, the correspondence is exact: the mean-shift field coincides with a score-mismatch field, and the drifting objective becomes a reverse-Fisher score-matching objective. For general radial kernels, mean-shift decomposes into a pre-conditioned score term plus a residual that captures local geometry. For the Laplace kernel used in practice, we further proved that this residual becomes negligible in regimes of interest, yielding polynomial alignment of the drifting field, the implemented updates, and the population optima with their score-based counterparts. Our experiments support these predictions and show that Gaussian and Laplace kernels lead to broadly comparable generation quality. Overall, these results place drifting within the score-based generative modeling framework as a kernel-based, nonparametric realization of score-driven one-step generation. We hope that, by clarifying the connections between drifting models and existing approaches, our analysis helps distill the principles of one-step generation and guide the design of faster, more capable, and more stable generative methods.
\clearpage
\newpage
\bibliographystyle{unsrt}
\bibliography{ref}

\clearpage
\newpage
\appendix
\appendix
\tableofcontents

\begin{center}
   \section*{\LARGE Appendix}\label{appendix} 
\end{center}

\section{Discussion on Identifiability}\label{app:identifiability}

\paragraph{Definition of Identifiability.}
In the fixed-point training template of drifting models in \Cref{eq:fp_loss,eq:value_equiv}, even reaching a global minimum of the drifting loss does not by itself imply that the learned distribution matches the data. Indeed,
$\mathcal L_{\mathrm{drift}}(\btheta)=0$ only guarantees that the discrepancy field vanishes on model samples (that is, $\Delta_{p,q_\btheta}(\rvx)=\mathbf 0$ for $q_\btheta$-almost every $\rvx$). In principle, there can be multiple generators, and thus multiple model distributions, that satisfy this same fixed-point condition. This motivates an \emph{identifiability} question: is the data distribution the only equilibrium picked out by the discrepancy operator?

We say a discrepancy operator is \emph{identifiable}~\cite{deng2026drifting} on a class of distributions if the only way for its discrepancy to vanish is to have $q=p$. A natural criterion is
\begin{equation}
\label{eq:ident_criterion}
\Delta_{p,q}(\rvx)=\mathbf 0 \ \text{for $q$-almost every }\rvx
\quad\Longrightarrow\quad
q=p.
\end{equation}
If identifiability fails, the fixed-point condition can admit spurious equilibria with $q\neq p$, and driving the drifting loss to zero does not certify true distribution matching. Whether identifiability holds depends on the choice of kernel; we return to this question for Gaussian kernels in \Cref{sec:drifting} and for general radial kernels in \Cref{sec:radial}.

\paragraph{Identifiability of the Gaussian Kernel.}
Using a Gaussian kernel to define the discrepancy $\Delta_{p,q}(\rvx)$ (as in DMD-style objectives) has a clean
identifiability story. By \Cref{thm:gaussian_meanshift_score_matching} or the decomposition in \Cref{prop:radial_precond_decomp}, matching the mean-shift directions under a
Gaussian kernel is equivalent to matching the corresponding Gaussian-smoothed scores:
\[
\mathbf{s}_{p,\tau}(\rvx)=\mathbf{s}_{q,\tau}(\rvx)\qquad \text{for all $\rvx$}.
\]
Under mild regularity and the usual normalization (both $p_\tau$ and $q_\tau$ integrate to one), equality of scores
forces the smoothed densities to coincide: $\mathbf{s}_{p,\tau}\equiv \mathbf{s}_{q,\tau}$ implies
$p_\tau = q_\tau$ (the only ambiguity is a multiplicative constant, removed by normalization).
Finally, Gaussian smoothing is injective: if $p_\tau=q_\tau$ for some $\tau>0$, then necessarily $p=q$.
This yields the following identifiability statement.

\begin{proposition}[Identifiability of Gaussian Case (Idealized DMD)]
\label{prop:dmd_ident}
Assume $p_\tau$ and $q_\tau$ are the Gaussian-smoothed versions of $p$ and $q$ for some $\tau>0$.
If $\mathbf{s}_{p,\tau}(\mathbf{x})=\mathbf{s}_{q,\tau}(\mathbf{x})$ for all $\mathbf{x}$, then $p=q$.
In particular, any population objective that enforces $p_\tau=q_\tau$ for a single $\tau>0$ is identifiable.
\end{proposition}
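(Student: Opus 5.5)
The argument has three steps. First, equal scores give equal smoothed densities. Second, the smoothed densities are normalized, which removes the free constant. Third, Gaussian convolution is injective, which passes equality back to $p$ and $q$.

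\emph{Step 1.} For any distribution $\uppi$, the smoothed profile $\uppi_\tau=\uppi * k_\tau$ is strictly positive and $C^\infty$ on $\mathbb R^D$. Positivity holds because $k_\tau>0$ and $\uppi$ is a probability measure. Smoothness follows by differentiating under the integral, since every derivative of $k_\tau$ is bounded. So $\log p_\tau$ and $\log q_\tau$ are smooth on all of $\mathbb R^D$. The hypothesis $\mathbf s_{p,\tau}\equiv\mathbf s_{q,\tau}$ then reads $\nabla_{\rvx}\big(\log p_\tau-\log q_\tau\big)\equiv\mathbf 0$. Because $\mathbb R^D$ is connected, $\log p_\tau-\log q_\tau$ is a constant $c$, so $p_\tau=e^{c}q_\tau$.

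\emph{Step 2.} Fubini gives $\int p_\tau=\int k_\tau=\int q_\tau=(2\pi\tau^2)^{D/2}<\infty$. Integrating both sides of $p_\tau=e^{c}q_\tau$ therefore forces $e^{c}=1$, and hence $p_\tau=q_\tau$. Working with the unnormalized kernel is harmless here, since both integrals carry the same finite constant.

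\emph{Step 3.} I expect injectivity of Gaussian smoothing to be the only nontrivial step, and I will handle it with Fourier transforms of finite measures. Taking characteristic functions of $p_\tau=q_\tau$ gives
\[
\hat p(\boldsymbol\omega)\,e^{-\tau^2\|\boldsymbol\omega\|_2^2/2}=\hat q(\boldsymbol\omega)\,e^{-\tau^2\|\boldsymbol\omega\|_2^2/2}
\]
for all $\boldsymbol\omega$, after dividing out the common constant $(2\pi\tau^2)^{D/2}$. The Gaussian factor never vanishes, so $\hat p\equiv\hat q$. By the uniqueness theorem for characteristic functions, $p=q$ as distributions.

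The last sentence of the proposition follows directly. An objective that enforces $p_\tau=q_\tau$ lands at the start of Step 3, so only $q=p$ can achieve it.
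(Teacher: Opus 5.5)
Your proof is correct and takes the same route as the paper's argument (stated in the prose just before the proposition): equal scores give $p_\tau=c\,q_\tau$, normalization forces $c=1$, and injectivity of Gaussian smoothing gives $p=q$. Your write-up adds detail the paper omits. You show that the paper's ``mild regularity'' is automatic for the Gaussian kernel, since $p_\tau$ and $q_\tau$ are strictly positive and smooth, and you prove injectivity from the nonvanishing Fourier transform, the same way the paper handles the Laplace kernel elsewhere.
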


In words, in the Gaussian case the DMD objective inherited from a diffusion teacher is identifiable under an idealized setting:
if the teacher is perfectly trained, then at the population level the objective has a unique intended optimum, namely $q=p$.
In practice, this guarantee is only approximate: it depends on the teacher score being sufficiently accurate and on student training
being well behaved, in the sense that the model class is expressive enough and optimization reaches the intended optimum rather than
an approximate or degenerate solution.

\paragraph{Identifiability of the General Radial Kernel.}
For the mean-shift--induced discrepancy field
\[
\Delta_{p,q}(\mathbf{x})
=
\eta\big(\mathbf V_{p,k}(\mathbf{x})-\mathbf V_{q,k}(\mathbf{x})\big),
\]
a natural identifiability question is whether driving the field to zero forces true distribution matching, namely,
\[
\Delta_{p,q}\equiv \mathbf 0
\quad \Longrightarrow \quad
p=q.
\]

For a general radial kernel, however, \Cref{prop:radial_precond_decomp} exposes extra degrees of freedom that can cancel the
score-mismatch. Concretely,
\begin{equation}
\label{eq:ident_decomp_key}
\Delta_{p,q}(\mathbf{x})=\mathbf 0
\quad\Longrightarrow\quad
\tau^2\Big(\alpha_{p}(\mathbf{x})\,\mathbf s_{p,k_\tau}(\mathbf{x})
-\alpha_{q}(\mathbf{x})\,\mathbf s_{q,k_\tau}(\mathbf{x})\Big)
+\big(\boldsymbol{\delta}_{p}(\mathbf{x})-\boldsymbol{\delta}_{q}(\mathbf{x})\big)=\mathbf 0.
\end{equation}
so $\Delta_{p,q}\equiv \mathbf 0$ only constrains a sum of a scaled score difference and a residual
$\boldsymbol{\delta}$ capturing distance--direction coupling. Unless one can separately control the scalar pre-conditioners
$\alpha_{\uppi}$ and the residuals $\boldsymbol{\delta}_{\uppi}$, \Cref{eq:ident_decomp_key} does not force
$\nabla\log p_\tau=\nabla\log q_\tau$, and therefore does not by itself certify $p_\tau=q_\tau$.
In particular, identifiability is generally $p$-dependent: the functions $\alpha_{p}$ and $\boldsymbol{\delta}_{p}$ are
determined by the kernel-reweighted local neighborhood law induced by $p$, and different targets can yield different patterns of
(pre-conditioner, residual) cancellation.

As an important example, for Laplace-type kernels commonly used in drifting-model implementations, the induced pre-conditioning is generally \emph{not} constant. In this case, both $\alpha_{\uppi}(\mathbf{x})$ and the residual $\boldsymbol{\delta}_{\uppi}(\mathbf{x})$ depend on the local kernel neighborhood and can vary with the underlying distribution $\uppi$.
As a result, even at the population level, the equilibrium condition $\Delta_{p,q}(\mathbf{x})\equiv \mathbf{0}$ does not necessarily imply score equality. Instead, it can be satisfied by a cancellation between a scaled score-mismatch and the residual term. Therefore, identifiability is not automatic and may require additional structural assumptions.

This concern can be amplified in practice because the drifting field is never evaluated at the population level. Implementations use finite-sample, mini-batch estimates with normalized (softmax) kernel weights, so the drift becomes a ratio-type Monte Carlo estimator, which can have non-negligible variance and bias. Empirically, the method often benefits from using more reference samples, consistent with this estimation sensitivity.
Moreover, many pipelines apply batch-dependent normalizations and heuristics, such as the balance between positive and negative sets, the choice and quality of the feature map, and tuning or aggregating across multiple temperatures~\cite{driftin2026}. These choices effectively reshape the kernel and hence the induced transport field. Finally, when the kernel is computed in a learned feature space, near-flat feature distances can make the softmax weights close to uniform, yielding a small drift magnitude even when $p$ and $q$ remain mismatched. Taken together, these effects imply that an observed near-equilibrium in practice does not uniquely certify $q\approx p$. It can also make training behavior sensitive to implementation choices that change the effective kernel and the resulting field.

In summary,
\begin{quote}
    {\color{BrickRed}\emph{Gaussian kernels lead to an identifiable score-based discrepancy. 
For Laplace-type kernels, identifiability is not automatic in general, because $\Delta_{p,q}$ can vanish through cancellation with the residual term even when the underlying scores differ.}
    }
\end{quote}

\section{Regimes Where Drifting Models Are Secretly Score-Based Models}\label{app:more-regimes}

\subsection{Geometry of the Residual for General Radial Kernels}
\label{app:radial_geometry}

This subsection derives the geometric formulas for the residual term
$\boldsymbol{\delta}_{\uppi}(\rvx)$ in \Cref{prop:radial_precond_decomp}, and in particular the Laplace-kernel identity used in the main text.

Recall that under the kernel-reweighted law $\rvy\sim\uppi_\tau(\cdot|\rvx)$,
\begin{equation*}
\tau^2\,\mathbf s_{\uppi,k_\tau}(\rvx)
=
\mathbb E\Big[b_\tau\big(\|\rvx-\rvy\|_2\big)(\rvy-\rvx)\Big].
\end{equation*}
Let
\begin{equation*}
r:=\|\rvx-\rvy\|_2,
\qquad
A:=b_\tau^{-1}(r)\in\mathbb R,
\qquad
\mathbf Z:=b_\tau(r)(\rvy-\rvx)\in\mathbb R^D.
\end{equation*}
Then
\begin{equation*}
\mathbf V_{\uppi,k_\tau}(\rvx)=\mathbb E[A\mathbf Z],
\qquad
\tau^2\,\mathbf s_{\uppi,k_\tau}(\rvx)=\mathbb E[\mathbf Z].
\end{equation*}
Therefore,
\begin{equation*}
\mathbf V_{\uppi,k_\tau}(\rvx)
=
\mathbb E[A]\mathbb E[\mathbf Z]+\operatorname{Cov}(A,\mathbf Z)
=
\tau^2\,\alpha_{\uppi}(\rvx)\,\mathbf s_{\uppi,k_\tau}(\rvx)
+
\boldsymbol{\delta}_{\uppi}(\rvx),
\end{equation*}
with
\begin{equation*}
\alpha_{\uppi}(\rvx)=\mathbb E[A],
\qquad
\boldsymbol{\delta}_{\uppi}(\rvx)=\operatorname{Cov}(A,\mathbf Z).
\end{equation*}

Conditioning on the radius $r$ yields a useful reformulation. Since $A=A(r)$ depends only on $r$,
\begin{equation}
\label{eq:delta_cond_radius_app}
\boldsymbol{\delta}_{\uppi}(\rvx)
=
\operatorname{Cov}(A,\mathbf Z)
=
\operatorname{Cov}\bigl(A(r),\,\mathbb E[\mathbf Z| r]\bigr).
\end{equation}
Thus the residual is nonzero only when the average pre-conditioned displacement varies systematically across radii.

To isolate the part that changes the direction relative to the score, assume $\mathbf s:=\mathbf s_{\uppi,k_\tau}(\rvx)\neq \mathbf 0$ and define
\begin{equation*}
\hat{\mathbf s}:=\frac{\mathbf s}{\|\mathbf s\|_2},
\qquad
\mathbf P_\perp:=\mathbf I-\hat{\mathbf s}\hat{\mathbf s}^\top,
\qquad
\boldsymbol{\delta}_\perp(\rvx):=\mathbf P_\perp\boldsymbol{\delta}_{\uppi}(\rvx).
\end{equation*}
Since $\mathbf P_\perp\mathbb E[\mathbf Z]=\mathbf 0$, we obtain
\begin{equation}
\label{eq:delta_perp_cond_radius_app}
\boldsymbol{\delta}_\perp(\rvx)
=
\operatorname{Cov}\bigl(A(r),\,\mathbb E[\mathbf P_\perp\mathbf Z| r]\bigr).
\end{equation}
Hence only the score-orthogonal part of the residual can change the direction of mean shift; the score-parallel component merely rescales the effective step size.

\paragraph{Laplace specialization.}
For the Laplace kernel, $b_\tau(r)=\tau/r$, so
\begin{equation*}
A(r)=\frac{r}{\tau}.
\end{equation*}
Let
\begin{equation*}
\rvu:=\frac{\rvy-\rvx}{\|\rvy-\rvx\|_2},
\qquad
\rvy-\rvx=r\,\rvu.
\end{equation*}
Then
\begin{equation*}
\mathbf Z=b_\tau(r)(\rvy-\rvx)=\frac{\tau}{r}(r\rvu)=\tau\,\rvu.
\end{equation*}
Writing
\begin{equation*}
\rvu_\perp:=\mathbf P_\perp\rvu,
\end{equation*}
we have $\mathbf P_\perp\mathbf Z=\tau\,\rvu_\perp$. Since $\mathbf P_\perp\mathbb E[\mathbf Z]=\mathbf 0$, it follows that $\mathbb E[\rvu_\perp]=\mathbf 0$, and \Cref{eq:delta_perp_cond_radius_app} reduces to
\begin{equation}
\label{eq:delta_perp_laplace_app}
\boldsymbol{\delta}_\perp(\rvx)
=
\operatorname{Cov}_{\rvy\sim\uppi_\tau(\cdot|\rvx)}(r,\rvu_\perp)
=
\mathbb E_{\rvy\sim\uppi_\tau(\cdot|\rvx)}[r\,\rvu_\perp].
\end{equation}

\Cref{eq:delta_perp_laplace_app} shows that the off-score component is a radius-weighted average of tangential directions. Therefore $\boldsymbol{\delta}_\perp(\rvx)$ is small whenever either
\begin{equation*}
\mathbb E[\rvu_\perp| r]\approx \mathbf 0
\quad\text{for all relevant }r,
\end{equation*}
so tangential contributions cancel within each distance band, or the kernel-reweighted radii concentrate so that $r$ is nearly constant and the weighting in $\mathbb E[r\,\rvu_\perp]$ is nearly uniform. The high-dimensional results in \Cref{sec:laplace} formalize the second mechanism: for Laplace kernels, kernel-reweighted radii concentrate in high dimension, which suppresses $\boldsymbol{\delta}_\perp(\rvx)$ and yields the alignment bounds in \Cref{thm:field_alignment_radial_full_xy,app:high-D}.

\begin{figure}[!t]
    \centering
    \begin{minipage}{0.32\textwidth}
        \centering
        \includegraphics[width=\linewidth]{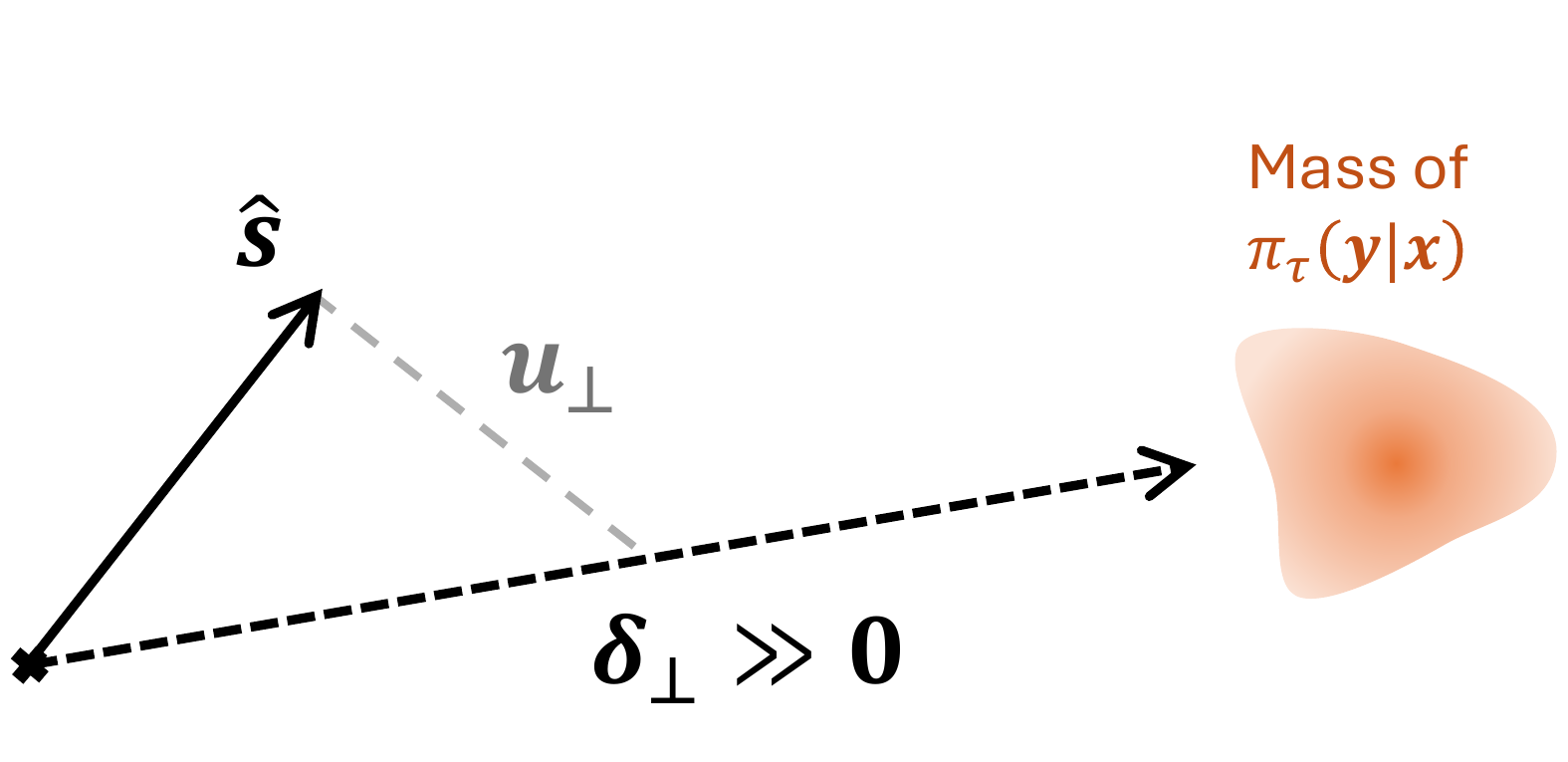}
    \end{minipage}
    \begin{minipage}{0.32\textwidth}
        \centering
        \includegraphics[width=\linewidth]{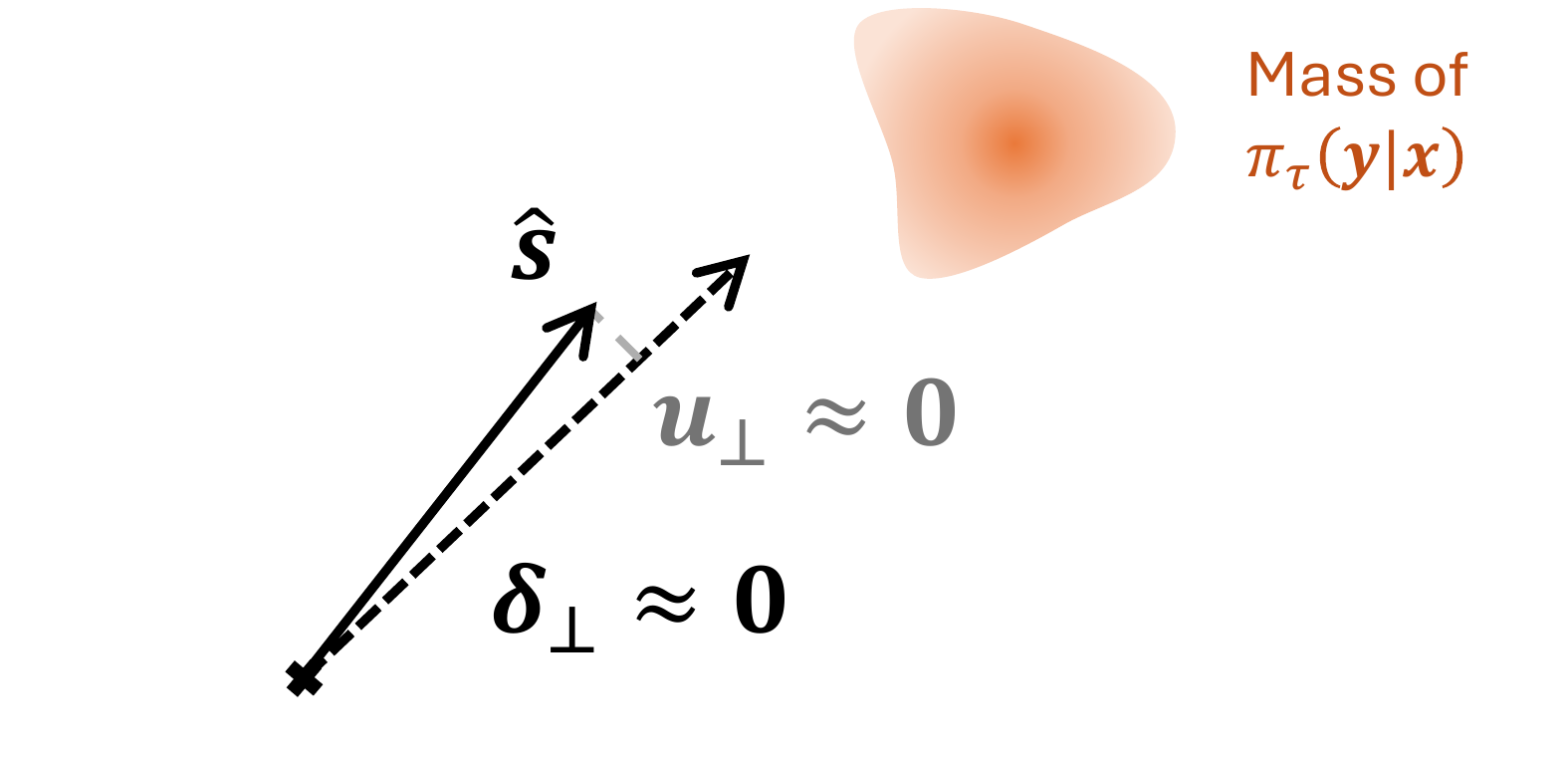}
    \end{minipage}
    \begin{minipage}{0.32\textwidth}
        \centering
        \includegraphics[width=0.8\linewidth]{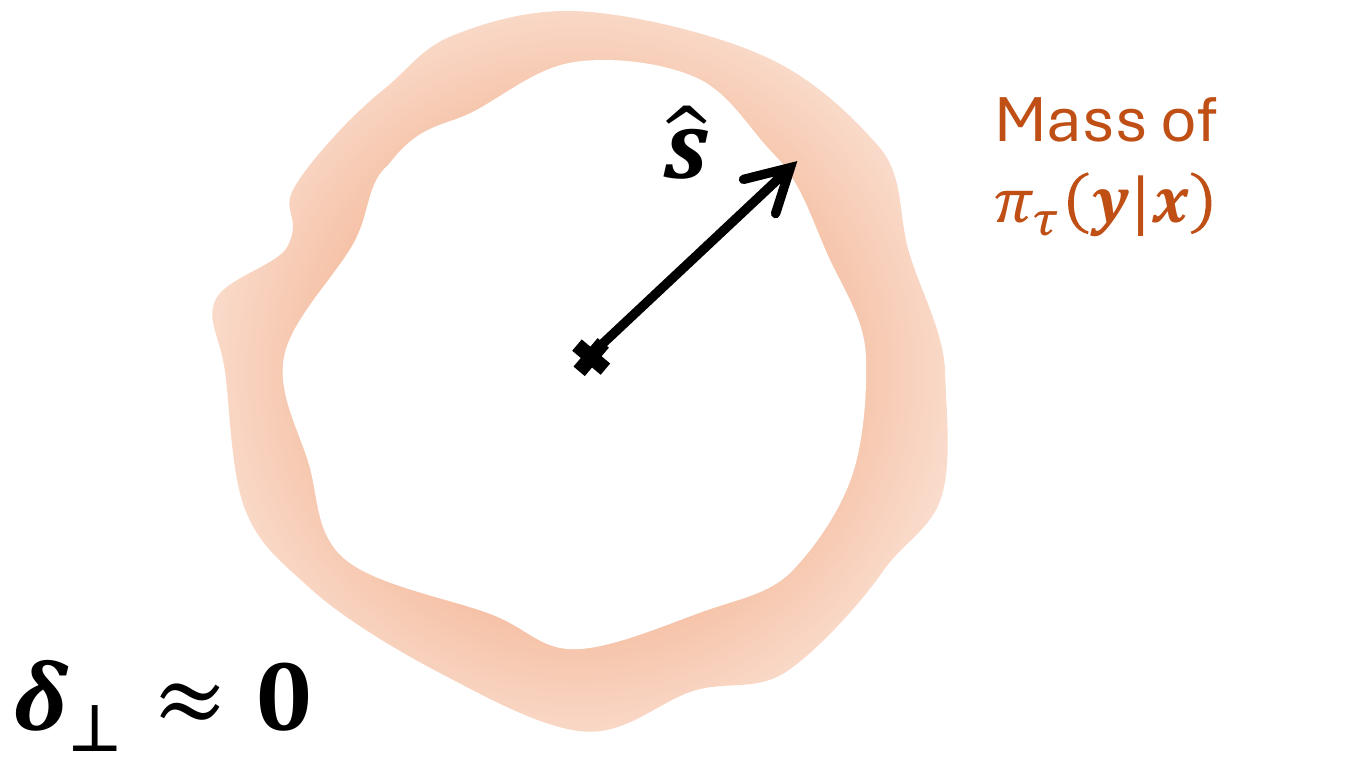}
    \end{minipage}
    \caption{\footnotesize{\textbf{Illustration of $\boldsymbol\delta_\perp(\rvx)$ in three illustrative examples.}
(a) $\boldsymbol\delta_\perp \gg \bm{0}$ because most of the mass of $\uppi_\tau(\cdot|\rvx)$ lies far away (large $r$) and in directions perpendicular to $\hat\rvs$;
(b) $\boldsymbol\delta_\perp \approx \bm{0}$ because most of the mass of $\uppi_\tau(\cdot|\rvx)$ lies closer (small $r$) and in directions parallel to $\hat\rvs$;
(c) $\boldsymbol\delta_\perp \approx \bm{0}$ because the contributions from different directions nearly cancel out.}} 
    \label{fig:illustration-of-delta}
\end{figure}

\subsection{High-Dimension Regime}\label{app:high-D}

In \Cref{sec:laplace}, we established the vector-field and objective-level alignment between drifting's mean-shift and score-mismatch. In this section, we develop two further notions of agreement: \emph{Algorithmic Gradient Alignment}, which concerns the actual update directions induced by the implemented stop-gradient solver, and \emph{Minimizer Alignment}, which concerns the agreement between the population optima of the two objectives. 

\paragraph{Algorithmic Gradient Alignment.}
The previous theorem compares the two discrepancy fields under the current model law $q_{\rvf}$. To connect this to the actual training dynamics, we now consider a parameterized generator $\rvf_\btheta$ and write $q_\btheta:=(\rvf_\btheta)_\# p_{\mathrm{prior}}$. 
The stop-gradient fixed-point loss used by drifting implements a transport--then--projection update: it first transports samples in data space and then projects the resulting targets back onto the generator family. The induced stop-gradient update direction is
\[
\mathbf g_{\mathrm{drift}}(\btheta)
:=
-2\,\mathbb E_{\beps}\Big[
\mathbf J_{\rvf_\btheta}(\beps)^\top\,
\eta\big(
\mathbf V_{p,k_\tau}(\rvf_\btheta(\beps))
-
\mathbf V_{q_\btheta,k_\tau}(\rvf_\btheta(\beps))
\big)
\Big],
\]
where $\eta>0$ is the transport step size in sample space and $\mathbf J_{\rvf_\btheta}$ is the Jacobian with respect to $\btheta$.

As a score-based comparator, we consider the analogous transport--then--projection update obtained by replacing the drift transport field with the scaled score-mismatch:
\[
\mathbf g_{\mathrm{ST}}(\btheta)
:=
-2\,\mathbb E_{\beps}\Big[
\mathbf J_{\rvf_\btheta}(\beps)^\top\,
\eta\,C\,
\big(
\rvs_{p,\tau}(\rvf_\btheta(\beps))
-
\rvs_{q_\btheta,\tau}(\rvf_\btheta(\beps))
\big)
\Big].
\]
This has the same Jacobian-transpose-times-vector form as DMD-style score-transport updates; see \Cref{sec:dmd}.

The next theorem shows that, in high dimension, the implemented drifting update becomes asymptotically indistinguishable from this score-transport update.

\begin{theorem}[(Informal) Large-$D$ Gradient Alignment]
\label{thm:grad_alignment_largeD}
Suppose that $k_\tau$ is a Laplace kernel. Assume the conditions of \Cref{thm:field_alignment_radial_full_xy} and a uniform second-moment bound on $\mathbf J_{\rvf_\btheta}$. Then for all $\btheta$ and all sufficiently large $D$,
\begin{mybox}
\[
\big\|
\mathbf g_{\mathrm{drift}}(\btheta)-\mathbf g_{\mathrm{ST}}(\btheta)
\big\|_2
=
\mathcal O(D^{-1/2}).
\]
\end{mybox}
Moreover, assuming that at least one of the two update directions has non-vanishing norm, we have
\begin{mybox}
\[
\cos\angle\big(
\mathbf g_{\mathrm{drift}}(\btheta),
\mathbf g_{\mathrm{ST}}(\btheta)
\big)
=
1-\mathcal O(D^{-1}).
\]
\end{mybox}
In particular, the hidden constant does not depend on learnable parameters.
\end{theorem}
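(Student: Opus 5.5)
The plan is to reduce \Cref{thm:grad_alignment_largeD} to the field-level estimate of \Cref{thm:field_alignment_radial_full_xy} by a Cauchy--Schwarz argument in $L^2(p_{\mathrm{prior}})$, and then to obtain the cosine statement by the same normalization argument used for \Cref{cor:field_cosine_alignment_largeD}.

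\textbf{Difference bound.} Write $\mathbf e(\rvx):=\Delta_{p,q_\btheta}(\rvx)-C\,\Delta\rvs_{p,q_\btheta}(\rvx)$, where $\Delta$ is the unscaled mean-shift discrepancy of \Cref{sec:laplace}. By linearity in the transport field,
\[
\mathbf g_{\mathrm{drift}}(\btheta)-\mathbf g_{\mathrm{ST}}(\btheta)=-2\eta\,\mathbb E_{\beps}\big[\mathbf J_{\rvf_\btheta}(\beps)^\top\mathbf e(\rvf_\btheta(\beps))\big].
\]
Applying Jensen and then Cauchy--Schwarz gives
\[
\|\mathbf g_{\mathrm{drift}}-\mathbf g_{\mathrm{ST}}\|_2\le 2\eta\big(\mathbb E_{\beps}\|\mathbf J_{\rvf_\btheta}(\beps)\|_{\mathrm{op}}^2\big)^{1/2}\big(\mathbb E_{\rvx\sim q_\btheta}\|\mathbf e(\rvx)\|_2^2\big)^{1/2}.
\]
The first factor is bounded by the assumed uniform second-moment constant $M_J$, which is independent of $\btheta$. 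Since \Cref{thm:field_alignment_radial_full_xy} holds for every generator $\rvf$, it applies in particular to $\rvf=\rvf_\btheta$, so the second factor is $\mathcal O(D^{-1/2})$. The main point to verify is that the hidden constant there depends only on the shell, inner-product and moment constants in \Cref{ass:shell_R0}--\Cref{ass:bounded_norm}, and not on $\rvf$. I would reread that proof to confirm this uniformity, because it is exactly what yields the claim that the hidden constant does not depend on learnable parameters.

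\textbf{Cosine bound.} Set $a:=\mathbf g_{\mathrm{drift}}$ and $b:=\mathbf g_{\mathrm{ST}}$, and use the identity
\[
1-\cos\angle(a,b)=\tfrac12\big\|\hat a-\hat b\big\|_2^2,
\qquad
\|\hat a-\hat b\|_2\le \frac{2\|a-b\|_2}{\max(\|a\|_2,\|b\|_2)}.
\]
Assume one of the two norms is bounded below by some $c>0$ for all large $D$; this is the non-vanishing hypothesis, stated uniformly in $D$. Then $\|\hat a-\hat b\|_2=\mathcal O(D^{-1/2})$, so $1-\cos=\mathcal O(D^{-1})$. The squaring here is what doubles the rate from the difference bound, just as in \Cref{cor:field_cosine_alignment_largeD}.

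\textbf{Main obstacle.} The hard step is not this algebra but making the non-vanishing and uniformity assumptions precise. If $\|b\|_2$ could itself decay with $D$, the relative error would degrade. I would therefore state the hypothesis as $\max(\|a\|_2,\|b\|_2)\ge c>0$ with $c$ independent of $D$ and $\btheta$, and note that the bound then holds with constant $8\eta^2M_J^2K/c^2$. Here $K$ is the field-alignment constant, so the rate constant is $\btheta$-independent as claimed.
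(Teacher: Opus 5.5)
Your proposal is correct and follows the paper's proof. The difference bound is the paper's argument: an operator-norm bound on $\mathbf J_{\rvf_\btheta}^\top$, Cauchy--Schwarz, and then the field-alignment theorem applied to $\rvf=\rvf_\btheta$, whose constant $K$ the paper states is independent of $D$ and of the learnable parameters; this yields $2\eta\sqrt{J_2}\sqrt{K/D}$. The only variation is in the cosine step. The paper bounds $\sin\theta$ by the error component orthogonal to $\mathbf g_{\mathrm{ST}}$ and uses $1-\cos\theta\le\sin^2\theta$, which requires $D$ large enough that the angle is acute and gives constant $16$. Your identity $1-\cos\angle(a,b)=\tfrac12\|\hat a-\hat b\|_2^2$, combined with the $\max$-normalized perturbation bound, works at any angle and gives constant $8$. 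It also needs only one of the two norms bounded below, which matches the informal hypothesis more directly.
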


 When the updates are small, the norm bound shows that stationarity for one update implies approximate stationarity for the other. When the updates are not small, the cosine bound shows that the two updates point in nearly the same direction, so they induce essentially the same optimization trajectory up to a rescaling.

\paragraph{Minimizer Alignment.}
Finally, we return to population optima in nonparametric form and study how close the minimizers are at the level of the induced densities.  
To quantify proximity between a model distribution and the data through their kernel-smoothed scores, we use the scale-$\tau$ reverse Fisher divergence
\[
\mathcal D_{\mathrm{rF}}(p\|q)
:=
\mathbb{E}_{\mathbf{x}\sim q}
\bigl\|\rvs_{p,\tau}(\mathbf{x})-\rvs_{q,\tau}(\mathbf{x})\bigr\|_2^2 .
\]
For a generator $\rvf$, this becomes
\[
\mathcal D_{\mathrm{rF}}(p\|q_{\rvf})
=
\mathbb E_{\mathbf{x}\sim q_{\rvf}}
\bigl\|\rvs_{p,\tau}(\mathbf{x})-\rvs_{q_{\rvf},\tau}(\mathbf{x})\bigr\|_2^2,
\]
which directly measures the squared mismatch between the smoothed data score and the smoothed model score under the model law.

Under the same high-dimensional regularity conditions as before, any population minimizer of the Laplace drifting objective induces a model distribution whose kernel-smoothed score is close to that of the data, with an error that decays polynomially in $D$. 
Although the Laplace mean-shift objective is not explicitly formulated as score matching, its minimizer nevertheless achieves vanishing reverse Fisher discrepancy at rate $\mathcal O\bigl(D^{-(1+2a)}\bigr)$. 
We summarize this high-dimensional agreement result below.

\begin{theorem}[(Informal) High-Dimensional Agreement Between Mean-Shift and Score Matching]\label{thm:D_consistency}
Suppose that $k_\tau$ is a Laplace kernel. Assume that distributions we consider concentrate on a common-radius shell and have controlled inner products and moments. Let
\[
\rvf^{\star}\in\arg\min_{\rvf} \mathcal{L}_{\mathrm{drift}}(\rvf),
\qquad
\rvg^{\star}\in\arg\min_{\rvg} \mathcal{L}_{\mathrm{SM}}(\rvg).
\]
Then as $D$ is large, 
\begin{mybox}
    \[
\mathcal{D}_{\mathrm{rF}}\bigl(p\| q_{\rvf^\star}\bigr)
= \mathcal{D}_{\mathrm{rF}}\bigl(q_{\rvg^\star}\| q_{\rvf^\star}\bigr)
=
\mathcal{O}\left(D^{-(1+2a)}\right).
\]
\end{mybox}
In particular, the hidden constant in $\mathcal{O}(\cdot)$ is independent of $D$ and of the learnable parameters used to parametrize $\rvf$ and $\rvg$.
\end{theorem}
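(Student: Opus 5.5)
The plan is to combine a realizability argument for the two population minimizers with the field-level estimate of \Cref{thm:field_alignment_radial_full_xy}, applied at the single generator $\rvf^\star$. Throughout I take the generator class to be rich enough that some $\rvf_p$ satisfies $q_{\rvf_p}=p$, and that $p$ itself satisfies the shell, inner-product and moment assumptions used in \Cref{thm:field_alignment_radial_full_xy}.

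\textbf{Step 1: the drifting minimizer has zero drift.} By \Cref{eq:value_equiv}, $\mathcal L_{\mathrm{drift}}(\rvf)=\mathbb E_{q_\rvf}\|\Delta_{p,q_\rvf}\|_2^2\ge 0$. Since $\Delta_{p,p}\equiv\mathbf 0$, we have $\mathcal L_{\mathrm{drift}}(\rvf_p)=0$. Hence any minimizer $\rvf^\star$ satisfies $\mathcal L_{\mathrm{drift}}(\rvf^\star)=0$, i.e.\ $\Delta_{p,q_{\rvf^\star}}=\mathbf 0$ for $q_{\rvf^\star}$-a.e.\ $\rvx$.

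\textbf{Step 2: identify the score-matching minimizer.} I read $\mathcal L_{\mathrm{SM}}(\rvg)$ as the scale-$\tau$ reverse Fisher functional $\mathcal D_{\mathrm{rF}}(p\|q_\rvg)$, which is nonnegative and vanishes at $\rvf_p$. So $\rvs_{q_{\rvg^\star},\tau}=\rvs_{p,\tau}$ holds $q_{\rvg^\star}$-a.e., and hence everywhere by continuity of the smoothed scores. Integrating gives $p_{k_\tau}=c\,q_{\rvg^\star,k_\tau}$. Both are convolutions of probability measures with the same integrable kernel, so they have equal mass and $c=1$. The Laplace kernel's Fourier transform, $\propto(1+\tau^2\|\boldsymbol\omega\|^2)^{-(D+1)/2}$, never vanishes, so convolution with it is injective and $q_{\rvg^\star}=p$. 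This gives the first equality of the theorem: $\rvs_{q_{\rvg^\star},\tau}\equiv\rvs_{p,\tau}$, so $\mathcal D_{\mathrm{rF}}(q_{\rvg^\star}\|q_{\rvf^\star})=\mathcal D_{\mathrm{rF}}(p\|q_{\rvf^\star})$.

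\textbf{Step 3: transfer the field bound.} Apply the (formal) field-alignment theorem at $\rvf=\rvf^\star$. Using Step 1,
\[
C^2\,\mathcal D_{\mathrm{rF}}(p\|q_{\rvf^\star})
=\mathbb E_{q_{\rvf^\star}}\big\|\Delta_{p,q_{\rvf^\star}}-C\,\Delta\rvs_{p,q_{\rvf^\star}}\big\|_2^2
=\mathcal O(D^{-1}).
\]
It remains to show $C\gtrsim D^{a}$. Heuristically, for the Laplace kernel $\alpha_\uppi(\rvx)=\mathbb E[r/\tau]$, and under shell concentration the reweighted radii concentrate at $r\approx R_0\sqrt{D}$. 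Thus $\tau^2\alpha\approx\tau R_0\sqrt D$. After tracking the normalization of distances used in the formal version of \Cref{thm:field_alignment_radial_full_xy}, which is likely per-coordinate, $C$ should be of order $\bar\tau D^{a}$ times a constant depending only on the shell radius. Dividing by $C^2$ then yields $\mathcal O(D^{-(1+2a)})$. The constant depends only on the assumption constants, because \Cref{thm:field_alignment_radial_full_xy} holds uniformly in $\rvf$.

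\textbf{Main obstacle.} The hard part is Step 3's scaling: I need an explicit \emph{lower} bound on $C=C(D^a)$, uniform in $\rvf$, and consistent with however the formal statement of \Cref{thm:field_alignment_radial_full_xy} normalizes radii. I would extract this from the proof of that theorem, where $C$ arises as the common limit of $\tau^2\alpha_p$ and $\tau^2\alpha_q$. A secondary issue is realizability: if $p$ is not exactly representable, Step 1 only gives $\mathcal L_{\mathrm{drift}}(\rvf^\star)\le\inf_\rvf\mathcal L_{\mathrm{drift}}$, and one needs that infimum to be $\mathcal O(D^{-1})$ as well.
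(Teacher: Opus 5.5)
Your three steps follow the paper's proof. Realizability gives $\mathcal L_{\mathrm{drift}}(\rvf^\star)=0$, and realizability plus Fourier injectivity of Laplace smoothing gives $q_{\rvg^\star}=p$. The field-alignment bound at $\rvf=\rvf^\star$ is then divided by $C^2$; your exact identity $C^2\mathcal D_{\mathrm{rF}}(p\|q_{\rvf^\star})=\mathbb E_{q_{\rvf^\star}}\|\Delta_{p,q_{\rvf^\star}}-C\Delta\rvs_{p,q_{\rvf^\star}}\|_2^2$ even saves the factor $2$ of \Cref{cor:drift_controls_score_radial_full_xy}.

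However, the step you call the main obstacle is not an obstacle, and your heuristic for it is wrong. In the formal version, \Cref{thm:field_alignment_radial_full_xy_prime}, the scale is \emph{defined} as $C:=\tau\rho=\bar\tau\sqrt2R_0D^a$. This is a deterministic number, independent of $\rvf$, so no lower bound has to be extracted. Here $R_0$ is a fixed, dimension-free radius: it enters $K$, and \Cref{ass:bounded_norm} with dimension-free $B$ forces $R_0\lesssim B$. So two independent samples with $\mathcal O(D^{-1/2})$ inner products are at distance $\approx\sqrt2R_0$, not $R_0\sqrt D$. For Laplace, $\tau^2\alpha_\uppi(\rvx)=\tau\,\mathbb E[r\mid\rvx]$, and Jensen plus \Cref{lem:reweight_L2_radial_full_xy} put this within $\mathcal O(D^{-1/2})$ of $\tau\rho$ in $L^2(q_\rvf)$. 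Combining your $C\approx\tau R_0\sqrt D$ with the $K/D$ bound would give the wrong exponent $-(2+2a)$. The two only fit together in the fixed-radius normalization, where the conclusion is $\mathcal D_{\mathrm{rF}}(p\|q_{\rvf^\star})\le K/(2R_0^2\bar\tau^2D^{1+2a})$.

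The second issue is a missing hypothesis in your Step 2. From $\mathcal L_{\mathrm{SM}}(\rvg^\star)=0$ you only get $\rvs_{q_{\rvg^\star},\tau}=\rvs_{p,\tau}$ on $\operatorname{supp}q_{\rvg^\star}$, and ``continuity'' cannot carry that identity off the support. Unless $q_{\rvg^\star}$ charges every open set, you cannot integrate $\nabla\bigl(\log p_{k_\tau}-\log (q_{\rvg^\star})_{k_\tau}\bigr)=\mathbf 0$ over $\mathbb R^D$ to get $p_{k_\tau}=c\,(q_{\rvg^\star})_{k_\tau}$; a reverse-Fisher functional is blind to $p$ away from the model's support. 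Moreover, Laplace smoothing of a general measure is only Lipschitz, so the smoothed scores need not be continuous. The paper closes this gap with \Cref{ass:ac} (instance noise): every law has a smooth, strictly positive density, so $q_{\rvg^\star}$-a.e.\ equality becomes Lebesgue-a.e. Then $h=\log p_{k_\tau}-\log (q_{\rvg^\star})_{k_\tau}$ is locally Lipschitz with a.e.\ vanishing gradient, hence constant (\Cref{prop:SM_minimizer_is_p_radial_full_xy}). Add that assumption, and take realizability in both classes as \Cref{ass:realizable_fixed}, which removes your secondary concern. Your argument then goes through as written.
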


To obtain similar bounds in other distributional divergences (e.g., KL or total variation) from a reverse-Fisher guarantee,
one needs an additional regularity condition on the reference distribution.
A standard sufficient assumption is that the reference distribution $\uppi$ satisfies a log-Sobolev inequality;
we do not pursue these implications further here.


\subsection{Practical-Implementation-Aligned High-Dimensional Regime}
\label{app:impl_highD}

The population results in \Cref{sec:laplace,app:high-D} compare the ideal Laplace discrepancy field
\[
\Delta_{p,q}(\mathbf{x})
:=
\mathbf{V}_{p,k_\tau}(\mathbf{x})-\mathbf{V}_{q,k_\tau}(\mathbf{x})
\]
with the corresponding score-mismatch
\[
\Delta \mathbf{s}_{p,q}(\mathbf{x})
:=
\mathbf{s}_{p,\tau}(\mathbf{x})-\mathbf{s}_{q,\tau}(\mathbf{x}).
\]
This population formulation is clean, but direct kernel estimation in high dimension is fragile in practice: distances concentrate, the kernel can become either too flat or too spiky unless the temperature is chosen carefully, accurate estimation typically requires large sample sets, and the method benefits from working in a structured pre-trained feature space rather than raw ambient space. Accordingly, practical implementations of drifting models are necessarily more carefully engineered. They compute the transport target in a frozen feature space, rescale distances by a batch-dependent scale, symmetrize affinities through the geometric mean of a softmax over targets and a softmax over queries, concatenate detached generated features with external negative and positive pools, sum force-normalized updates across temperatures to form one detached target per feature head, and finally sum the resulting per-head regression losses. 

The goal of this subsection is to extend \Cref{app:high-D,sec:laplace} by giving an implementation-aligned analogue of the high-dimensional field and semi-gradient alignment results for this practical procedure. We fix one frozen feature head and work entirely in that feature space, suppressing the pushforward notation. We also fix one class label $c$ and one temperature $\tau>0$. Here $\tau$ denotes the temperature after the same batch-scale normalization used in the practical implementation. Thus all vectors below live in the same normalized frozen feature space in which the drifting loss is evaluated at  $\tau$.

\paragraph{Implementation-Aligned Formulation at One Temperature.}
For one batch of generated queries
\[
\mathbf{x}_1,\dots,\mathbf{x}_{C_g}\in\mathbb R^D,
\]
let
\[
G:=\{\mathbf{y}_1^G,\dots,\mathbf{y}_{C_g}^G\},
\qquad
U:=\{\mathbf{y}_1^U,\dots,\mathbf{y}_{C_n}^U\},
\qquad
P:=\{\mathbf{y}_1^P,\dots,\mathbf{y}_{C_p}^P\}
\]
denote, respectively, the stop-gradient copy of the current generated batch, the additional unconditional negative pool used by classifier-free guidance~\cite{ho2021classifier}, and the class-positive real pool. The practical implementation concatenates them into the full target pool
\[
T:=G\cup U\cup P.
\]
In the current training loop, the target weights satisfy
\[
w_j=1\quad\text{for }\mathbf{z}_j\in G\cup P,
\qquad
w_j=w_{\mathrm{cfg}}\quad\text{for }\mathbf{z}_j\in U,
\]
where $w_{\mathrm{cfg}}$ is the classifier-free-guidance coefficient used by the practical implementation for the unconditional negatives.

Let $d_{ij}$ denote the normalized query--target distances that actually enter the logits after the same batch-scale normalization used by the implementation, with the self-pairs inside the $G$ block masked exactly as there. Writing $\mathbf{z}_j\in T$ for the concatenated targets, the practical implementation forms
\[
A_{ij}^{\mathrm{row}}
:=
\frac{\exp(-d_{ij}/\tau)}{\sum_{k\in T}\exp(-d_{ik}/\tau)},
\qquad
A_{ij}^{\mathrm{col}}
:=
\frac{\exp(-d_{ij}/\tau)}{\sum_{\ell=1}^{C_g}\exp(-d_{\ell j}/\tau)},
\]
and then the symmetrized affinity
\[
a_{ij}:=w_j\sqrt{A_{ij}^{\mathrm{row}}A_{ij}^{\mathrm{col}}}.
\]
This is the full affinity matrix used by the practical implementation at temperature $\tau$.

Because the practical implementation first splits the full affinity matrix into one negative block $G\cup U$ and one positive block $P$, we define
\[
m_i^-:=\sum_{j\in G\cup U}a_{ij},
\qquad
m_i^P:=\sum_{j\in P}a_{ij},
\]
and the corresponding block barycenters
\[
\boldsymbol{\mu}_i^-:=\frac{1}{m_i^-}\sum_{j\in G\cup U}a_{ij}\mathbf{y}_j,
\qquad
\boldsymbol{\mu}_i^P:=\frac{1}{m_i^P}\sum_{j\in P}a_{ij}\mathbf{y}_j^P.
\]
The one-temperature implementation force at one temperature is then
\begin{equation}
\label{eq:impl_force_def_main}
\mathbf{F}_{\tau}(\mathbf{x}_i)
:=
m_i^P m_i^-\bigl(\boldsymbol{\mu}_i^P-\boldsymbol{\mu}_i^-\bigr).
\end{equation}

The next proposition refines \Cref{eq:impl_force_def_main} into the more interpretable class-positive / self-generated / unconditional decomposition that we use in the score comparison.

\begin{proposition}[Exact Implementation-Level Decomposition]
\label{prop:impl_exact_decomp}
Define
\[
m_i^G:=\sum_{j\in G}a_{ij},
\qquad
m_i^U:=\sum_{j\in U}a_{ij},
\]
and
\[
\boldsymbol{\mu}_i^G:=\frac{1}{m_i^G}\sum_{j\in G}a_{ij}\mathbf{y}_j^G,
\qquad
\boldsymbol{\mu}_i^U:=\frac{1}{m_i^U}\sum_{j\in U}a_{ij}\mathbf{y}_j^U.
\]
Then the one-temperature implementation force in \Cref{eq:impl_force_def_main} admits the exact representation
\begin{mybox}
\begin{equation}
\label{eq:impl_exact_force_main}
\mathbf{F}_{\tau}(\mathbf{x}_i)
=
m_i^P m_i^G\bigl(\boldsymbol{\mu}_i^P-\boldsymbol{\mu}_i^G\bigr)
+
m_i^P m_i^U\bigl(\boldsymbol{\mu}_i^P-\boldsymbol{\mu}_i^U\bigr).
\end{equation}
\end{mybox}
Thus the practical implementation is exactly a sum of a class-positive versus self-generated transport term and a class-positive versus unconditional transport term.
\end{proposition}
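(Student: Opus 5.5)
The proof is pure algebra: we split the negative block $G\cup U$ into its two disjoint sub-blocks and use that $m_i^-$ and the barycenter $\boldsymbol\mu_i^-$ are additive and convex-combination quantities over this split. The first step is to record the mass identity
\[
m_i^-=\sum_{j\in G\cup U}a_{ij}=m_i^G+m_i^U .
\]
This holds because $G$ and $U$ index disjoint target slots in the concatenated pool $T$. The masked self-pairs simply contribute $a_{ij}=0$ inside the $G$ block, so they are harmlessly included in both sums.

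The second step is the numerator identity. By the definitions of the block barycenters,
\[
m_i^-\boldsymbol\mu_i^-=\sum_{j\in G\cup U}a_{ij}\mathbf y_j=m_i^G\boldsymbol\mu_i^G+m_i^U\boldsymbol\mu_i^U .
\]
Equivalently, $\boldsymbol\mu_i^-$ is the convex combination of $\boldsymbol\mu_i^G$ and $\boldsymbol\mu_i^U$ with weights $m_i^G/m_i^-$ and $m_i^U/m_i^-$.

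The third step substitutes both identities into \Cref{eq:impl_force_def_main}:
\[
\mathbf F_\tau(\mathbf x_i)=m_i^P m_i^-\boldsymbol\mu_i^P-m_i^P\,m_i^-\boldsymbol\mu_i^-
=m_i^P(m_i^G+m_i^U)\boldsymbol\mu_i^P-m_i^P\bigl(m_i^G\boldsymbol\mu_i^G+m_i^U\boldsymbol\mu_i^U\bigr).
\]
Regrouping the $G$ terms and the $U$ terms gives exactly \Cref{eq:impl_exact_force_main}.

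The only point needing care is well-definedness of the barycenters when a block mass vanishes, for instance $m_i^U=0$ if $w_{\mathrm{cfg}}=0$ or $U=\emptyset$. We handle this by reading each term $m_i^P m_i^B(\boldsymbol\mu_i^P-\boldsymbol\mu_i^B)$ as $m_i^P\bigl(m_i^B\boldsymbol\mu_i^P-\sum_{j\in B}a_{ij}\mathbf y_j\bigr)$, which is defined regardless and vanishes when $m_i^B=0$. Since the exponentials are strictly positive, $m_i^P>0$ and $m_i^G>0$ whenever the corresponding blocks contain unmasked entries. Beyond this convention there is no real obstacle; the identity holds exactly for every batch and needs no probabilistic assumptions.
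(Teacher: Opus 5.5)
Your proposal is correct and follows essentially the same route as the paper, which writes $\boldsymbol{\mu}_i^-$ as the convex combination $\frac{m_i^G}{m_i^-}\boldsymbol{\mu}_i^G+\frac{m_i^U}{m_i^-}\boldsymbol{\mu}_i^U$, substitutes it into $\mathbf{F}_\tau(\mathbf{x}_i)$, and uses $m_i^-=m_i^G+m_i^U$ to regroup. Your extra convention for vanishing block masses (for example $m_i^U=0$ when $w_{\mathrm{cfg}}=0$) is a harmless refinement that the paper does not address.
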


\paragraph{Two Measurable Practical Implementation Deviations.}
To compare \Cref{eq:impl_exact_force_main} with a population score field, we isolate the two places where the practical implementation differs from the clean population construction.

First, remove only the query-axis softmax while keeping the same full target pool, the same normalized distances, the same self-mask, and the same target weights. This gives the row-softmax-only affinity
\[
\bar a_{ij}:=w_jA_{ij}^{\mathrm{row}}.
\]
Using the same blocks $G,U,P$, let
\[
\bar{\boldsymbol{\mu}}_i^G,
\qquad
\bar{\boldsymbol{\mu}}_i^U,
\qquad
\bar{\boldsymbol{\mu}}_i^P
\]
denote the corresponding row-softmax-only barycenters. The first practical implementation deviation is then
\[
\varepsilon_{\mathrm{sym},\tau}^2
:=
\mathbb E\Big[
\|\boldsymbol{\mu}_i^G-\bar{\boldsymbol{\mu}}_i^G\|_2^2
+
\|\boldsymbol{\mu}_i^U-\bar{\boldsymbol{\mu}}_i^U\|_2^2
+
\|\boldsymbol{\mu}_i^P-\bar{\boldsymbol{\mu}}_i^P\|_2^2
\Big].
\]
This is not a Monte Carlo error. It measures only the distortion introduced by the query-axis softmax in the affinity symmetrization.

Second, compare the row-softmax-only finite-pool barycenters with the corresponding population kernel barycenters. Let
\[
\boldsymbol{\mu}_{\pi,k_\tau}(\mathbf{x})
:=
\frac{\mathbb E_{\mathbf{y}\sim\pi}[k_\tau(\mathbf{x},\mathbf{y})\,\mathbf{y}]}
{\mathbb E_{\mathbf{y}\sim\pi}[k_\tau(\mathbf{x},\mathbf{y})]}
=
\mathbf{x}+\mathbf{V}_{\pi,k_\tau}(\mathbf{x}),
\qquad
k_\tau(\mathbf{x},\mathbf{y})=\exp(-d(\mathbf{x},\mathbf{y})/\tau),
\]
where $d(\cdot,\cdot)$ is the same normalized distance used above. The second deviation is
\[
\varepsilon_{\mathrm{mc},\tau}^2
:=
\mathbb E\Big[
\|\bar{\boldsymbol{\mu}}_i^G-\boldsymbol{\mu}_{q_c,k_\tau}(\mathbf{x}_i)\|_2^2
+
\|\bar{\boldsymbol{\mu}}_i^U-\boldsymbol{\mu}_{p_{\emptyset},k_\tau}(\mathbf{x}_i)\|_2^2
+
\|\bar{\boldsymbol{\mu}}_i^P-\boldsymbol{\mu}_{p_c,k_\tau}(\mathbf{x}_i)\|_2^2
\Big].
\]
This is the actual finite-pool-to-oracle kernel-estimation error. In practice it is controlled by the effective sample size of the kernel weights rather than by the raw pool size alone.

We now define the score-based comparator that mirrors \Cref{eq:impl_exact_force_main}. Let
\[
\Delta \mathbf{s}_{p_c,q_c}(\mathbf{x})
:=
\mathbf{s}_{p_c,\tau}(\mathbf{x})-\mathbf{s}_{q_c,\tau}(\mathbf{x}),
\qquad
\Delta \mathbf{s}_{p_c,p_{\emptyset}}(\mathbf{x})
:=
\mathbf{s}_{p_c,\tau}(\mathbf{x})-\mathbf{s}_{p_{\emptyset},\tau}(\mathbf{x}),
\]
and define the one-temperature guided score field by
\begin{equation}
\label{eq:impl_guided_score_main}
\boldsymbol{\Gamma}_{\tau}(\mathbf{x}_i)
:=
m_i^P m_i^G\,\Delta \mathbf{s}_{p_c,q_c}(\mathbf{x}_i)
+
m_i^P m_i^U\,\Delta \mathbf{s}_{p_c,p_{\emptyset}}(\mathbf{x}_i).
\end{equation}

The only population input now needed is the already-proved large-$D$ Laplace field-alignment theorem, applied to the two pairs that truly appear in the practical implementation, namely $(p_c,q_c)$ and $(p_c,p_{\emptyset})$. We write $C_{\tau}>0$ for the corresponding scale factor and
\[
K_{\tau}:=\max\{K_{PG,\tau},K_{PU,\tau}\}
\]
for the corresponding pairwise alignment constant.

Finally, because $0\le a_{ij}\le w_j$, we define the deterministic weight bound
\[
W_G:=\sum_{j\in G}w_j,
\qquad
W_U:=\sum_{j\in U}w_j,
\qquad
W_P:=\sum_{j\in P}w_j,
\qquad
M_{\tau}:=W_PW_G+W_PW_U.
\]
In the current practical implementation, $W_G$ and $W_P$ are determined by the positive and generated batch sizes, while $W_U$ also carries the classifier-free-guidance weight on the unconditional negatives.

\begin{theorem}[Practical-Implementation-Aligned One-Temperature Field Alignment]
\label{thm:impl_field_align}
Assume that the same feature-space large-$D$ Laplace field-alignment theorem from \Cref{sec:laplace} applies, at temperature $\tau$, to the two pairs $(p_c,q_c)$ and $(p_c,p_{\emptyset})$, with common scale $C_{\tau}$ and constant $K_{\tau}$. Then there exists a universal numerical constant $c_0>0$ such that
\begin{mybox}
\begin{equation}
\label{eq:impl_field_align_main}
\mathbb E_{\mathbf{x}_i\sim q_c}
\bigl\|
\mathbf{F}_{\tau}(\mathbf{x}_i)-C_{\tau}\boldsymbol{\Gamma}_{\tau}(\mathbf{x}_i)
\bigr\|_2^2
\le
c_0 M_{\tau}^2
\left(
\varepsilon_{\mathrm{sym},\tau}^2
+
\varepsilon_{\mathrm{mc},\tau}^2
+
\frac{K_{\tau}}{D}
\right).
\end{equation}
\end{mybox}
Therefore the actual one-temperature practical implementation is close to a guided score-mismatch field. The three terms on the right-hand side are, respectively, the symmetrization distortion, the finite-pool kernel-estimation error, and the population high-dimensional discrepancy.
\end{theorem}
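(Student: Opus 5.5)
The plan is to start from the exact decomposition in \Cref{prop:impl_exact_decomp} and subtract $C_\tau\boldsymbol\Gamma_\tau(\mathbf x_i)$ term by term. Since both $\mathbf F_\tau$ and $\boldsymbol\Gamma_\tau$ carry the same mass prefactors $m_i^Pm_i^G$ and $m_i^Pm_i^U$, we get
\[
\mathbf F_\tau(\mathbf x_i)-C_\tau\boldsymbol\Gamma_\tau(\mathbf x_i)
=
m_i^Pm_i^G\,\mathbf E_i^{G}+m_i^Pm_i^U\,\mathbf E_i^{U},
\]
with
\[
\mathbf E_i^{G}:=(\boldsymbol\mu_i^P-\boldsymbol\mu_i^G)-C_\tau\Delta\mathbf s_{p_c,q_c}(\mathbf x_i),
\qquad
\mathbf E_i^{U}:=(\boldsymbol\mu_i^P-\boldsymbol\mu_i^U)-C_\tau\Delta\mathbf s_{p_c,p_\emptyset}(\mathbf x_i).
\]

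Because $0\le a_{ij}\le w_j$, the masses satisfy $m_i^P\le W_P$, $m_i^G\le W_G$ and $m_i^U\le W_U$. These are deterministic bounds, so they can be pulled outside the expectation. Applying $\|u+v\|^2\le 2\|u\|^2+2\|v\|^2$ then gives the bound $2M_\tau^2\,\mathbb E(\|\mathbf E_i^G\|^2+\|\mathbf E_i^U\|^2)$; each coefficient is at most $M_\tau$.

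Next, each barycenter difference is split along the chain implementation $\to$ row-softmax-only $\to$ population. For the $G$ term,
\[
\boldsymbol\mu_i^P-\boldsymbol\mu_i^G
=(\boldsymbol\mu_i^P-\bar{\boldsymbol\mu}_i^P)-(\boldsymbol\mu_i^G-\bar{\boldsymbol\mu}_i^G)
+(\bar{\boldsymbol\mu}_i^P-\boldsymbol\mu_{p_c,k_\tau})-(\bar{\boldsymbol\mu}_i^G-\boldsymbol\mu_{q_c,k_\tau})
+\big(\mathbf V_{p_c,k_\tau}-\mathbf V_{q_c,k_\tau}\big)(\mathbf x_i).
\]
The last step uses $\boldsymbol\mu_{\pi,k_\tau}(\mathbf x)=\mathbf x+\mathbf V_{\pi,k_\tau}(\mathbf x)$, so the $\mathbf x$'s cancel and what remains is exactly $\Delta_{p_c,q_c}$. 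The $U$ term is handled the same way with $p_\emptyset$ in place of $q_c$.

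Subtracting $C_\tau\Delta\mathbf s$ leaves five pieces in each term. By Cauchy--Schwarz, $\|\sum_{k=1}^5 u_k\|^2\le 5\sum_k\|u_k\|^2$. After taking the expectation over $\mathbf x_i\sim q_c$:
\begin{itemize}
\item the symmetrization pieces are bounded by $\varepsilon_{\mathrm{sym},\tau}^2$;
\item the finite-pool pieces are bounded by $\varepsilon_{\mathrm{mc},\tau}^2$;
\item the population pieces $\mathbb E_{q_c}\|\Delta_{p_c,q_c}-C_\tau\Delta\mathbf s_{p_c,q_c}\|^2$ and $\mathbb E_{q_c}\|\Delta_{p_c,p_\emptyset}-C_\tau\Delta\mathbf s_{p_c,p_\emptyset}\|^2$ are each at most $K_\tau/D$, by the assumed instance of \Cref{thm:field_alignment_radial_full_xy}.
\end{itemize}
The $P$ deviation appears in both the $G$ and $U$ terms, so it is double counted. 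This only inflates the numerical constant. Collecting terms gives $c_0$ around $20$, independent of all problem quantities.

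The main obstacle is the population piece for the pair $(p_c,p_\emptyset)$. The alignment theorem is stated as an expectation under the model law of the pair, whereas here the query law is $q_c$. So I must read the hypothesis "applies to the two pairs" as directly supplying the bound under the query distribution $q_c$ with constant $K_{PU,\tau}$; I will make this interpretation explicit in the proof. A second point to check is that the masses $m_i^\bullet$ are random but bounded. This is why I bound them by the deterministic $W$'s before taking expectations, which avoids any issue with their correlation with the barycenter errors.
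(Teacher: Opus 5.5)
Your proof is correct and follows essentially the same route as the paper: the exact $G/U$ decomposition, deterministic mass bounds $m_i^\bullet\le W_\bullet$, telescoping implementation $\to$ row-softmax-only $\to$ population barycenters, and the two pairwise population alignment bounds. The paper only groups the telescoping at the level of whole forces ($\mathbf F_\tau\to\bar{\mathbf F}_\tau\to\mathbf F_\tau^{\mathrm{pop}}\to C_\tau\boldsymbol\Gamma_\tau$, giving $c_0=54$) rather than your per-term five-piece split (giving $c_0=20$), and it states the $(p_c,p_\emptyset)$ population input directly as a bound under $q_c$, exactly as you read it.
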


The previous theorem is an $L^2$ closeness statement. As in the population case, it yields directional alignment once the guided score field does not vanish. Define the normalized one-temperature fields by
\[
\overline{\mathbf{F}}_{\tau}(\mathbf{x})
:=
\frac{\mathbf{F}_{\tau}(\mathbf{x})}
{\bigl(\mathbb E_{q_c}\|\mathbf{F}_{\tau}(\mathbf{x})\|_2^2\bigr)^{1/2}},
\qquad
\overline{\boldsymbol{\Gamma}}_{\tau}(\mathbf{x})
:=
\frac{\boldsymbol{\Gamma}_{\tau}(\mathbf{x})}
{\bigl(\mathbb E_{q_c}\|\boldsymbol{\Gamma}_{\tau}(\mathbf{x})\|_2^2\bigr)^{1/2}}.
\]

\begin{corollary}[Practical-Implementation-Aligned Cosine Similarity]
\label{cor:impl_cosine}
Assume the hypotheses of \Cref{thm:impl_field_align}. Suppose moreover that
\[
\mathbb E_{\mathbf{x}\sim q_c}\|\boldsymbol{\Gamma}_{\tau}(\mathbf{x})\|_2^2\ge \gamma_{\tau}>0,
\]
and that the field-alignment error from \Cref{eq:impl_field_align_main} is at most $C_{\tau}^2\gamma_{\tau}/4$. Then
\begin{mybox}
\[
\mathbb E_{\mathbf{x}\sim q_c}
\bigl[
\langle \overline{\mathbf{F}}_{\tau}(\mathbf{x}),\overline{\boldsymbol{\Gamma}}_{\tau}(\mathbf{x})\rangle
\bigr]
\ge
1-
\frac{c_1M_{\tau}^2}{\gamma_{\tau}}
\left(
\varepsilon_{\mathrm{sym},\tau}^2
+
\varepsilon_{\mathrm{mc},\tau}^2
+
\frac{K_{\tau}}{D}
\right)
\]
\end{mybox}
for a universal numerical constant $c_1>0$.
\end{corollary}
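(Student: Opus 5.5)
The plan is the standard ``$L^2$-closeness implies cosine alignment'' argument, applied to the two fields compared in \Cref{thm:impl_field_align}. Set $\mathbf a:=\mathbf F_\tau$ and $\mathbf b:=C_\tau\boldsymbol\Gamma_\tau$, and view both as elements of the Hilbert space $L^2(q_c;\mathbb R^D)$ with norm $\|\mathbf h\|_{L^2}:=(\mathbb E_{q_c}\|\mathbf h(\mathbf x)\|_2^2)^{1/2}$.

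Since $C_\tau>0$, normalization removes the scale: $\mathbf b/\|\mathbf b\|_{L^2}=\overline{\boldsymbol\Gamma}_\tau$ and $\mathbf a/\|\mathbf a\|_{L^2}=\overline{\mathbf F}_\tau$. The quantity to bound is therefore the $L^2$ inner product of two unit vectors. It satisfies the polarization identity
\[
\mathbb E_{q_c}\langle\overline{\mathbf F}_\tau,\overline{\boldsymbol\Gamma}_\tau\rangle=1-\tfrac12\,\|\overline{\mathbf F}_\tau-\overline{\boldsymbol\Gamma}_\tau\|_{L^2}^2 .
\]

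Write $\varepsilon^2:=\|\mathbf a-\mathbf b\|_{L^2}^2$. By \Cref{thm:impl_field_align},
\[
\varepsilon^2\le c_0M_\tau^2\bigl(\varepsilon_{\mathrm{sym},\tau}^2+\varepsilon_{\mathrm{mc},\tau}^2+K_\tau/D\bigr).
\]
By hypothesis, $\|\mathbf b\|_{L^2}\ge C_\tau\sqrt{\gamma_\tau}$. The smallness assumption $\varepsilon^2\le C_\tau^2\gamma_\tau/4$ gives, via the reverse triangle inequality,
\[
\|\mathbf a\|_{L^2}\ge\|\mathbf b\|_{L^2}-\varepsilon\ge \tfrac12 C_\tau\sqrt{\gamma_\tau}>0 .
\]
So $\overline{\mathbf F}_\tau$ is well defined; this is the only role of that assumption.

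Next I will use the elementary Hilbert-space inequality
\[
\Bigl\|\tfrac{\mathbf a}{\|\mathbf a\|}-\tfrac{\mathbf b}{\|\mathbf b\|}\Bigr\|\le\frac{2\|\mathbf a-\mathbf b\|}{\|\mathbf b\|}.
\]
It follows from
\[
\frac{\mathbf a}{\|\mathbf a\|}-\frac{\mathbf b}{\|\mathbf b\|}=\frac{\mathbf a-\mathbf b}{\|\mathbf b\|}+\mathbf a\Bigl(\frac1{\|\mathbf a\|}-\frac1{\|\mathbf b\|}\Bigr),
\]
where the second term has norm $\bigl|\|\mathbf b\|-\|\mathbf a\|\bigr|/\|\mathbf b\|\le\|\mathbf a-\mathbf b\|/\|\mathbf b\|$. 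Substituting into the identity above gives
\[
\mathbb E_{q_c}\langle\overline{\mathbf F}_\tau,\overline{\boldsymbol\Gamma}_\tau\rangle\ge1-\frac{2\varepsilon^2}{\|\mathbf b\|_{L^2}^2}\ge 1-\frac{2c_0M_\tau^2}{C_\tau^2\gamma_\tau}\bigl(\varepsilon_{\mathrm{sym},\tau}^2+\varepsilon_{\mathrm{mc},\tau}^2+K_\tau/D\bigr).
\]

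The main obstacle is bookkeeping of the scale factor rather than analysis: the displayed bound carries $C_\tau^{-2}$, whereas the corollary states a universal $c_1$. I would handle this in one of two ways. The first is to note that $C_\tau$ is a fixed known scale from the population theorem, bounded below for fixed $\tau$ (e.g.\ normalized so that $C_\tau\ge1$), so that $c_1:=2c_0/\min(1,C_\tau^2)$. The second is to read $\gamma_\tau$ as a lower bound on $\mathbb E\|C_\tau\boldsymbol\Gamma_\tau\|^2$, consistent with the threshold $C_\tau^2\gamma_\tau/4$ being compared against that quantity, and then take $c_1=2c_0$. I would state explicitly which convention is used. Everything else is routine.
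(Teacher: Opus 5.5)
Your argument is correct and is essentially the paper's: both convert the $L^2$ bound of \Cref{thm:impl_field_align} into a cosine bound via a polarization identity together with the reverse-triangle estimate $\|\mathbf F_\tau\|_{L^2}\ge \tfrac12 C_\tau\|\boldsymbol\Gamma_\tau\|_{L^2}$. The one difference is that the paper expands $\|\mathbf F_\tau-C_\tau\boldsymbol\Gamma_\tau\|_{L^2}^2$ directly and uses that lower bound in the denominator, which gives constant $1$ where your normalized-vector route gives $2$.

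Your concern about the scale factor is justified. The paper's proof also ends at $\|\mathbf F_\tau-C_\tau\boldsymbol\Gamma_\tau\|_{L^2}^2/(C_\tau^2\gamma_\tau)$ and silently drops the $C_\tau^{-2}$ when it says ``the result follows.'' Making one of your two conventions explicit is therefore the right way to state the corollary.
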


\paragraph{A Remark on Typical Scale of the Two Practical Deviations.}
The two deviations above have different meanings and therefore different typical scales. 
For the symmetrization term, writing
\[
Z_i^{\mathrm{row}}
:=
\sum_{k\in T}\exp(-d_{ik}/\tau),
\qquad
Z_j^{\mathrm{col}}
:=
\sum_{\ell=1}^{C_g}\exp(-d_{\ell j}/\tau),
\]
we have
\[
A_{ij}^{\mathrm{row}}=\frac{\exp(-d_{ij}/\tau)}{Z_i^{\mathrm{row}}},
\qquad
A_{ij}^{\mathrm{col}}=\frac{\exp(-d_{ij}/\tau)}{Z_j^{\mathrm{col}}},
\qquad
a_{ij}
=
\bar a_{ij}\sqrt{\frac{Z_i^{\mathrm{row}}}{Z_j^{\mathrm{col}}}}.
\]
Thus $\varepsilon_{\mathrm{sym},\tau}^2$ measures how much the extra factor
$\sqrt{Z_i^{\mathrm{row}}/Z_j^{\mathrm{col}}}$ perturbs the row-softmax-only barycenters. In particular, it is small when, within each block $\alpha\in\{G,U,P\}$, the column partition factors $Z_j^{\mathrm{col}}$ vary little across $j$, so that the query-axis softmax acts almost like a common rescaling inside that block.

By contrast, $\varepsilon_{\mathrm{mc},\tau}^2$ is a genuine finite-pool approximation error. For each block $\alpha\in\{G,U,P\}$, let
\[
\pi_G:=q_c,
\qquad
\pi_U:=p_{\emptyset},
\qquad
\pi_P:=p_c,
\]
and define the normalized row-softmax-only weights
\[
\bar w_{ij}^{\alpha,\tau}
:=
\frac{\bar a_{ij}}{\sum_{k\in\alpha}\bar a_{ik}},
\qquad j\in\alpha.
\]
Then
\[
\bar{\boldsymbol{\mu}}_i^\alpha
=
\sum_{j\in\alpha}\bar w_{ij}^{\alpha,\tau}\mathbf{y}_j^\alpha,
\]
so $\varepsilon_{\mathrm{mc},\tau}^2$ is a self-normalized kernel-estimation error. Its typical scale is governed not by the raw block size alone, but by the effective sample size
\[
N_{\mathrm{eff},i}^{\alpha,\tau}
:=
\left(\sum_{j\in\alpha}(\bar w_{ij}^{\alpha,\tau})^2\right)^{-1}.
\]
Schematically,
\[
\varepsilon_{\mathrm{mc},\tau}^2
\;\lesssim\;
\sum_{\alpha\in\{G,U,P\}}
\mathbb E_{\mathbf{x}_i\sim q_c}
\left[
\frac{\sigma_{\alpha,\tau}^2(\mathbf{x}_i)}
{N_{\mathrm{eff},i}^{\alpha,\tau}}
\right],
\]
where
\[
\sigma_{\alpha,\tau}^2(\mathbf{x})
:=
\frac{
\mathbb E_{\mathbf{y}\sim\pi_\alpha}
\!\left[
k_\tau(\mathbf{x},\mathbf{y})
\,
\|\mathbf{y}-\boldsymbol{\mu}_{\pi_\alpha,k_\tau}(\mathbf{x})\|_2^2
\right]
}{
\mathbb E_{\mathbf{y}\sim\pi_\alpha}\!\left[k_\tau(\mathbf{x},\mathbf{y})\right]
}
\]
is the corresponding kernel-posterior variance. This bound is only interpretive and is not used in the proofs in \Cref{app:proof_impl_highD}, but it makes the main point clear: increasing the raw pool size helps only insofar as it enlarges the effective sample size. When the kernel weights remain reasonably diffuse, $\varepsilon_{\mathrm{mc},\tau}^2$ is small; when they collapse onto only a few targets, it can remain non-negligible even for large pools.

\paragraph{From One Temperature to the Actual Stop-Gradient Update.}
The practical implementation computes such one-temperature forces for every temperature in its set $\mathcal T$, normalizes each one by its empirical RMS magnitude, sums these normalized forces to form a detached target for each feature head, and only then forms one regression loss per feature head. The total training loss is the sum of these per-head losses. This is important: the practical implementation does \emph{not} sum losses across temperatures.

Let $\mathcal L$ denote the set of frozen feature heads used by the practical implementation, and let $\mathcal T$ denote its temperature set. For each pair $(\ell,\tau)\in\mathcal L\times\mathcal T$, let
\[
\mathbf{F}_{\ell,\tau},
\quad
\boldsymbol{\Gamma}_{\ell,\tau},
\quad
C_{\ell,\tau},
\quad
K_{\ell,\tau},
\quad
M_{\ell,\tau},
\quad
\varepsilon_{\mathrm{sym},\ell,\tau}^2,
\quad
\varepsilon_{\mathrm{mc},\ell,\tau}^2
\]
denote the corresponding one-temperature quantities defined above for the $\ell$-th feature head. Let $\beta_{\ell,\tau}>0$ denote the inverse RMS normalization factor used by the practical implementation for that feature head and temperature. For each feature head $\ell$, define the aggregated one-head implementation field and guided score field by
\[
\widehat{\Delta}_{\boldsymbol{\theta},\ell}^{\mathrm{impl}}(\mathbf{x})
:=
\sum_{\tau\in\mathcal T}\beta_{\ell,\tau}\,\mathbf{F}_{\ell,\tau}(\mathbf{x}),
\qquad
\widehat{\Delta}_{\boldsymbol{\theta},\ell}^{\mathrm{guide}}(\mathbf{x})
:=
\sum_{\tau\in\mathcal T}\beta_{\ell,\tau}C_{\ell,\tau}\,\boldsymbol{\Gamma}_{\ell,\tau}(\mathbf{x}).
\]
The corresponding stop-gradient semi-gradients are
\[
\mathbf{g}_{\mathrm{impl}}(\boldsymbol{\theta})
:=
-2\,\mathbb E_{\boldsymbol{\epsilon}}
\Big[
\sum_{\ell\in\mathcal L}
\mathbf{J}_{\Psi_{\ell}\circ \mathbf{f}_{\boldsymbol{\theta}}}(\boldsymbol{\epsilon})^{\top}
\widehat{\Delta}_{\boldsymbol{\theta},\ell}^{\mathrm{impl}}(\mathbf{f}_{\boldsymbol{\theta}}(\boldsymbol{\epsilon}))
\Big],
\]
\[
\mathbf{g}_{\mathrm{guide}}(\boldsymbol{\theta})
:=
-2\,\mathbb E_{\boldsymbol{\epsilon}}
\Big[
\sum_{\ell\in\mathcal L}
\mathbf{J}_{\Psi_{\ell}\circ \mathbf{f}_{\boldsymbol{\theta}}}(\boldsymbol{\epsilon})^{\top}
\widehat{\Delta}_{\boldsymbol{\theta},\ell}^{\mathrm{guide}}(\mathbf{f}_{\boldsymbol{\theta}}(\boldsymbol{\epsilon}))
\Big].
\]
\begin{theorem}[Practical-Implementation-Aligned Semi-Gradient Alignment]
\label{thm:impl_semigrad}
Assume that \Cref{thm:impl_field_align} holds for every $(\ell,\tau)\in\mathcal L\times\mathcal T$. Assume furthermore that, for each feature head $\ell$, there exists $J_{\ell}<\infty$ such that
\[
\mathbb E_{\boldsymbol{\epsilon}}
\bigl[
\|\mathbf{J}_{\Psi_{\ell}\circ \mathbf{f}_{\boldsymbol{\theta}}}(\boldsymbol{\epsilon})\|_{\mathrm{op}}^2
\bigr]
\le J_{\ell}^2
\qquad
\text{for all }\boldsymbol{\theta}.
\]
Then
\begin{mybox}
\begin{equation}
\label{eq:impl_grad_bound_main}
\|\mathbf{g}_{\mathrm{impl}}(\boldsymbol{\theta})-\mathbf{g}_{\mathrm{guide}}(\boldsymbol{\theta})\|_2
\le
2\sum_{\ell\in\mathcal L}\sum_{\tau\in\mathcal T}
\beta_{\ell,\tau}J_{\ell}
\left[
c_0 M_{\ell,\tau}^2
\left(
\varepsilon_{\mathrm{sym},\ell,\tau}^2
+
\varepsilon_{\mathrm{mc},\ell,\tau}^2
+
\frac{K_{\ell,\tau}}{D}
\right)
\right]^{1/2}.
\end{equation}
\end{mybox}
Consequently, if the prefactor
\[
\sum_{\ell\in\mathcal L}\sum_{\tau\in\mathcal T}\beta_{\ell,\tau}J_{\ell}M_{\ell,\tau}
\]
remains uniformly bounded, then the practical implementation stop-gradient update is asymptotically aligned with a multitemperature guided score-transport semi-gradient:
\[
\|\mathbf{g}_{\mathrm{impl}}(\boldsymbol{\theta})-\mathbf{g}_{\mathrm{guide}}(\boldsymbol{\theta})\|_2
=
\mathcal O\!\left(
D^{-1/2}
+
\max_{\ell,\tau}\varepsilon_{\mathrm{sym},\ell,\tau}
+
\max_{\ell,\tau}\varepsilon_{\mathrm{mc},\ell,\tau}
\right).
\]
\end{theorem}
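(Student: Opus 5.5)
The plan is to reduce \Cref{thm:impl_semigrad} to \Cref{thm:impl_field_align} by linearity of the semi-gradient in the transport field, followed by Cauchy--Schwarz. Subtracting the two semi-gradients gives
\[
\mathbf g_{\mathrm{impl}}(\btheta)-\mathbf g_{\mathrm{guide}}(\btheta)
=
-2\sum_{\ell\in\mathcal L}\sum_{\tau\in\mathcal T}\beta_{\ell,\tau}\,
\mathbb E_{\beps}\Big[
\mathbf J_{\Psi_\ell\circ\rvf_\btheta}(\beps)^\top
\big(\mathbf F_{\ell,\tau}(\rvx)-C_{\ell,\tau}\boldsymbol\Gamma_{\ell,\tau}(\rvx)\big)
\Big],
\qquad \rvx=\rvf_\btheta(\beps),
\]
because both aggregated fields are defined through the same weights $\beta_{\ell,\tau}$ and the Jacobians coincide. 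I would then apply the triangle inequality over the finite index set $\mathcal L\times\mathcal T$. This reduces the claim to bounding a single term of the form $\|\mathbb E[\mathbf J^\top \mathbf e]\|_2$, where $\mathbf e:=\mathbf F_{\ell,\tau}-C_{\ell,\tau}\boldsymbol\Gamma_{\ell,\tau}$.

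For each such term, I would write $\|\mathbb E[\mathbf J^\top\mathbf e]\|_2\le \mathbb E[\|\mathbf J\|_{\mathrm{op}}\|\mathbf e\|_2]\le (\mathbb E\|\mathbf J\|_{\mathrm{op}}^2)^{1/2}(\mathbb E\|\mathbf e\|_2^2)^{1/2}$. The first factor is at most $J_\ell$ by assumption. The second factor is controlled by \Cref{thm:impl_field_align}, applied to head $\ell$ and temperature $\tau$. That theorem bounds the expectation under $\rvx\sim q_c$, which is exactly the law of $\rvf_\btheta(\beps)$ pushed into the feature space of head $\ell$. It gives $\mathbb E\|\mathbf e\|_2^2\le c_0M_{\ell,\tau}^2(\varepsilon^2_{\mathrm{sym},\ell,\tau}+\varepsilon^2_{\mathrm{mc},\ell,\tau}+K_{\ell,\tau}/D)$. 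Multiplying by $2\beta_{\ell,\tau}J_\ell$ and summing over $(\ell,\tau)$ yields \Cref{eq:impl_grad_bound_main}.

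For the asymptotic consequence, I would use $\sqrt{a+b+c}\le\sqrt a+\sqrt b+\sqrt c$ to split the square root, and then pull out $\sqrt{c_0}M_{\ell,\tau}$. Bounding each $\varepsilon$ by its maximum over $(\ell,\tau)$, and $K_{\ell,\tau}$ by $\max K$, gives a prefactor of $2\sqrt{c_0}\sum_{\ell,\tau}\beta_{\ell,\tau}J_\ell M_{\ell,\tau}$. This prefactor is uniformly bounded by hypothesis, which leaves the rate $\mathcal O(D^{-1/2}+\max\varepsilon_{\mathrm{sym}}+\max\varepsilon_{\mathrm{mc}})$.

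The step I expect to need the most care is the measure bookkeeping, not any estimate. One must check that the expectation over $\beps$ and the batch randomness in the semi-gradient matches the expectation in \Cref{thm:impl_field_align}. Concretely, $\mathbf F_{\ell,\tau}$ depends on the batch through the affinities, so the expectation in \Cref{eq:impl_field_align_main} must be read as being over both the query and the pools. The semi-gradient's expectation must be read the same way, with the Jacobian evaluated at the query $\rvx_i$. Under this reading, Cauchy--Schwarz applies on the joint probability space.

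A further check is that the Jacobian second-moment bound, stated as an expectation over $\beps$ alone, continues to hold on that joint space. It does, since $\mathbf J$ depends only on the query's noise. The $\beta_{\ell,\tau}$ are empirical and therefore random as well. I would treat them as fixed, conditioning on them, or alternatively use an almost-sure bound on them.
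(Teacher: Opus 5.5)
Your proposal is correct and follows essentially the same route as the paper: subtract the two semi-gradients, bound each term via $\|\mathbf A^\top\mathbf v\|_2\le\|\mathbf A\|_{\mathrm{op}}\|\mathbf v\|_2$ and Cauchy--Schwarz, split the temperature sum with a triangle inequality, and apply the one-temperature field-alignment theorem to each $(\ell,\tau)$ block. The only difference is the order of steps, and it yields the identical bound: the paper applies Cauchy--Schwarz per head and then Minkowski in $L^2$ over $\tau$, whereas you apply the triangle inequality to the gradient vectors over all $(\ell,\tau)$ first. Your derivation of the asymptotic rate, which the paper leaves implicit, is fine provided $\max_{\ell,\tau}K_{\ell,\tau}$ is independent of $D$.
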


The message of \Cref{thm:impl_semigrad} is that the practical implementation retains the same score-based mechanism after the implementation-specific modifications are taken into account. In particular, after accounting for the two measurable practical deviations, namely the affinity symmetrization distortion and the finite-pool kernel-estimation error, the actual stop-gradient update is close to a guided multitemperature score-transport semi-gradient in the same frozen feature space.


\subsection{Low-Temperature Regime}\label{app:low-tem}
Beyond the high-dimensional regime, we also show that in the low-temperature regime, with fixed dimension $D$ and small $\tau$, the population minimizer of drifting induces a distribution close to the data distribution, mirroring score matching.
When $\tau$ is small, the Laplace kernel $k_\tau(\rvx,\rvy)=\exp(-\|\rvx-\rvy\|_2/\tau)$ concentrates its mass near $\rvx$,
so the mean-shift direction $\mathbf V_{\uppi,k_\tau}(\rvx)$ becomes a purely local statistic of the density $\uppi$.
A Taylor expansion then reveals that this local displacement is proportional to the kernel-smoothed score $\rvs_{\uppi,\tau}(\rvx)$, up to higher-order terms.

The only remaining issue is that mean shift is defined through a ratio.
Specifically, we write $\mathbf V_{\uppi,k_\tau}(\rvx)=B_\tau(\rvx)/A_\tau(\rvx)$, where
\[
A_\tau(\rvx):=\int_{\mathbb R^D} e^{-\|\rvx-\rvy\|_2/\tau}\,\uppi(\rvy)\,\diff\rvy,
\qquad
B_\tau(\rvx):=\int_{\mathbb R^D} e^{-\|\rvx-\rvy\|_2/\tau}\,(\rvy-\rvx)\,\uppi(\rvy)\,\diff\rvy.
\]
Thus we need to ensure the denominator $A_\tau(\rvx)$ does not become too small and that the Taylor remainder stays well behaved after dividing by $A_\tau(\rvx)$.
\Cref{ass:small_tau_envelope_uniform} provides exactly this by requiring  moment control of the local smoothness of $\log\uppi$ and of how much $\uppi(\rvx)$ can differ from nearby values.

Under this mild regularity, the theorem below states that any population drifting minimizer induces a distribution whose kernel-smoothed score matches that of the data up to $\mathcal O(\tau^2)$ pointwise,
and hence the scale-$\tau$ Fisher divergence decays as $\mathcal O(\tau^4)$. In short, we have the following:

\begin{theorem}[(Informal) Small-$\bar\tau$ Agreement between Mean-Shift and Score Matching]
\label{thm:small_bar_tau_consistency}
Suppose that $k_\tau$ is a Laplace kernel. Let $\tau_0>0$.
For each $\tau\in(0,\tau_0]$, pick any population drifting minimizer
\[
\rvf^{\star}(\tau)\in\arg\min_{\rvf}\mathcal L_{\mathrm{drift}}(\rvf),
\qquad
\rvg^{\star}(\tau)\in\arg\min_{\rvg} \mathcal{L}_{\mathrm{SM}}(\rvg).
\]
Assume that, uniformly over such drifting-model minimizers, the local derivatives and local density ratios of $p$ and $q_{\rvf^\star(\tau)}$ admit integrable (moment) control. Then, for small $\tau$,
\begin{mybox}
    \[
    \mathcal D_{\mathrm{rF}}\bigl(q_{\rvg^\star(\tau)}\| q_{\rvf^\star(\tau)}\bigr) = \mathcal D_{\mathrm{rF}}\bigl(p\|q_{\rvf^\star(\tau)}\bigr)  
    =\mathcal O(\tau^4).
    \]
\end{mybox}
The hidden constant is independent of $\bar\tau$ and of the learnable parameters.
\end{theorem}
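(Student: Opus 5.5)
The plan is to prove a pointwise small-$\tau$ expansion of the Laplace mean-shift field in terms of the Laplace-smoothed score. Then I will use that expansion, with the minimizer property, to control the reverse Fisher divergence. The identity $\mathcal D_{\mathrm{rF}}(q_{\rvg^\star(\tau)}\|q_{\rvf^\star(\tau)})=\mathcal D_{\mathrm{rF}}(p\|q_{\rvf^\star(\tau)})$ I take to come from the score-matching minimizer being exact at the population level, i.e.\ $\rvs_{q_{\rvg^\star},\tau}=\rvs_{p,\tau}$, assuming the class of $\rvg$ is rich enough. It therefore suffices to bound the second quantity.

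\textbf{Step 1 (local expansion).} For $\uppi\in\{p,q\}$ write $\mathbf V_{\uppi,k_\tau}=B_\tau/A_\tau$. Substitute $\rvy=\rvx+\tau\mathbf w$, so both integrals become integrals against $e^{-\|\mathbf w\|}$. Taylor-expand $\uppi(\rvx+\tau\mathbf w)=\uppi(\rvx)\exp\big(\tau\langle\nabla\log\uppi(\rvx),\mathbf w\rangle+\tfrac{\tau^2}{2}\mathbf w^\top\nabla^2\log\uppi(\xi)\mathbf w\big)$. Radial symmetry kills the odd moments. This gives
\[
\mathbf V_{\uppi,k_\tau}(\rvx)=c_D\,\tau^2\,\nabla\log\uppi(\rvx)+\mathbf R_\uppi(\rvx,\tau),\qquad c_D=\tfrac{\int\|\mathbf w\|^2e^{-\|\mathbf w\|}\diff\mathbf w}{D\int e^{-\|\mathbf w\|}\diff\mathbf w}=D+1 .
\]
The remainder satisfies $\|\mathbf R_\uppi\|\le \tau^4 E_\uppi(\rvx)$, where $E_\uppi$ is the local derivative / density-ratio envelope of \Cref{ass:small_tau_envelope_uniform}. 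The same expansion applied to \Cref{eq:radial_weighted_score_ratio}, with $b_\tau(r)=\tau/r$, gives $\tau^2\rvs_{\uppi,\tau}(\rvx)=c'_D\tau^2\nabla\log\uppi(\rvx)+\mathcal O(\tau^4E_\uppi)$. Combining the two, $\mathbf V_{\uppi,k_\tau}=C\tau^2\rvs_{\uppi,\tau}+\mathcal O(\tau^4E_\uppi)$ with $C=c_D/c'_D$ independent of $\uppi$. Alternatively I can take this from \Cref{prop:radial_precond_decomp}: there $\alpha_\uppi=\mathbb E[r/\tau]$ tends to a constant and $\boldsymbol\delta_\uppi=\mathcal O(\tau^2)$ relative to the leading term. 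Subtracting the $p$ and $q$ versions yields
\[
\Delta_{p,q}(\rvx)=C\tau^2\Delta\rvs_{p,q}(\rvx)+\mathbf R(\rvx),\qquad \|\mathbf R(\rvx)\|\le\tau^4(E_p+E_q)(\rvx).
\]

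\textbf{Step 2 (using minimality).} Because $\rvf^\star$ minimizes $\mathcal L_{\mathrm{drift}}$, I compare its loss with that of any generator $\rvf_p$ with $q_{\rvf_p}=p$, which exists if the model class is rich enough. Since $\Delta_{p,p}\equiv0$, this gives $\mathcal L_{\mathrm{drift}}(\rvf^\star)\le0$, hence $\Delta_{p,q_{\rvf^\star}}=0$ $q_{\rvf^\star}$-a.e. Plugging this into the Step 1 identity gives $C\tau^2\Delta\rvs=-\mathbf R$ a.e. Squaring and averaging under $q_{\rvf^\star}$ then yields
\[
\mathcal D_{\mathrm{rF}}(p\|q_{\rvf^\star})\le C^{-2}\tau^{4}\,\mathbb E_{q_{\rvf^\star}}[(E_p+E_q)^2].
\]
The last expectation is bounded uniformly over minimizers by the assumed moment control. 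This gives the $\mathcal O(\tau^4)$ rate, with a constant independent of the learnable parameters.

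\textbf{Main obstacle.} The hard step is Step 1's remainder control after division by $A_\tau$. The Laplace kernel has exponential rather than Gaussian tails, so the far region $\|\mathbf w\|\gg1$ contributes. The density ratio $\uppi(\rvx+\tau\mathbf w)/\uppi(\rvx)$ must be bounded by an integrable envelope, e.g.\ $e^{L\tau\|\mathbf w\|}$ with $L\tau<1$, for the expansion to be uniform. My first attempt would be to split at $\|\mathbf w\|\le\tau^{-1/2}$. On the inner ball I would use the Hessian bound on $\log\uppi$; on the tail I would use exponential decay of $e^{-(1-L\tau)\|\mathbf w\|}$ against the density-ratio moment. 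For the lower bound on $A_\tau/(\tau^D\uppi(\rvx))$ I would use the same ratio control. I also need to check that the $\tau^3$ terms cancel by symmetry, so the error is genuinely $\tau^4$ in the field and not $\tau^3$.
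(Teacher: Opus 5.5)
Your skeleton is the paper's. It rests on a pointwise lemma $\mathbf V_{\uppi,k_\tau}=c_D\tau^2\rvs_{\uppi,\tau}+\tau^4\mathbf R_{\uppi,\tau}$ with $\|\mathbf R_{\uppi,\tau}\|_2$ bounded by the local envelope. Realizability then forces $\Delta_{p,q_{\rvf^\star}}=\mathbf 0$ $q_{\rvf^\star}$-a.e., and you square and integrate. Realizability again gives the $q_{\rvg^\star}$ identity; the paper obtains $q_{\rvg^\star}=p$ via injectivity of Laplace smoothing, and your score-equality shortcut also works because instance noise gives full support and the smoothed scores are continuous. Expanding $\mathbf V_{\uppi,k_\tau}$ and $\tau^2\rvs_{\uppi,\tau}$ separately against $\nabla\log\uppi(\rvx)$ is equivalent to the paper's expansion of $A_\tau$, $B_\tau$, $\nabla A_\tau$ followed by a ratio identity. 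Here $c'_D=1$, since $\rvs_{\uppi,\tau}=\nabla\log A_\tau$ and $A_\tau\approx M_0\tau^D\uppi$, so $C=c_D=D+1$ as in the paper.

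Two parts of your Step 1 are wrong as written. First, the alternative via \Cref{prop:radial_precond_decomp} fails. Under the reweighted law $\alpha_\uppi(\rvx)=\mathbb E[r/\tau]\to D$, and $\boldsymbol\delta_\uppi(\rvx)=\tau^2\nabla\log\uppi(\rvx)+\mathcal O(\tau^4)$ is of the same order as the leading term. Dropping it gives the wrong constant and an $\mathcal O(\tau^2)$ error, and showing $\boldsymbol\delta_\uppi\approx\tau^2\rvs_{\uppi,\tau}$ to $\mathcal O(\tau^4)$ needs the direct expansion anyway. Second, and more importantly, expanding $\log\uppi$ to second order with the Hessian at an intermediate point $\xi=\xi(\mathbf w)$ only gives an $\mathcal O(\tau^3)$ error in $\mathbf V_{\uppi,k_\tau}$. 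The contribution $\mathbf w\,(\mathbf w^\top\nabla^2\log\uppi(\xi)\mathbf w)$ is not odd in $\mathbf w$, so symmetry does not remove it. An $\mathcal O(\tau^3)$ field error yields only $\mathcal D_{\mathrm{rF}}=\mathcal O(\tau^2)$. You need the quadratic term evaluated at $\rvx$ itself, with a remainder of order $\tau^3\|\mathbf w\|_2^3$ or smaller. The paper expands through third order with a fourth-order integral remainder, which is why $\mathfrak M_\uppi$ contains derivatives of $\log\uppi$ up to order four.

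On the far field, the paper does not assume a global envelope. It splits at $\|\mathbf z\|_2=\tau^{-1}$, the largest radius keeping $\rvx+\tau\mathbf z\in\mathbb B(\rvx,1)$. The whole Taylor region is then controlled by the local quantities $\mathfrak M_\uppi,\mathfrak R_\uppi$ of \Cref{ass:small_tau_envelope_uniform}. The rest is bounded by $e^{-1/\tau}$ times a polynomial in $\tau^{-1}$, using the instance-noise bound $\|\uppi\|_\infty\le(2\pi\eta^2)^{-D/2}$. Your bound $\uppi(\rvx+\tau\mathbf w)/\uppi(\rvx)\le e^{L\tau\|\mathbf w\|_2}$ with $L\tau<1$ is a non-local hypothesis that the stated local moment control does not supply. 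For a standard Gaussian the best pointwise constant is $L\approx\|\rvx\|_2$, which violates $L\tau<1$ once $\|\rvx\|_2\ge 1/\tau$. So your route needs a stronger hypothesis than the statement grants, unless you treat that region separately.

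You are right that this is the crux, though. The paper's tail bound is uniform in $\rvx$ but not proportional to $\uppi(\rvx)$. After dividing by $A_\tau\approx M_0\tau^D\uppi(\rvx)$, its absorption into a remainder bounded by $C\,\mathfrak U_\uppi(\rvx)$ is not automatic in the low-density region. A fully rigorous proof needs some additional control there whichever way you proceed.
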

We refer to \Cref{app:proof-small-tau} for a rigorous statement and the full proof.

Even though we also derive an analogous minimizer-level alignment in the small-temperature regime, one could in principle establish the same three components there as well: objective-level equivalence, (semi-)gradient alignment, and minimizer agreement. In practice, however, the small-$\tau$ analysis requires additional $\tau$-uniform local smoothness and non-degeneracy conditions. These are needed to control Taylor remainders and denominators arising in ratio expansions, but they make the presentation substantially more technical.
To keep the main narrative focused and readable, we therefore emphasize the high-dimensional results in this article.

\section{Empirical Setup, Additional Results, and Generated Samples}\label{app:gen_samples}

\subsection{Experimental Setup for Synthetic Datasets}\label{app:toy-setup}
This appendix records the exact oracle constructions and finite-sample estimators used in \Cref{subsec:emp-toy}.

\paragraph{Synthetic Distributions.}
For each ambient dimension $D$, we construct two Gaussian-mixture pairs $(p,q)$ and treat them as oracle samplers. In all cases, query points are sampled from $q$, and independent reference sets are drawn from both $p$ and $q$.

\subparagraph{(A) Ring MoG.}
Both $p$ and $q$ are six-mode mixtures of Gaussians in $\mathbb{R}^D$. For each $D$, we first choose a random two-dimensional plane and place six mode centers equally spaced on a ring of radius $R=3$ inside that plane. To draw a sample, we choose one mode uniformly and add isotropic Gaussian noise with standard deviation $0.40$. We define $q$ by rotating the entire ring by a fixed angle of $\pi/6$ within the same plane and sampling in the same way. Thus, the mismatch between $p$ and $q$ is controlled by a fixed angular offset, while the overall geometric structure is preserved as $D$ increases.

\subparagraph{(B) Raw MoG.}
Here $p$ and $q$ are mixtures with different numbers of modes and different radius profiles. For $p$, we place six mode centers at radii $\{1.5,\,2.5,\,3.0,\,4.0,\,5.0,\,6.0\}$ in random directions. For $q$, we place four mode centers at radii $\{2.0,\,3.5,\,4.0,\,5.5\}$ in independent random directions. Samples are generated by choosing a mode uniformly within each mixture and adding isotropic Gaussian noise with standard deviation $0.5$. This produces broader and more heterogeneous norms, together with a mismatched mode layout between $p$ and $q$, and therefore serves as a stress test beyond the shell-concentration setting.

In both datasets, we set the Laplace bandwidth adaptively as
\[
\tau = \bar\tau\cdot\overline{\|\rvx\|_2},
\qquad
\bar\tau=0.3,
\]
where $\overline{\|\rvx\|_2}$ denotes the mean norm of the query batch. Since $\overline{\|\rvx\|_2}$ grows like $\sqrt{R^2+\sigma^2 D}\propto \sqrt{D}$ for large $D$, this choice corresponds to $a\approx\tfrac12$ in the dimension-aware scaling $\tau=\bar\tau D^a$ used in the theory.

For each distribution, we draw a finite i.i.d.\ reference set of size $N=3{,}000$:
\[
\{\mathbf{y}^{(p)}_j\}_{j=1}^{N}\sim p,
\qquad
\{\mathbf{y}^{(q)}_j\}_{j=1}^{N}\sim q.
\]
All expectations over $\mathbf{x}\sim q$ are approximated empirically using query batches sampled from $q$.

\paragraph{Nonparametric Field Estimators.}
Fix a query point $\mathbf{x}\in\mathbb{R}^D$. We use the Laplace kernel
\[
k_\tau(\rvx,\rvy)=\exp\!\left(-\frac{\|\rvx-\rvy\|_2}{\tau}\right),
\]
and normalize the kernel values over each reference set. For $p$, the weights are
\[
w^{(p)}_j(\mathbf{x})
=
\frac{k_\tau(\mathbf{x},\mathbf{y}^{(p)}_j)}
{\sum_{\ell=1}^{N} k_\tau(\mathbf{x},\mathbf{y}^{(p)}_\ell)},
\qquad j=1,\dots,N,
\]
and $w^{(q)}_j(\mathbf{x})$ is defined analogously using $\{\mathbf{y}^{(q)}_j\}_{j=1}^{N}$.

These weights define the nonparametric mean-shift fields
\[
\widehat{\mathbf{V}}_{p,k}(\mathbf{x})
=
\sum_{j=1}^{N} w^{(p)}_j(\mathbf{x})\big(\mathbf{y}^{(p)}_j-\mathbf{x}\big),
\qquad
\widehat{\mathbf{V}}_{q,k}(\mathbf{x})
=
\sum_{j=1}^{N} w^{(q)}_j(\mathbf{x})\big(\mathbf{y}^{(q)}_j-\mathbf{x}\big).
\]
They estimate the kernel-reweighted displacement from $\mathbf{x}$ toward nearby samples under $p$ and $q$.

The corresponding kernel-score estimators are
\[
\widehat{\mathbf{s}}_{p,k}(\mathbf{x})
=
\nabla_{\mathbf{x}}\log\Big(\sum_{j=1}^{N} k_\tau(\mathbf{x},\mathbf{y}^{(p)}_j)\Big)
=
\frac{1}{\tau}\sum_{j=1}^{N} w^{(p)}_j(\mathbf{x})\,
\frac{\mathbf{y}^{(p)}_j-\mathbf{x}}{\|\mathbf{y}^{(p)}_j-\mathbf{x}\|_2},
\]
and analogously for $\widehat{\mathbf{s}}_{q,k}(\mathbf{x})$.

Finally, we form the drifting and score discrepancies
\[
\Delta_{p,q}(\mathbf{x})
=
\widehat{\mathbf{V}}_{p,k}(\mathbf{x})-\widehat{\mathbf{V}}_{q,k}(\mathbf{x}),
\qquad
\Delta \mathbf{s}_{p,q}(\mathbf{x})
=
\widehat{\mathbf{s}}_{p,k}(\mathbf{x})-\widehat{\mathbf{s}}_{q,k}(\mathbf{x}).
\]
All quantities are finite-sample Monte Carlo estimators: they depend only on sampled query/reference points and kernel evaluations, with no closed-form access to $p$ or $q$.

\subsection{Additional Diagnostics for the Laplace Mechanism on Synthetic Data}\label{app:toy-diagnostics}
Beyond the primary alignment curves in \Cref{fig:emp-D-decay}, we report diagnostics tied directly to the pre-conditioned-score decomposition in \Cref{prop:radial_precond_decomp}. These diagnostics test not only directional alignment, but also the predicted scaling and the vanishing-residual mechanism behind it.

\paragraph{pre-conditioners, Residuals, and Predicted Scale.}
For each query point $\mathbf{x}$, we define the kernel-weighted mean distances
\[
\alpha_p(\mathbf{x})
:=
\sum_j w^{(p)}_j(\mathbf{x})\,\|\mathbf{y}^{(p)}_j-\mathbf{x}\|_2,
\qquad
\alpha_q(\mathbf{x})
:=
\sum_j w^{(q)}_j(\mathbf{x})\,\|\mathbf{y}^{(q)}_j-\mathbf{x}\|_2.
\]
Their query-averaged versions are
\[
\bar\alpha_p := \mathbb E_{\mathbf{x}\sim q}\big[\alpha_p(\mathbf{x})\big],
\qquad
\bar\alpha_q := \mathbb E_{\mathbf{x}\sim q}\big[\alpha_q(\mathbf{x})\big].
\]
In high dimension, the theory predicts that these pre-conditioners become nearly constant and nearly equal. We therefore form the empirical scale
\begin{align}\label{eq:c-theory}
\rho := \tfrac12\big(\bar\alpha_p+\bar\alpha_q\big),
\qquad
C_{\text{theory}} := \rho\,\tau.
\end{align}
This is the theory-predicted scale relating the drifting discrepancy to the score discrepancy.

We also compute the covariance residuals
\[
\boldsymbol{\delta}_p(\mathbf{x})
:=
\mathbf{V}_{p,k}(\mathbf{x})-\big(\alpha_p(\mathbf{x})\,\tau\big)\,\mathbf{s}_{p,k}(\mathbf{x}),
\qquad
\boldsymbol{\delta}_q(\mathbf{x})
:=
\mathbf{V}_{q,k}(\mathbf{x})-\big(\alpha_q(\mathbf{x})\,\tau\big)\,\mathbf{s}_{q,k}(\mathbf{x}),
\]
and their gap $\boldsymbol{\delta}_p(\mathbf{x})-\boldsymbol{\delta}_q(\mathbf{x})$.

The proof of the high-dimensional theorem suggests two coupled effects: the pre-conditioners $\alpha_p(\mathbf{x})$ and $\alpha_q(\mathbf{x})$ should concentrate to the same scale, and the residual gap should vanish.

To test the predicted magnitude calibration directly, we also compute an oracle least-squares scale
\[
C^*
:=
\arg\min_{C\in\mathbb{R}}
\mathbb E_{\mathbf{x}\sim q}\Big[\big\|\Delta_{p,q}(\mathbf{x})-C\,\Delta \mathbf{s}_{p,q}(\mathbf{x})\big\|_2^2\Big]
=
\frac{\mathbb E_{\mathbf{x}\sim q}\big[\langle \Delta_{p,q}(\mathbf{x}),\,\Delta \mathbf{s}_{p,q}(\mathbf{x})\rangle\big]}
{\mathbb E_{\mathbf{x}\sim q}\big[\|\Delta \mathbf{s}_{p,q}(\mathbf{x})\|_2^2\big]}.
\]
Unlike the cosine similarity, which measures only directional agreement, comparing $C^*$ with $C_{\text{theory}}$ tests whether the magnitude scaling predicted by the theory is correct.

\begin{figure}[t]
    \centering
    \includegraphics[width=0.95\linewidth]{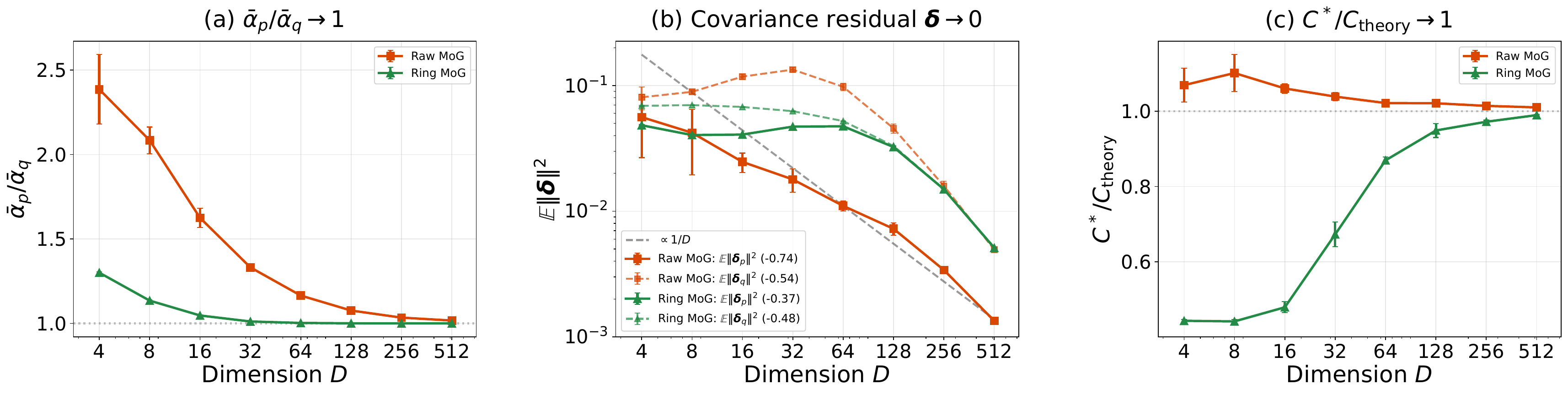}
\caption{\footnotesize{\textbf{Empirical diagnostics for the Laplace-kernel mechanism.}
(a) The kernel-reweighted pre-conditioners concentrate and become indistinguishable, with $\bar\alpha_p/\bar\alpha_q\to 1$.
(b) The residual-gap energy $\mathbb E_{\rvx\sim q}\|\boldsymbol{\delta}_p(\mathbf{x})-\boldsymbol{\delta}_q(\mathbf{x})\|_2^2$ decays with dimension, indicating a vanishing covariance residual.
(c) The theory-predicted scale $C_{\mathrm{theory}}=\rho\tau$ matches the oracle least-squares scale $C^*$, with $C^*/C_{\mathrm{theory}}\to 1$.
Together, these diagnostics support the mechanism behind \Cref{prop:radial_precond_decomp,thm:field_alignment_radial_full_xy} and show that the Laplace mean-shift discrepancy is not only directionally aligned with the score discrepancy, but also correctly calibrated in magnitude.}}
    \label{fig:emp-precon-scale}
\end{figure}

\paragraph{Empirical Findings.}
\Cref{fig:emp-precon-scale} supports all three predictions. First, panel (a) shows that the averaged pre-conditioners become indistinguishable as $D$ grows, with $\bar\alpha_p/\bar\alpha_q\to 1$. Second, panel (b) shows that the residual-gap energy $\mathbb E_{\mathbf{x}\sim q}\|\boldsymbol{\delta}_p(\mathbf{x})-\boldsymbol{\delta}_q(\mathbf{x})\|_2^2$ decays with dimension, indicating that the covariance residual becomes negligible in the large-$D$ regime. Third, panel (c) shows that the oracle scale $C^*$ agrees increasingly well with the theory-predicted scale $C_{\text{theory}}$, with $C^*/C_{\text{theory}}\to 1$.

Together, these diagnostics sharpen the message of \Cref{subsec:emp-toy}. The high-dimensional picture is not merely that the two discrepancy fields become directionally aligned; rather, the pre-conditioned-score decomposition also predicts the correct scale, and the residual term becomes progressively irrelevant. In this sense, the Laplace mean-shift field behaves increasingly like a globally rescaled score field.

Finally, although \Cref{thm:field_alignment_radial_full_xy,thm:grad_alignment_largeD} are proved under concentration-type assumptions as sufficient conditions, the Raw MoG stress test still exhibits the same qualitative trends. This suggests that the alignment mechanism may extend beyond the idealized shell-concentration setting covered by the current theory.

\paragraph{Training Experiments on 2D Synthetic Data.}
\begin{figure}[th!]
    \centering
    \includegraphics[width=\linewidth]{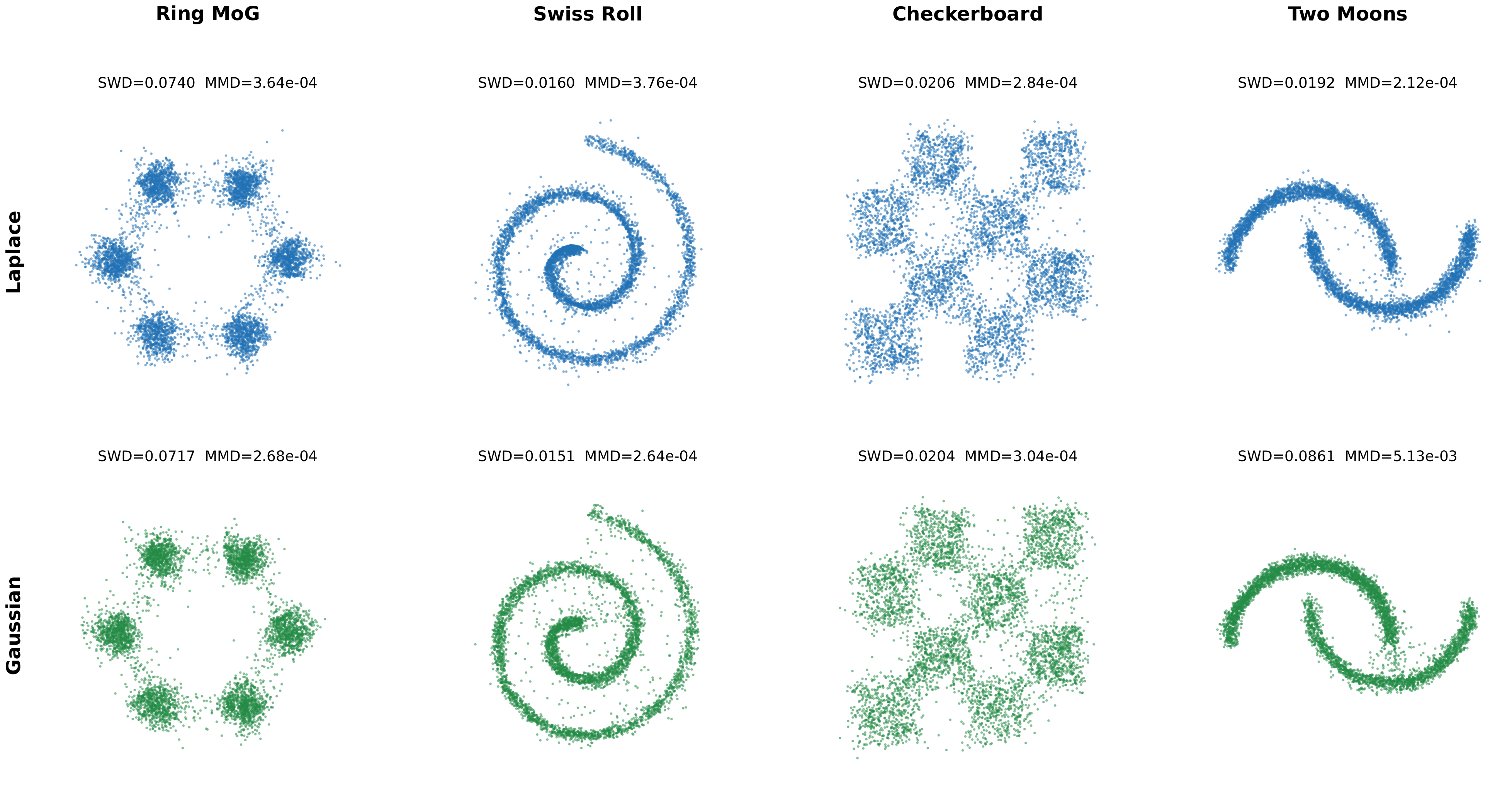}
    \caption{\footnotesize{\textbf{Illustration of 2D generation across different synthetic datasets.} We compare the generation quality of drifting models using Laplace and Gaussian kernels, and evaluate them using Sliced Wasserstein Distance (SWD) and MMD. The two kernels achieve nearly identical performance on both metrics across the four datasets. This suggests that, even in low dimension, the pre-conditioning and covariance-residual terms in \Cref{prop:radial_precond_decomp} have little practical effect on generation quality.
    }}
    \label{fig:toy-all}
\end{figure}
We train drifting models with Laplace kernels (mean-shift) and Gaussian kernels (score-mismatch) to test whether the performance gap between the two is negligible on synthetic datasets.

We follow the setup in the drifting model's demo implementation\footnote{\url{https://lambertae.github.io/projects/drifting/}} and train a one-step generator on four 2D targets (Ring MoG, Swiss Roll, Checkerboard, Two Moons).
The generator is a 4-layer MLP mapping $\rvz\sim\mathcal N(\mathbf 0,\mathbf I_{32})$ to $\mathbb R^2$, trained with batch size $2048$ for both data and model samples.
At each step, we form a transported target $\rvx+\widehat{\mathbf V}_{p,q}(\rvx)$ using finite-sample mean shift with either a Laplace kernel
$k_\tau(\rvx,\rvy)=\exp(-\|\rvx-\rvy\|_2/\tau)$ or a Gaussian kernel
$k_\sigma(\rvx,\rvy)=\exp(-\|\rvx-\rvy\|_2^2/(2\sigma^2))$ (the Gaussian case coincides with a score-mismatch transport direction).
We use kernel scales $(\tau,\sigma)=(0.30,0.30)$ for Ring MoG and $(\tau,\sigma)=(0.05,0.05)$ for the other datasets.
We evaluate sample quality by sliced Wasserstein distance (SWD) with 200 random projections and RBF-MMD on $5,000$ generated and $5,000$ real samples.
\Cref{fig:toy-all} reports the generations together with SWD and MMD.
Across all four targets, Laplace and Gaussian kernels achieve nearly identical performance, suggesting that, even in these low-dimensional settings, the pre-conditioning and covariance-residual corrections in \Cref{prop:radial_precond_decomp} have limited impact on sample quality.

\subsection{Experimental Setup and Results on Realistic Image Datasets}\label{app:real-setup}

\paragraph{CIFAR-10.}
We adapt the official JAX drifting codebase\footnote{\url{https://github.com/lambertae/drifting}} from its original TPU-oriented release to GPU execution, while keeping the training logic, model family, memory-bank mechanism, loss construction, and evaluation protocol as close as possible to the original implementation. All CIFAR-10 experiments are conducted in pixel space at resolution $32\times 32$, using the CIFAR-10 training split for optimization and the standard repository evaluation pipeline for FID on $50$K generated samples, with training run on up to 8 A100 80GB GPUs.

Our CIFAR-10 generator is a DiT-style conditional transformer~\cite{peebles2023scalable} with input size $32$, patch size $2$, hidden size $512$, depth $8$, $8$ attention heads, MLP ratio $4.0$, conditioning dimension $384$, $8$ class tokens, $32$ noise classes, and $16$ noise coordinates. We enable QK normalization, SwiGLU, RoPE, and RMSNorm, following the implementation defaults. Training uses AdamW with learning rate $2\times 10^{-4}$, linear warmup for $5$K steps followed by a constant schedule, $(\beta_1,\beta_2)=(0.9,0.95)$, weight decay $0.01$, gradient clipping at $2.0$, and EMA decay $0.999$. We train for $100$K iterations, save checkpoints every $5$K steps, and evaluate every $5$K steps. The data-loader batch size is $256$ for both training and evaluation.

For the drifting loss, we follow the same fixed-point template as the official implementation. We use $32$ positives and $8$ negatives per sample, a positive memory bank of size $128$, a negative memory bank of size $1000$, and $64$ generated samples per label with classifier-free guidance values drawn from $[1,4]$. To compare the theoretically exact and approximate cases under matched conditions, we test both Gaussian and Laplace kernels by changing only the kernel choice and its bandwidth list while keeping the rest of the pipeline fixed. For Laplace we use the standard repository choice $\bar\tau\in\{0.2,0.05,0.02\}$, with kernel weights of the form $\exp(-r/\bar\tau)$. For Gaussian we use $\bar\tau\in\{0.5,0.25,0.125\}$, with weights $\exp(-(r/\bar\tau)^2)$.Since the implementation first normalizes pairwise distances by an empirical feature-space scale, these $\bar\tau$ values are dimensionless. In this sense, they serve as the practical counterpart of the theoretical scaling $\tau=\bar\tau D^a$ for some $\bar\tau> 0 $ and $a \ge 0 $, with the dimension dependence absorbed by the normalization. The larger Gaussian list is chosen so that, after squaring in the exponent, the resulting affinities operate on a comparable effective distance scale to the Laplace case rather than becoming overly concentrated.

Since drifting on natural images relies on a frozen pre-trained feature space, we evaluate two feature backbones. For ConvNeXt-based runs, we use a frozen ConvNeXt-V2 backbone, resize CIFAR-10 images from $32\times 32$ to $224\times 224$ by bilinear interpolation, apply the standard ImageNet normalization used by the codebase, and extract the same multi-scale activation statistics as in the official pipeline; the theorem-facing post-hoc diagnostics use the global ConvNeXt feature \texttt{convnext\_global}. For MAE-based runs, we replace ConvNeXt with the frozen pre-trained checkpoint \texttt{hf://mae\_pixel\_640}, namely the ResNet-style MAE feature encoder released with the same codebase. Because this encoder is pre-trained on $256\times 256$ images, we resize CIFAR-10 inputs to $256\times 256$ before feature extraction and otherwise keep the drifting pipeline unchanged. The corresponding global MAE feature \texttt{mae\_global} is used for post-hoc analysis. No feature backbone is finetuned on CIFAR-10. Unless otherwise stated, all reported CIFAR-10 results keep these settings fixed and vary only the kernel. For theorem-facing post-hoc diagnostics, we use the same estimator computations as in \Cref{subsec:emp-toy,app:toy-diagnostics}.

\begin{figure}[h!]
    \centering
    \subfloat[Laplace-ConvNeXt]{%
        \includegraphics[width=0.45\linewidth]{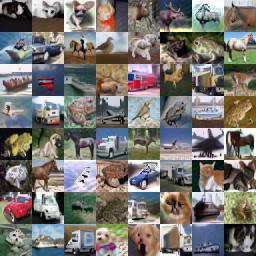}%
    }\hspace{0.3cm}
    \subfloat[Gaussian-ConvNeXt]{%
        \includegraphics[width=0.45\linewidth]{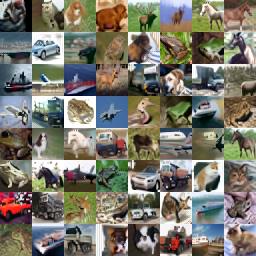}%
    }\hspace{0.3cm}
    \subfloat[Laplace-MAE]{%
        \includegraphics[width=0.45\linewidth]{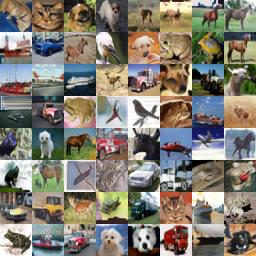}%
    }\hspace{0.3cm}
    \subfloat[Gaussian-MAE]{%
        \includegraphics[width=0.45\linewidth]{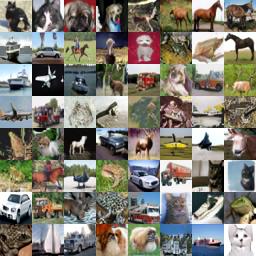}%
    }
    \caption{\footnotesize{\textbf{Comparison of generation on CIFAR-10 under different feature embeddings and kernels.} Single-step unconditional generation on CIFAR-10 at $32\times 32$ resolution using Laplace and Gaussian kernels, with the drifting loss evaluated in either ConvNeXt or MAE feature space. The first two panels show the ConvNeXt-based models, where Gaussian achieves FID 8.38 and Laplace achieves FID 8.66. The last two panels show the MAE-based models, where Gaussian achieves FID 4.72 and Laplace achieves FID 4.79. Across both feature spaces, the two kernels achieve near-parity under an otherwise identical training pipeline, suggesting that the Laplace-specific pre-conditioning and residual terms have limited effect on end-to-end sample quality in these settings. This observation is also consistent with concurrent findings on CelebA-HQ reported by~\cite{li2026long}.}}
    \label{fig:cifar10}
\end{figure}

\paragraph{ImageNet 256$\times$256.}
Unless otherwise stated, the overall code path, fixed-point evaluation template, and theorem-facing diagnostics are the same as in the CIFAR-10 setup. For high-resolution validation, however, we follow the official latent ImageNet-256 release rather than retraining from scratch. Specifically, we use the released Drift-L checkpoint \texttt{hf://latent\_L\_sota} together with the frozen feature encoder \texttt{hf://mae\_latent\_640}. In the official \texttt{latent\_sota\_L} configuration, training is carried out in cached VAE latent space at model resolution $32\times 32$ with $4$ channels, while the drifting loss is defined in the frozen latent MAE feature space. Relative to CIFAR-10, the main changes are a larger DiT backbone (hidden size $1024$, depth $24$, and $16$ attention heads), a longer schedule ($200$K steps), and a stronger fixed-point configuration with $64$ positives and $32$ negatives per sample; the optimizer hyperparameters and Laplace bandwidth list remain the same as above. The released checkpoint is reported at FID $1.53$ on ImageNet-256 using $50$K samples at CFG $1.0$.
\newpage

\section{Proof of the pre-conditioning Score Decomposition}\label{app:proof-precond-score}

\subsection{Proof to \Cref{thm:gaussian_meanshift_score_matching}}

\begin{proof}[Proof to \Cref{thm:gaussian_meanshift_score_matching}]
We first prove the identity that relates the mean-drift and score function. Let $k_\tau$ be a Gaussian kernel. By definition,
\[
p_\tau(\rvx)=(p*k_\tau)(\rvx)=\mathbb E_{\rvy\sim p}\big[k_\tau(\rvx, \rvy)\big].
\]
Differentiating under the expectation and using
$\nabla_{\rvx}k_\tau(\rvx, \rvy)=-(\rvx-\rvy)k_\tau(\rvx, \rvy)/\tau^2$ yields
\[
\nabla_{\rvx}p_\tau(\rvx)
=
\frac{1}{\tau^2}\mathbb E_{\rvy\sim p}\big[k_\tau(\rvx, \rvy)(\rvy-\rvx)\big].
\]
Dividing by $p_\tau(\rvx)$ gives
\[
\nabla_{\rvx}\log p_\tau(\rvx)
=
\frac{\mathbb E_{\rvy\sim p}\big[k_\tau(\rvx, \rvy)(\rvy-\rvx)\big]}
{\tau^2\,\mathbb E_{\rvy\sim p}\big[k_\tau(\rvx, \rvy)\big]}
=
\frac{\mathbf V_{p,k_\tau}(\rvx)}{\tau^2},
\]
where the last equality uses the definition of $\mathbf V_{p,k_\tau}$ in the mean-shift operator paragraph.
The same argument applies to $q$.
Substituting 
\begin{equation*}
\mathbf V_{\uppi,k_\tau}(\rvx)=\tau^2 \mathbf s_{\uppi,\tau}(\rvx)
\qquad\text{for } \uppi\in\{p,q\}.
\end{equation*}
into $\Delta_{p,q}(\rvx)=\eta(\mathbf V_{p,k_\tau}(\rvx)-\mathbf V_{q,k_\tau}(\rvx))$
gives the discrepancy formula defined by the mean-shift. Finally, \Cref{eq:drift_value_equals_score} follows from \Cref{eq:value_equiv}.
\end{proof}

\subsection{Proof to \Cref{prop:radial_precond_decomp}}
\begin{proof}[Proof to \Cref{prop:radial_precond_decomp}]
    Fix $\rvx$ and write $b:=b_\tau(\|\rvx-\rvy\|_2)$. Under $\rvy\sim \uppi_\tau(\cdot| \rvx)$, define
\[
A(\rvy):=b^{-1},\qquad B(\rvy):=b(\rvy-\rvx).
\]
Then $A(\rvy)B(\rvy)=\rvy-\rvx$, and by the definition of $\mathbf V_{\uppi,k_\tau}$,
\[
\mathbf V_{\uppi,k_\tau}(\rvx)=\mathbb E_{\rvy\sim \uppi_\tau(\cdot| \rvx)}[A(\rvy)B(\rvy)].
\]
Applying the covariance identity $\mathbb E[AB]=\mathbb E[A]\mathbb E[B]+\operatorname{Cov}(A,B)$ under $\rvy\sim \uppi_\tau(\cdot| \rvx)$ yields
\[
\mathbf V_{\uppi,k_\tau}(\rvx)
=
\alpha_{\uppi}(\rvx)\,\mathbb E_{\rvy\sim \uppi_\tau(\cdot| \rvx)}[B(\rvy)]
+
\boldsymbol{\delta}_{\uppi}(\rvx).
\]
Finally, \Cref{eq:radial_weighted_score_ratio} is exactly
\[
\mathbb E_{\rvy\sim \uppi_\tau(\cdot| \rvx)}\big[b_\tau(\|\rvx-\rvy\|_2)(\rvy-\rvx)\big]
=\tau^2\,\mathbf s_{\uppi,k_\tau}(\rvx),
\]
so $\mathbb E[B(\rvy)]=\tau^2\mathbf s_{\uppi,k_\tau}(\rvx)$ and the claim follows.
\end{proof}


\section{Proofs of the High-Dimensional Theorems}\label{app:proof-large-D}
In this section, we prove \Cref{thm:D_consistency}, which establishes a proxy relationship between score matching and the drifting model at their optimal distributions in the high-dimensional regime. We state a rigorous version of this relationship below.

\begingroup
\let\thetheoremorig\thetheorem
\addtocounter{theorem}{-4} 
\renewcommand{\thetheorem}{\thetheoremorig\ensuremath{'}}
\begin{theorem}[High-Dimensional Agreement between Mean-Shift and Score Matching]
\label{thm:D_consistency_prime}
Assume \Cref{ass:shell_R0}--\Cref{ass:bounded_norm}, \Cref{ass:ac}, and \Cref{ass:realizable_fixed}.
Fix $\bar\tau>0$ and $a\ge 0$, and set $\tau=\bar\tau D^{a}$.
For each $D$, let
\[
\rvf^{\star}_D \in \arg\min_{\rvf\in\mathcal F}\mathcal{L}_{\mathrm{drift}}(\rvf).
\]
Then
\[
\mathcal{D}_{\mathrm{rF}}\!\bigl(p\| q_{\rvf^\star_D}\bigr)
=
\mathcal{O}\!\left(D^{-(1+2a)}\right).
\]
Moreover, by realizability (\Cref{ass:realizable_fixed}) there exists $\rvg_{\mathrm{data},D}\in\mathcal G$ such that
$q_{\rvg_{\mathrm{data},D}}=p$, and hence
\[
\mathcal{D}_{\mathrm{rF}}\!\bigl(q_{\rvg_{\mathrm{data},D}}\| q_{\rvf^\star_D}\bigr)
=
\mathcal{D}_{\mathrm{rF}}\!\bigl(p\| q_{\rvf^\star_D}\bigr)
=
\mathcal{O}\!\left(D^{-(1+2a)}\right).
\]
The hidden constant in $\mathcal{O}(\cdot)$ is independent of $D$ and of the learnable parameters.
\end{theorem}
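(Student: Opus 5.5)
The plan is to combine realizability with the field-alignment bound of \Cref{thm:field_alignment_radial_full_xy}, used in its rigorous form under \Cref{ass:shell_R0}--\Cref{ass:bounded_norm}. The argument has three steps.

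\textbf{Step 1: the minimal drifting loss is zero.} By \Cref{ass:realizable_fixed} there is a generator in $\mathcal F$ whose induced law equals $p$. For that generator, $\Delta_{p,p}\equiv\mathbf 0$ by the equilibrium property of mean shift, so by \Cref{eq:value_equiv} its drifting loss vanishes. Hence any minimizer $\rvf^\star_D$ satisfies
\[
\mathcal L_{\mathrm{drift}}(\rvf^\star_D)=\mathbb E_{q_{\rvf^\star_D}}\|\Delta_{p,q_{\rvf^\star_D}}\|_2^2=0.
\]

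\textbf{Step 2: transfer to the score mismatch.} Apply the field alignment bound to $\rvf=\rvf^\star_D$. Since the bound holds for all $\rvf$ with a $\rvf$-independent constant, this is legitimate. Because $\Delta_{p,q_{\rvf^\star_D}}=\mathbf 0$ for $q_{\rvf^\star_D}$-a.e.\ $\rvx$, we obtain
\[
C^2\,\mathbb E_{q_{\rvf^\star_D}}\|\Delta\rvs_{p,q_{\rvf^\star_D}}\|_2^2\le K\,\epsilon_D,
\]
where $\epsilon_D$ is the explicit right-hand side of the rigorous alignment estimate.

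\textbf{Step 3: extract the rate.} Dividing by $C^2$ gives the bound on $\mathcal D_{\mathrm{rF}}(p\|q_{\rvf^\star_D})$. To get $D^{-(1+2a)}$ rather than $D^{-1}$, I need the precise scaling of $C$ and of the alignment error in $\tau=\bar\tau D^a$.

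In the Laplace decomposition of \Cref{prop:radial_precond_decomp}, $b_\tau(r)=\tau/r$, so $\alpha_\uppi=\mathbb E[r]/\tau$ and
\[
\mathbf V_{\uppi}=\tau\,\mathbb E[r]\,\mathbf s_{\uppi}+\boldsymbol\delta_\uppi .
\]
On the shell of radius $R_0\sqrt D$, the kernel-reweighted radii concentrate at a common value $\bar r\asymp\sqrt D$. This suggests taking $C=\tau\bar r$, matching $C_{\mathrm{theory}}=\rho\tau$. The residual error is then
\[
\Delta-C\Delta\rvs=\tau\bigl(\mathbb E_p[r]-\bar r\bigr)\mathbf s_p-\tau\bigl(\mathbb E_q[r]-\bar r\bigr)\mathbf s_q+\boldsymbol\delta_p-\boldsymbol\delta_q .
\]
Each term is a covariance of $r$ with a bounded unit-vector quantity, via \Cref{eq:delta_perp_laplace_app}. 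Radius fluctuations are $\mathcal O(1)$ in absolute size (variance $\mathcal O(1)$), so these terms are $\mathcal O(1)$ with a $D$-free constant after tracking normalization.

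With $\mathbb E\|\Delta-C\Delta\rvs\|^2$ bounded by a constant $K_0$, dividing by $C^2\asymp\tau^2 D=\bar\tau^2D^{1+2a}$ yields
\[
\mathcal D_{\mathrm{rF}}\le K_0/(\bar\tau^2 D^{1+2a}).
\]
This is the required rate, consistent with the informal $\mathcal O(D^{-1})$ statement when $C$ is normalized to a constant at $a=0$. If instead the rigorous alignment theorem is phrased with $C$ normalized, I would redo the bookkeeping so that the unnormalized error is $\mathcal O(1)$, which yields the same $D^{-(1+2a)}$.

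\textbf{Realizability identity.} The final identity is immediate: by \Cref{ass:realizable_fixed} the realizing generator $\rvg_{\mathrm{data},D}\in\mathcal G$ has $q_{\rvg_{\mathrm{data},D}}=p$. Since $\mathcal D_{\mathrm{rF}}$ depends only on the laws, the two divergences coincide. \Cref{ass:ac} ensures the smoothed scores are well defined and allows differentiation under the integral.

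\textbf{Main obstacle.} The hard part is Step 3: proving that the unnormalized mismatch $\Delta-C\Delta\rvs$ stays $\mathcal O(1)$ uniformly in $\rvf$. This requires two things:
\begin{itemize}
\item uniform control of the reweighted radius variance $\operatorname{Var}_{\uppi_\tau(\cdot|\rvx)}(r)$ for both $p$ and an arbitrary model $q_{\rvf}$;
\item control of the gap between $\mathbb E_p[r]$ and $\mathbb E_q[r]$ under shell concentration.
\end{itemize}
I would handle this with the shell and inner-product assumptions. Writing $r^2=\|\rvx\|^2+\|\rvy\|^2-2\langle\rvx,\rvy\rangle$, concentration of $r^2$ at scale $\sqrt D$ gives concentration of $r$ at scale $\mathcal O(1)$. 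The exponential tilt $e^{-r/\tau}$ perturbs this only by a bounded factor, since $r/\tau$ fluctuates by $\mathcal O(D^{-a})$.
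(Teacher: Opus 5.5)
Your Steps 1 and 2 are the paper's argument. Realizability forces $\mathcal L_{\mathrm{drift}}(\rvf^\star_D)=0$, so $\Delta_{p,q_{\rvf^\star_D}}=\mathbf 0$ holds $q_{\rvf^\star_D}$-a.e., and the field-alignment bound then controls $C^2\,\mathcal D_{\mathrm{rF}}(p\|q_{\rvf^\star_D})$. Since $\Delta$ vanishes there, $\|\Delta-C\Delta\rvs\|_2^2=C^2\|\Delta\rvs\|_2^2$ exactly, so you even avoid the factor $2$ in \Cref{cor:drift_controls_score_radial_full_xy}.

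\textbf{Where Step 3 goes wrong.} The gap is in Step 3, and it is unnecessary under the stated hypotheses. In \Cref{thm:field_alignment_radial_full_xy_prime} the scale is $C=\tau\rho=\bar\tau\rho D^{a}$, with $\rho=\sqrt2R_0$ a $D$-free constant, and the error is $K/D$. This gives $\mathcal D_{\mathrm{rF}}(p\|q_{\rvf^\star_D})\le K/(\bar\tau^2\rho^2D^{1+2a})$ immediately. Instead you rebuild the rate from a picture with a shell of radius $R_0\sqrt D$, $\bar r\asymp\sqrt D$, $C^2\asymp\tau^2D$, and an alignment error that is merely $\mathcal O(1)$. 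That picture contradicts \Cref{ass:bounded_norm} ($\|\rvx\|_2\le B$ with $B$ independent of $D$) and \Cref{ass:shell_R0}, which place the shell at the fixed radius $R_0$. In the actual regime $\bar r\to\rho=\mathcal O(1)$, so $C^2\asymp D^{2a}$. The $\mathcal O(1)$ error bound you name as the main obstacle would then give only $\mathcal D_{\mathrm{rF}}=\mathcal O(D^{-2a})$, i.e.\ no decay at all for $a=0$. Your exponent matches the theorem only after the change of variables $\rvx\mapsto\rvx/\sqrt D$. That map turns $\tau=\bar\tau D^a$ into $\bar\tau D^{a-1/2}$, which leaves the admissible range $a\ge0$ whenever $a<1/2$.

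\textbf{What is missing.} The needed ingredient is that the alignment error is $\mathcal O(1/D)$, i.e.\ the kernel-reweighted radius concentrates around $\rho$ at scale $D^{-1/2}$. The paper obtains this in two lemmas. \Cref{lem:mixed_dist_L2_radial_full_xy} gives $\mathbb E(S-\rho)^2\le C_{\mathrm{dist}}/D$ via $(S-\rho)^2\le(S^2-\rho^2)^2/\rho^2$ and the shell and inner-product moments. \Cref{lem:reweight_L2_radial_full_xy} transfers this to the tilted law. It bounds the numerator using $e^{-r/\tau}\le 1$, and the denominator from below by Jensen plus $\|\rvy-\rvx\|_2\le 2B$. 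This costs only the factor $e^{2B/\tau}\le e^{2B/\bar\tau}$, because $\tau\ge\bar\tau$. Then $(\tau\alpha_\uppi(\rvx)-\rho)^2\le\mathbb E[(r-\rho)^2\mid\rvx]$ and $\|\boldsymbol\delta_\uppi(\rvx)\|_2^2\le 4\operatorname{Var}(r\mid\rvx)$ give $K/D$ uniformly in $\rvf$.

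Your tilt argument would not close in your own scaling either, for two reasons. First, an $L^2$ fluctuation of $r/\tau$ of size $D^{-a}$ does not make $e^{-r/\tau}$ a bounded perturbation: at $a=0$ it is of order one, and fourth-moment control supplies no exponential moments. Second, the almost-sure bound the paper relies on would become $e^{c\sqrt D/\tau}$, which diverges for $a<1/2$. Drop the rescaled bookkeeping and conclude directly from Step 2 with $C=\bar\tau\rho D^a$ and the $K/D$ bound.
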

\endgroup

\subsection{Proof Roadmap and Technical Assumptions}

\paragraph{Proof Roadmap.}
The proof has two conceptual pieces. First, under the shell and inner-product moment conditions, independent samples from any two distributions in our model/data family have pairwise distance tightly concentrated around the common scale $\rho=\sqrt2R_0$ at rate $1/\sqrt D$ (Step~1). Second, drifting does not sample neighbors uniformly: it reweights neighbors by the Laplace kernel. With bounded feature norms, this kernel reweighting cannot distort distances too much, so the \emph{kernel-reweighted} neighbor radius $r=\|\rvy-\rvx\|_2$ still concentrates around $\rho$ with $\mathbb E(r-\rho)^2=\mathcal O(1/D)$, uniformly in $D$ and without any restriction on $\bar\tau$ (Step~2). These two concentration facts are then plugged into the pre-conditioned-score decomposition (Step~3), which writes mean shift as a scaled kernel score plus a covariance residual; concentration makes both the pre-conditioner fluctuation and the residual $\mathcal O(1/\sqrt D)$ in $L^2$, yielding the field alignment $\Delta_{p,q_\rvf}(\rvx)\approx (\tau\rho)\,\Delta\rvs_{p, q_\rvf}(\rvx)$ with mean-square error $\mathcal O(1/D)$. Finally, at the distribution-level optimum drifting enforces $\mathbf V_{\rvf^\star}\equiv 0$, so alignment implies $\mathbb E_{q_{\rvf^\star}}\|\Delta\rvs_{\rvf^\star}\|_2^2=\mathcal O((\tau\rho)^{-2}D^{-1})=\mathcal O(D^{-(1+2a)})$; score matching identifies $q_{\rvg^\star}=p$, giving the stated Fisher-divergence decay (Step~4).

\paragraph{Technical Assumptions.} The assumptions below assert that sample norms concentrate around a common scale, and that pairwise inner products between independent samples are small on average.

\begin{assumption}[Shell around a Common Scale $R_0$]
\label{ass:shell_R0}
There exists $\sigma\ge 0$ independent of $D$ such that for every $\mu$ in the above family, if $\rvx\sim\mu$ then
\[
\mathbb{E}\big(\|\mathbf{x}\|_2^2-R_0^2\big)^2 \le \frac{\sigma^2 R_0^4}{D}.
\]
\end{assumption}

\begin{assumption}[Mixed Inner-Product Second Moment]
\label{ass:ip_L2}
There exists $\kappa\ge 0$ independent of $D$ such that for any $\mu,\nu$ in the above family,
if $\rvx\sim\mu$ and $\rvy\sim\nu$ are independent then
\[
\mathbb{E}\langle \mathbf{x},\mathbf{y}\rangle^2 \le \frac{\kappa R_0^4}{D}.
\]
\end{assumption}

\begin{assumption}[Fourth-Moment Controls]
\label{ass:fourth}
There exist constants $C_{\mathrm{norm},4},C_{\mathrm{ip},4}<\infty$ independent of $D$ such that:
\begin{enumerate}
\item For every $\mu$ in the above family, if $\rvx\sim\mu$ then
\[
\mathbb{E}\big(\|\mathbf{x}\|_2^2-R_0^2\big)^4 \le \frac{C_{\mathrm{norm},4}R_0^8}{D^2}.
\]
\item For any $\mu,\nu$ in the above family, if $\rvx\sim\mu$ and $\rvy\sim\nu$ are independent then
\[
\mathbb{E}\langle \mathbf{x},\mathbf{y}\rangle^4 \le \frac{C_{\mathrm{ip},4}R_0^8}{D^2}.
\]
\end{enumerate}
\end{assumption}

\begin{assumption}[Bounded (Feature) Norm]
\label{ass:bounded_norm}
There exists $B<\infty$ independent of $D$ such that for every $\mu$ in the above family, if $\rvx\sim\mu$ then
\[
\|\rvx\|_2 \le B \qquad \text{almost surely}.
\]
\end{assumption}

Drifting-model pipelines that rely on pre-trained feature maps typically enforce explicit norm control, for instance via $\ell_2$-normalization, LayerNorm followed by clipping, or direct norm clipping. Under such preprocessing, \Cref{ass:bounded_norm} holds naturally with a known constant $B$ (often $B=1$ for $\ell_2$-normalized embeddings). Thus, this assumption is realistic in practice.

\subsection{Auxiliary Tools and the Main Proof}
\paragraph{Step 1: Mixed Distances Concentrate.}

\begin{lemma}[Mixed Distance Concentrates in $L^2$ around $\rho=\sqrt2 R_0$]
\label{lem:mixed_dist_L2_radial_full_xy}
Assume \Cref{ass:shell_R0,ass:ip_L2}.
Draw $\rvx\sim\mu$ and $\rvy\sim\nu$ independently (any $\mu,\nu$ from the family), and set $S:=\|\rvx-\rvy\|_2$.
Then
\[
\mathbb{E}(S-\rho)^2 \le \frac{C_{\mathrm{dist}}}{D},
\qquad
C_{\mathrm{dist}}:=3\big(2\sigma^2+4\kappa\big)R_0^2.
\]
\end{lemma}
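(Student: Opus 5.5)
The plan is to work first with the squared distance $S^2$, whose deviation from $\rho^2=2R_0^2$ is a sum of simple terms, and only then pass to $S$. Expanding,
\[
S^2-\rho^2=\big(\|\rvx\|_2^2-R_0^2\big)+\big(\|\rvy\|_2^2-R_0^2\big)-2\langle\rvx,\rvy\rangle .
\]
Using $(a+b+c)^2\le 3(a^2+b^2+c^2)$ and taking expectations, \Cref{ass:shell_R0} applied to $\mu$ and to $\nu$ bounds the first two terms by $\sigma^2R_0^4/D$ each. \Cref{ass:ip_L2}, which uses that $\rvx,\rvy$ are independent, bounds the cross term by $4\kappa R_0^4/D$. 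This gives
\[
\mathbb E\big(S^2-\rho^2\big)^2\le \frac{3(2\sigma^2+4\kappa)R_0^4}{D}.
\]

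To transfer this to $S$ itself, I will use the elementary inequality $|S-\rho|=|S^2-\rho^2|/(S+\rho)\le |S^2-\rho^2|/\rho$. It holds because $S\ge 0$ and $\rho>0$. Squaring gives
\[
(S-\rho)^2\le \frac{(S^2-\rho^2)^2}{\rho^2}=\frac{(S^2-\rho^2)^2}{2R_0^2}.
\]
Taking expectations then yields $\mathbb E(S-\rho)^2\le 3(2\sigma^2+4\kappa)R_0^2/(2D)$, which is within the stated constant $C_{\mathrm{dist}}=3(2\sigma^2+4\kappa)R_0^2$. The extra factor $1/2$ is simply slack, or it can be dropped by using the cruder bound $S+\rho\ge\rho$ with $\rho^2\ge R_0^2$.

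The only point needing care is this square-root linearization. It avoids any need for $S$ to stay away from $0$ because the denominator $S+\rho$ is bounded below by the deterministic $\rho$. So there is no real obstacle, provided $R_0>0$; if $R_0=0$ the claim is trivial. I also need that the assumptions apply to the mixed pair $(\mu,\nu)$ as stated, which they do since the family assumptions are formulated for arbitrary $\mu,\nu$.
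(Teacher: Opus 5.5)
Your proposal is correct and follows essentially the same argument as the paper: you bound $\mathbb E(S^2-\rho^2)^2$ via $(a+b+c)^2\le 3(a^2+b^2+c^2)$ and the two assumptions, then pass to $S$ through $(S-\rho)^2\le (S^2-\rho^2)^2/\rho^2$, with the factor $1/2$ from $\rho^2=2R_0^2$ absorbed into the constant. Your remark on the degenerate case $R_0=0$, which the paper leaves implicit by assuming $\rho>0$, is a harmless extra.
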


\begin{proof}
Write
\[
S^2=\|\rvx\|_2^2+\|\rvy\|_2^2-2\langle \rvx,\rvy\rangle,
\qquad
\rho^2=2R_0^2.
\]
For $b>0$, $(\sqrt a-\sqrt b)^2\le (a-b)^2/b$, hence
\[
(S-\rho)^2\le \frac{(S^2-\rho^2)^2}{\rho^2}.
\]
Now
\[
S^2-\rho^2
=
(\|\rvx\|_2^2-R_0^2)+(\|\rvy\|_2^2-R_0^2)-2\langle \rvx,\rvy\rangle.
\]
Using $(a+b+c)^2\le 3(a^2+b^2+c^2)$,
\[
(S^2-\rho^2)^2
\le
3\Big((\|\rvx\|_2^2-R_0^2)^2+(\|\rvy\|_2^2-R_0^2)^2+4\langle \rvx,\rvy\rangle^2\Big).
\]
Taking expectations and applying \Cref{ass:shell_R0,ass:ip_L2} gives
\[
\mathbb{E}(S^2-\rho^2)^2
\le
3\Big(\tfrac{\sigma^2R_0^4}{D}+\tfrac{\sigma^2R_0^4}{D}+4\tfrac{\kappa R_0^4}{D}\Big)
=
\frac{3(2\sigma^2+4\kappa)R_0^4}{D}.
\]
Divide by $\rho^2=2R_0^2$ and absorb the factor $1/2$ into the constant.
\end{proof}

\begin{lemma}[Fourth Moment Bound for Mixed Distances]
\label{lem:mixed_dist_L4_radial_full_xy}
Assume \Cref{ass:fourth}. Draw $\rvx\sim\mu$ and $\rvy\sim\nu$ independently,
set $S:=\|\rvx-\rvy\|_2$ and $\rho=\sqrt2R_0$.
Then there exists $C_{\mathrm{dist},4}$ independent of $D$ such that
\[
\mathbb{E}(S-\rho)^4 \le \frac{C_{\mathrm{dist},4}}{D^2}.
\]
\end{lemma}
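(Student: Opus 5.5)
The argument is the fourth-moment analogue of the proof of \Cref{lem:mixed_dist_L2_radial_full_xy}; we reduce everything to moments of $S^2-\rho^2$. Since $\rho>0$ and $S\ge 0$, we have $|S-\rho| = |S^2-\rho^2|/(S+\rho)\le |S^2-\rho^2|/\rho$. This is the same elementary inequality used in Step~1, now taken to the fourth power:
\[
(S-\rho)^4\le \frac{(S^2-\rho^2)^4}{\rho^4}.
\]
Thus it suffices to show $\mathbb E(S^2-\rho^2)^4\lesssim R_0^8/D^2$.

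Next we decompose exactly as before:
\[
S^2-\rho^2=(\|\rvx\|_2^2-R_0^2)+(\|\rvy\|_2^2-R_0^2)-2\langle\rvx,\rvy\rangle.
\]
Apply the power-mean (convexity) inequality $(a+b+c)^4\le 27(a^4+b^4+c^4)$. This gives
\[
(S^2-\rho^2)^4\le 27\Big((\|\rvx\|_2^2-R_0^2)^4+(\|\rvy\|_2^2-R_0^2)^4+16\langle\rvx,\rvy\rangle^4\Big).
\]
Take expectations. Because $\rvx\sim\mu$ and $\rvy\sim\nu$ are independent draws from the family, part~1 of \Cref{ass:fourth} bounds each of the first two terms by $C_{\mathrm{norm},4}R_0^8/D^2$, and part~2 bounds the last term by $C_{\mathrm{ip},4}R_0^8/D^2$. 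Hence
\[
\mathbb E(S^2-\rho^2)^4\le \frac{27\,(2C_{\mathrm{norm},4}+16C_{\mathrm{ip},4})R_0^8}{D^2}.
\]

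Finally, divide by $\rho^4=4R_0^4$. This yields the claim with
\[
C_{\mathrm{dist},4}:=\tfrac{27}{4}\,(2C_{\mathrm{norm},4}+16C_{\mathrm{ip},4})\,R_0^4,
\]
which is independent of $D$ because $R_0$ and the constants in \Cref{ass:fourth} are.

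No step is a real obstacle. The only points needing care are that the reduction to $S^2$ uses $S+\rho\ge\rho$, which holds because $S\ge 0$, and that \Cref{ass:fourth} is stated for independent pairs from the same family. The latter is exactly the setting of the lemma, so \Cref{ass:shell_R0,ass:ip_L2} are not needed here.
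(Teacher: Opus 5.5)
Your proof is correct and follows the paper's argument essentially line by line. It uses the same reduction $(S-\rho)^4\le (S^2-\rho^2)^4/\rho^4$, the same three-term decomposition bounded via $(a+b+c)^4\le 27(a^4+b^4+c^4)$, and the same application of \Cref{ass:fourth}, arriving at $C_{\mathrm{dist},4}=\tfrac{27}{4}(2C_{\mathrm{norm},4}+16C_{\mathrm{ip},4})R_0^4$, with $R_0>0$ implicitly treated as $D$-independent exactly as the paper does.
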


\begin{proof}
Let $A:=\|\rvx\|_2^2-R_0^2$, $B:=\|\rvy\|_2^2-R_0^2$, $W:=\langle \rvx,\rvy\rangle$.
Then $S^2-\rho^2=A+B-2W$.
Using $(a+b+c)^4\le 27(a^4+b^4+c^4)$ with $c=-2W$,
\[
(S^2-\rho^2)^4\le 27\big(A^4+B^4+16W^4\big).
\]
Taking expectations and applying \Cref{ass:fourth} yields
\[
\mathbb E(S^2-\rho^2)^4
\le
27\Big(\tfrac{C_{\mathrm{norm},4}R_0^8}{D^2}+\tfrac{C_{\mathrm{norm},4}R_0^8}{D^2}+16\tfrac{C_{\mathrm{ip},4}R_0^8}{D^2}\Big)
=
\frac{27(2C_{\mathrm{norm},4}+16C_{\mathrm{ip},4})R_0^8}{D^2}.
\]
Finally, for $b>0$, $(\sqrt a-\sqrt b)^4\le (a-b)^4/b^2$ with $a=S^2$ and $b=\rho^2=2R_0^2$ gives
\[
(S-\rho)^4\le \frac{(S^2-\rho^2)^4}{\rho^4}=\frac{(S^2-\rho^2)^4}{4R_0^4}.
\]
Combine the bounds.
\end{proof}

\paragraph{Step 2: Kernel-Reweighting Preserves the $\mathcal O(1/D)$ Scale.}

In the remainder we specialize to the Laplace kernel
\[
k_\tau(\rvx,\rvy)=\exp\!\Big(-\frac{\|\rvx-\rvy\|_2}{\tau}\Big),
\qquad\text{so that}\qquad
b_\tau(r)=\frac{\tau}{r}.
\]

\begin{lemma}[Kernel-reweighted radii still concentrate (no constraint on $\bar\tau$)]
\label{lem:reweight_L2_radial_full_xy}
Assume \Cref{ass:bounded_norm} and \Cref{lem:mixed_dist_L2_radial_full_xy}.
Fix any $\mu,\nu$ from the family and draw $\rvx\sim \nu$.
Conditionally on $\rvx$, draw $\rvy$ from the kernel-reweighted conditional density
\[
\mu_\tau(\rvy| \rvx)
:=
\frac{k_\tau(\rvx,\rvy)\,\mu(\rvy)}{\mathbb E_{\rvy'\sim \mu}[k_\tau(\rvx,\rvy')]}.
\]
Let $r:=\|\rvy-\rvx\|_2$ and $\rho=\sqrt2R_0$. Then for every $\tau>0$,
\[
\mathbb E(r-\rho)^2 \le \exp\!\Big(\frac{2B}{\tau}\Big)\cdot \frac{C_{\mathrm{dist}}}{D},
\]
where $B$ is from \Cref{ass:bounded_norm} and $C_{\mathrm{dist}}$ is from \Cref{lem:mixed_dist_L2_radial_full_xy}.
In particular, if $\tau=\bar\tau D^a$ with $a\ge 0$, then $\tau\ge \bar\tau$ for all $D\ge 1$ and hence
\[
\mathbb E(r-\rho)^2 \le \exp\!\Big(\frac{2B}{\bar\tau}\Big)\cdot \frac{C_{\mathrm{dist}}}{D},
\]
so the hidden constant is independent of $D$ with \emph{no lower bound requirement} on $\bar\tau>0$.
\end{lemma}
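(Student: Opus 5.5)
The approach is a change-of-measure argument: bound the density of the kernel-reweighted law $\mu_\tau(\cdot\mid\rvx)$ relative to $\mu$ by a constant depending only on $B/\tau$. Then transfer the unweighted concentration of \Cref{lem:mixed_dist_L2_radial_full_xy} to the reweighted radius.

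Fix $\rvx$ in the support of $\nu$, so that $\|\rvx\|_2\le B$ by \Cref{ass:bounded_norm}. For any $\rvy,\rvy'$ in the support of $\mu$, we also have $\|\rvy\|_2,\|\rvy'\|_2\le B$. The triangle inequality then gives $\|\rvx-\rvy\|_2\ge 0$ and $\|\rvx-\rvy'\|_2\le 2B$.

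This yields a pointwise bound on the likelihood ratio. The numerator satisfies $k_\tau(\rvx,\rvy)\le 1$. The denominator satisfies $\mathbb E_{\rvy'\sim\mu}[k_\tau(\rvx,\rvy')]\ge e^{-2B/\tau}$. Hence, for $\mu$-a.e. $\rvy$,
\[
\frac{\mu_\tau(\rvy\mid\rvx)}{\mu(\rvy)}=\frac{k_\tau(\rvx,\rvy)}{\mathbb E_{\rvy'\sim\mu}[k_\tau(\rvx,\rvy')]}\le e^{2B/\tau}.
\]
A sharper comparison of the kernel at $\rvy$ against $\rvy'$ would give only $e^{\|\rvy-\rvy'\|/\tau}$, also at most $e^{2B/\tau}$, so the crude bound costs nothing.

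Since $(r-\rho)^2\ge 0$, the ratio bound transfers expectations:
\[
\mathbb E_{\rvy\sim\mu_\tau(\cdot\mid\rvx)}\big[(\|\rvy-\rvx\|_2-\rho)^2\big]\le e^{2B/\tau}\,\mathbb E_{\rvy\sim\mu}\big[(\|\rvy-\rvx\|_2-\rho)^2\big].
\]
Next, integrate over $\rvx\sim\nu$. The pair $(\rvx,\rvy)$ with $\rvy\sim\mu$ drawn independently is exactly the setting of \Cref{lem:mixed_dist_L2_radial_full_xy}. The right-hand side is therefore at most $e^{2B/\tau}C_{\mathrm{dist}}/D$. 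This is the first claim, and it holds for every $\tau>0$.

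For the specialization, note that $\tau=\bar\tau D^a\ge\bar\tau$ when $D\ge1$ and $a\ge0$. So $e^{2B/\tau}\le e^{2B/\bar\tau}$, a $D$-independent constant, with no condition on $\bar\tau$.

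The only delicate point is the lower bound on the normalizer, which requires bounded support of $\mu$ rather than just moment control. \Cref{ass:bounded_norm} supplies this directly, so I expect no real obstacle. Measurability and the use of Tonelli for the iterated expectation are routine because the integrand is nonnegative.
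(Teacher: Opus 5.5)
Your proof is correct and follows essentially the same route as the paper: bound the numerator using $k_\tau\le 1$, lower-bound the normalizer using the a.s.\ norm bound, then integrate over $\rvx\sim\nu$ and apply \Cref{lem:mixed_dist_L2_radial_full_xy} using independence. The only difference is that the paper first applies Jensen, $\mathbb E[e^{-\|\rvy'-\rvx\|_2/\tau}\mid\rvx]\ge \exp(-\mathbb E[\|\rvy'-\rvx\|_2\mid\rvx]/\tau)$, and only then uses $\|\rvy'-\rvx\|_2\le 2B$, whereas you bound the kernel pointwise. Both give the same factor $e^{2B/\tau}$, but the Jensen form shows that your closing remark overstates things slightly: the normalizer lower bound needs only a uniform bound on the conditional mean distance, not bounded support of $\mu$.
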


\begin{proof}
Condition on $\rvx$. By definition of $\mu_\tau(\cdot| \rvx)$,
\[
\mathbb E[(r-\rho)^2| \rvx]
=
\frac{
\mathbb E_{\rvy'\sim\mu}\Big[(\|\rvy'-\rvx\|_2-\rho)^2 e^{-\|\rvy'-\rvx\|_2/\tau}\,\Big|\,\rvx\Big]
}{
\mathbb E_{\rvy'\sim\mu}\Big[e^{-\|\rvy'-\rvx\|_2/\tau}\,\Big|\,\rvx\Big]
}.
\]
Since $e^{-u/\tau}\le 1$, the numerator is at most
$\mathbb E[(\|\rvy'-\rvx\|_2-\rho)^2| \rvx]$.
For the denominator, Jensen gives
\[
\mathbb E\Big[e^{-\|\rvy'-\rvx\|_2/\tau}\,\Big|\,\rvx\Big]
\ge
\exp\!\Big(-\frac{1}{\tau}\mathbb E[\|\rvy'-\rvx\|_2| \rvx]\Big).
\]
Therefore
\[
\mathbb E[(r-\rho)^2| \rvx]
\le
\mathbb E\big[(\|\rvy'-\rvx\|_2-\rho)^2| \rvx\big]\,
\exp\!\Big(\frac{1}{\tau}\mathbb E[\|\rvy'-\rvx\|_2| \rvx]\Big).
\]
Under \Cref{ass:bounded_norm}, $\|\rvx\|_2\le B$ and $\|\rvy'\|_2\le B$ almost surely, so
$\|\rvy'-\rvx\|_2\le 2B$ almost surely and hence $\mathbb E[\|\rvy'-\rvx\|_2| \rvx]\le 2B$.
Thus
\[
\mathbb E[(r-\rho)^2| \rvx]
\le
\exp\!\Big(\frac{2B}{\tau}\Big)\,
\mathbb E\big[(\|\rvy'-\rvx\|_2-\rho)^2| \rvx\big].
\]
Taking expectation over $\rvx\sim\nu$ and using independence of $\rvx\sim\nu$ and $\rvy'\sim\mu$ yields
\[
\mathbb E(r-\rho)^2
\le
\exp\!\Big(\frac{2B}{\tau}\Big)\,
\mathbb E\big[(\|\rvy'-\rvx\|_2-\rho)^2\big].
\]
Finally, \Cref{lem:mixed_dist_L2_radial_full_xy} bounds the right-hand side by
$\exp(2B/\tau)\cdot C_{\mathrm{dist}}/D$.
\end{proof}

\paragraph{Step 3: Field Alignment.}

For Laplace, $\nabla_{\rvx}\log k_\tau(\rvx,\rvy)=\frac{1}{\tau}\frac{\rvy-\rvx}{\|\rvy-\rvx\|_2}$ (with the diagonal convention),
so the kernel-score identity gives
\[
\rvs_{\uppi,\tau}(\rvx)
=\frac{1}{\tau}\,
\mathbb E_{\rvy\sim \uppi_\tau(\cdot| \rvx)}
\Big[\frac{\rvy-\rvx}{\|\rvy-\rvx\|_2}\Big]
\qquad
\Longrightarrow\qquad
\|\rvs_{\uppi,\tau}(\rvx)\|_2\le \frac{1}{\tau}.
\]

\begin{lemma}[Residual Bound for $\boldsymbol\delta_{\uppi}$ of Laplace Kernel]
\label{lem:delta_bound_radial_full_xy}
Assume the second moment of $r=\|\rvy-\rvx\|_2$ under $\uppi_\tau(\cdot| \rvx)$ is finite. Then
\[
\|\boldsymbol\delta_{\uppi}(\rvx)\|_2 \le 2\sqrt{\mathrm{Var}(r| \rvx)}.
\]
\end{lemma}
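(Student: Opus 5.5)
For the Laplace kernel the residual has a very simple form, and I would work directly from the covariance representation in \Cref{prop:radial_precond_decomp}. Since $b_\tau(r)=\tau/r$, the two factors are $A=b_\tau^{-1}(r)=r/\tau$ and $\mathbf Z=b_\tau(r)(\rvy-\rvx)=\tau\,\rvu$, where $\rvu=(\rvy-\rvx)/r$ is a unit vector. The factors of $\tau$ cancel, so
\[
\boldsymbol\delta_{\uppi}(\rvx)=\operatorname{Cov}(r/\tau,\tau\rvu)=\operatorname{Cov}(r,\rvu)=\mathbb E\big[(r-\mathbb E r)(\rvu-\mathbb E\rvu)\big],
\]
with all expectations taken under $\rvy\sim\uppi_\tau(\cdot|\rvx)$. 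This holds with the same diagonal convention as before, since $r=0$ contributes nothing.

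Next I would bound the vector covariance by pairing it with an arbitrary unit vector $\mathbf e$:
\[
\langle \mathbf e,\boldsymbol\delta_{\uppi}(\rvx)\rangle=\mathbb E\big[(r-\mathbb E r)\,\langle \mathbf e,\rvu-\mathbb E\rvu\rangle\big].
\]
Cauchy--Schwarz gives $|\langle \mathbf e,\boldsymbol\delta_{\uppi}\rangle|\le \sqrt{\mathrm{Var}(r|\rvx)}\,\big(\mathbb E\langle \mathbf e,\rvu-\mathbb E\rvu\rangle^2\big)^{1/2}$. Because $\|\rvu\|_2=1$ and $\|\mathbb E\rvu\|_2\le 1$, we have $|\langle\mathbf e,\rvu-\mathbb E\rvu\rangle|\le 2$. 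Taking the supremum over $\mathbf e$ then yields $\|\boldsymbol\delta_{\uppi}(\rvx)\|_2\le 2\sqrt{\mathrm{Var}(r|\rvx)}$.

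A sharper constant is also available from $\mathbb E\|\rvu-\mathbb E\rvu\|^2=1-\|\mathbb E\rvu\|^2\le1$, which gives factor $1$. The crude bound $2$ already suffices for the stated claim.

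The finite second moment of $r$ is exactly what makes $\mathrm{Var}(r|\rvx)$ finite and justifies the covariance identity. No step here is a real obstacle. The only points needing care are checking that the $\tau$ factors cancel and noting that the unit-norm property of $\rvu$ supplies the boundedness needed for Cauchy--Schwarz.
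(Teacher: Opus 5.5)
Your proof is correct and follows essentially the same route as the paper: write $\boldsymbol\delta_{\uppi}(\rvx)=\operatorname{Cov}(r/\tau,\tau\rvu)$, apply the scalar--vector Cauchy--Schwarz inequality (which you derive by pairing with a unit vector $\mathbf e$), and use $\|\rvu-\mathbb E\rvu\|_2\le 2$. Your extra remark that $\mathbb E\|\rvu-\mathbb E\rvu\|_2^2\le 1$ improves the constant from $2$ to $1$; the paper does not note this, and it is also valid.
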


\begin{proof}
For Laplace, $b_\tau(r)=\tau/r$, so $b_\tau(r)^{-1}=r/\tau$ and
\[
b_\tau(r)(\rvy-\rvx)=\tau\cdot u,
\qquad
u:=\frac{\rvy-\rvx}{\|\rvy-\rvx\|_2},
\qquad \|u\|_2\le 1.
\]
By definition,
\[
\boldsymbol\delta_{\uppi}(\rvx)
=\operatorname{Cov}_{\rvy\sim \uppi_\tau(\cdot| \rvx)}\Big(r/\tau,~\tau u\Big).
\]
Cauchy--Schwarz for scalar--vector covariance yields
\[
\|\boldsymbol\delta_{\uppi}(\rvx)\|_2
\le
\sqrt{\mathrm{Var}(r/\tau| \rvx)}\cdot
\sqrt{\mathbb E\|\tau u-\mathbb E[\tau u]\|_2^2}.
\]
Since $\|\tau u-\mathbb E[\tau u]\|_2\le 2\tau$, the second factor is at most $2\tau$, hence
\[
\|\boldsymbol\delta_{\uppi}(\rvx)\|_2 \le 2\tau\sqrt{\mathrm{Var}(r/\tau| \rvx)}=2\sqrt{\mathrm{Var}(r| \rvx)}.
\]
\end{proof}

Now we prove the objective-alignment result in \Cref{thm:field_alignment_radial_full_xy_prime}. A rigorous statement is as follows:
\begingroup
\let\thetheoremorig\thetheorem
\addtocounter{theorem}{-3} 
\renewcommand{\thetheorem}{\thetheoremorig\ensuremath{'}}
\begin{theorem}[Large-$D$ Field Alignment at $1/D$ Rate]
\label{thm:field_alignment_radial_full_xy_prime}
Assume \Cref{ass:shell_R0,ass:ip_L2,ass:bounded_norm} and let $\rho=\sqrt2R_0$ and $C:=\tau\rho=\bar\tau\rho D^a$.
Then there exists a constant $K>0$ independent of $D$ and independent of learnable parameters such that
for all sufficiently large $D$ and all $\rvf\in\mathcal F$,
\[
\mathbb E_{\rvx\sim q_{\rvf}}
\Big\|\Delta_{p,q_\rvf}(\rvx)-C\,\Delta\rvs_{p, q_\rvf}(\rvx)\Big\|_2^2
\le \frac{K}{D}.
\]
\end{theorem}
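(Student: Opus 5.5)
We will prove \Cref{thm:field_alignment_radial_full_xy_prime} by applying the Laplace case of \Cref{prop:radial_precond_decomp} to each of $\uppi\in\{p,q_\rvf\}$. With $b_\tau(r)=\tau/r$, the decomposition reads
\[
\mathbf V_{\uppi,k_\tau}(\rvx)=\tau\,\mathbb E_{\uppi_\tau(\cdot|\rvx)}[r]\,\rvs_{\uppi,\tau}(\rvx)+\boldsymbol\delta_\uppi(\rvx),
\]
because $\tau^2\alpha_\uppi(\rvx)=\tau\,\mathbb E[r|\rvx]$. Writing $\bar r_\uppi(\rvx):=\mathbb E_{\uppi_\tau(\cdot|\rvx)}[r]$ and $C=\tau\rho$, we get the exact identity
\[
\Delta_{p,q_\rvf}(\rvx)-C\,\Delta\rvs_{p,q_\rvf}(\rvx)
=\tau(\bar r_p-\rho)\rvs_{p,\tau}-\tau(\bar r_{q_\rvf}-\rho)\rvs_{q_\rvf,\tau}+\boldsymbol\delta_p-\boldsymbol\delta_{q_\rvf}.
\]
(The step size $\eta$ is absent, matching the convention of \Cref{sec:laplace}.) By $(a+b+c+d)^2\le 4(a^2+b^2+c^2+d^2)$, it then suffices to bound four terms in $L^2(q_\rvf)$, each by $\mathcal O(1/D)$.

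\textbf{Pre-conditioner terms.} Since $\|\rvs_{\uppi,\tau}\|_2\le 1/\tau$, the factor $\tau$ cancels:
\[
\|\tau(\bar r_\uppi-\rho)\rvs_{\uppi,\tau}\|_2^2\le(\bar r_\uppi(\rvx)-\rho)^2\le \mathbb E[(r-\rho)^2\mid\rvx],
\]
where the last step is Jensen. Taking $\mathbb E_{\rvx\sim q_\rvf}$ and applying \Cref{lem:reweight_L2_radial_full_xy} with $\nu=q_\rvf$ and $\mu=\uppi$ gives the bound $e^{2B/\bar\tau}C_{\mathrm{dist}}/D$. This uses $\tau\ge\bar\tau$ and the fact that $p$ and $q_\rvf$ belong to the assumed family. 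The constant is uniform in $\rvf$ because the assumptions are uniform over the family.

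\textbf{Residual terms.} By \Cref{lem:delta_bound_radial_full_xy},
\[
\|\boldsymbol\delta_\uppi(\rvx)\|_2^2\le 4\,\mathrm{Var}(r\mid\rvx)\le 4\,\mathbb E[(r-\rho)^2\mid\rvx].
\]
Averaging over $q_\rvf$ and applying \Cref{lem:reweight_L2_radial_full_xy} again gives $\mathcal O(1/D)$. Summing the four terms yields
\[
K=4\cdot 5\,e^{2B/\bar\tau}C_{\mathrm{dist}}.
\]

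\textbf{Main obstacle.} The delicate step is the uniformity and well-definedness of the reweighting bound. \Cref{lem:reweight_L2_radial_full_xy} needs the query $\rvx$ and the reference $\rvy'$ to be independent, with both laws in the family; here $\rvx\sim q_\rvf$ and $\rvy'\sim p$ or $q_\rvf$ are independent draws, so this holds. We must also check that the Laplace score is well defined at $\rvy=\rvx$ (the diagonal convention, a null set when $\uppi$ has a density), and that the constant does not depend on $\rvf$. The latter follows since all constants come only from $\sigma,\kappa,R_0,B,\bar\tau$. The qualifier ``sufficiently large $D$'' is then not even needed, though stating it is harmless.
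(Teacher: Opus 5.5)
Your proposal is correct and follows the paper's proof essentially step by step: it uses the same decomposition with $\tau^2\alpha_\uppi(\rvx)=\tau\,\mathbb E[r\mid\rvx]$, adds and subtracts $\rho$, and bounds the pre-conditioner terms via $\|\rvs_{\uppi,\tau}\|_2\le 1/\tau$ plus Jensen and the residual terms via \Cref{lem:delta_bound_radial_full_xy}, closing both with \Cref{lem:reweight_L2_radial_full_xy}. One small nit in your explicit constant: there are two pre-conditioner terms (each at most $e^{2B/\bar\tau}C_{\mathrm{dist}}/D$) and two residual terms (each at most $4e^{2B/\bar\tau}C_{\mathrm{dist}}/D$), so $K=4(1+1+4+4)\,e^{2B/\bar\tau}C_{\mathrm{dist}}=40\,e^{2B/\bar\tau}C_{\mathrm{dist}}$ rather than $4\cdot 5$, which does not affect the claim.
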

\endgroup

\begin{proof}
Apply \Cref{prop:radial_precond_decomp} to $\uppi=p$ and $\uppi=q_{\rvf}$ and subtract:
\[
\Delta_{p,q_\rvf}
=
\tau^2\big(\alpha_{p}\rvs_{p,\tau}-\alpha_{q_{\rvf}}\rvs_{q_{\rvf},\tau}\big)
+\big(\boldsymbol\delta_{p}-\boldsymbol\delta_{q_{\rvf}}\big).
\]
Add and subtract $(\rho/\tau)\rvs_{p,\tau}$ and $(\rho/\tau)\rvs_{q_{\rvf},\tau}$:
\[
\Delta_{p,q_\rvf}
=
C\,\Delta\rvs_{p, q_\rvf}
+\tau^2\Big((\alpha_{p}-\rho/\tau)\rvs_{p,\tau}-(\alpha_{q_{\rvf}}-\rho/\tau)\rvs_{q_{\rvf},\tau}\Big)
+\big(\boldsymbol\delta_{p}-\boldsymbol\delta_{q_{\rvf}}\big).
\]
Using $(a+b)^2\le 2a^2+2b^2$ repeatedly, it suffices to control for $\uppi\in\{p,q_{\rvf}\}$:
\[
\mathbb E_{\rvx\sim q_{\rvf}}\Big\|\tau^2(\alpha_{\uppi}(\rvx)-\rho/\tau)\rvs_{\uppi,\tau}(\rvx)\Big\|_2^2
\quad\text{and}\quad
\mathbb E_{\rvx\sim q_{\rvf}}\|\boldsymbol\delta_{\uppi}(\rvx)\|_2^2,
\]
by $\mathcal O(1/D)$ with constants independent of $\rvf$.

\emph{(i) pre-conditioner term.}
For Laplace, $b_\tau(r)=\tau/r$, so $\alpha_{\uppi}(\rvx)=\mathbb E[r| \rvx]/\tau$, hence $\tau\alpha_{\uppi}(\rvx)=\mathbb E[r| \rvx]$.
Using $\|\rvs_{\uppi,\tau}(\rvx)\|_2\le 1/\tau$,
\[
\Big\|\tau^2(\alpha_{\uppi}-\rho/\tau)\rvs_{\uppi,\tau}\Big\|_2
\le
\tau^2\Big|\alpha_{\uppi}-\frac{\rho}{\tau}\Big|\cdot \frac{1}{\tau}
=
\Big|\tau\alpha_{\uppi}-\rho\Big|.
\]
By Jensen,
\[
\big(\tau\alpha_{\uppi}(\rvx)-\rho\big)^2
=
\big(\mathbb E[r| \rvx]-\rho\big)^2
\le
\mathbb E[(r-\rho)^2| \rvx].
\]
Averaging over $\rvx\sim q_{\rvf}$ and applying \Cref{lem:reweight_L2_radial_full_xy} yields
\[
\mathbb E_{\rvx\sim q_{\rvf}}\big(\tau\alpha_{\uppi}(\rvx)-\rho\big)^2
=
\mathcal O(1/D),
\qquad \uppi\in\{p,q_{\rvf}\}.
\]

\emph{(ii) Residual term.}
By \Cref{lem:delta_bound_radial_full_xy},
\[
\|\boldsymbol\delta_{\uppi}(\rvx)\|_2^2
\le 4\,\mathrm{Var}(r| \rvx)
\le 4\,\mathbb E[(r-\rho)^2| \rvx],
\]
since $\mathbb E[(r-\rho)^2| \rvx]=\mathrm{Var}(r| \rvx)+(\mathbb E[r| \rvx]-\rho)^2\ge \mathrm{Var}(r| \rvx)$.
Averaging over $\rvx\sim q_{\rvf}$ and applying \Cref{lem:reweight_L2_radial_full_xy} again gives $\mathcal O(1/D)$.

Combining the bounds yields $\mathbb E\|\Delta_{p,q_\rvf}-C\Delta\rvs_{p, q_\rvf}\|_2^2\le K/D$ for some $K$
depending only on $(\sigma,\kappa,R_0,B,\bar\tau,a)$ (and not on learnable parameters).
\end{proof}

\begin{corollary}[Drift Controls Score-Mismatch]
\label{cor:drift_controls_score_radial_full_xy}
Assume \Cref{thm:field_alignment_radial_full_xy_prime}. Then for every $\rvf\in\mathcal F$,
\[
\mathbb E_{\rvx\sim q_{\rvf}}\|\Delta\rvs_{p, q_\rvf}(\rvx)\|_2^2
\le
\frac{2}{C^2}\,\mathbb E_{\rvx\sim q_{\rvf}}\|\Delta_{p,q_\rvf}(\rvx)\|_2^2
+\frac{2K}{C^2D},
\qquad C=\tau\rho.
\]
\end{corollary}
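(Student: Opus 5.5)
The plan is a one-line triangle inequality in $L^2(q_\rvf)$, so the work is mostly bookkeeping of constants. Fix $\rvf\in\mathcal F$ and write pointwise
\[
C\,\Delta\rvs_{p,q_\rvf}(\rvx)
=
\Delta_{p,q_\rvf}(\rvx)
-\big(\Delta_{p,q_\rvf}(\rvx)-C\,\Delta\rvs_{p,q_\rvf}(\rvx)\big).
\]
Applying the elementary bound $\|\mathbf a-\mathbf b\|_2^2\le 2\|\mathbf a\|_2^2+2\|\mathbf b\|_2^2$ gives, for every $\rvx$,
\[
C^2\|\Delta\rvs_{p,q_\rvf}(\rvx)\|_2^2\le 2\|\Delta_{p,q_\rvf}(\rvx)\|_2^2+2\|\Delta_{p,q_\rvf}(\rvx)-C\,\Delta\rvs_{p,q_\rvf}(\rvx)\|_2^2 .
\]

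Next I take expectations under $\rvx\sim q_\rvf$. The second term is bounded by $2K/D$ by \Cref{thm:field_alignment_radial_full_xy_prime}, with the same $K$ that is independent of $D$ and of $\rvf$. Finally I divide by $C^2=\tau^2\rho^2>0$, which is strictly positive since $\tau=\bar\tau D^a>0$ and $\rho=\sqrt2R_0>0$.

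The only points needing care are the integrability and scope issues.

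First, all expectations must be finite. This follows because $\|\rvs_{\uppi,\tau}\|_2\le 1/\tau$ (shown just before \Cref{lem:delta_bound_radial_full_xy}). It also uses $\|\mathbf V_{\uppi,k_\tau}(\rvx)\|_2\le 2B$ under \Cref{ass:bounded_norm}, since the barycenter lies in the ball of radius $B$.

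Second, the corollary inherits the ``sufficiently large $D$'' qualifier of the theorem. I will state it for the same range of $D$, or else enlarge $K$ to cover small $D$ using these a priori bounds.

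None of these steps is a genuine obstacle. The main thing to check is that the constant is uniform over $\rvf$, which it is because $K$ is.
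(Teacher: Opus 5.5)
Your proof is correct and is essentially the paper's argument: the paper writes $\Delta_{p,q_\rvf}=C\,\Delta\rvs_{p,q_\rvf}+\boldsymbol\varepsilon_\rvf$ with $\mathbb E\|\boldsymbol\varepsilon_\rvf\|_2^2\le K/D$, applies $\|\mathbf a-\mathbf b\|_2^2\le 2\|\mathbf a\|_2^2+2\|\mathbf b\|_2^2$ pointwise, takes expectations, and divides by $C^2$. Your integrability remarks are harmless but not needed, since the inequality holds pointwise between nonnegative quantities and is trivially true in expectation if the right-hand side is infinite.
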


\begin{proof}
From \Cref{thm:field_alignment_radial_full_xy_prime}, write
$\Delta_{p,q_\rvf}=C\Delta\rvs_{p, q_\rvf}+\boldsymbol\varepsilon_{\rvf}$ with
$\mathbb E\|\boldsymbol\varepsilon_{\rvf}\|_2^2\le K/D$.
Then
\[
C^2\|\Delta\rvs_{p, q_\rvf}\|_2^2=\|\Delta_{p,q_\rvf}-\boldsymbol\varepsilon_{\rvf}\|_2^2
\le 2\|\Delta_{p,q_\rvf}\|_2^2+2\|\boldsymbol\varepsilon_{\rvf}\|_2^2.
\]
Take expectation over $\rvx\sim q_{\rvf}$.
\end{proof}

\paragraph{Step 4: Score-Matching Minimizers Imply  $q_{\rvg^\star}=p$.}

\begin{lemma}[Kernel Score Identity for Laplace Smoothing]
\label{lem:kernel_score_identity_laplace_full_xy}
For any probability measure $\mu$ on $\mathbb R^D$, the function $\mu_{k_\tau}$ is Lipschitz and hence differentiable Lebesgue-a.e.
Moreover, wherever $\nabla \mu_{k_\tau}(\rvx)$ exists,
\[
\nabla \mu_{k_\tau}(\rvx)
=
\mathbb E_{\rvy\sim \mu}\big[\nabla_{\rvx}k_\tau(\rvx,\rvy)\big],
\qquad
\rvs_{\mu,\tau}(\rvx)=\nabla\log \mu_{k_\tau}(\rvx).
\]
\end{lemma}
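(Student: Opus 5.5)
The plan is to first show that $\mu_{k_\tau}$ is Lipschitz, then apply Rademacher's theorem, and finally justify differentiating under the integral sign through dominated convergence on difference quotients.

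\emph{Lipschitz bound.} For fixed $\rvy$, the map $\rvx\mapsto k_\tau(\rvx,\rvy)=\exp(-\|\rvx-\rvy\|_2/\tau)$ is a composition of the $1$-Lipschitz map $\rvx\mapsto\|\rvx-\rvy\|_2$ with $t\mapsto e^{-t/\tau}$ on $[0,\infty)$. The latter is $1/\tau$-Lipschitz. Hence
\[
|k_\tau(\rvx,\rvy)-k_\tau(\rvx',\rvy)|\le \tfrac1\tau\|\rvx-\rvx'\|_2
\]
uniformly in $\rvy$. Integrating against the probability measure $\mu$ gives $|\mu_{k_\tau}(\rvx)-\mu_{k_\tau}(\rvx')|\le \tau^{-1}\|\rvx-\rvx'\|_2$. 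Rademacher's theorem then yields differentiability Lebesgue-a.e. We also record that $\mu_{k_\tau}(\rvx)>0$ everywhere, since the integrand is strictly positive. Therefore $\log\mu_{k_\tau}$ is differentiable wherever $\mu_{k_\tau}$ is, and $\rvs_{\mu,\tau}=\nabla\mu_{k_\tau}/\mu_{k_\tau}$ there.

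\emph{Interchange of gradient and expectation.} Fix a point $\rvx$ where $\nabla\mu_{k_\tau}(\rvx)$ exists, and a direction $\mathbf e$. The directional derivative is the limit of
\[
\mathbb E_{\rvy\sim\mu}\big[(k_\tau(\rvx+h\mathbf e,\rvy)-k_\tau(\rvx,\rvy))/h\big].
\]
The integrands are bounded by $1/\tau$ uniformly in $h$ and $\rvy$, by the Lipschitz bound.

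For every $\rvy\neq\rvx$, the kernel is smooth in $\rvx$ near that point. So the difference quotient converges pointwise to
\[
\langle\nabla_\rvx k_\tau(\rvx,\rvy),\mathbf e\rangle=\tfrac1\tau k_\tau(\rvx,\rvy)\,\Big\langle\tfrac{\rvy-\rvx}{\|\rvy-\rvx\|_2},\mathbf e\Big\rangle.
\]
Dominated convergence then gives the identity, provided the exceptional set $\{\rvy=\rvx\}$ is harmless.

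\emph{Main obstacle: the diagonal.} If $\mu(\{\rvx\})=0$, the exceptional set is $\mu$-null and we are done. If $\mu$ has an atom at $\rvx$, the atom contributes the term $\mu(\{\rvx\})e^{-|h|\|\mathbf e\|/\tau}$, which has a kink at $h=0$: its one-sided derivatives are $\mp\mu(\{\rvx\})\|\mathbf e\|/\tau$.

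The remaining part of $\mu$ is differentiable in $h$ by the argument above. So $\mu_{k_\tau}$ would fail to be differentiable at $\rvx$, contradicting the hypothesis. Hence at every point of differentiability $\mu$ has no atom at $\rvx$, and the formula holds. For definiteness one can adopt the diagonal convention $\nabla_\rvx k_\tau(\rvx,\rvx):=\mathbf 0$, as in the paper's earlier formula for the Laplace score.

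Since $\mu$ has only countably many atoms, this is also consistent with the a.e. statement. The last identity $\rvs_{\mu,\tau}=\nabla\log\mu_{k_\tau}$ is then the definition, combined with the chain rule and positivity of $\mu_{k_\tau}$.
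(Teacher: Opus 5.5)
Your proposal is correct and follows the same route as the paper. Both arguments use the uniform $1/\tau$-Lipschitz bound on $\rvx\mapsto k_\tau(\rvx,\rvy)$, Rademacher's theorem, dominated convergence on difference quotients at differentiability points, and then division by $\mu_{k_\tau}>0$. Your explicit treatment of the diagonal fills in a detail the paper's one-line dominated-convergence step leaves implicit: you show that $\mu$ cannot have an atom at a differentiability point, so $\{\rvy=\rvx\}$ is $\mu$-null.
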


\begin{proof}
For fixed $\rvy$, define $\nabla_{\rvx}k_\tau(\rvx,\rvy):=\frac{1}{\tau}k_\tau(\rvx,\rvy)\,\rvu(\rvx,\rvy)$
for $\rvy\neq \rvx$ and $0$ on the diagonal, where $\|\rvu(\rvx,\rvy)\|_2\le 1$.
Then $\|\nabla_{\rvx}k_\tau(\rvx,\rvy)\|_2\le 1/\tau$, so $\rvx\mapsto k_\tau(\rvx,\rvy)$ is $1/\tau$-Lipschitz,
and therefore $\mu_{k_\tau}$ is $1/\tau$-Lipschitz as an expectation.
By Rademacher, $\mu_{k_\tau}$ is differentiable Lebesgue-a.e.
At differentiability points, dominated convergence justifies differentiating under the expectation to obtain
$\nabla\mu_{k_\tau}(\rvx)=\mathbb E_{\rvy\sim\mu}[\nabla_{\rvx}k_\tau(\rvx,\rvy)]$.
Dividing by $\mu_{k_\tau}(\rvx)>0$ yields $\nabla\log \mu_{k_\tau}=\rvs_{\mu,\tau}$ at those points.
\end{proof}

\begin{lemma}[Injectivity of Laplace Smoothing]
\label{lem:laplace_injective_full_xy}
Let $k_\tau(\rvz)=\exp(-\|\rvz\|_2/\tau)$ and let $\mu,\nu$ be finite Borel measures on $\mathbb R^D$.
If $\mu*k_\tau=\nu*k_\tau$ as functions, then $\mu=\nu$ as measures.
\end{lemma}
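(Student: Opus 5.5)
We will prove the lemma with the Fourier transform. Set $\lambda:=\mu-\nu$, a finite signed Borel measure. The hypothesis says $\lambda*k_\tau\equiv 0$ as a function. The Laplace kernel $k_\tau(\rvz)=e^{-\|\rvz\|_2/\tau}$ is in $L^1(\mathbb R^D)$, since it decays exponentially. So $\lambda*k_\tau$ is an $L^1$ function, with $\|\lambda*k_\tau\|_{L^1}\le|\lambda|(\mathbb R^D)\,\|k_\tau\|_{L^1}$ by Fubini. Its Fourier transform is therefore well defined and factorizes:
\[
\widehat{\lambda*k_\tau}(\boldsymbol\xi)=\hat\lambda(\boldsymbol\xi)\,\hat k_\tau(\boldsymbol\xi),
\qquad
\hat\lambda(\boldsymbol\xi):=\int e^{-i\langle\boldsymbol\xi,\rvy\rangle}\,\diff\lambda(\rvy).
\]
Here $\hat\lambda$ is the Fourier--Stieltjes transform, which is a bounded continuous function. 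Since $\lambda*k_\tau=0$, we get $\hat\lambda\,\hat k_\tau\equiv 0$ on $\mathbb R^D$.

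The key step is to show that $\hat k_\tau$ never vanishes. We will use the classical closed form of the Fourier transform of $e^{-\|\rvz\|}$ in $\mathbb R^D$: it is the multivariate Cauchy/Poisson-type kernel
\[
\hat k_\tau(\boldsymbol\xi)=c_D\,\tau^D\bigl(1+\tau^2\|\boldsymbol\xi\|_2^2\bigr)^{-(D+1)/2},
\qquad c_D>0 .
\]
This is strictly positive everywhere.

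If we prefer not to cite the formula, we can derive positivity directly through subordination. Write $e^{-s}=\int_0^\infty \frac{1}{\sqrt{\pi t}}e^{-t}e^{-s^2/(4t)}\,\diff t$ for $s\ge0$. With $s=\|\rvz\|/\tau$, this expresses $k_\tau$ as a positive mixture of Gaussians. Each Gaussian has a strictly positive Fourier transform, and Tonelli lets us exchange the integrals. Hence $\hat k_\tau(\boldsymbol\xi)>0$ for all $\boldsymbol\xi$.

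From $\hat k_\tau>0$ and $\hat\lambda\hat k_\tau\equiv0$ we get $\hat\lambda\equiv 0$. By the uniqueness theorem for Fourier--Stieltjes transforms of finite signed measures, $\lambda=0$, that is, $\mu=\nu$. The only point needing a little care is the meaning of ``as functions''. If equality is only assumed Lebesgue-a.e., the $L^1$ argument above is unchanged, because Fourier transforms ignore null sets. In fact $\mu*k_\tau$ is continuous, being Lipschitz by \Cref{lem:kernel_score_identity_laplace_full_xy}, so a.e. equality and pointwise equality coincide anyway.

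The main obstacle is the non-vanishing of $\hat k_\tau$. We plan to settle it with the Gaussian-mixture (subordination) representation, since that route is self-contained and also makes the positivity transparent.
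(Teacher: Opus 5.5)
Your proof is correct and follows the same route as the paper: take Fourier transforms, use that $\widehat{k_\tau}$ is strictly positive (the paper simply cites the closed form $c_{D,\tau}(1+\tau^2\|\boldsymbol\omega\|_2^2)^{-(D+1)/2}$), and conclude from uniqueness of Fourier--Stieltjes transforms that $\mu=\nu$. Your Gaussian-subordination proof of positivity and your justification of the factorization $\widehat{\lambda*k_\tau}=\hat\lambda\,\hat k_\tau$ only spell out details that the paper leaves implicit.
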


\begin{proof}
Take Fourier transforms:
\[
\widehat\mu(\boldsymbol\omega)\,\widehat{k_\tau}(\boldsymbol\omega)
=
\widehat\nu(\boldsymbol\omega)\,\widehat{k_\tau}(\boldsymbol\omega).
\]
For the radial Laplace kernel, $\widehat{k_\tau}(\boldsymbol\omega)>0$ for all $\boldsymbol\omega$
(e.g.\ $\widehat{k_\tau}(\boldsymbol\omega)=c_{D,\tau}(1+\tau^2\|\boldsymbol\omega\|_2^2)^{-(D+1)/2}$ with $c_{D,\tau}>0$),
so $\widehat\mu=\widehat\nu$ and hence $\mu=\nu$.
\end{proof}

\begin{proposition}[SM Identification: $\mathcal L_{\mathrm{SM}}(\rvg)=0 \Rightarrow q_{\rvg}=p$]
\label{prop:SM_minimizer_is_p_radial_full_xy}
Assume \Cref{ass:ac}. If $\mathcal L_{\mathrm{SM}}(\rvg)=0$, i.e.
\[
\mathbb E_{\rvx\sim q_{\rvg}}\|\rvs_{p,\tau}(\rvx)-\rvs_{q_{\rvg},\tau}(\rvx)\|_2^2=0,
\]
then $q_{\rvg}=p$ as distributions on $\mathbb R^D$.
\end{proposition}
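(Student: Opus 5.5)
The plan is to go from vanishing reverse Fisher divergence to equality of the smoothed profiles $p_{k_\tau}$ and $(q_{\rvg})_{k_\tau}$, and then invoke \Cref{lem:laplace_injective_full_xy}. First, $\mathcal L_{\mathrm{SM}}(\rvg)=0$ gives $\rvs_{p,\tau}=\rvs_{q_{\rvg},\tau}$ for $q_{\rvg}$-a.e.\ $\rvx$. The difficulty is that this equality holds only on the support of $q_{\rvg}$, while we need it on all of $\mathbb R^D$. This is the role I expect \Cref{ass:ac} to play: I read it as saying that $q_{\rvg}$ is absolutely continuous with a density that is strictly positive Lebesgue-a.e. (or at least has full support on a connected open set carrying both smoothed profiles). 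Granting that, the score equality upgrades to Lebesgue-a.e.\ equality of $\nabla\log p_{k_\tau}$ and $\nabla\log (q_{\rvg})_{k_\tau}$. Here \Cref{lem:kernel_score_identity_laplace_full_xy} identifies $\rvs_{\mu,\tau}$ with $\nabla\log\mu_{k_\tau}$ at differentiability points, which form a full-measure set.

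Next, set $h:=\log p_{k_\tau}-\log (q_{\rvg})_{k_\tau}$. Both profiles are $1/\tau$-Lipschitz by \Cref{lem:kernel_score_identity_laplace_full_xy}. They are also strictly positive and locally bounded away from zero: under \Cref{ass:bounded_norm}, for instance, $\mu_{k_\tau}(\rvx)\ge e^{-(\|\rvx\|_2+B)/\tau}$. Hence $h$ is locally Lipschitz on $\mathbb R^D$, and its a.e.\ gradient vanishes. A locally Lipschitz function with a.e.-zero gradient on a connected domain is constant. One way to see this is to integrate along segments, using Fubini to choose segments that are a.e.\ good, and then pass to all segments by continuity. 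So $p_{k_\tau}=c\,(q_{\rvg})_{k_\tau}$ for some $c>0$.

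To fix $c$, integrate over $\mathbb R^D$ with Fubini: $\int \mu_{k_\tau}\,\diff\rvx=\mu(\mathbb R^D)\int k_\tau=\|k_\tau\|_{L^1}$ for any probability measure $\mu$. The two integrals are therefore equal and finite, which forces $c=1$. Hence $p*k_\tau=q_{\rvg}*k_\tau$, and \Cref{lem:laplace_injective_full_xy} yields $q_{\rvg}=p$.

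The main obstacle is the first step: extending the score identity from the support of $q_{\rvg}$ to all of space. If \Cref{ass:ac} only gives absolute continuity without full support, I would fall back on analyticity. One would like to argue that the Laplace-smoothed profiles are real-analytic off the supports of $p$ and $q_{\rvg}$, so that agreement on an open set propagates. However, Laplace smoothing is not analytic across the supports because of the kink of $\|\cdot\|_2$ at the origin. So I would rely on the positivity or connectedness content of \Cref{ass:ac} and state it explicitly as the needed hypothesis.
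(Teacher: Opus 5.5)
Your proposal is correct and follows the paper's proof step for step. \Cref{ass:ac} (which does give everywhere-positive $C^\infty$ densities, so your fallback is not needed) upgrades the $q_{\rvg}$-a.e.\ score equality to Lebesgue-a.e.; the locally Lipschitz $h$ with a.e.-vanishing gradient is then constant; integrating the smoothed profiles forces $c=1$; and \Cref{lem:laplace_injective_full_xy} concludes. One small remark: you do not need \Cref{ass:bounded_norm} for the local lower bound, and it sits awkwardly with instance noise, which gives unbounded support. For any probability measure $\mu$ one has $\mu_{k_\tau}(\rvx)\ge e^{-\|\rvx\|_2/\tau}\,\mathbb E_{\rvy\sim\mu}\big[e^{-\|\rvy\|_2/\tau}\big]>0$, which suffices.
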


\begin{proof}
$\mathcal L_{\mathrm{SM}}(\rvg)=0$ implies $\rvs_{p,\tau}(\rvx)=\rvs_{q_{\rvg},\tau}(\rvx)$ for $q_{\rvg}$-a.e.\ $\rvx$.
By \Cref{ass:ac}, $q_{\rvg}$ has a density strictly positive Lebesgue-a.e., hence the equality holds Lebesgue-a.e.

By \Cref{lem:kernel_score_identity_laplace_full_xy}, wherever both gradients exist,
\[
\nabla\log p_{k_\tau}(\rvx)=\nabla\log (q_{\rvg})_{k_\tau}(\rvx)
\quad\text{for Lebesgue-a.e.\ }\rvx.
\]
Thus $\nabla h(\rvx)=0$ Lebesgue-a.e.\ for $h(\rvx):=\log p_{k_\tau}(\rvx)-\log (q_{\rvg})_{k_\tau}(\rvx)$.
Since both smoothed functions are positive and Lipschitz, $h$ is locally Lipschitz.
A locally Lipschitz function with zero gradient a.e.\ is constant, so $h(\rvx)\equiv \log c$ and
$p_{k_\tau}(\rvx)=c\,(q_{\rvg})_{k_\tau}(\rvx)$ for Lebesgue-a.e.\ $\rvx$.
By continuity, the equality holds for all $\rvx$.

Integrate both sides over $\mathbb R^D$ and use Fubini/translation invariance:
\[
\int_{\mathbb R^D}\mu_{k_\tau}(\rvx)\,\diff\rvx
=
\Big(\int_{\mathbb R^D}k_\tau(\rvz)\,\diff\rvz\Big)\int_{\mathbb R^D}\mu(\rvy)\,\diff\rvy,
\]
so $\int p_{k_\tau}=\int (q_{\rvg})_{k_\tau}$ and therefore $c=1$.
Hence $p_{k_\tau}=(q_{\rvg})_{k_\tau}$, and \Cref{lem:laplace_injective_full_xy} gives $p=q_{\rvg}$.
\end{proof}

\paragraph{Main Proof to \Cref{thm:D_consistency_prime}.}

Let
\[
\rvf^{\star}\in\arg\min_{\rvf\in\mathcal F} \mathcal{L}_{\mathrm{drift}}(\rvf),
\qquad
\rvg^{\star}\in\arg\min_{\rvg\in\mathcal G} \mathcal{L}_{\mathrm{SM}}(\rvg).
\]
Fix $a\ge 0$ and use the dimension-aware bandwidth $\tau=\bar\tau D^{a}$ with $\bar\tau>0$. By realizability, $\inf_{\rvg\in\mathcal G}\mathcal L_{\mathrm{SM}}(\rvg)=0$, hence $\mathcal L_{\mathrm{SM}}(\rvg^\star)=0$
and \Cref{prop:SM_minimizer_is_p_radial_full_xy} gives $q_{\rvg^\star}=p$.

By realizability, $\inf_{\rvf\in\mathcal F}\mathcal L_{\mathrm{drift}}(\rvf)=0$, hence any minimizer satisfies $\mathcal L_{\mathrm{drift}}(\rvf^\star)=0$, i.e.
\[
\mathbb E_{\rvx\sim q_{\rvf^\star}}\|\mathbf V_{\rvf^\star}(\rvx)\|_2^2=0.
\]
Apply \Cref{cor:drift_controls_score_radial_full_xy} to $\rvf^\star$:
\[
\mathbb E_{\rvx\sim q_{\rvf^\star}}\|\Delta\rvs_{\rvf^\star}(\rvx)\|_2^2
\le \frac{2K}{C^2D},
\qquad
C=\tau\rho=\bar\tau\rho D^a.
\]
Since $\Delta\rvs_{\rvf^\star}=\rvs_{p,\tau}-\rvs_{q_{\rvf^\star},\tau}$, by definition
\[
\mathcal D_{\mathrm{rF}}(p\|q_{\rvf^\star})
=
\mathbb E_{\rvx\sim q_{\rvf^\star}}\|\Delta\rvs_{\rvf^\star}(\rvx)\|_2^2
\le
\frac{2K}{(\rho\bar\tau)^2}\cdot \frac{1}{D^{1+2a}}.
\]
The equivalent statement with $q_{\rvg^\star}$ follows from $q_{\rvg^\star}=p$.

\subsection{Gradient-Level Alignment}

The stop-gradient solver optimizes the drifting loss by treating the transported target
$\mathrm{sg}(\rvx+\Delta_{p,q_\btheta}(\rvx))$ as constant for backpropagation.
Consequently, the update direction is the \emph{semi-gradient}
\[
\mathbf g_{\mathrm{drift}}(\btheta):= \nabla_\btheta \mathcal L_{\mathrm{drift}}(\btheta)
=
-2\,\mathbb E_{\beps}\Big[
\mathbf J_{\rvf_\btheta}(\beps)^\top\,\Delta_{p,q_\btheta}\big(\rvf_\btheta(\beps)\big)
\Big],
\qquad
q_\btheta := (\rvf_\btheta)_\# p_{\beps}.
\]
This differs from the full gradient of the population functional
$\btheta\mapsto \mathbb E_{\rvx\sim q_\btheta}\|\Delta_{p,q_\btheta}(\rvx)\|_2^2$,
which would additionally differentiate through the dependence of $\Delta_{p,q_\btheta}$ on $q_\btheta$.
Therefore, the appropriate notion of “gradient-level equivalence” for the implemented drifting algorithm is
\emph{alignment between implemented semi-gradient directions}.

In the Laplace-kernel setting, our field-alignment results imply that the drifting discrepancy field
\[
\Delta_{p,q_\btheta}(\rvx)
=
\eta\big(\mathbf V_{p,k_\tau}(\rvx)-\mathbf V_{q_\btheta,k_\tau}(\rvx)\big)
\]
is close to a scaled score-mismatch field. Recall the definition of score $\rvs_{\uppi,\tau}(\rvx):=\nabla_{\rvx}\log \uppi_{k_\tau}(\rvx)$, we define the scale-$\tau$ score-mismatch
\[
\Delta\rvs_{p, q_{\btheta}}(\rvx)
:=
\eta\,C\,\big(\rvs_{p,\tau}(\rvx)-\rvs_{q_\btheta,\tau}(\rvx)\big),
\qquad
C:=\tau\rho.
\]
To compare the implemented drifting update to an update driven by score-mismatch, we introduce the
\emph{score-transport fixed-point loss}
\begin{align}\label{eq:st-projf}
    \mathcal L_{\mathrm{fp\mbox{-}score}}(\btheta)
:=
\mathbb E_{\beps}\Big[
\big\|\rvf_\btheta(\beps)
-\mathrm{sg}\big(\rvf_\btheta(\beps)+\Delta\rvs_{p, q_{\btheta}}(\rvf_\btheta(\beps))\big)
\big\|_2^2\Big],
\end{align}
whose semi-gradient is
\[
\mathbf g_{\mathrm{ST}}(\btheta):=\nabla_\btheta \mathcal L_{\mathrm{fp\mbox{-}score}}(\btheta)
=
-2\,\mathbb E_{\beps}\Big[
\mathbf J_{\rvf_\btheta}(\beps)^\top\,\Delta\rvs_{p, q_{\btheta}}\big(\rvf_\btheta(\beps)\big)
\Big].
\]

We remark that the population score-matching objective is
\[
\mathcal{L}_{\mathrm{SM}}(\btheta)
:=
\mathbb E_{\rvx\sim q_{\rvg_\btheta}}
\big\|\rvs_{p,\tau}(\rvx)-\rvs_{q_{\rvg_\btheta},\tau}(\rvx)\big\|_2^2,
\]
and it involves no stop-gradient. The full gradient $\nabla_\btheta \mathcal{L}_{\mathrm{SM}}(\btheta)$
differentiates through the $\btheta$-dependence of the smoothed score $\rvs_{q_{\rvg_\btheta},\tau}$ and is
not, in general, equal to the semi-gradient of $\mathcal L_{\mathrm{fp\mbox{-}score}}$.
The role of $\mathcal L_{\mathrm{fp\mbox{-}score}}$ is purely to provide an algorithm-level comparator
that matches the transport--projection structure of the drifting stop-gradient update.

First, we state the Jacobian control needed to turn field alignment into gradient alignment:
\begin{assumption}[Uniform Jacobian Second Moment]
\label{ass:jacobian_L2}
There exists $J_2<\infty$ (uniform in $\btheta$ and in $D$ for the considered model class) such that
\[
\mathbb E_{\beps}\big[\|\mathbf J_{\rvf_\btheta}(\beps)\|_{\mathrm{op}}^2\big]\le J_2
\qquad\text{for all }\btheta.
\]
\end{assumption}

Next, we first state a rigorous version of \Cref{thm:grad_alignment_largeD} and then prove it.
\begingroup
\let\thetheoremorig\thetheorem
\addtocounter{theorem}{0} 
\renewcommand{\thetheorem}{\thetheoremorig\ensuremath{'}}
\begin{theorem}[Large-$D$ Alignment]
\label{thm:grad_alignment_largeD_prime}
Assume the hypotheses of \Cref{thm:field_alignment_radial_full_xy}, and additionally assume \Cref{ass:jacobian_L2}.
Let $\Delta_{p,q_\btheta}(\rvx)=\eta(\mathbf V_{p,k_\tau}(\rvx)-\mathbf V_{q_\btheta,k_\tau}(\rvx))$
and $\Delta\rvs_{p, q_{\btheta}}(\rvx)=\eta\,C\,\big(\rvs_{p,\tau}(\rvx)-\rvs_{q_\btheta,\tau}(\rvx)\big)$ with $C=\tau\rho$.
Then there exists $D_0\in\mathbb N$ such that for all $D\ge D_0$ and all $\btheta$,
\[
\big\|
\mathbf g_{\mathrm{drift}}(\btheta)
-
\mathbf g_{\mathrm{ST}}(\btheta)
\big\|_2
\;\le\;
2\,\eta\,\sqrt{J_2}\,\sqrt{\frac{K}{D}},
\]
where $K$ is the constant from \Cref{thm:field_alignment_radial_full_xy}.
\end{theorem}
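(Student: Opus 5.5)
The plan is to reduce gradient alignment directly to the field alignment of \Cref{thm:field_alignment_radial_full_xy_prime}, using only linearity of the semi-gradients in the transport field together with Cauchy--Schwarz. Both $\mathbf g_{\mathrm{drift}}(\btheta)$ and $\mathbf g_{\mathrm{ST}}(\btheta)$ have the same Jacobian-transpose form. Their difference is therefore
\[
\mathbf g_{\mathrm{drift}}(\btheta)-\mathbf g_{\mathrm{ST}}(\btheta)
=
-2\,\mathbb E_{\beps}\Big[\mathbf J_{\rvf_\btheta}(\beps)^\top\,\mathbf e_\btheta\big(\rvf_\btheta(\beps)\big)\Big],
\]
with $\mathbf e_\btheta(\rvx):=\eta\big(\mathbf V_{p,k_\tau}(\rvx)-\mathbf V_{q_\btheta,k_\tau}(\rvx)\big)-\eta C\big(\rvs_{p,\tau}(\rvx)-\rvs_{q_\btheta,\tau}(\rvx)\big)$. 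This $\mathbf e_\btheta$ is exactly $\eta$ times the residual field controlled in \Cref{thm:field_alignment_radial_full_xy_prime}, with the same scale $C=\tau\rho$.

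Next, I would bound the norm of an expectation by the expectation of the norm. Using $\|\mathbf J^\top\mathbf v\|_2\le\|\mathbf J\|_{\mathrm{op}}\|\mathbf v\|_2$ and then Cauchy--Schwarz in $\beps$ gives
\[
\|\mathbf g_{\mathrm{drift}}-\mathbf g_{\mathrm{ST}}\|_2
\le
2\Big(\mathbb E_{\beps}\|\mathbf J_{\rvf_\btheta}(\beps)\|_{\mathrm{op}}^2\Big)^{1/2}
\Big(\mathbb E_{\beps}\|\mathbf e_\btheta(\rvf_\btheta(\beps))\|_2^2\Big)^{1/2}.
\]
The first factor is at most $\sqrt{J_2}$ by \Cref{ass:jacobian_L2}. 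For the second factor, the pushforward identity $\rvx=\rvf_\btheta(\beps)\sim q_\btheta$ turns it into $\eta\big(\mathbb E_{\rvx\sim q_\btheta}\|\Delta_{p,q_\btheta}-C\Delta\rvs_{p,q_\btheta}\|_2^2\big)^{1/2}$, computed with the unscaled fields. By \Cref{thm:field_alignment_radial_full_xy_prime}, this is at most $\eta\sqrt{K/D}$ for $D\ge D_0$, uniformly over the generator class. Combining the two factors gives the bound $2\eta\sqrt{J_2}\sqrt{K/D}$.

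The main care point is bookkeeping rather than analysis. The step size $\eta$ must be factored out consistently, since the field-alignment theorem is stated without $\eta$ while the gradients include it. The uniformity in $\btheta$ must also be inherited correctly. For this I would rely on the fact that $K$ and the threshold $D_0$ in \Cref{thm:field_alignment_radial_full_xy_prime} depend only on $(\sigma,\kappa,R_0,B,\bar\tau,a)$, and that every $q_\btheta$ lies in the assumed family. Measurability and finiteness of the expectations follow from the boundedness $\|\rvs_{\uppi,\tau}\|_2\le 1/\tau$ and from $\|\mathbf V_{\uppi,k_\tau}\|_2\le 2B$ under \Cref{ass:bounded_norm}.
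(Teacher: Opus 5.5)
Your proposal is correct and follows essentially the same route as the paper: you write the difference of semi-gradients as a Jacobian-transpose expectation of the field error, apply the operator-norm bound and Cauchy--Schwarz, bound the Jacobian factor by $\sqrt{J_2}$, and then use the pushforward identity together with the field-alignment theorem, with $\eta$ factored out, to obtain $2\eta\sqrt{J_2}\sqrt{K/D}$. Your additional remarks on uniformity in $\btheta$ and on finiteness of the expectations are sound bookkeeping; the paper leaves these points implicit.
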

\endgroup

\begin{proof}
By definition of the two semi-gradients,
\[
\mathbf g_{\mathrm{drift}}(\btheta)
-
\mathbf g_{\mathrm{ST}}(\btheta)
=
-2\,\mathbb E_{\beps}\Big[
\mathbf J_{\rvf_\btheta}(\beps)^\top
\Big(\Delta_{p,q_\btheta}(\rvx)-\Delta\rvs_{p, q_{\btheta}}(\rvx)\Big)
\Big],
\qquad \rvx=\rvf_\btheta(\beps).
\]
Using $\|\rmA^\top \rvv\|_2\le \|\rmA\|_{\mathrm{op}}\|\rvv\|_2$ and Cauchy--Schwarz,
\begin{align*}
\Big\|
\mathbf g_{\mathrm{drift}}(\btheta)
-
\mathbf g_{\mathrm{ST}}(\btheta)
\Big\|_2
&\le
2\sqrt{\mathbb E_{\beps}\|\mathbf J_{\rvf_\btheta}(\beps)\|_{\mathrm{op}}^2}\;
\sqrt{\mathbb E_{\beps}\big\|\Delta_{p,q_\btheta}(\rvx)-\Delta\rvs_{p, q_{\btheta}}(\rvx)\big\|_2^2}.
\end{align*}
By \Cref{ass:jacobian_L2}, the first factor is at most $\sqrt{J_2}$.

For the second factor,
\[
\Delta_{p,q_\btheta}(\rvx)-\Delta\rvs_{p, q_{\btheta}}(\rvx)
=
\eta\Big(
(\mathbf V_{p,k_\tau}(\rvx)-\mathbf V_{q_\btheta,k_\tau}(\rvx))
- C\,\big(\rvs_{p,\tau}(\rvx)-\rvs_{q_\btheta,\tau}(\rvx)\big)
\Big).
\]
Applying \Cref{thm:field_alignment_radial_full_xy} (with $\rvf=\rvf_\btheta$) yields that for all $D\ge D_0$,
\[
\mathbb E_{\rvx\sim q_\btheta}
\Big\|
(\mathbf V_{p,k_\tau}(\rvx)-\mathbf V_{q_\btheta,k_\tau}(\rvx))
- C\,\big(\rvs_{p,\tau}(\rvx)-\rvs_{q_\btheta,\tau}(\rvx)\big)
\Big\|_2^2
\le
\frac{K}{D}.
\]
Therefore,
\[
\mathbb E_{\beps}\big\|\Delta_{p,q_\btheta}(\rvx)-\Delta\rvs_{p, q_{\btheta}}(\rvx)\big\|_2^2
=
\mathbb E_{\rvx\sim q_\btheta}\big\|\Delta_{p,q_\btheta}(\rvx)-\Delta\rvs_{p, q_{\btheta}}(\rvx)\big\|_2^2
\le
\eta^2\frac{K}{D},
\]
and the claim follows.
\end{proof}


\begin{corollary}[Cosine Similarity of Update Directions]
\label{cor:cosine_alignment_largeD}
Assume the hypotheses of \Cref{thm:grad_alignment_largeD}.
Fix any $\gamma>0$ and consider parameters $\btheta$ such that $\min\{\|\mathbf g_{\mathrm{drift}}(\btheta)\|_2, \|\mathbf g_{\mathrm{ST}}(\btheta)\|_2\}\ge \gamma$.
Then for all $D\ge D_0$,
\[
\cos\angle\big(\mathbf g_{\mathrm{drift}}(\btheta),\mathbf g_{\mathrm{ST}}(\btheta)\big)
\;\ge\;
1-\frac{16\,\eta^2\,J_2\,K}{\gamma^2\,D}.
\]
In particular, if $\|\mathbf g_{\mathrm{ST}}(\btheta)\|_2$ is bounded below by a constant $\gamma>0$ independent of $D$,
then the two update directions become asymptotically parallel with
$\cos\angle(\mathbf g_{\mathrm{drift}},\mathbf g_{\mathrm{ST}})\ge 1-\mathcal O(D^{-1})$.
\end{corollary}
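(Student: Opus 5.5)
The plan is to reduce the cosine statement to the norm bound of \Cref{thm:grad_alignment_largeD_prime} by a purely elementary vector inequality. Write $\mathbf a:=\mathbf g_{\mathrm{drift}}(\btheta)$, $\mathbf b:=\mathbf g_{\mathrm{ST}}(\btheta)$ and $\mathbf e:=\mathbf a-\mathbf b$. For $D\ge D_0$, \Cref{thm:grad_alignment_largeD_prime} gives
\[
\|\mathbf e\|_2^2\le \frac{4\eta^2 J_2 K}{D}.
\]
The key identity is
\[
1-\cos\angle(\mathbf a,\mathbf b)=\frac{\|\mathbf a\|_2\|\mathbf b\|_2-\langle\mathbf a,\mathbf b\rangle}{\|\mathbf a\|_2\|\mathbf b\|_2},
\]
and we use the polarization formula $2\langle \mathbf a,\mathbf b\rangle=\|\mathbf a\|_2^2+\|\mathbf b\|_2^2-\|\mathbf e\|_2^2$.

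We substitute this into the numerator. Then
\[
\|\mathbf a\|_2\|\mathbf b\|_2-\langle\mathbf a,\mathbf b\rangle=\tfrac12\|\mathbf e\|_2^2-\tfrac12(\|\mathbf a\|_2-\|\mathbf b\|_2)^2\le \tfrac12\|\mathbf e\|_2^2.
\]
Dividing by $\|\mathbf a\|_2\|\mathbf b\|_2\ge\gamma^2$, which is exactly where the hypothesis $\min\{\|\mathbf a\|_2,\|\mathbf b\|_2\}\ge\gamma$ enters, yields
\[
1-\cos\angle(\mathbf a,\mathbf b)\le \frac{\|\mathbf e\|_2^2}{2\gamma^2}\le \frac{2\eta^2J_2K}{\gamma^2D}.
\]
This is stronger than the claimed $16\eta^2J_2K/(\gamma^2D)$, so the stated bound follows a fortiori. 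If one prefers to avoid the exact identity, the cruder route $|\cos\angle-1|\le 2\|\mathbf e\|_2/\min\|\cdot\|$ gives only $D^{-1/2}$. That would lose the rate, so the quadratic identity above is the step to commit to.

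For the ``in particular'' clause, we assume only $\|\mathbf b\|_2\ge\gamma$, with $\gamma$ independent of $D$. Then $\|\mathbf a\|_2\ge \gamma-\|\mathbf e\|_2\ge\gamma/2$ once $D$ is large enough that $2\eta\sqrt{J_2K/D}\le\gamma/2$. Rerunning the same bound with minimum norm $\gamma/2$ gives $1-\cos\angle\le 8\eta^2J_2K/(\gamma^2D)$, again within the stated constant, hence $1-\mathcal O(D^{-1})$. The only delicate point is enlarging $D_0$ in this last step so that $\mathbf a\neq \mathbf 0$, which ensures the cosine is defined. Otherwise the argument is routine once \Cref{thm:grad_alignment_largeD_prime} is in hand.
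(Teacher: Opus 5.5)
Your proof is correct, but it takes a different route from the paper's.

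\textbf{The paper's route.} It uses only the one-sided bound $\|\mathbf g_{\mathrm{ST}}(\btheta)\|_2\ge\gamma$. It projects the error $\mathbf g_{\mathrm{drift}}-\mathbf g_{\mathrm{ST}}$ orthogonally to $\mathbf g_{\mathrm{ST}}$ to get $\sin\theta\le t/(1-t)$ with $t=2\eta\sqrt{J_2K/D}/\gamma$. It then uses $1-\cos\theta\le\sin^2\theta$ on $[0,\pi/2]$. This needs $t\le 1/2$, i.e.\ the extra threshold $D\ge D_1=16\eta^2J_2K/\gamma^2$, and the constant $16$ comes from $(1-t)^{-2}\le 4$.

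\textbf{Your route.} You use the exact identity $1-\cos\angle(\mathbf a,\mathbf b)=\bigl(\|\mathbf e\|_2^2-(\|\mathbf a\|_2-\|\mathbf b\|_2)^2\bigr)/(2\|\mathbf a\|_2\|\mathbf b\|_2)$, together with the two-sided lower bound that the corollary actually assumes. This gives the sharper constant $2$. It also proves the inequality for every $D\ge D_0$ exactly as stated, with no further largeness condition. Strictly, the paper's argument needs $D\ge D_1$ in addition to $D\ge D_0$; the stated range is otherwise covered only where the bound is vacuous.

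\textbf{What the paper's route buys.} Because it needs a lower bound on $\mathbf g_{\mathrm{ST}}$ alone, the ``in particular'' clause falls out of the same computation. You recover that clause with one extra triangle-inequality step, $\|\mathbf g_{\mathrm{drift}}(\btheta)\|_2\ge\gamma/2$. That step kicks in at the same threshold $D\ge D_1$, so nothing is lost.
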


\begin{proof}
We may assume $\|\mathbf g_{\mathrm{ST}}(\btheta)\|_2\ge \gamma$.
Let $\mathbf a:=\mathbf g_{\mathrm{ST}}(\btheta)$ and $\mathbf b:=\mathbf g_{\mathrm{drift}}(\btheta)$.
From \Cref{thm:grad_alignment_largeD}, $\|\mathbf b-\mathbf a\|_2\le \delta_D$ with
$\delta_D:=2\eta\sqrt{J_2}\sqrt{K/D}$.
Write $\mathbf b=\mathbf a+\mathbf e$ with $\|\mathbf e\|\le \delta_D$,
and set $t:=\delta_D/\gamma$.

Let $\theta:=\angle(\mathbf a,\mathbf b)$.
The component of $\mathbf e$ perpendicular to $\mathbf a$ satisfies
$\|\mathbf P_\perp \mathbf e\|\le \|\mathbf e\|\le \delta_D$,
and the reverse triangle inequality gives
$\|\mathbf b\|\ge \|\mathbf a\|-\|\mathbf e\|\ge \gamma-\delta_D$.
Therefore
\[
\sin\theta
=\frac{\|\mathbf P_\perp \mathbf e\|}{\|\mathbf b\|}
\le \frac{\delta_D}{\gamma-\delta_D}
=\frac{t}{1-t}.
\]
For all $D$ large enough that $t\le 1/2$
(equivalently $\delta_D\le \gamma/2$, which holds for $D\ge D_1:=16\eta^2 J_2 K/\gamma^2$),
we have $\theta\le \pi/2$.
For $\theta\in[0,\pi/2]$,
\[
1-\cos\theta
=\frac{\sin^2\theta}{1+\cos\theta}
\le \sin^2\theta,
\]
since $1+\cos\theta\ge 1$.
Combining the two bounds and using $(1-t)^2\ge 1/4$ for $t\le 1/2$,
\[
1-\cos\angle\big(\mathbf g_{\mathrm{drift}}(\btheta),\mathbf g_{\mathrm{ST}}(\btheta)\big)
\le \sin^2\theta
\le \frac{t^2}{(1-t)^2}
\le 4t^2
=\frac{4\delta_D^2}{\gamma^2}
=\frac{16\,\eta^2\,J_2\,K}{\gamma^2\,D}.
\]
\end{proof}

\section{Proofs of Practical-Implementation-Aligned High-Dimensional Regime}
\label{app:proof_impl_highD}

In this section, we prove the practical-implementation-aligned high-dimensional statements from \Cref{app:impl_highD}. The logic mirrors the population proof, but now begins from the exact force computed by the practical implementation at one temperature.

\subsection{Proof Roadmap and Technical Objects}

Fix one frozen feature head, one class label $c$, and one temperature $\tau>0$. Let
\[
\mathbf{x}_1,\dots,\mathbf{x}_{C_g}\in\mathbb R^D
\]
be the generated queries, and let
\[
G:=\{\mathbf{y}_1^G,\dots,\mathbf{y}_{C_g}^G\},
\qquad
U:=\{\mathbf{y}_1^U,\dots,\mathbf{y}_{C_n}^U\},
\qquad
P:=\{\mathbf{y}_1^P,\dots,\mathbf{y}_{C_p}^P\}
\]
denote the detached generated block, the unconditional negative block, and the class-positive block, respectively. Write
\[
T:=G\cup U\cup P
\]
for the full concatenated target pool, ordered exactly as in the practical implementation.

Let $d_{ij}=d(\mathbf{x}_i,\mathbf{z}_j)$ denote the normalized distances that actually enter the logits after the same batch-scale normalization and diagonal self-mask used by the practical implementation, and let $w_j>0$ be the target weight attached to $\mathbf{z}_j\in T$. The row-softmax and query-softmax affinities are
\[
A_{ij}^{\mathrm{row}}
=
\frac{\exp(-d_{ij}/\tau)}{\sum_{k\in T}\exp(-d_{ik}/\tau)},
\qquad
A_{ij}^{\mathrm{col}}
=
\frac{\exp(-d_{ij}/\tau)}{\sum_{\ell=1}^{C_g}\exp(-d_{\ell j}/\tau)},
\]
and the practical implementation uses the symmetrized affinity
\[
a_{ij}=w_j\sqrt{A_{ij}^{\mathrm{row}}A_{ij}^{\mathrm{col}}}.
\]

The corresponding negative and positive block masses are
\[
m_i^-:=\sum_{j\in G\cup U}a_{ij},
\qquad
m_i^P:=\sum_{j\in P}a_{ij},
\]
with barycenters
\[
\boldsymbol{\mu}_i^-:=\frac{1}{m_i^-}\sum_{j\in G\cup U}a_{ij}\mathbf{y}_j,
\qquad
\boldsymbol{\mu}_i^P:=\frac{1}{m_i^P}\sum_{j\in P}a_{ij}\mathbf{y}_j^P.
\]
The exact one-temperature implementation force is
\[
\mathbf{F}_{\tau}(\mathbf{x}_i)
:=
m_i^Pm_i^-\bigl(\boldsymbol{\mu}_i^P-\boldsymbol{\mu}_i^-\bigr).
\]
To separate the two parts of the negative block, define
\[
m_i^G:=\sum_{j\in G}a_{ij},
\qquad
m_i^U:=\sum_{j\in U}a_{ij},
\]
with barycenters
\[
\boldsymbol{\mu}_i^G:=\frac{1}{m_i^G}\sum_{j\in G}a_{ij}\mathbf{y}_j^G,
\qquad
\boldsymbol{\mu}_i^U:=\frac{1}{m_i^U}\sum_{j\in U}a_{ij}\mathbf{y}_j^U.
\]

\paragraph{The Two Measurable Practical Deviations.}
Removing only the query-axis softmax yields
\[
\bar a_{ij}:=w_jA_{ij}^{\mathrm{row}},
\]
with row-softmax-only barycenters
\[
\bar{\boldsymbol{\mu}}_i^G,
\qquad
\bar{\boldsymbol{\mu}}_i^U,
\qquad
\bar{\boldsymbol{\mu}}_i^P.
\]
The symmetrization distortion is
\[
\varepsilon_{\mathrm{sym},\tau}^2
:=
\mathbb E\Big[
\|\boldsymbol{\mu}_i^G-\bar{\boldsymbol{\mu}}_i^G\|_2^2
+
\|\boldsymbol{\mu}_i^U-\bar{\boldsymbol{\mu}}_i^U\|_2^2
+
\|\boldsymbol{\mu}_i^P-\bar{\boldsymbol{\mu}}_i^P\|_2^2
\Big].
\]
This term is purely implementation-induced.

Let
\[
\boldsymbol{\mu}_{\pi,k_\tau}(\mathbf{x})
:=
\frac{\mathbb E_{\mathbf{y}\sim\pi}[k_\tau(\mathbf{x},\mathbf{y})\mathbf{y}]}
{\mathbb E_{\mathbf{y}\sim\pi}[k_\tau(\mathbf{x},\mathbf{y})]}
=
\mathbf{x}+\mathbf{V}_{\pi,k_\tau}(\mathbf{x}),
\qquad
k_\tau(\mathbf{x},\mathbf{y})=\exp(-d(\mathbf{x},\mathbf{y})/\tau).
\]
The finite-pool kernel-estimation error is
\[
\varepsilon_{\mathrm{mc},\tau}^2
:=
\mathbb E\Big[
\|\bar{\boldsymbol{\mu}}_i^G-\boldsymbol{\mu}_{q_c,k_\tau}(\mathbf{x}_i)\|_2^2
+
\|\bar{\boldsymbol{\mu}}_i^U-\boldsymbol{\mu}_{p_{\emptyset},k_\tau}(\mathbf{x}_i)\|_2^2
+
\|\bar{\boldsymbol{\mu}}_i^P-\boldsymbol{\mu}_{p_c,k_\tau}(\mathbf{x}_i)\|_2^2
\Big].
\]
A useful interpretation is that each row-softmax-only barycenter is a self-normalized kernel ratio estimator. Under bounded feature norms and a denominator bounded away from zero, standard self-normalized concentration gives the schematic control
\[
\mathbb E\!\left[
\bigl\|\bar{\boldsymbol{\mu}}_i^\alpha-\boldsymbol{\mu}_{\pi_\alpha,k_\tau}(\mathbf{x}_i)\bigr\|_2^2
\,\middle|\,\mathbf{x}_i
\right]
\lesssim
\frac{\sigma_{\alpha,\tau}^2(\mathbf{x}_i)}{N_{\mathrm{eff},\alpha}(\mathbf{x}_i;\tau)},
\qquad
\alpha\in\{G,U,P\},
\]
where $\sigma_{\alpha,\tau}^2(\mathbf{x}_i)$ is the conditional kernel-posterior variance and
\[
N_{\mathrm{eff},\alpha}(\mathbf{x}_i;\tau)
:=
\frac{1}{\sum_{j\in\alpha}\bar w_{ij}^2}
\]
is the effective sample size of the row-softmax weights. Thus the practical implementation is sample hungry only when the effective kernel neighborhood is too small or unstable.

\paragraph{Deterministic Weight Bound.}
Since $0\le a_{ij}\le w_j$, define
\[
W_G:=\sum_{j\in G}w_j,
\qquad
W_U:=\sum_{j\in U}w_j,
\qquad
W_P:=\sum_{j\in P}w_j,
\qquad
M_\tau:=W_PW_G+W_PW_U.
\]
Then
\[
m_i^G\le W_G,
\qquad
m_i^U\le W_U,
\qquad
m_i^P\le W_P
\]
for every query $i$.

\paragraph{Population High-$D$ Input.}
The only population theorem used below is the already-proved large-$D$ Laplace field-alignment theorem, applied to the two pairs that actually appear in the practical implementation:
\begin{equation}
\label{eq:impl_pair_pg}
\mathbb E_{\mathbf{x}\sim q_c}
\bigl\|
\Delta_{p_c,q_c}(\mathbf{x})-C_\tau\Delta\mathbf{s}_{p_c,q_c}(\mathbf{x})
\bigr\|_2^2
\le
\frac{K_{PG,\tau}}{D},
\end{equation}
\begin{equation}
\label{eq:impl_pair_pu}
\mathbb E_{\mathbf{x}\sim q_c}
\bigl\|
\Delta_{p_c,p_{\emptyset}}(\mathbf{x})-C_\tau\Delta\mathbf{s}_{p_c,p_{\emptyset}}(\mathbf{x})
\bigr\|_2^2
\le
\frac{K_{PU,\tau}}{D},
\end{equation}
and we abbreviate
\[
K_\tau:=\max\{K_{PG,\tau},K_{PU,\tau}\}.
\]

\subsection{Auxiliary Tools and the Main Proof}

\begin{proof}[Proof of \Cref{prop:impl_exact_decomp}]
Since
\[
\boldsymbol{\mu}_i^- = \frac{m_i^G}{m_i^-}\boldsymbol{\mu}_i^G+\frac{m_i^U}{m_i^-}\boldsymbol{\mu}_i^U,
\]
we have
\begin{align*}
\mathbf{F}_{\tau}(\mathbf{x}_i)
&=
m_i^Pm_i^-\bigl(\boldsymbol{\mu}_i^P-\boldsymbol{\mu}_i^-\bigr)
\\
&=
m_i^Pm_i^-\left(\boldsymbol{\mu}_i^P-\frac{m_i^G}{m_i^-}\boldsymbol{\mu}_i^G-\frac{m_i^U}{m_i^-}\boldsymbol{\mu}_i^U\right)
\\
&=
m_i^Pm_i^G\bigl(\boldsymbol{\mu}_i^P-\boldsymbol{\mu}_i^G\bigr)
+
m_i^Pm_i^U\bigl(\boldsymbol{\mu}_i^P-\boldsymbol{\mu}_i^U\bigr).
\end{align*}
\end{proof}

We next define the population-guided force and the guided score field:
\[
\mathbf{F}_{\tau}^{\mathrm{pop}}(\mathbf{x}_i)
:=
m_i^P m_i^G\bigl(
\boldsymbol{\mu}_{p_c,k_\tau}(\mathbf{x}_i)-\boldsymbol{\mu}_{q_c,k_\tau}(\mathbf{x}_i)
\bigr)
+
m_i^P m_i^U\bigl(
\boldsymbol{\mu}_{p_c,k_\tau}(\mathbf{x}_i)-\boldsymbol{\mu}_{p_{\emptyset},k_\tau}(\mathbf{x}_i)
\bigr),
\]
\[
\boldsymbol{\Gamma}_{\tau}(\mathbf{x}_i)
:=
m_i^P m_i^G\,\Delta\mathbf{s}_{p_c,q_c}(\mathbf{x}_i)
+
m_i^P m_i^U\,\Delta\mathbf{s}_{p_c,p_{\emptyset}}(\mathbf{x}_i).
\]

\begingroup
\let\thetheoremorig\thetheorem
\addtocounter{theorem}{1} 
\renewcommand{\thetheorem}{\thetheoremorig\ensuremath{'}}
\begin{theorem}[Rigorous Practical-Implementation-Aligned One-Temperature Field Alignment]
\label{thm:impl_field_align_prime}
Under the assumptions stated in \Cref{app:impl_highD} and the pairwise population bounds \Cref{eq:impl_pair_pg}--\Cref{eq:impl_pair_pu}, there exists a universal numerical constant $c_0>0$ such that
\[
\mathbb E_{\mathbf{x}_i\sim q_c}
\bigl\|
\mathbf{F}_{\tau}(\mathbf{x}_i)-C_\tau\boldsymbol{\Gamma}_{\tau}(\mathbf{x}_i)
\bigr\|_2^2
\le
c_0 M_\tau^2
\left(
\varepsilon_{\mathrm{sym},\tau}^2
+
\varepsilon_{\mathrm{mc},\tau}^2
+
\frac{K_\tau}{D}
\right).
\]
\end{theorem}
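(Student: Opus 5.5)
The plan is to telescope $\mathbf{F}_{\tau}-C_\tau\boldsymbol{\Gamma}_{\tau}$ through two intermediate fields and bound each difference by one of the three error sources. First, by \Cref{prop:impl_exact_decomp}, $\mathbf{F}_{\tau}(\mathbf{x}_i)=m_i^Pm_i^G(\boldsymbol{\mu}_i^P-\boldsymbol{\mu}_i^G)+m_i^Pm_i^U(\boldsymbol{\mu}_i^P-\boldsymbol{\mu}_i^U)$. I would introduce the row-softmax-only force $\bar{\mathbf{F}}_{\tau}(\mathbf{x}_i)$. It is obtained by replacing each $\boldsymbol{\mu}_i^\alpha$ with $\bar{\boldsymbol{\mu}}_i^\alpha$ while keeping the \emph{same} masses $m_i^P,m_i^G,m_i^U$. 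Then I would use the population-guided force $\mathbf{F}_{\tau}^{\mathrm{pop}}$ already defined before the theorem, which keeps these same masses. Write
\[
\mathbf{F}_{\tau}-C_\tau\boldsymbol{\Gamma}_{\tau}
=(\mathbf{F}_{\tau}-\bar{\mathbf{F}}_{\tau})+(\bar{\mathbf{F}}_{\tau}-\mathbf{F}_{\tau}^{\mathrm{pop}})+(\mathbf{F}_{\tau}^{\mathrm{pop}}-C_\tau\boldsymbol{\Gamma}_{\tau}),
\]
and apply $(a+b+c)^2\le 3(a^2+b^2+c^2)$.

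For the first two differences the masses factor out. Using $m_i^Pm_i^G\le W_PW_G\le M_\tau$ and $m_i^Pm_i^U\le W_PW_U\le M_\tau$, each is bounded by $M_\tau$ times a sum of four barycenter differences. For example, the first is at most $M_\tau(2\|\boldsymbol{\mu}_i^P-\bar{\boldsymbol{\mu}}_i^P\|+\|\boldsymbol{\mu}_i^G-\bar{\boldsymbol{\mu}}_i^G\|+\|\boldsymbol{\mu}_i^U-\bar{\boldsymbol{\mu}}_i^U\|)$. Squaring with Cauchy--Schwarz and taking expectations gives $c\,M_\tau^2\varepsilon_{\mathrm{sym},\tau}^2$. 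The identical argument, with population barycenters $\boldsymbol{\mu}_{p_c,k_\tau},\boldsymbol{\mu}_{q_c,k_\tau},\boldsymbol{\mu}_{p_\emptyset,k_\tau}$ in place of the symmetrized ones, bounds the second term by $c\,M_\tau^2\varepsilon_{\mathrm{mc},\tau}^2$. Here I take the expectation in the definitions of $\varepsilon^2$ to be over the same joint law of queries and pools, so the bounds transfer directly.

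For the third term I use $\boldsymbol{\mu}_{\pi,k_\tau}(\mathbf{x})=\mathbf{x}+\mathbf{V}_{\pi,k_\tau}(\mathbf{x})$. This gives $\boldsymbol{\mu}_{p_c,k_\tau}-\boldsymbol{\mu}_{q_c,k_\tau}=\Delta_{p_c,q_c}$ and $\boldsymbol{\mu}_{p_c,k_\tau}-\boldsymbol{\mu}_{p_\emptyset,k_\tau}=\Delta_{p_c,p_\emptyset}$, with the $\mathbf{x}$ terms cancelling and $\eta=1$. Hence
\[
\mathbf{F}_{\tau}^{\mathrm{pop}}-C_\tau\boldsymbol{\Gamma}_{\tau}=m_i^Pm_i^G(\Delta_{p_c,q_c}-C_\tau\Delta\mathbf{s}_{p_c,q_c})+m_i^Pm_i^U(\Delta_{p_c,p_\emptyset}-C_\tau\Delta\mathbf{s}_{p_c,p_\emptyset}),
\]
evaluated at $\mathbf{x}_i$. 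Bounding the masses deterministically by $M_\tau$ removes their randomness. The bounds \Cref{eq:impl_pair_pg}--\Cref{eq:impl_pair_pu} then give at most $2M_\tau^2(K_{PG,\tau}+K_{PU,\tau})/D\le 4M_\tau^2K_\tau/D$. Collecting the three pieces yields the claim with a universal $c_0$ (e.g.\ $c_0=3\cdot 12$ suffices after the crude Cauchy--Schwarz counts).

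The step that needs care, and that I expect to be the main obstacle, is the bookkeeping that makes the masses harmless. They depend on the queries and on the whole pool, and they are correlated with the barycenter errors. I avoid any covariance computation by pulling them out through the pointwise deterministic bound $a_{ij}\le w_j$ \emph{before} taking expectations. I would also check that the expectations defining $\varepsilon_{\mathrm{sym},\tau}^2$ and $\varepsilon_{\mathrm{mc},\tau}^2$, and the query law $\mathbf{x}_i\sim q_c$ in the pairwise bounds, refer to a common probability space. Then each averaged squared error matches exactly the quantity in the corresponding hypothesis.
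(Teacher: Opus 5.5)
Your proposal is correct and matches the paper's proof essentially step for step. It uses the same telescoping $\mathbf{F}_\tau\to\bar{\mathbf{F}}_\tau\to\mathbf{F}_\tau^{\mathrm{pop}}\to C_\tau\boldsymbol{\Gamma}_\tau$ with the original masses $m_i^P,m_i^G,m_i^U$ held fixed, pulls out the deterministic bound $m_i^\alpha\le W_\alpha$ before taking expectations, invokes the pairwise bounds via $\boldsymbol{\mu}_{\pi,k_\tau}=\mathbf{x}+\mathbf{V}_{\pi,k_\tau}$, and combines with $(a+b+c)^2\le 3(a^2+b^2+c^2)$; only the constant bookkeeping differs (the paper states $c_0=54$ with slack), which is immaterial.
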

\endgroup

\begin{proof}
We compare $\mathbf{F}_{\tau}$ with $C_\tau\boldsymbol{\Gamma}_{\tau}$ in three steps.

\paragraph{Step 1: Remove the Query-Axis Softmax.}
Define the row-softmax-only force
\[
\bar{\mathbf{F}}_{\tau}(\mathbf{x}_i)
:=
m_i^P m_i^G\bigl(\bar{\boldsymbol{\mu}}_i^P-\bar{\boldsymbol{\mu}}_i^G\bigr)
+
m_i^P m_i^U\bigl(\bar{\boldsymbol{\mu}}_i^P-\bar{\boldsymbol{\mu}}_i^U\bigr).
\]
Using the deterministic weight bound,
\begin{align*}
\|\mathbf{F}_{\tau}(\mathbf{x}_i)-\bar{\mathbf{F}}_{\tau}(\mathbf{x}_i)\|_2
&\le
m_i^Pm_i^G\,
\bigl\|
(\boldsymbol{\mu}_i^P-\boldsymbol{\mu}_i^G)-(\bar{\boldsymbol{\mu}}_i^P-\bar{\boldsymbol{\mu}}_i^G)
\bigr\|_2
\\
&\quad
+
m_i^Pm_i^U\,
\bigl\|
(\boldsymbol{\mu}_i^P-\boldsymbol{\mu}_i^U)-(\bar{\boldsymbol{\mu}}_i^P-\bar{\boldsymbol{\mu}}_i^U)
\bigr\|_2
\\
&\le
M_\tau\Bigl(
2\|\boldsymbol{\mu}_i^P-\bar{\boldsymbol{\mu}}_i^P\|_2
+
\|\boldsymbol{\mu}_i^G-\bar{\boldsymbol{\mu}}_i^G\|_2
+
\|\boldsymbol{\mu}_i^U-\bar{\boldsymbol{\mu}}_i^U\|_2
\Bigr).
\end{align*}
Since $(2a+b+c)^2\le 6(a^2+b^2+c^2)$,
\[
\mathbb E\|\mathbf{F}_{\tau}-\bar{\mathbf{F}}_{\tau}\|_2^2
\le
18M_\tau^2\varepsilon_{\mathrm{sym},\tau}^2.
\]

\paragraph{Step 2: Replace Finite-Pool Barycenters by Population Barycenters.}
By the same algebra,
\[
\mathbb E\|\bar{\mathbf{F}}_{\tau}-\mathbf{F}_{\tau}^{\mathrm{pop}}\|_2^2
\le
18M_\tau^2\varepsilon_{\mathrm{mc},\tau}^2.
\]

\paragraph{Step 3: Apply the Population High-$D$ Theorem Pairwise.}
Since
\[
\boldsymbol{\mu}_{\pi,k_\tau}(\mathbf{x})=\mathbf{x}+\mathbf{V}_{\pi,k_\tau}(\mathbf{x}),
\]
the $\mathbf{x}_i$ terms cancel in differences, so
\[
\boldsymbol{\mu}_{p_c,k_\tau}-\boldsymbol{\mu}_{q_c,k_\tau}=\Delta_{p_c,q_c},
\qquad
\boldsymbol{\mu}_{p_c,k_\tau}-\boldsymbol{\mu}_{p_{\emptyset},k_\tau}=\Delta_{p_c,p_{\emptyset}}.
\]
Therefore
\[
\mathbf{F}_{\tau}^{\mathrm{pop}}(\mathbf{x}_i)-C_\tau\boldsymbol{\Gamma}_{\tau}(\mathbf{x}_i)
=
m_i^Pm_i^G\,\boldsymbol{\xi}_{PG,\tau}(\mathbf{x}_i)
+
m_i^Pm_i^U\,\boldsymbol{\xi}_{PU,\tau}(\mathbf{x}_i),
\]
where
\[
\boldsymbol{\xi}_{PG,\tau}:=\Delta_{p_c,q_c}-C_\tau\Delta\mathbf{s}_{p_c,q_c},
\qquad
\boldsymbol{\xi}_{PU,\tau}:=\Delta_{p_c,p_{\emptyset}}-C_\tau\Delta\mathbf{s}_{p_c,p_{\emptyset}}.
\]
Using $(a+b)^2\le 2a^2+2b^2$, the deterministic weight bound, and \Cref{eq:impl_pair_pg}--\Cref{eq:impl_pair_pu},
\[
\mathbb E\|\mathbf{F}_{\tau}^{\mathrm{pop}}-C_\tau\boldsymbol{\Gamma}_{\tau}\|_2^2
\le
4M_\tau^2\frac{K_\tau}{D}.
\]

\paragraph{Step 4: Combine the Three Steps.}
Since
\[
\mathbf{F}_{\tau}-C_\tau\boldsymbol{\Gamma}_{\tau}
=
(\mathbf{F}_{\tau}-\bar{\mathbf{F}}_{\tau})
+
(\bar{\mathbf{F}}_{\tau}-\mathbf{F}_{\tau}^{\mathrm{pop}})
+
(\mathbf{F}_{\tau}^{\mathrm{pop}}-C_\tau\boldsymbol{\Gamma}_{\tau}),
\]
and $(a+b+c)^2\le 3(a^2+b^2+c^2)$, we obtain
\[
\mathbb E\|\mathbf{F}_{\tau}-C_\tau\boldsymbol{\Gamma}_{\tau}\|_2^2
\le
54M_\tau^2\varepsilon_{\mathrm{sym},\tau}^2
+
54M_\tau^2\varepsilon_{\mathrm{mc},\tau}^2
+
12M_\tau^2\frac{K_\tau}{D}.
\]
This gives the claim with, for example, $c_0=54$.
\end{proof}

\begin{proof}[Proof of \Cref{cor:impl_cosine}]
Let
\[
\mathbf{E}_{\tau}(\mathbf{x})
:=
\mathbf{F}_{\tau}(\mathbf{x})-C_\tau\boldsymbol{\Gamma}_{\tau}(\mathbf{x}).
\]
By \Cref{thm:impl_field_align_prime},
\[
\mathbb E\|\mathbf{E}_{\tau}(\mathbf{x})\|_2^2
\le
c_0M_\tau^2
\left(
\varepsilon_{\mathrm{sym},\tau}^2
+
\varepsilon_{\mathrm{mc},\tau}^2
+
\frac{K_\tau}{D}
\right).
\]
The additional assumption implies
\[
\|\mathbf{E}_{\tau}\|_{L^2(q_c)}\le \frac{1}{2}C_\tau\|\boldsymbol{\Gamma}_{\tau}\|_{L^2(q_c)}.
\]
Hence
\[
\|\mathbf{F}_{\tau}\|_{L^2(q_c)}
\ge
C_\tau\|\boldsymbol{\Gamma}_{\tau}\|_{L^2(q_c)}-\|\mathbf{E}_{\tau}\|_{L^2(q_c)}
\ge
\frac{1}{2}C_\tau\|\boldsymbol{\Gamma}_{\tau}\|_{L^2(q_c)}.
\]
Using
\[
\|\mathbf{F}_{\tau}-C_\tau\boldsymbol{\Gamma}_{\tau}\|_{L^2(q_c)}^2
=
\|\mathbf{F}_{\tau}\|_{L^2(q_c)}^2
+
C_\tau^2\|\boldsymbol{\Gamma}_{\tau}\|_{L^2(q_c)}^2
-
2C_\tau\langle \mathbf{F}_{\tau},\boldsymbol{\Gamma}_{\tau}\rangle_{L^2(q_c)},
\]
we obtain
\[
1-
\mathbb E\bigl[\langle \overline{\mathbf{F}}_{\tau},\overline{\boldsymbol{\Gamma}}_{\tau}\rangle\bigr]
\le
\frac{\|\mathbf{E}_{\tau}\|_{L^2(q_c)}^2}
{2C_\tau\|\mathbf{F}_{\tau}\|_{L^2(q_c)}\|\boldsymbol{\Gamma}_{\tau}\|_{L^2(q_c)}}
\le
\frac{\|\mathbf{E}_{\tau}\|_{L^2(q_c)}^2}
{C_\tau^2\|\boldsymbol{\Gamma}_{\tau}\|_{L^2(q_c)}^2}.
\]
Since $\|\boldsymbol{\Gamma}_{\tau}\|_{L^2(q_c)}^2\ge \gamma_\tau$, the result follows.
\end{proof}

\subsection{Semi-Gradient Alignment}

For each feature head $\ell\in\mathcal L$ and temperature $\tau\in\mathcal T$, let
\[
\widehat{\Delta}^{\mathrm{impl}}_{\boldsymbol{\theta},\ell}(\mathbf{x})
:=
\sum_{\tau\in\mathcal T}\beta_{\ell,\tau}\mathbf{F}_{\ell,\tau}(\mathbf{x}),
\qquad
\widehat{\Delta}^{\mathrm{guide}}_{\boldsymbol{\theta},\ell}(\mathbf{x})
:=
\sum_{\tau\in\mathcal T}\beta_{\ell,\tau}C_{\ell,\tau}\boldsymbol{\Gamma}_{\ell,\tau}(\mathbf{x}).
\]
The practical-implementation semi-gradient and its guided score comparator are
\[
\mathbf{g}_{\mathrm{impl}}(\boldsymbol{\theta})
:=
-2\,\mathbb E_{\boldsymbol{\epsilon}}
\Big[
\sum_{\ell\in\mathcal L}
\mathbf{J}_{\Psi_\ell\circ\mathbf{f}_{\boldsymbol{\theta}}}(\boldsymbol{\epsilon})^{\top}
\widehat{\Delta}^{\mathrm{impl}}_{\boldsymbol{\theta},\ell}(\mathbf{f}_{\boldsymbol{\theta}}(\boldsymbol{\epsilon}))
\Big],
\]
\[
\mathbf{g}_{\mathrm{guide}}(\boldsymbol{\theta})
:=
-2\,\mathbb E_{\boldsymbol{\epsilon}}
\Big[
\sum_{\ell\in\mathcal L}
\mathbf{J}_{\Psi_\ell\circ\mathbf{f}_{\boldsymbol{\theta}}}(\boldsymbol{\epsilon})^{\top}
\widehat{\Delta}^{\mathrm{guide}}_{\boldsymbol{\theta},\ell}(\mathbf{f}_{\boldsymbol{\theta}}(\boldsymbol{\epsilon}))
\Big].
\]
\begingroup
\let\thetheoremorig\thetheorem
\addtocounter{theorem}{0} 
\renewcommand{\thetheorem}{\thetheoremorig\ensuremath{'}}
\begin{theorem}[Rigorous Practical-Implementation-Aligned Semi-Gradient Alignment]
\label{thm:impl_semigrad_prime}
Assume that \Cref{thm:impl_field_align_prime} holds for every $(\ell,\tau)\in\mathcal L\times\mathcal T$. Assume furthermore that, for each feature head $\ell$, there exists $J_\ell<\infty$ such that
\[
\mathbb E_{\boldsymbol{\epsilon}}
\bigl[
\|\mathbf{J}_{\Psi_\ell\circ\mathbf{f}_{\boldsymbol{\theta}}}(\boldsymbol{\epsilon})\|_{\mathrm{op}}^2
\bigr]
\le J_\ell^2
\qquad\text{for all }\boldsymbol{\theta}.
\]
Then
\[
\|\mathbf{g}_{\mathrm{impl}}(\boldsymbol{\theta})-\mathbf{g}_{\mathrm{guide}}(\boldsymbol{\theta})\|_2
\le
2\sum_{\ell\in\mathcal L}\sum_{\tau\in\mathcal T}
\beta_{\ell,\tau}J_\ell
\left[
c_0M_{\ell,\tau}^2
\left(
\varepsilon_{\mathrm{sym},\ell,\tau}^2
+
\varepsilon_{\mathrm{mc},\ell,\tau}^2
+
\frac{K_{\ell,\tau}}{D}
\right)
\right]^{1/2}.
\]
\end{theorem}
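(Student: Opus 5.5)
The plan is to mirror the proof of \Cref{thm:grad_alignment_largeD_prime}, now summing over feature heads and temperatures. First, subtract the two semi-gradients and use linearity of the expectation and of the sums over $\ell$ and $\tau$. With $\mathbf{x}=\mathbf{f}_{\boldsymbol{\theta}}(\boldsymbol{\epsilon})$ this gives
\[
\mathbf{g}_{\mathrm{impl}}(\boldsymbol{\theta})-\mathbf{g}_{\mathrm{guide}}(\boldsymbol{\theta})
=
-2\sum_{\ell\in\mathcal L}\sum_{\tau\in\mathcal T}\beta_{\ell,\tau}\,
\mathbb E_{\boldsymbol{\epsilon}}\Big[
\mathbf{J}_{\Psi_\ell\circ\mathbf{f}_{\boldsymbol{\theta}}}(\boldsymbol{\epsilon})^{\top}
\bigl(\mathbf{F}_{\ell,\tau}(\mathbf{x})-C_{\ell,\tau}\boldsymbol{\Gamma}_{\ell,\tau}(\mathbf{x})\bigr)
\Big].
\]
The triangle inequality then moves the norm inside both sums. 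Because each $\beta_{\ell,\tau}>0$, no absolute values are lost.

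Next, bound each $(\ell,\tau)$ summand separately. For the matrix–vector product, use $\|\mathbf{A}^\top\mathbf{v}\|_2\le\|\mathbf{A}\|_{\mathrm{op}}\|\mathbf{v}\|_2$, Jensen's inequality to take the norm inside the expectation, and then Cauchy--Schwarz in $\boldsymbol{\epsilon}$. This bounds the summand by
\[
\Bigl(\mathbb E_{\boldsymbol{\epsilon}}\|\mathbf{J}_{\Psi_\ell\circ\mathbf{f}_{\boldsymbol{\theta}}}\|_{\mathrm{op}}^2\Bigr)^{1/2}\Bigl(\mathbb E\|\mathbf{F}_{\ell,\tau}-C_{\ell,\tau}\boldsymbol{\Gamma}_{\ell,\tau}\|_2^2\Bigr)^{1/2}.
\]
The Jacobian hypothesis bounds the first factor by $J_\ell$. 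The second factor is controlled by \Cref{thm:impl_field_align_prime} applied at that $(\ell,\tau)$, giving $[c_0M_{\ell,\tau}^2(\varepsilon_{\mathrm{sym},\ell,\tau}^2+\varepsilon_{\mathrm{mc},\ell,\tau}^2+K_{\ell,\tau}/D)]^{1/2}$. Collecting the factor $2$ and the weights $\beta_{\ell,\tau}$ reproduces the claimed bound exactly.

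The step needing the most care is justifying that the expectation over $\boldsymbol{\epsilon}$ in the semi-gradient matches the expectation over $\mathbf{x}_i\sim q_c$ in \Cref{thm:impl_field_align_prime}. Since $\mathbf{x}=\mathbf{f}_{\boldsymbol{\theta}}(\boldsymbol{\epsilon})$ has law $q_c$ (in the feature space of head $\ell$) by the pushforward definition, the change of variables applies. One must also read the field-alignment expectation as jointly over the query and the batch randomness entering $m_i^\alpha$ and the barycenters. To handle this, I would treat the semi-gradient expectation as including that same batch randomness, with the per-head Jacobian evaluated at the query. Cauchy--Schwarz then applies to the joint expectation, and the bound goes through unchanged. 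The second claim of \Cref{thm:impl_semigrad} then follows by using $\sqrt{a+b+c}\le\sqrt a+\sqrt b+\sqrt c$ and the assumed uniform bound on $\sum_{\ell,\tau}\beta_{\ell,\tau}J_\ell M_{\ell,\tau}$.
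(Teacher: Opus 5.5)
Your proposal is correct and follows essentially the same route as the paper: the operator-norm bound and Cauchy--Schwarz against the Jacobian second moment, the triangle inequality over heads and temperatures, and then \Cref{thm:impl_field_align_prime} for each $(\ell,\tau)$ block. The only difference is the order of steps (the paper applies Cauchy--Schwarz per head and then Minkowski over $\tau$, while you apply the triangle inequality over both sums first), which gives the identical bound; your remark about reading the expectation jointly over the query and the batch randomness makes explicit a point the paper leaves implicit.
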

\endgroup

\begin{proof}
Subtract the two semi-gradients and apply
\[
\|\mathbf{A}^{\top}\mathbf{v}\|_2\le \|\mathbf{A}\|_{\mathrm{op}}\|\mathbf{v}\|_2
\]
followed by Cauchy--Schwarz:
\begin{align*}
\|\mathbf{g}_{\mathrm{impl}}(\boldsymbol{\theta})-\mathbf{g}_{\mathrm{guide}}(\boldsymbol{\theta})\|_2
&\le
2\sum_{\ell\in\mathcal L}
\sqrt{\mathbb E\|\mathbf{J}_{\Psi_\ell\circ\mathbf{f}_{\boldsymbol{\theta}}}(\boldsymbol{\epsilon})\|_{\mathrm{op}}^2}
\sqrt{\mathbb E\bigl\|\widehat{\Delta}^{\mathrm{impl}}_{\boldsymbol{\theta},\ell}-\widehat{\Delta}^{\mathrm{guide}}_{\boldsymbol{\theta},\ell}\bigr\|_2^2}
\\
&\le
2\sum_{\ell\in\mathcal L}J_\ell
\sqrt{\mathbb E\bigl\|\widehat{\Delta}^{\mathrm{impl}}_{\boldsymbol{\theta},\ell}-\widehat{\Delta}^{\mathrm{guide}}_{\boldsymbol{\theta},\ell}\bigr\|_2^2}.
\end{align*}
Now expand the temperature sum and use the triangle inequality in $L^2$:
\[
\sqrt{\mathbb E\bigl\|\widehat{\Delta}^{\mathrm{impl}}_{\boldsymbol{\theta},\ell}-\widehat{\Delta}^{\mathrm{guide}}_{\boldsymbol{\theta},\ell}\bigr\|_2^2}
\le
\sum_{\tau\in\mathcal T}
\beta_{\ell,\tau}
\sqrt{\mathbb E\|\mathbf{F}_{\ell,\tau}-C_{\ell,\tau}\boldsymbol{\Gamma}_{\ell,\tau}\|_2^2}.
\]
Applying \Cref{thm:impl_field_align_prime} to each $(\ell,\tau)$ block yields the claim.
\end{proof}


\section{Proofs of the Low-Temperature Theorems}\label{app:proof-small-tau}
In this section, we prove the small-temperature result \Cref{thm:small_bar_tau_consistency}, which establishes a proxy relationship between score matching and the drifting model at their optimal distributions in the small-temperature regime. We state a rigorous version of this relationship below.

\begingroup
\let\thetheoremorig\thetheorem
\addtocounter{theorem}{0} 
\renewcommand{\thetheorem}{\thetheoremorig\ensuremath{'}}
\begin{theorem}[Small-$\bar\tau$ agreement between mean-shift and score matching]
\label{thm:small_bar_tau_consistency_prime}
Assume \Cref{ass:ac,ass:small_tau_envelope_uniform,ass:realizable_fixed}.
Fix $a\ge 0$ and set $\tau=\tau_D:=\bar\tau D^a$.
Let $\tau_0$ be as in \Cref{ass:small_tau_envelope_uniform}.
For each $\tau\in(0,\tau_0]$, let
\[
\rvf^{\star}(\tau)\in\arg\min_{\rvf\in\mathcal F}\mathcal L_{\mathrm{drift}}(\rvf).
\]
Then, as $\bar\tau\to 0$ (equivalently $\tau\to 0$),
\[
\mathcal D_{\mathrm{rF}}\!\bigl(p\|q_{\rvf^\star(\tau)}\bigr)
=
\mathcal O(\bar\tau^4).
\]
Moreover, by realizability (\Cref{ass:realizable_fixed}) there exists $\rvg_{\mathrm{data}}\in\mathcal G$ such that
$q_{\rvg_{\mathrm{data}}}=p$, and hence
\[
\mathcal D_{\mathrm{rF}}\!\bigl(q_{\rvg_{\mathrm{data}}}\|q_{\rvf^\star(\tau)}\bigr)
=
\mathcal D_{\mathrm{rF}}\!\bigl(p\|q_{\rvf^\star(\tau)}\bigr)
=
\mathcal O(\bar\tau^4).
\]
The hidden constant in $\mathcal{O}(\cdot)$ is independent of $\bar\tau$ and of the learnable parameters
(but may depend on $D,a,\eta$ and the uniform envelope moment bound in \Cref{ass:small_tau_envelope_uniform}).
\end{theorem}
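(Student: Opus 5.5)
The plan follows the same template as the proof of \Cref{thm:D_consistency_prime}, with the high-dimensional field alignment replaced by a small-$\tau$ local expansion. The key intermediate result is a pointwise field-alignment bound: for $\uppi\in\{p,q_{\rvf}\}$ and $\rvx$ in the relevant support,
\[
\mathbf V_{\uppi,k_\tau}(\rvx)=c_D\,\tau^2\,\rvs_{\uppi,\tau}(\rvx)+\mathbf R_{\uppi,\tau}(\rvx),
\qquad
\|\mathbf R_{\uppi,\tau}(\rvx)\|_2\le \tau^4\,E_{\uppi}(\rvx).
\]
Here $c_D$ is a dimension constant, and $E_{\uppi}$ is the envelope controlled in mean square by \Cref{ass:small_tau_envelope_uniform}. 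We then set $C:=c_D\tau^2$ and expect $\mathbb E_{q_{\rvf}}\|\Delta_{p,q_{\rvf}}-C\Delta\rvs_{p,q_{\rvf}}\|_2^2\le K\tau^8$, uniformly over $\rvf$.

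To obtain this expansion, I will write $\mathbf V=B_\tau/A_\tau$ as in \Cref{app:low-tem} and substitute $\rvy=\rvx+\tau\rvz$. This gives
\[
A_\tau=\tau^D\int e^{-\|\rvz\|}\uppi(\rvx+\tau\rvz)\,\diff\rvz,
\qquad
B_\tau=\tau^{D+1}\int e^{-\|\rvz\|}\rvz\,\uppi(\rvx+\tau\rvz)\,\diff\rvz.
\]
Taylor-expanding $\uppi(\rvx+\tau\rvz)=\uppi(\rvx)\bigl(1+\tau\langle\nabla\log\uppi,\rvz\rangle+\dots\bigr)$ and using the radial symmetry of $e^{-\|\rvz\|}$ (odd moments vanish, $\int e^{-\|\rvz\|}\rvz\rvz^\top\propto\rmI$), we get $B_\tau/A_\tau=m_2\tau^2\nabla\log\uppi(\rvx)+\mathcal O(\tau^4)$ with $m_2=\int e^{-\|\rvz\|}\|\rvz\|^2/(D\int e^{-\|\rvz\|})$. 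The $\tau^3$ term vanishes by odd symmetry.

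The same expansion applies to the Laplace kernel score. From \Cref{lem:kernel_score_identity_laplace_full_xy}, $\rvs_{\uppi,\tau}=\tau^{-1}\mathbb E_{\uppi_\tau(\cdot|\rvx)}[\rvu]$, which expands to $\tau^{-1}\bigl(m_1\tau\nabla\log\uppi+\mathcal O(\tau^3)\bigr)$, where $m_1$ is the analogous first-radial-moment constant. Eliminating $\nabla\log\uppi$ between the two expansions gives $\mathbf V=(m_2/m_1)\tau^2\rvs_{\uppi,\tau}+\mathcal O(\tau^4)$. The $\tau^2$ factor matches the scaling $C=\tau\rho$ in the high-$D$ case, where $\tau\alpha\approx\rho$ plays the role of $\tau^2 m_2/m_1$. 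The ratio structure is handled by the denominator lower bound and the local density-ratio control in \Cref{ass:small_tau_envelope_uniform}.

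The main obstacle is making the remainders uniform over all $\rvf$ and integrable under $q_{\rvf}$ after division by $A_\tau$. I would bound third-order Taylor remainders by $\sup$ of $\|\nabla^3\uppi\|/\uppi(\rvx)$ over the kernel neighbourhood, with exponential tails from $e^{-\|\rvz\|}$ absorbing growth. This is exactly the envelope quantity assumed to have a uniform second moment.

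Given the alignment bound, the conclusion follows as in Step 4 of \Cref{thm:D_consistency_prime}. Realizability gives $\mathcal L_{\mathrm{drift}}(\rvf^\star(\tau))=0$, so $\mathbf V_{p}-\mathbf V_{q_{\rvf^\star}}=0$ $q_{\rvf^\star}$-a.e. The analogue of \Cref{cor:drift_controls_score_radial_full_xy} then yields
\[
\mathcal D_{\mathrm{rF}}(p\|q_{\rvf^\star})\le \frac{2K\tau^8}{C^2}=\frac{2K}{c_D^2}\tau^4=\mathcal O(\bar\tau^4)
\]
for fixed $D$. Finally, realizability and \Cref{prop:SM_minimizer_is_p_radial_full_xy}, using \Cref{ass:ac}, give $q_{\rvg_{\mathrm{data}}}=p$, which yields the second equality.
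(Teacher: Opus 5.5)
Your proposal is correct and is essentially the paper's proof. The shared steps are: the $\rvy=\rvx+\tau\rvz$ expansion of $B_\tau/A_\tau$ with odd-moment cancellation, giving $\mathbf V_{\uppi,k_\tau}=c_D\tau^2\rvs_{\uppi,\tau}+\mathcal O(\tau^4)$; then realizability forcing $\mathbf V_{p,k_\tau}=\mathbf V_{q_{\rvf^\star},k_\tau}$ a.e.; then squaring the difference of the two remainders. Your $m_2/m_1$ coincides with the paper's $c_D=M_2/M_0=D+1$ because $m_1=1$. The differences are cosmetic: you expand the score as $\tau^{-1}\mathbb E[\rvu]$ rather than through $\nabla A_\tau$, and you finish via a corollary-style bound rather than by direct subtraction.

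One small correction: \Cref{ass:small_tau_envelope_uniform} controls envelope moments only along drifting minimizers. So your $K\tau^8$ alignment bound should be stated for $\rvf^\star(\tau)$, not uniformly over all $\rvf$; that is all your final step uses anyway.
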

\endgroup

\subsection{Setup and Technical Assumptions}

\paragraph{Setup and Formulations.}
We work in the $D$-dimensional (feature embedding) space and consider distributions induced by generators.
Let $\mathcal F,\mathcal G$ be classes of measurable maps $\rvf,\rvg:\mathbb R^m\to\mathbb R^D$, and define the pushforward distributions
\[
q_{\rvf}:=\rvf_\#\mathcal N(\mathbf 0,\rmI),
\qquad
q_{\rvg}:=\rvg_\#\mathcal N(\mathbf 0,\rmI),
\]
on $\mathbb R^D$. Let $p$ denote the data distribution in the same space.
To match the dimension-dependent scaling used in drifting implementations (and to avoid kernel degeneracy in high dimension), we use the bandwidth
\[
\tau=\tau_D:=\bar\tau D^a,
\qquad \bar\tau>0,\ a\ge 0,
\]
so $\tau$ grows with $D$ when $a>0$ and remains constant when $a=0$.
Given a kernel $k_\tau$ (e.g., the Laplace kernel $k_\tau(\rvx,\rvy)=\exp(-\|\rvx-\rvy\|_2/\tau)$ as a special case of \Cref{eq:radial_kernel}), recall the mean-shift direction
\[
\mathbf V_{\uppi,k_\tau}(\rvx)
:=
\frac{\mathbb E_{\rvy\sim \uppi}\!\big[k_\tau(\rvx,\rvy)\,(\rvy-\rvx)\big]}
{\mathbb E_{\rvy\sim \uppi}\!\big[k_\tau(\rvx,\rvy)\big]},
\]
and the kernel-induced score (at scale $\tau$) $\rvs_{\uppi,\tau}(\rvx):=\nabla_{\rvx}\log \uppi_{k_\tau}(\rvx)$ with $\uppi_{k_\tau}(\rvx)=\mathbb E_{\rvy\sim\uppi}[k_\tau(\rvx,\rvy)]$.
We measure how well a generator $\rvf$ matches $p$ by the \emph{drifting field}
\[
\Delta_{p,q_\rvf}(\rvx):=\mathbf V_{p,k_\tau}(\rvx)-\mathbf V_{q_{\rvf},k_\tau}(\rvx),
\]
and how well it matches the (smoothed) score by the \emph{score-mismatch}
\[
\Delta\rvs_{p, q_\rvf}(\rvx):=\rvs_{p,\tau}(\rvx)-\rvs_{q_{\rvf},\tau}(\rvx).
\]
The corresponding population objectives are
\[
\mathcal L_{\mathrm{drift}}(\rvf):=\mathbb E_{\rvx\sim q_{\rvf}}\|\Delta_{p,q_\rvf}(\rvx)\|_2^2,
\qquad
\mathcal L_{\mathrm{SM}}(\rvg):=\mathbb E_{\rvx\sim q_{\rvg}}\|\rvs_{p,\tau}(\rvx)-\rvs_{q_{\rvg},\tau}(\rvx)\|_2^2.
\]
Finally, the scale-$\tau$ Fisher divergence introduced above satisfies
\[
\mathcal D_{\mathrm{rF}}(p\|q_{\rvf})
=
\mathbb E_{\rvx\sim q_{\rvf}}\|\rvs_{p,\tau}(\rvx)-\rvs_{q_{\rvf},\tau}(\rvx)\|_2^2
=
\mathbb E_{\rvx\sim q_{\rvf}}\|\Delta\rvs_{p, q_\rvf}(\rvx)\|_2^2.
\]

\paragraph{Proof Roadmap.}
We write $\mathbf V_{\uppi,k_\tau}(\rvx)=B_\tau(\rvx)/A_\tau(\rvx)$, where
\[
A_\tau(\rvx):=\int_{\mathbb R^D} e^{-\|\rvx-\rvy\|_2/\tau}\,\uppi(\rvy)\,\diff\rvy,
\qquad
B_\tau(\rvx):=\int_{\mathbb R^D} e^{-\|\rvx-\rvy\|_2/\tau}\,(\rvy-\rvx)\,\uppi(\rvy)\,\diff\rvy.
\]
After the change of variables $\rvy=\rvx+\tau\rvz$, both $A_\tau$ and $B_\tau$ become Laplace-weighted local averages of $\uppi(\rvx+\tau\rvz)$.
We then Taylor expand $\uppi(\rvx+\tau\rvz)$ around $\rvx$ and control the far-tail region by the exponential kernel decay, yielding expansions of $A_\tau$, $B_\tau$, and $\nabla A_\tau$ up to order $\tau^4$.
Finally, a ratio expansion gives
\[
\mathbf V_{\uppi,k_\tau}(\rvx)
=
c_D\tau^2\nabla\log A_\tau(\rvx)+\tau^4\mathbf R_{\uppi,\tau}(\rvx)
=
c_D\tau^2\rvs_{\uppi,\tau}(\rvx)+\tau^4\mathbf R_{\uppi,\tau}(\rvx),
\]
and the local envelope condition ensures the remainder stays uniformly controlled after dividing by $A_\tau(\rvx)$.


\paragraph{Technical Assumptions.}
We follow the \emph{instance-noise} convention: whenever we evaluate drifting or score-matching objectives,
we implicitly add a small Gaussian perturbation. This ensures all relevant laws admit smooth, strictly
positive densities, avoiding singular pushforwards without changing the practical learning setup.

\begin{assumption}[Instance Noise]
\label{ass:ac}
We observe $\tilde{\rvx}=\rvx+\boldsymbol{\xi}$ where
$\boldsymbol{\xi}\sim\mathcal N(\mathbf 0,\eta^2\rmI)$ is independent and $\eta>0$ is fixed.
Equivalently, for every distribution $\mu$ in
\[
\Big\{p\Big\}\ \cup\ \Big\{q_{\rvf}:\rvf\in\mathcal F\Big\}\ \cup\ \Big\{q_{\rvg}:\rvg\in\mathcal G\Big\},
\]
we work with its noise-regularized version $\mu*\mathcal N(\mathbf 0,\eta^2\rmI)$.
In particular, each effective $p,q_{\rvf},q_{\rvg}$ admits a $C^\infty$ Lebesgue density that is
strictly positive everywhere on $\mathbb R^D$.
\end{assumption}
Throughout this subsection, we keep the same notation $p,q_{\rvf},q_{\rvg}$ for these effective
(instance-noised) laws (i.e., we drop the convolution notation). Moreover, we notice that there is a uniform $\|\cdot\|_\infty$ bound from instance noise. If $\nu=\mu*\mathcal N(\mathbf 0,\eta^2\rmI)$ for a probability measure $\mu$, then
\[
\|\nu\|_\infty \le (2\pi\,\eta^2)^{-D/2}.
\]
Indeed, writing $\varphi_\eta(\rvx)=(2\pi\,\eta^2)^{-D/2}\exp(-\|\rvx\|_2^2/(2\eta^2))$ for the Gaussian density,
\[
\nu(\rvx)=\int \varphi_\eta(\rvx-\rvy)\,\diff\mu(\rvy)\le \sup_{\rvz}\varphi_\eta(\rvz)=\varphi_\eta(\mathbf 0).
\]

As stated above, we write $\mathbf V_{\uppi,k_\tau}(\rvx)=B_\tau(\rvx)/A_\tau(\rvx)$.
A small-$\tau$ expansion is local: after $\rvy=\rvx+\tau\rvz$, the kernel weight becomes
$e^{-\|\rvz\|_2}$, so the main contribution comes from $\|\rvz\|_2=\mathcal O(1)$
(equivalently, $\|\rvy-\rvx\|_2=\mathcal O(\tau)$).
The only subtlety is that our ratio expansions divide by $A_\tau(\rvx)$, whose leading term is
$A_\tau(\rvx)\approx M_0\tau^D\,\uppi(\rvx)$ as $\tau\to 0$.
Thus we must control how small $\uppi(\rvx)$ can be relative to its local neighborhood.
We encode this via a local density ratio as stated in the following assumption.

\begin{assumption}[Uniform Integrable Local Envelope along Drifting Models' Minimizers]
\label{ass:small_tau_envelope_uniform}
Assume \Cref{ass:ac}. For any effective density $\uppi$ define:
\[
\mathfrak M_{\uppi}(\rvx)
:=1+\sum_{1\le |\alpha|\le 4}\ \sup_{\|\rvu-\rvx\|_2\le 1}\ \bigl|\partial^\alpha \log \uppi(\rvu)\bigr|,
\qquad
\mathfrak R_{\uppi}(\rvx)
:=\sup_{\|\rvu-\rvx\|_2\le 1}\ \frac{\uppi(\rvu)}{\uppi(\rvx)} \in [1,\infty),
\]
and the combined local envelope (for an integer $K\ge 1$, e.g.\ $K=4$)
\[
\mathfrak U_{\uppi}(\rvx) := (1+\mathfrak M_\uppi(\rvx))^{K}\,(1+\mathfrak R_\uppi(\rvx)).
\]

Fix $\tau_0\in(0,1]$. Assume that for every $\tau\in(0,\tau_0]$ the population minimizer set
$\arg\min_{\rvf\in\mathcal F}\mathcal L_{\mathrm{drift}}(\rvf)$ is nonempty.
Assume there exists an integer $K\ge 1$ (one may take $K=4$) such that
\[
\sup_{\tau\in(0,\tau_0]}\ \sup_{\rvf^\star\in\arg\min_{\rvf}\mathcal L_{\mathrm{drift}}(\rvf)}\
\mathbb E_{\mathbf X\sim q_{\rvf^\star}}
\Big[\mathfrak U_{p}(\mathbf X)^{2}+\mathfrak U_{q_{\rvf^\star}}(\mathbf X)^{2}\Big]
<\infty.
\]
\end{assumption}

This density-ratio factor is mild but necessary. The denominator of $\mathbf V_{\uppi,k_\tau}(\rvx)$,
\[
A_\tau(\rvx)=\mathbb E_{\rvy\sim\uppi}\!\big[k_\tau(\rvx,\rvy)\big],
\]
satisfies $A_\tau(\rvx)\asymp \tau^D\,\uppi(\rvx)$ as $\tau\to 0$ (under the local regularity conditions used in the small-$\tau$ expansion).
Hence any pointwise expansion of
$\mathbf V_{\uppi,k_\tau}(\rvx)=B_\tau(\rvx)/A_\tau(\rvx)$
must control division by $\uppi(\rvx)$.
The local ratio $\mathfrak R_\uppi(\rvx)$ quantifies how $\uppi(\rvx)$ compares to nearby values in
$\mathbb B(\rvx,1)$, and it avoids imposing any global lower bound or compact support.
Only moments under the drifting minimizers are required.

\begin{assumption}[Realizability in each class]
\label{ass:realizable_fixed}
There exist $\rvf_{\mathrm{data}}\in\mathcal F$ and $\rvg_{\mathrm{data}}\in\mathcal G$ such that
$q_{\rvf_{\mathrm{data}}}=p$ and $q_{\rvg_{\mathrm{data}}}=p$.
\end{assumption}

\noindent
We measure agreement via the Fisher divergence between \emph{kernel-smoothed scores}:
\[
\mathcal D_{\mathrm{rF}}(p\|q)
:=
\mathbb E_{\rvx\sim q}\Big[\big\|\rvs_{p,\tau}(\rvx)-\rvs_{q,\tau}(\rvx)\big\|_2^2\Big],
\qquad
\rvs_{\uppi,\tau}(\rvx):=\nabla_{\rvx}\log \uppi_{k_\tau}(\rvx),
\quad
\uppi_{k_\tau}(\rvx):=\mathbb E_{\rvy\sim \uppi}[k_\tau(\rvx,\rvy)].
\]


\subsection{Auxiliary Tools and the Main Proof}

For the Laplace kernel $k_\tau(\rvx,\rvy)=\exp(-\|\rvx-\rvy\|_2/\tau)$, define
\[
M_0:=\int_{\mathbb R^D} e^{-\|\rvz\|_2}\,\diff\rvz,
\qquad
M_2:=\int_{\mathbb R^D} z_1^2\, e^{-\|\rvz\|_2}\,\diff\rvz,
\qquad
c_D:=\frac{M_2}{M_0}\in(0,\infty).
\]

\begin{lemma}[Small-$\tau$ Expansion: Mean Shift Matches Kernel Score up to $\tau^4$]
\label{lem:small_tau_ms_score}
Assume \Cref{ass:ac,ass:small_tau_envelope_uniform}.
Fix any $\tau\in(0,\tau_0]$ and any minimizer
$\rvf^\star\in\arg\min_{\rvf\in\mathcal F}\mathcal L_{\mathrm{drift}}(\rvf)$, depending on $\tau$.
For $\uppi\in\{p,q_{\rvf^\star}\}$, there exists $C<\infty$ (depending only on $D,\eta$ and kernel moments,
but \emph{not} on $\tau$) such that for all $\rvx\in\mathbb R^D$,
\[
\mathbf V_{\uppi,k_\tau}(\rvx)
=
c_D\,\tau^2\,\rvs_{\uppi,\tau}(\rvx)
\;+\;
\tau^4\,\mathbf R_{\uppi,\tau}(\rvx),
\qquad
\|\mathbf R_{\uppi,\tau}(\rvx)\|_2 \le C\,\mathfrak U_\uppi(\rvx).
\]
Consequently,
\[
\sup_{\tau\in(0,\tau_0]}\ \sup_{\rvf^\star\in\arg\min \mathcal L_{\mathrm{drift}}}\
\mathbb E_{\mathbf X\sim q_{\rvf^\star}}
\bigl[\|\mathbf R_{\uppi,\tau}(\mathbf X)\|_2^2\bigr]
<\infty.
\]
\end{lemma}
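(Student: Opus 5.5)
Following the proof roadmap, I will write $\mathbf V_{\uppi,k_\tau}=B_\tau/A_\tau$ and $\rvs_{\uppi,\tau}=\nabla A_\tau/A_\tau$. It then suffices to expand $A_\tau$, $B_\tau$ and $\nabla A_\tau$ to matching orders and compare the two ratios. After the substitution $\rvy=\rvx+\tau\rvz$,
\[
A_\tau(\rvx)=\tau^D\!\int e^{-\|\rvz\|_2}\uppi(\rvx+\tau\rvz)\,\diff\rvz,
\qquad
B_\tau(\rvx)=\tau^{D+1}\!\int e^{-\|\rvz\|_2}\rvz\,\uppi(\rvx+\tau\rvz)\,\diff\rvz .
\]
The key point is that $\nabla A_\tau$ can be written without differentiating the kernel. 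Since $A_\tau=\uppi*k_\tau$, I have $\nabla A_\tau(\rvx)=\tau^D\int e^{-\|\rvz\|_2}\nabla\uppi(\rvx+\tau\rvz)\,\diff\rvz$, which is smooth because $\uppi$ is $C^\infty$ by \Cref{ass:ac}. I will use this representation rather than the Laplace kernel gradient, which is singular on the diagonal.

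I will normalize by $\uppi(\rvx)$ and write $\uppi(\rvx+\tau\rvz)/\uppi(\rvx)=\exp(\ell(\rvx+\tau\rvz)-\ell(\rvx))$ with $\ell=\log\uppi$. I then split $\mathbb R^D$ into the local region $\|\tau\rvz\|_2\le 1$ and its complement.

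On the local region, I Taylor expand $\ell$ to fourth order. Each derivative is bounded by $\mathfrak M_\uppi(\rvx)$, and the exponential of the remainder is handled through $\mathfrak R_\uppi(\rvx)$, since $\uppi(\rvu)/\uppi(\rvx)\le\mathfrak R_\uppi(\rvx)$ on the unit ball. Radial symmetry of $e^{-\|\rvz\|_2}$ kills all odd moments. This gives
\begin{align*}
A_\tau/(\tau^D\uppi)&=M_0+\mathcal O(\tau^2),\\
B_\tau/(\tau^D\uppi)&=\tau^2 M_2\nabla\ell+\tau^4(\cdots),\\
\nabla A_\tau/(\tau^D\uppi)&=M_0\nabla\ell+\tau^2(\cdots)+\mathcal O(\tau^4).
\end{align*}
The remainder coefficients are polynomial in $\mathfrak M_\uppi$, up to degree about $4$ so that $K=4$ suffices, times $\mathfrak R_\uppi$.

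On the far region $\|\rvz\|_2>1/\tau$, the factor $e^{-\|\rvz\|_2}\le e^{-1/(2\tau)}e^{-\|\rvz\|_2/2}$ makes the contribution $\mathcal O(\tau^N)$ for any $N$. Here I need an absolute bound on $\uppi$ divided by $\uppi(\rvx)$. For the numerator, $\|\uppi\|_\infty\le(2\pi\eta^2)^{-D/2}$ holds. For the division by $\uppi(\rvx)$, I plan to compare with $\int_{\|\tau\rvz\|_2\le 1}\cdots$, i.e.\ bound the tail by $e^{-1/(2\tau)}\|\uppi\|_\infty/\uppi(\rvx)$. This step is where the local envelope does not obviously suffice, since $\uppi(\rvx)$ can be tiny. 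I would first try to absorb it using that $e^{-c/\tau}/\uppi(\rvx)$ is controlled for Gaussian-convolved densities through the local log-derivative bounds, by integrating $\nabla\ell$ along a ray. Failing that, I would use Gaussian-tail decay of $\nabla\uppi/\uppi$ from instance noise. I expect this to be the main obstacle.

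For the ratio expansion, I compare
\[
\frac{B_\tau}{A_\tau}=\tau^2\frac{M_2}{M_0}\nabla\ell+\tau^4(\cdots)
\qquad\text{and}\qquad
c_D\tau^2\frac{\nabla A_\tau}{A_\tau}=c_D\tau^2\nabla\ell+\tau^4(\cdots).
\]
The order-$\tau^2$ terms cancel exactly because $c_D=M_2/M_0$. The order-$\tau^4$ terms differ by quantities polynomial in the derivatives of $\ell$, which gives $\tau^4\mathbf R_{\uppi,\tau}$. The denominator $A_\tau/(\tau^D\uppi)$ is bounded below by $M_0/2$ for $\tau\le\tau_0$, after possibly shrinking $\tau_0$ depending on $\mathfrak M_\uppi$. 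Alternatively, it is bounded below directly by $\int_{\|\rvz\|\le 1}e^{-\|\rvz\|}\inf_{B(\rvx,1)}(\uppi/\uppi(\rvx))$, which is controlled by $e^{-\mathfrak M_\uppi}$-type factors from the log-Lipschitz bound. Collecting the factors gives $\|\mathbf R_{\uppi,\tau}(\rvx)\|_2\le C\,\mathfrak U_\uppi(\rvx)$ with $C$ depending only on $D$, $\eta$ and the kernel moments $\int\|\rvz\|^j e^{-\|\rvz\|}$.

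The final moment statement then follows immediately. I square the pointwise bound, take the expectation under $q_{\rvf^\star}$, and apply the uniform second-moment bound on $\mathfrak U_p$ and $\mathfrak U_{q_{\rvf^\star}}$ from \Cref{ass:small_tau_envelope_uniform}.
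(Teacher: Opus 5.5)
\paragraph{Verdict.}
Your outline follows the paper's proof step for step: $\mathbf V_{\uppi,k_\tau}=B_\tau/A_\tau$ versus $\nabla A_\tau/A_\tau$, the substitution $\rvy=\rvx+\tau\rvz$, a local Taylor expansion on $\|\tau\rvz\|_2\le 1$ controlled by $\mathfrak M_\uppi,\mathfrak R_\uppi$, vanishing odd moments, and cancellation of the $\tau^2$ terms via $c_D=M_2/M_0$. Two small points on the local part. You write $\nabla A_\tau=k_\tau*\nabla\uppi$ where the paper uses the weak kernel gradient. You also only need $\nabla A_\tau/(\tau^D\uppi)=M_0\nabla\ell+\mathcal O(\tau^2)$, since this term is multiplied by $c_D\tau^2$; an $\mathcal O(\tau^4)$ expansion would need fifth derivatives of $\ell$, which $\mathfrak M_\uppi$ does not control.

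\paragraph{The gap: the far region.}
The proposal is not a proof, because the far-region step you flag is left open, and neither of your suggested fixes can close it. Integrating $\nabla\ell$ along a ray to points at distance greater than $1$ uses derivative information outside $\mathbb B(\rvx,1)$, which $\mathfrak U_\uppi(\rvx)$ does not see. Instance noise gives only the upper bound $\|\uppi\|_\infty\le(2\pi\eta^2)^{-D/2}$; it never gives a lower bound on $\uppi(\rvx)$ relative to distant mass. The paper's proof does not close this step either. Its Step~2 shows the tail is $o(\tau^N)$ in absolute terms and absorbs it ``by shrinking $\tau_0$''. But the remainders $a_{\uppi,\tau},b_{\uppi,\tau},\tilde a_{\uppi,\tau}$ are required to carry a factor $\uppi(\rvx)$. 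What is needed is a tail bound relative to $\uppi(\rvx)$, uniform in $\rvx$, which is exactly the issue you identified.

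\paragraph{Why no argument can fill it as stated.}
With the envelope $\mathfrak U_\uppi$ alone and $C$ depending only on $D,\eta$ and kernel moments, the required bound is false. Take the instance-noised law $\uppi=(1-w)\,\mathcal N(\mathbf 0,\eta^2\rmI)+w\,\mathcal N(\mathbf 0,(L^2+\eta^2)\rmI)$ and a point with $\|\rvx\|_2=r$. Set $L=r+1$, and choose $wL^{-D}$ between $\mathrm{poly}(r)\,e^{-(r-1)^2/(2\eta^2)}$ and a small multiple of $e^{-r/\tau}$; this window is nonempty for large $r$ at fixed $\tau$.
\begin{itemize}
\item On $\mathbb B(\rvx,1)$ the broad component dominates, so $\mathfrak M_\uppi(\rvx)$ and $\mathfrak R_\uppi(\rvx)$ stay bounded in $r$.
\item The narrow mode at distance $r$ dominates $A_\tau(\rvx)$, so $\|\mathbf V_{\uppi,k_\tau}(\rvx)\|_2\gtrsim r$.
\item Since $\|\rvs_{\uppi,\tau}\|_2\le 1/\tau$, you get $\|\mathbf R_{\uppi,\tau}(\rvx)\|_2\ge(\|\mathbf V_{\uppi,k_\tau}(\rvx)\|_2-c_D\tau)/\tau^4\to\infty$, while $\mathfrak U_\uppi(\rvx)=\mathcal O(1)$.
\end{itemize}
A correct version needs a far-field factor in the envelope, e.g.\ $\sup_{\rvy}\frac{\uppi(\rvy)}{\uppi(\rvx)}e^{-\|\rvy-\rvx\|_2/(2\tau_0)}$. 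This bounds the relative tail by $e^{-1/(2\tau)}\tau^{-(D-1)}$ times that factor, hence by $\mathcal O(\tau^4)$. With such a factor, the paper's kernel-gradient form of $\nabla A_\tau$ is preferable to $k_\tau*\nabla\uppi$ for the tail, because it involves only $\uppi$ itself.

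\paragraph{Denominator step.}
Your denominator bound is not uniform in $\rvx$. You cannot shrink $\tau_0$ depending on $\mathfrak M_\uppi(\rvx)$, and an $e^{-\mathfrak M_\uppi}$ lower bound does not fit a polynomial envelope. Instead, use $\uppi(\rvx+\tau\rvz)\ge\uppi(\rvx)e^{-\tau\mathfrak M_\uppi(\rvx)\|\rvz\|_2}$ on $\|\tau\rvz\|_2\le1$ and rescale $\rvz\mapsto(1+\tau\mathfrak M_\uppi)\rvz$. This gives $A_\tau(\rvx)\ge c\,\tau^D\uppi(\rvx)(1+\mathfrak M_\uppi(\rvx))^{-D}$, which works but forces $K$ to grow with $D$ rather than $K=4$.
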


\begin{proof}
Fix $\uppi$ and $\rvx\in\mathbb R^D$. Define
\[
A_\tau(\rvx):=\int_{\mathbb R^D} e^{-\|\rvx-\rvy\|_2/\tau}\,\uppi(\rvy)\,\diff\rvy,
\qquad
B_\tau(\rvx):=\int_{\mathbb R^D} e^{-\|\rvx-\rvy\|_2/\tau}\,(\rvy-\rvx)\,\uppi(\rvy)\,\diff\rvy.
\]
Then $\uppi_{k_\tau}(\rvx)=A_\tau(\rvx)$ and
$\mathbf V_{\uppi,k_\tau}(\rvx)=B_\tau(\rvx)/A_\tau(\rvx)$.

First, we examine the differentiability of $A_\tau$ (Laplace kernel).
Although $k_\tau(\rvx,\rvy)$ is not classically $C^1$ at $\rvx=\rvy$,
the map $\rvx\mapsto k_\tau(\rvx,\rvy)$ belongs to $W^{1,1}(\mathbb R^D)$ (in $\rvx$)
and admits an $L^1$ weak gradient, given for $\rvx\neq \rvy$ by
\[
\nabla_{\rvx}k_\tau(\rvx,\rvy)
=
-\frac{1}{\tau}\,e^{-\|\rvx-\rvy\|_2/\tau}\,
\frac{\rvx-\rvy}{\|\rvx-\rvy\|_2},
\]
and we set $\nabla_{\rvx}k_\tau(\rvy,\rvy):=\mathbf 0$.
Hence $A_\tau = k_\tau * \uppi \in C^1$ and
\[
\nabla A_\tau(\rvx)=\int_{\mathbb R^D}\nabla_{\rvx}k_\tau(\rvx,\rvy)\,\uppi(\rvy)\,\diff\rvy,
\qquad
\rvs_{\uppi,\tau}(\rvx)=\nabla\log A_\tau(\rvx)=\frac{\nabla A_\tau(\rvx)}{A_\tau(\rvx)}.
\]

Now, we apply the change of variables. Set $\rvy=\rvx+\tau\rvz$ to get
\[
A_\tau(\rvx)=\tau^D\int_{\mathbb R^D} e^{-\|\rvz\|_2}\,\uppi(\rvx+\tau\rvz)\,\diff\rvz,
\qquad
B_\tau(\rvx)=\tau^{D+1}\int_{\mathbb R^D} e^{-\|\rvz\|_2}\,\rvz\,\uppi(\rvx+\tau\rvz)\,\diff\rvz.
\]

\paragraph{Step 1: Local Taylor Expansion on $\|\rvz\|\le \tau^{-1}$.}
If $\|\rvz\|\le \tau^{-1}$ then $\rvx+t\tau\rvz\in \mathbb \mathbb B\rvx,1)$ for all $t\in[0,1]$.
Taylor's theorem (order $3$ with integral remainder) yields
\[
\uppi(\rvx+\tau\rvz)
=
\uppi(\rvx)
+\tau\langle \nabla\uppi(\rvx),\rvz\rangle
+\frac{\tau^2}{2}\rvz^\top \nabla^2\uppi(\rvx)\rvz
+\frac{\tau^3}{6}\sum_{i,j,k}\partial_{ijk}\uppi(\rvx)\,z_iz_jz_k
+\tau^4\,\widetilde R_4(\rvx,\tau\rvz),
\]
where $\widetilde R_4(\rvx,\tau\rvz)$ is a linear combination of $\partial^\alpha \uppi(\rvx+t\tau\rvz)$
with $|\alpha|=4$ times monomials of degree $4$ in $\rvz$.
Using $\partial^\alpha \uppi = \uppi\cdot P_\alpha(\{\partial^\beta\log\uppi\}_{1\le|\beta|\le 4})$ and the definitions of
$\mathfrak M_\uppi,\mathfrak R_\uppi$, we obtain for $\|\rvz\|\le \tau^{-1}$:
\[
\bigl|\widetilde R_4(\rvx,\tau\rvz)\bigr|
\;\le\;
C_1\,
\uppi(\rvx)\,\mathfrak R_\uppi(\rvx)\,
(1+\mathfrak M_\uppi(\rvx))^{K}\,
\|\rvz\|_2^4,
\]
for a constant $C_1$ depending only on $D$.

Plugging into $A_\tau,B_\tau$ and using symmetry of $e^{-\|\rvz\|_2}$ (odd moments vanish) gives
\begin{align*}
A_\tau(\rvx)
&=\tau^D\Big(M_0\,\uppi(\rvx)+\frac{\tau^2}{2}M_2\,\Delta \uppi(\rvx)\Big)
+\tau^{D+4}\,a_{\uppi,\tau}(\rvx),\\
B_\tau(\rvx)
&=\tau^{D+2}\Big(M_2\,\nabla\uppi(\rvx)\Big)
+\tau^{D+4}\,b_{\uppi,\tau}(\rvx),
\end{align*}
where $\Delta=\sum_{i=1}^D\partial_{ii}$ and
\[
|a_{\uppi,\tau}(\rvx)|+\|b_{\uppi,\tau}(\rvx)\|
\;\le\;
C_2\,\uppi(\rvx)\,\mathfrak R_\uppi(\rvx)\,(1+\mathfrak M_\uppi(\rvx))^{K},
\qquad
(\tau\in(0,\tau_0]).
\]

\paragraph{Step 2: Tail Region $\|\rvz\|>\tau^{-1}$.}
Since $\int_{\|\rvz\|>\tau^{-1}} e^{-\|\rvz\|_2}\,\diff\rvz \lesssim e^{-1/\tau}\tau^{-(D-1)}$
and $\|\uppi\|_\infty<\infty$ under instance noise, the tail contributions to $A_\tau,B_\tau$ are bounded by
$e^{-1/\tau}$ times a polynomial in $1/\tau$.
Hence they are $o(\tau^N)$ for every $N$ and can be absorbed into the $\tau^{D+4}$ remainders
by shrinking $\tau_0$ if needed.

\paragraph{Step 3: Expansion for $\nabla A_\tau$ and a Ratio Identity.}
By the same decomposition (main region + tail) we obtain
\[
\nabla A_\tau(\rvx)
=\tau^D\Big(M_0\,\nabla\uppi(\rvx)+\frac{\tau^2}{2}M_2\,\nabla\Delta \uppi(\rvx)\Big)
+\tau^{D+4}\,\tilde a_{\uppi,\tau}(\rvx),
\quad
\|\tilde a_{\uppi,\tau}(\rvx)\|
\le
C_3\,\uppi(\rvx)\,\mathfrak R_\uppi(\rvx)\,(1+\mathfrak M_\uppi(\rvx))^{K}.
\]

We use the explicit algebraic identity: if $A=a_0+\tau^2 a_2+\tau^4 a_4$ with $a_0>0$ and
$B=\tau^2(b_2+\tau^2 b_4)$, then
\[
\frac{B}{A}
= \tau^2\frac{b_2}{a_0}
+ \tau^4\frac{b_4 a_0-b_2 a_2}{a_0\,(a_0+\tau^2 a_2+\tau^4 a_4)}.
\]
Applying this to $B_\tau/A_\tau$ and to $\nabla A_\tau/A_\tau$, using $a_0(\rvx)=M_0\uppi(\rvx)$,
$b_2(\rvx)=M_2\nabla\uppi(\rvx)$, and $c_D=M_2/M_0$, yields
\[
\frac{B_\tau(\rvx)}{A_\tau(\rvx)}
=
c_D\,\tau^2\,\frac{\nabla A_\tau(\rvx)}{A_\tau(\rvx)}
+\tau^4\,\mathbf R_{\uppi,\tau}(\rvx)
=
c_D\,\tau^2\,\rvs_{\uppi,\tau}(\rvx)
+\tau^4\,\mathbf R_{\uppi,\tau}(\rvx).
\]
Using the bounds on $a_{\uppi,\tau},b_{\uppi,\tau},\tilde a_{\uppi,\tau}$ above together with
\[
\|\nabla\uppi(\rvx)\|\le \uppi(\rvx)\sup_{\|\rvu-\rvx\|\le 1}\|\nabla\log\uppi(\rvu)\|
\le \uppi(\rvx)\,\mathfrak M_\uppi(\rvx),
\quad
|\Delta\uppi(\rvx)|\le \uppi(\rvx)\,C(1+\mathfrak M_\uppi(\rvx))^2,
\]
one checks that all factors of $\uppi(\rvx)$ cancel in the ratio remainder, and thus
\[
\|\mathbf R_{\uppi,\tau}(\rvx)\|
\le C\, (1+\mathfrak M_\uppi(\rvx))^{K}\,(1+\mathfrak R_\uppi(\rvx))
= C\,\mathfrak U_\uppi(\rvx),
\]
for a constant $C$ depending only on $D,\eta$ and kernel moments.
This proves the pointwise expansion and bound.

Finally, the uniform $L^2$ statement follows immediately from \Cref{ass:small_tau_envelope_uniform}.
\end{proof}


\paragraph{Main Proof to \Cref{thm:small_bar_tau_consistency_prime}.}

Fix $a\ge 0$ and $\tau=\tau_D=\bar\tau D^a$. For each $\tau\in(0,\tau_0]$, let
\[
\rvf^{\star}(\tau)\in\arg\min_{\rvf\in\mathcal F} \mathcal{L}_{\mathrm{drift}}(\rvf),
\qquad
\rvg^{\star}(\tau)\in\arg\min_{\rvg\in\mathcal G} \mathcal{L}_{\mathrm{SM}}(\rvg).
\]

We first show that drifting optimality implies equality of mean-shift fields.
By \Cref{ass:realizable_fixed}, there exists $\rvf_{\mathrm{data}}\in\mathcal F$ with $q_{\rvf_{\mathrm{data}}}=p$.
Then $\mathcal L_{\mathrm{drift}}(\rvf_{\mathrm{data}})=0$ for every $\tau$, hence
$\min_{\rvf}\mathcal L_{\mathrm{drift}}(\rvf)=0$ and therefore
$\mathcal L_{\mathrm{drift}}(\rvf^\star(\tau))=0$.
Equivalently,
\[
\mathbf V_{p,k_\tau}(\rvx)=\mathbf V_{q_{\rvf^\star(\tau)},k_\tau}(\rvx)
\qquad\text{for }q_{\rvf^\star(\tau)}\text{-a.e.\ }\rvx.
\]

We then convert mean-shift equality into a bound on score-mismatch.
Apply \Cref{lem:small_tau_ms_score} with $\uppi=p$ and $\uppi=q_{\rvf^\star(\tau)}$.
For $q_{\rvf^\star(\tau)}$-a.e.\ $\rvx$,
\[
c_D\,\tau^2\,\rvs_{p,\tau}(\rvx)+\tau^4\mathbf R_{p,\tau}(\rvx)
=
c_D\,\tau^2\,\rvs_{q_{\rvf^\star(\tau)},\tau}(\rvx)+\tau^4\mathbf R_{q_{\rvf^\star(\tau)},\tau}(\rvx),
\]
hence
\[
\Delta\rvs_{\rvf^\star(\tau)}(\rvx)
=
\rvs_{p,\tau}(\rvx)-\rvs_{q_{\rvf^\star(\tau)},\tau}(\rvx)
=
\frac{\tau^2}{c_D}\Big(\mathbf R_{q_{\rvf^\star(\tau)},\tau}(\rvx)-\mathbf R_{p,\tau}(\rvx)\Big).
\]

Now we square and integrate under $q_{\rvf^\star(\tau)}$.
Let $\mathbf X\sim q_{\rvf^\star(\tau)}$. Using $(a+b)^2\le 2a^2+2b^2$,
\[
\mathcal D_{\mathrm{rF}}(p\|q_{\rvf^\star(\tau)})
=
\mathbb E\big[\|\Delta\rvs_{\rvf^\star(\tau)}(\mathbf X)\|_2^2\big]
\le
\frac{2\tau^4}{c_D^2}\,
\mathbb E\big[\|\mathbf R_{q_{\rvf^\star(\tau)},\tau}(\mathbf X)\|_2^2+\|\mathbf R_{p,\tau}(\mathbf X)\|_2^2\big].
\]
By \Cref{ass:small_tau_envelope_uniform} and \Cref{lem:small_tau_ms_score}, the expectation on the right-hand side
is bounded uniformly for $\tau\in(0,\tau_0]$, hence
\[
\mathcal D_{\mathrm{rF}}(p\|q_{\rvf^\star(\tau)})=\mathcal O(\tau^4),
\qquad \tau\to 0.
\]
Since $D$ is fixed and $\tau=\bar\tau D^a$, we have $\tau^4=D^{4a}\bar\tau^4$, so equivalently
\[
\mathcal D_{\mathrm{rF}}(p\|q_{\rvf^\star(\tau)})=\mathcal O(\bar\tau^4),
\qquad \bar\tau\to 0,
\]
with a constant independent of $\bar\tau$.

At last, we compare to the score-matching minimizer. By realizability and population identification of score matching (as used elsewhere in your paper),
$q_{\rvg^\star(\tau)}=p$. Therefore
\[
\mathcal D_{\mathrm{rF}}(q_{\rvg^\star(\tau)}\|q_{\rvf^\star(\tau)})
=
\mathcal D_{\mathrm{rF}}(p\|q_{\rvf^\star(\tau)})
=
\mathcal O(\bar\tau^4).
\]

\end{document}